\newcommand{\deff}{d_{\textsc{eff}}}
\newcommand{\tf}{\tilde{f}}
\newcommand{\dimeluder}[2]{\textsc{dim}_{\textsc{E}}(#1, #2)}
\newcommand{\dimdeeluder}[3]{\textsc{dim}_{\textsc{DE}}(#1, #2, #3)}
\newcommand{\Sigpi}{\mSigma_{\pi}}
\newcommand{\rate}{\textsc{rate}}
\newcommand{\thresh}{\frac{10^5}{P} \frac{L^2 \pi_{\max}^2}{\pi_{\min}^4} \log(\frac{4d}{\delta})}
\newcommand{\cG}{\mathcal{G}}
\newcommand{\tV}{\tilde{\V}}
\newcommand{\tOmega}{\tilde{\Omega}}
\newcommand{\thetastar}{\boldsymbol{\theta}^{\star}}
\newcommand{\fstar}{f_{\star}}
\newcommand{\falpha}{f_{\alpha}}
\newcommand{\hbtheta}{\hat{\btheta}}
\newcommand{\btheta}{\boldsymbol{\theta}}
\newcommand{\ttheta}{\tilde{\btheta}}
\newcommand{\snr}{\textsc{snr}}
\newcommand{\cP}{\mathcal{P}}
\newcommand{\mR}{\mathbb{R}}
\newcommand{\tlr}{\tilde{r}}
\newcommand{\halpha}{\hat{\balpha}}
\newcommand{\KL}{\text{KL}}
\newcommand{\xstar}{\mathbf{x}_{\star}}
\newcommand{\balpha}{\bm{\alpha}}
\crefname{nlem}{Lemma}{Lemmas}
\crefname{nprop}{Proposition}{Propositions}
\crefname{ncor}{Corollary}{Corollaries}
\crefname{nthm}{Theorem}{Theorems}
\crefname{exa}{Example}{Examples}
\crefname{assumption}{Assumption}{Assumptions}
\crefname{equation}{}{}
\theoremstyle{plain}
\newtheorem{theorem}{Theorem}
\newtheorem{corollary}{Corollary}
\newtheorem{assumption}{Assumption}
\newtheorem{lemma}{Lemma}
\newtheorem{proposition}[theorem]{Proposition}
\theoremstyle{definition}
\newtheorem{condition}{Condition}
\newtheorem{definition}{Definition}
\newtheorem{example}{Example}
\newtheorem{remark}{Remark}
\newcommand{\vect}[1]{\ensuremath{\mathbf{#1}}}
\newcommand{\mat}[1]{\ensuremath{\mathbf{#1}}}
\newcommand{\argmin}{\mathop{\rm argmin}}
\newcommand{\argmax}{\mathop{\rm argmax}}
\newcommand{\Ind}[1]{\mathbbm{1}{#1}}
\newcommand{\tr}{\mathrm{tr}}
\renewcommand{\det}{\mathrm{det}}
\newcommand{\poly}{\mathrm{poly}}
\newcommand{\abs}[1]{|{#1}|}
\newcommand{\norm}[1]{\|{#1} \|}
\newcommand{\mE}{\mathbb{E}}
\renewcommand{\Pr}{\mathbb{P}}
\newcommand{\tlO}{\tilde{O}}
\newcommand{\tlOmega}{\tilde{\Omega}}
\newcommand{\A}{\mat{A}}
\newcommand{\B}{\mat{B}}
\newcommand{\Bone}{\hat{\mat{B}}}
\newcommand{\I}{\mat{I}}
\newcommand{\V}{\mat{V}}
\newcommand{\W}{\mat{W}}
\newcommand{\X}{\mat{X}}
\newcommand{\mSigma}{\mat{\Sigma}}
\renewcommand{\v}{\vect{v}}
\newcommand{\x}{\vect{x}}
\newcommand{\y}{\vect{y}}
\newcommand{\cC}{\mathcal{C}}
\newcommand{\cD}{\mathcal{D}}
\newcommand{\cE}{\mathcal{E}}
\newcommand{\cA}{\mathcal{A}}
\newcommand{\cF}{\mathcal{F}}
\newcommand{\cN}{\mathcal{N}}
\newcommand{\cH}{\mathcal{H}}
\newcommand{\cX}{\mathcal{X}}
\newcommand{\cR}{\mathcal{R}}
\newcommand{\bv}{\mathbf{v}}
\title{\textbf{Parallelizing Contextual Bandits}}
\author{
Jeffrey Chan$\thanks{Equal contribution. Part of this work was conducted at UC Berkeley.}$\\
UC Berkeley\\
\small{\texttt{chanjed@berkeley.edu}}
\and
Aldo Pacchiano$\footnotemark[1]$\\
Microsoft Research\\
 \small{\texttt{apacchiano@microsoft.com}}
\and
Nilesh Tripuraneni$\footnotemark[1]$\\
Google Research\\
 \small{\texttt{nilesh\_tripuraneni@berkeley.edu}} \hspace{-1em}
\and
Yun S. Song\\
UC Berkeley\\
 \small{\texttt{yss@berkeley.edu}}
\and
Peter Bartlett\\
UC Berkeley\\
 \small{\texttt{peter@berkeley.edu}}
\and
Michael I. Jordan\\
UC Berkeley\\
 \small{\texttt{jordan@cs.berkeley.edu}}
}
\date{}
\begin{document}

\maketitle

\begin{abstract}
    Standard approaches to decision-making under uncertainty focus on sequential exploration of the space of decisions. However, 
    \textit{simultaneously} proposing a batch of decisions, which leverages available resources for parallel experimentation, has the potential to rapidly accelerate exploration. We present a family of (parallel) contextual bandit algorithms applicable to problems with bounded eluder dimension whose regret is nearly identical to their perfectly sequential counterparts---given access to the same total number of oracle queries---up to a lower-order ``burn-in" term. We further show these algorithms can be specialized to the class of linear reward functions where we introduce and analyze several new linear bandit algorithms which explicitly introduce diversity into their action selection.
    % provide matching information-theoretic lower bounds on parallel regret performance to establish our algorithms
    %are asymptotically optimal in the time horizon. 
    Finally, we also present an empirical evaluation of these parallel algorithms in several domains, including materials discovery and biological sequence design problems, to demonstrate the utility of parallelized bandits in practical settings. 
\end{abstract}

\section{Introduction}
    Traditional approaches to adaptive design and search often require resource-intensive modeling and significant time to execute. As the cost of experimentation has steadily dropped, it is the latter that has become a primary constraint in many domains.  For example, within the field of biology, advances in synthesis and high-throughput sequencing have shifted the focus from top-down mechanistic modeling towards iterative, data-driven search algorithms \citep{baker2018mechanistic}.
    In protein engineering, the directed evolution approach \citep{arnold1998design}---which mimics evolution via iterative rounds of measurement of modified sequences---has been successful in improving therapeutic antibodies \citep{hawkins1992selection} and changing substrate specificity \citep{shaikh2008teaching}. 
    %Such adaptive biological sequence design approaches have the potential to transform therapeutics, nanotechnology, and agriculture.
    Within the realm of information technology, learning in real-world interactive tasks is often well-suited to adaptive, data-driven search; examples include recommender systems which generate movie suggestions for users on the basis of binary feedback for previously recommended movies \citep{bietti2018contextual}. An important property of many of these examples is that learning must be done via bandit feedback. Given the state of a system (i.e., a particular protein sequence or user) the corresponding value or feedback signal is only observed for that particular state.
    
    % Advances in DNA synthesis and high-throughput sequencing have brought the problem of biological sequence design to the forefront of synthetic biology with the potential to transform therapeutics, nanotechnology, and agriculture. Recent improvement in experimental hardware has shifted the focus towards iterative data-driven search algorithms of biological sequences essential to making breakthrough discoveries. 

    In these settings of limited feedback, there is a critical need to use adaptive approaches to  \textit{intelligently} query the design space. 
    % In the setting of protein engineering, for example, synthesizing and measuring the prospective fitness of a single protein can take months \citep{sinai2020primer}.
    Bandit methods and Bayesian optimization have provided such adaptivity in a variety of applications, providing a principled way to navigate the exploration-exploitation tradeoff in the design space. It can still be infeasible, however, to explore these spaces efficiently, in particular as measured by the overall \textit{real time} needed for a sequential algorithm to run.
    
 A clear path forward involves making use of parallelism, in particular the massive parallelism that is provided by modern computing hardware.  Indeed, in many applications of interest, it is often feasible to perform parallel measurements (or batch queries) simultaneously. In protein engineering, batch queries can be as large as $10^6$ sequences while each query takes months to measure \citep{sinai2020primer}. Similarly, large-scale recommender systems can often make multiple, concurrent interactions with different users \citep{agarwal2016making}.
    Parallelism provides a mechanism to utilize further hardware for progress, while not increasing the amount of real time needed to learn. The drawback of making batch queries at a fixed time is that queries within a batch will necessarily be less informed than sequential queries---since their choice cannot benefit from the results of other experiments within the same batch.
    
    It is thus of value to characterize the utility of parallelism in adaptive design and search.  In the current paper, we tackle this problem in a simplified setting that allows us to study some of the fundamental challenges that characterize the problem---that of contextual multi-armed bandits. In particular, we ask: can parallelism provide the same benefit as sequentiality through time in this class of adaptive decision-making problems? Perhaps surprisingly, and in accord with  work in related settings, we show that the answer is often yes. We consider a setting in which $P$ distinct processors/machines  perform simultaneous queries in batches, $\{\x_{t,p} \}_{p=1}^{P}$, of size $P$, over $T$ distinct rounds, to a noisy reward oracle $r_{t,p} \approx f_{\star}(\x_{t, p})  + \epsilon_{t,p}$. Importantly, the reward feedback for all $P$ processors in a given round is only observed after the entire batch of all $P$ actions has been selected. In this setting, we consider the price of parallelism with respect to a single, perfectly sequential agent querying the same reward oracle over $TP$ rounds. Our results show that up to a burn-in term independent of time, the worst-case regret of parallel, contextual bandit algorithms can nearly match their perfectly sequential counterpart. Informally, under the standard bandit setting where 
    %the signal-to-noise ratio is taken to be $\Theta(1)$, 
    $d_{\textsc{eff}}$ is the effective dimension (i.e. the eluder dimension \citep{russo2013eluder}) of the function class and $d$ the dimension of the action-space, our results show that (parallel) variants of optimistic linear bandit algorithms can achieve the following aggregate regret:
\begin{align*}
   \tlO(P \cdot \kappa + \sqrt{d_{\textsc{eff}} d} \sqrt{TP}),
\end{align*}
where $\kappa$ is a term capturing the geometric complexity of the sequence of contexts. For \textit{arbitrary} sequences of contexts we have that $\kappa = \tlO(d_{\textsc{eff}})$, but for sequences of contexts with additional geometric structure we provide sharper instance-dependent bounds for $\kappa$ in the case $f$ is linear. The second term in the former regret matches that of a purely sequential agent playing for $TP$ rounds, $\tlO(\sqrt{d_{\textsc{eff}} d} \sqrt{TP})$. The first term is a burn-in term that represents the price of parallelism; it is subleading whenever $T \gtrsim \tlOmega(P \kappa^2/(d d_{\textsc{eff}}))$. However, we note for many applications it will be the case that $P \gg T$ in the regimes of interest---so understanding the scaling of $\kappa$ is an important question. Our work is motivated by the class of  design problems in which \textit{statistical sample complexity} is the primary object of interest.  In between batch queries at different time steps, the cost of inter-processor communication and computation is often negligible. In this setting we make the following contributions:
\begin{itemize}
    \item We introduce a family of parallelized bandit algorithms applicable in the general setting of contextual bandits (which encompasses potentially changing and infinite action sets), building on the UCB algorithm
    which is applicable to general, nonlinear reward functions that have bounded eluder dimension \citep{russo2013eluder}.
    \item We present a unified treatment of several classic, optimistic algorithms, such as UCB and Thompson sampling, in the specialized case the reward function is approximately linear which allows a fine-grained understanding of the interaction of the scales of the underlying parameters, model misspecification, parallelism, and geometry of the context set. In particular, our analysis highlights the scale of the burn-in term that is proportional to $P$. Although asymptotically negligible as $T \to \infty$, this term can be significant in the regime where we have $TP \gg \poly(d)$, but $P \gg T$---which is realistic for many real applications.
    \item We synthesize lower bounds on regret for the misspecified, linear bandit problem in the parallel setting, confirming that our algorithms are optimal up to logarithmic factors as the time horizon tends to $T \to \infty$. 
    \item We present a comprehensive experimental evaluation of our parallel algorithms on a suite of synthetic and real  problems. Through our theory and experiments, we show the importance of 
    %1.) using a flexible feature parameterization for the bandit algorithm in the sequential setting and 2.) 
    explicitly introducing diversity into the action selections of parallel bandit algorithms since it often leads to practical performance gains.
\end{itemize}

\subsection{Related Work}

 Parallel sequential decision-making problems often occur in high-throughput experimental design for synthetic biology applications. For protein engineering, \cite{romero2013navigating} employ a batch-mode Gaussian Process Upper Confidence Bound(GP-UCB) algorithm, while \cite{belanger2019} use the batched Bayesian optimization via parallelized Thompson sampling algorithm proposed in \cite{kandasamy2018parallelised}. \cite{angermueller2020population} propose a portfolio optimization method layered atop an ensemble of optimizers. Finally, \cite{sinai2020adalead} utilize a heuristic, batched greedy evolutionary search algorithm for design. 
 
Contextual linear bandit algorithms are also often used in dynamic settings on large, multi-user platforms for problems such as ad placement, web search, and recommendation \citep{swaminathan2015counterfactual, bietti2018contextual}. As important as such domains are in real-world applications, systematic empirical research has lagged, however, due to the paucity of publicly available nonstationary data sources.

Several investigations of the utility of parallelism in sequential decision-making problems have been conducted in the framework of Gaussian process (GP)-based Bayesian optimization. \citet{desautels2014parallelizing} study a Bayesian framework and show that a lazy GP-bandit algorithm initialized with uncertainty sampling can achieve a parallel regret nearly matching the corresponding sequential regret, up to a ``burn-in" term independent of time. Later, \citet{kathuria2016batched} showed that diversity induced by determinantal point processes (DPP) can induce additional, useful exploration in batched/parallel Bayesian optimization in a comparable setting. \citet{kandasamy2018parallelised} establish regret bounds for Bayesian optimization parallelized via Thompson sampling algorithm, obtaining qualitatively similar theoretical results to \citet{kathuria2016batched, desautels2014parallelizing}. 
    
A related line of work studies bandit learning under delayed reward feedback. \citet{chapelle2011empirical} provide an early, empirical investigation of Thompson sampling under fixed, delayed feedback. Several theoretical works establish regret bounds for bandit algorithms under known and unknown \textit{stochastic} delayed feedback models~\citep{joulani2013online,vernade2017stochastic,vernade2020linear,zhou2019learning}. Here the reward information is delayed by a random time interval from the time the action is proposed, which is distinct from our setting in which the batching of the queries results in a fixed time delay. In the setting of finite-armed contextual bandits~\cite{dudik2011efficient}, Delayed Policy Elimination algorithm satisfies a regret bound of the form $\tilde{\mathcal{O}}(\sqrt{m}(\sqrt{T}+\tau))$, where $m$ denotes the number of actions, $T$ the problem horizon and $\tau$ the delay. However, these guarantees require i.i.d.\ stochastically generated contexts and require access to a cost-sensitive classification oracle for their elimination-based protocol.

A more closely related line of work studies distributed bandit learning under various models of limited communication between agents (which can make decisions in parallel). \citet{hillel2013distributed} and \citet{szorenyi2013gossip} study the regret of distributed arm-selection algorithms under restricted models of communication between parallel agents in the setting of multi-armed bandits. \citet{korda2016distributed} and \citet{wang2019distributed}  provide distributed confidence-ball algorithms for linear bandits in peer-to-peer networks with a focus on limiting communication complexity. However, the algorithms in \citet{korda2016distributed} and \citet{wang2019distributed}  both require \textit{intra-round communication} of rewards, which is incompatible with the batch setting that we study in this work. 
% Moreover these works are mostly concerned with the regime $T \gg P$, reducing their practical relevance. 
% The DCB algorithm of \citet[Theorem 1]{korda2016distributed} shows a regret bound for multiple-agents solving the same linear bandit problem which is asymptotically optimal; however their bound suffers a ``burn-in" term with large polynomial dependence in $d$ that is also explicitly dependent on $T$ and on the inverse success probability $\frac{1}{\delta}$. In contrast with our results, which hold for an arbitrary probability level $1-\delta$ at the cost of a factor proportional to $\log\left( \frac{1}{\delta}\right)$. This result is perhaps most closely related to ours. \citet{wang2019distributed} provide elimination-based protocols  applicable to multi-armed and linear bandits with fixed action sets for the distributed bandit problem with an eye towards limiting communication bandwidth with an emphasis on minimizing the algorithm's communication cost. As a result, the algorithms in \citet{wang2019distributed} require intra-round communication that is incompatible with the batch setting that we study in this work.

The most closely related line of work to ours falls often falls under the category of batched bandit algorithms -- which mirrors the parallel setting but often with the explicit goal of limiting intra-round communication to the minimum number of communication steps. Several works address this multi-armed batch setting \citep{perchet2016batched, esfandiari2019batched, gao2019batched}. A more closely related line of works address the problem in the setting of linear bandits, where \citet{ruan2021linear}, improving upon the results of \citet{han2020sequential}, provides a near-complete characterization
of the minimum number of policy switches needed for stochastic contextual bandit problem
with linear rewards by using an elimination-based algorithm based on a distributional optimal design. In the setting of adversarial
contexts, they show at least $\Omega((d \log T)/ \log(d \log T))$ policy switches are needed to match the purely sequential rate. These works only focus on the setting of linear rewards. \citet{gu2021batched} extends the NeuralUCB algorithm to the batch setting which handles nonlinear rewards. The key idea here is to use the Neural Tangent Kernel  (NTK) linearization to approximate the reward function as  linear in the overparameterized limit to relate the analysis to the linUCB setting; this results in a regret dependence on the effective dimension of the induced NTK matrix. 

Our work provides a class of algorithms and unified regret analysis for the nonlinear bandit problem with general contexts in the parallel setting. In this setting we aim to design theoretically sound and practically useful algorithms for parallel bandits. Our work differs from this previous work in several ways. The first set of results (\cref{thm:nonlin_regret_bound}) we provide applies to nonlinear reward functions that have bounded eluder dimension -- which leverages a quite different approach then \citet{gu2021batched}. \citet{gu2021batched} relies on the NTK approximation to linearize the nonlinear reward function in the overparameterized limit, while our analysis use a graph matching technique to bound the distributional eluder dimension in the parallel setting\footnote{A technique we believe may also be of independent interest in the reinforcement learning setting.}. Second, we phrase our results in the setting of fixed parallelism $P$ (i.e. equivalent to a fixed batch size $B$ with a pre-specified uniform grid in time for allowed communication rounds in the literature) with no explicit goal of minimizing the rounds of communication -- since we are motivated by classes of problems where $P$ is often apriori fixed. However, in this setting our results for linear reward functions are closely related to the results of \citet{han2020sequential, ruan2021linear}. Here, our regret bounds for ParLinUCB match existing optimality results for fixed batch size schemes with general contexts which can only synchronize at a prespecified grid of times in \citet[Theorem 1]{han2020sequential}\footnote{To identify our results in their work we can take $T \to TP$ and $M = T$. Note their results for upper and lower bounds apply to adversarial contexts and require $T \geq d^2$.}. Our purpose is to demonstrate how these practical linear algorithms, although not theoretically superior to existing linear results, are direct analogs of our general nonlinear results but with the ability to adapt to the geometric structure in the context sets. Our analysis for these algorithms can also be adapted to adaptive batch size scheme as in \citet{gu2021batched} although we do not pursue that direction here (sometimes referred to as the rare policy switch model). Moreover, our results also hold in the slightly more general setting where the reward function misspecified and $\epsilon$-close to a linear reward function which is not addressed in these previous works. Finally, since we are motivated by the practical performance of our algorithms, we also introduce several new linear bandit algorithms. The first set of these algorithms have lazy update rules to explicitly introduce diversity in the selection of contexts which we show is empirically useful, while maintaining equivalent theoretical performance. Similarly, our unified analysis which connects all the linear bandit algorithms provided in this paper, also provides regret bounds for the linear Thompson sampling  (TS) algorithm in the parallel setting which is not addressed in these works. As we show, the TS algorithm has excellent empirical performance as in related settings. Note that in concurrent work \citet{karbasi2021parallelizing}, provides guarantees for batched thompson sampling in the multi-armed setting and also in the contextual bandits when restricted to settings with a (global) finite set of arms using elimination-style algorithms. Our approach uses different techniques to handle the generic, parallel setting with adversarial contexts.

\subsection{Preliminaries}

Throughout, we use bold lower-case letters (e.g., $\x$) to refer to vectors and bold upper-case letters to refer to matrices (e.g., $\X$).  The norm $\Vert \cdot \Vert$ appearing on a vector or matrix refers to its $\ell_2$ norm or spectral norm, respectively. We write $\norm{\x}_{\mSigma} = \sqrt{\x^\top \mSigma \x}$ for positive semi-definite matrix $\mSigma$. $\langle \x, \y \rangle$ is the Euclidean inner product. Generically, we will use ``hatted" vectors and matrices (e.g., $\halpha$ and $\Bone$) to refer to (random) estimators of their underlying population quantities. We also use the bracketed notation $[n] = \{1, \hdots, n \}$. We will use $\gtrsim$, $\lesssim$, and $\asymp$ to denote greater than, less than, and equal to up to a universal constant and use $\tlO$ to denote an expression that hides polylogarithmic factors in all problem parameters. Our use of $O$, $\Omega$, and $\Theta$ is otherwise standard. 

Formally, we consider the (parallel) contextual linear bandit setting where at each round $t$, the $p$-th bandit learner receives a context $\cX_{t,p} \subset \mathbb{R}^d$ and a master algorithm $\cA$ commands each learner to select an action $\x_{t, p} \in \cX_{t,p}$ on the basis of all the past observations. Given an (unobserved) function $f(\cdot)$ each learner simultaneously receives a noisily generated reward:
\begin{align}
    r_{t, p} = \fstar(\x_{t,p}) + \xi_{t, p}, \label{eq:nonlin_bandit}
\end{align}
where $\xi_{t,p}$ is an i.i.d. noise process and $\fstar(\x_{t,p})$ is approximately linear (i.e., $f(\x_{t,p}) \approx \x_{t, p}^\top \thetastar$ for some unobserved $\thetastar$).
% We investigate the benefits that parallel queries can provide in the classic (contextual) linear bandit problem. In the canonical setting of (parallel) contextual linear bandits we assume that each of $p \in [P]$ different processors/learners plays for a total of $T$ timesteps for which rewards are generated as
% \begin{align}
%     r_{t, p} = \x_{t, p}^\top \thetastar + \xi_{t, p}
% \end{align}
% $\thetastar$ parameterizes the linear reward function, and $\xi_{t, p}$ is a sequence of independent noise variables \footnote{We also use the following notation for the pseudo-reward:
%     $\bar{r}_{t, p} = \x_{t, p}^\top \thetastar$}. In each round $t \in [T]$ all processors receive the same context $\cX_{t} \subset \mathbb{R}^d$, and a master algorithm can instruct each processor to select an action $\x_{t, p} \in \mathbb{R}^d$ with $\x_{t, p} \in \cX_{t}$. 
The goal of the master algorithm/processors is to utilize its access to the sequence of rewards $r_{t,p}$ and joint control over the sequence of action selections $\x_{t, p}$ (which depends on the past sequence to events) to minimize the \textbf{parallel regret}:
\begin{equation*}
    \cR(T, P) = \sum_{p=1}^{P} \left( \sum_{t=1}^{T} \fstar(\x_{t, p}^\star) - \fstar(\x_{t, p}) \right),
\end{equation*}
where $\x_{t,p}^\star = \arg \max_{\x \in \cX_{t, p}} f(\x)$. We also introduce the notion of the \textbf{best regret} across processors:
\begin{equation*}
    \cR_{*}(T) = \min_{p \in [P]}  \left( \sum_{t=1}^{T} \fstar(\x_{t, p}^\star) - \fstar(\x_{t, p}) \right),
\end{equation*}
which captures the performance of the best processor in hindsight. We note the following relationship which follows immediately by definition of the aforementioned regrets:
\begin{remark}
The \textbf{parallel regret} and \textbf{best regret} satisfy
\begin{align*}
    \cR_{*}(T) \leq \frac{\cR(T, P)}{P}.
\end{align*}
\end{remark}
In the special case that there is a fixed (finite) context for all time across all processors (i.e., $\cX_{t,p}=\cX$), as is the case for design problems, the
\textbf{simple regret} is useful:
\begin{equation*}
    \cR_{s}(T,P) =   \fstar(\x^\star) - \fstar(\x_{T+1,1}).
\end{equation*}
The simple regret
captures the suboptimality of a choice $\x_{T+1}$, given by a next-step policy $\pi(\cdot)$ at the $(T+1)$st time against the single best choice $\xstar \in \cX$.
\begin{remark}
    There exists a randomized next-step policy $\pi(\cdot)$ (depending on the sequence of $\x_{t,p}$) at the $(T+1)$st timestep such that the simple regret satisfies
    \begin{align*}
        \mE_{\pi}[\cR_{s}(T,P)] \leq \mE \left[ \frac{\cR(T,P)}{TP} \right],
    \end{align*}
    when there is a fixed global context $\cX$ for all $t \in [T]$, $p \in [P]$\footnote{This follows from a similar reduction from sequential regret to simple regret applicable to multi-armed bandits in \citet[Proposition 33.2]{lattimore_szepesvari_2020}.}.
\end{remark}

% Defining the \textbf{simple regret} as
% \begin{equation*}
%     \cR_{s}(T,P) =   f(\x_{t, p}^\star) - f(\x_{t, p}).
% \end{equation*}

For our analysis, we make the following standard assumptions on the bandit problem in \cref{eq:nonlin_bandit}.
\begin{assumption}[Subgaussian Noise]
\label{assump:noise}
    The noise variables $\xi_{t,p}$ are $R$-subgaussian for all $t \in [T]$ and $p \in [P]$.
\end{assumption}

\begin{assumption}[Bounded Functions]
\label{assump:functions}
For all  $\x \in \mathcal{X}_{t, p}$, $t \in [T]$, $p \in [P]$, and all $f \in \cF$ the function values are bounded by a known upper bound:
\begin{equation*}
    \abs{f(\x)} \leq B, \quad \forall \x \in \mathcal{X}_{t,p}, \forall f \in \cF.
\end{equation*}
Further the function $\fstar$ is epsilon-close in the sup-norm to its best-approximant in $\cF$ which we denote $\tf$:
\begin{align}
    \tf = \arg \inf_{f \in \cF} \norm{f-\fstar}_{\infty}
\end{align}
\end{assumption}

\begin{assumption}[Bounded Covariates]
\label{assump:data}
For all contexts $\mathcal{X}_{t, p}$, $t \in [T]$, $p \in [P]$, the actions are norm-bounded by a known upper bound:
\begin{equation*}
    \| \x \| \leq L, \quad \forall \x \in \mathcal{X}_{t,p}.
\end{equation*}
\end{assumption}

\begin{assumption}[Almost-Linear Rewards]
\label{assump:param}
The function $f(\cdot)$ is $\epsilon$-close to linear in that for all contexts $\cX_{t,p}$ and for all $\x \in \cX_{t,p}$, there exists a parameter $\thetastar$ such that
\begin{align*}
    \abs{f(\x)-\x^\top \thetastar} \leq \epsilon .
\end{align*}
Further, this underlying parameter $\thetastar$ satisfies the norm bound:
\begin{equation*}
    \| \thetastar \| \leq S.
\end{equation*}
\end{assumption}
In the context of the above conditions we define the signal-to-noise ratio as:
\begin{equation}
    \snr = \left( \frac{LS}{R} \right)^2,
\end{equation}
in analogy with the classical setting of offline linear regression. Note that while we allow $\epsilon$ to be arbitrary, our guarantees are only non-vacuous when $\epsilon$ is suitably small. Thus $f(\x)$ should be thought of a function that is close to its best approximant in $\cF$. We also note that for the linear algorithms in this paper, our methods can easily be generalized using finite-dimensional feature expansions and random feature approximations \citep{rahimi2007random} to increase the flexibility of our model class. 

Throughout the following sections our regret upper bounds are stated with high probability. That is, we claim $\cR(T,P) \leq \rate$, with probability at least $1-O(\delta)$, where $\rate$ has at most $O(\log(\frac{1}{\delta}))$ dependence on $\delta$. However, under \cref{assump:data,,assump:param}, the total regret can always be trivially bounded as $O(LS TP)$ for example. Thus, our high-probability regret bounds can be easily converted to upper bounds in expectation at the cost of only logarithmic factors by setting $\delta \propto (\frac{1}{TP})^2$.

% As alluded to before, our primary purpose is to elucidate the \textit{statistical} sample complexity of learning in parallel settings. That being said, all algorithms in this paper admit efficient implementations whose complexities scale polynomially in the problem parameters.

\section{Parallelizing Contextual Bandits}

\begin{algorithm}[!bt]
\caption{Parallel UCB}\label{algo:par_nonlin_ucb}
\begin{algorithmic}[1]
\renewcommand{\algorithmicrequire}{\textbf{Input: }}
\renewcommand{\algorithmicensure}{\textbf{Output: }}
\REQUIRE $R, S, L, \epsilon$.
\FOR{$t = 1:T$}
    \STATE Compute $\hat{f}_t$ and $\beta_{t}$.
    \FOR{ $p=1:P$}
    \STATE Given $\cX_{t,p}$, compute $\x_{t, p} \leftarrow \arg \max_{\x \in \cX_{t, p}} \sup_{f \in \cF_{t-1,0}} f(\x)$ for $\cF_{t-1,0}$ as in \cref{eq:conf_set_nonlin}.
    \ENDFOR
    \FOR{$p = 1:P$} 
        \STATE Query $\x_{t,p}$ to receive reward $r_{t,p}$
     \hspace{1em}\smash{$\left.\rule{0pt}{1\baselineskip}\right\}\ \mbox{Executed In Parallel}$}
        \ENDFOR
\ENDFOR
\end{algorithmic}
\end{algorithm} 

\begin{algorithm}[!bt]
\caption{Parallel Lazy UCB}\label{algo:par_nonlin_lazy_ucb}
\begin{algorithmic}[1]
\renewcommand{\algorithmicrequire}{\textbf{Input: }}
\renewcommand{\algorithmicensure}{\textbf{Output: }}
\REQUIRE $R, S, L, \epsilon$.
\FOR{$t = 1:T$}
    \STATE Compute $\hat{f}_t$ and $\beta_{t}$.
    \FOR{ $p=1:P$}
    \STATE Given $\cX_{t,p}$, compute $\x_{t, p} \leftarrow \arg \max_{\x \in \cX_{t, p}} \sup_{f \in \cF_{t,p}} f(\x)$ for $\cF_{t,p}$ as in \cref{eq:conf_set_nonlin}.
    \ENDFOR
    \FOR{$p = 1:P$} 
        \STATE Query $\x_{t,p}$ to receive reward $r_{t,p}$
     \hspace{1em}\smash{$\left.\rule{0pt}{1\baselineskip}\right\}\ \mbox{Executed In Parallel}$}
        \ENDFOR
\ENDFOR
\end{algorithmic}
\end{algorithm} 

Here we consider a (parallel) optimistic, UCB-like algorithm which is applicable to general bandit problems which achieves nearly perfect parallel speed-up asymptotically. Our applies to function classes which have a bounded eluder dimension--which includes linear functions and generalized linear models amongst others \citep{russo2013eluder}. \cref{algo:par_nonlin_ucb,,algo:par_nonlin_lazy_ucb} rely on the \textit{optimism} principle. Both algorithms maintain a confidence set containing the true function $\fstar$ (or a close approximation to it) with high probability, using a least-squares estimate $\hat{f}_t \approx \fstar$. They then select the best action/arm with respect to an \textit{optimistic} estimate of what the true function might be. The critical difference between \cref{algo:par_nonlin_ucb,,algo:par_nonlin_lazy_ucb} is that within the same batch of contexts (i.e. in the loop $1:P$) \cref{algo:par_nonlin_ucb} uses the same rule to pick arms across the different $P$ processors. On the other hand, \cref{algo:par_nonlin_lazy_ucb} explicitly encourages diversity with in a batch by adjusting the empirical norm to include the previously chosen actions within a batch -- which discourages picking similar actions in the next within-batch iteration.

We introduce several useful pieces of notation to understand these algorithms. We first define the
the action-induced empirical 2-norm as $\norm{g}_{2,t,p}^2 = \sum_{a=1}^{t-1} \sum_{b=1}^{P} g(\x_{a,b})^2 + \sum_{b=1}^{p-1} g(\x_{t,b})^2$. We also consider the empirical squared loss $L_{2, t} = \sum_{a=1}^{t-1} \sum_{p=1}^{P} (f(\x_{a,b})-r_{a,b})^2$ and the corresponding least-squares minimizer over some base function class $\cF$ as $\hat{f}_t = \argmin_{f \in \cF_{t}} L_{2, t}$. We then consider the confidence sets:
\begin{align}
\label{eq:conf_set_nonlin}
    \cF_{t,p} = \{ f \in \cF : \norm{f-\hat{f}_t}_{t, p} \leq \sqrt{\beta_{t,p}(\cF, \alpha, \delta,  \epsilon)} \}
\end{align}
where $\beta_{t,p}(\cF, \alpha, \delta,  \epsilon)$ is defined in \cref{eq:nonlin_conf_set_main}.
We also recall the best approximant to the true function $\fstar$ in the function class $\cF$ in the sup-norm $\tf$.

Before introducing our algorithms we formally define the eluder dimension: a notion of complexity relevant to adaptive selection procedures introduced in \citet{russo2013eluder}.
\begin{definition}[Action Independence and Eluder Dimension]
Let $\epsilon > 0$ and $\{\x_i \}_{i=1}^{n} \subset \X$ be a set of actions.
\begin{itemize}
    \item An action $\x$ is $\epsilon$-dependent on $\{\x_i \}_{i=1}^{n}$ if any $f, f' \in \cF$ satisfying $\sqrt{\sum_{i=1}^n (f(\x_i)-f'(\x_i))^2} \leq \epsilon$ also satisfies $\abs{f(\x)-f'(\x)} \leq \epsilon$. An action $\x$ is $\epsilon$-independent of $\{\x_i \}_{i=1}^{n}$ with respect to $\cF$ if $a$ is not $\epsilon$-dependent on $\{\x_i \}_{i=1}^{n}$.
    \item The $\epsilon$-eluder dimension $\dimeluder{\cF}{\epsilon}$ is the length of the longest sequence of elements in $\{ \x_i \}_{i=1}^{n}$ such that for some $\epsilon' \geq \epsilon$, every element is $\epsilon'$-independent of its predecessors.
\end{itemize}
\end{definition}

In fact many of our core technical results also for the generalized notion of complexity referred to as the distributional eluder dimension introduced in \citet{jin2021bellman} which we defer to the Appendix. These results may be of independent interest since they have been used to prove regret bounds for algorithms in the full reinforcement learning setting. 
Defining, 
\begin{align}
\eta_{t}  = [(t-1)P]6 B + (t-1)P  R \sqrt{8 \log(4(tP)^2/\delta)}\end{align}
and explicitly defining the confidence set parameter as,
\begin{align}
\beta_{t,p}(\cF, \alpha_{T,P}^{\cF}, \delta, \epsilon) = 8R^2\log(N(\cF, \alpha_{T,P}^{\cF},\norm{\cdot}_{\infty})/\delta) + 2 \alpha_{T,P}^{\cF} \eta_t + 4(p-1)B^2 + 8\epsilon^2 TP \label{eq:nonlin_conf_set_main}
\end{align}
for $\alpha_{T,P}^{\cF} = \frac{B}{TP}$, we can show that,
\begin{theorem} \label{thm:nonlin_regret_bound}
Under \cref{assump:noise,,assump:functions} and defining $\deff = \dimeluder{\cF}{ \frac{B}{TP}}$, the regret of \cref{algo:par_nonlin_ucb}  satisfies,
\begin{equation}
    \cR(T, P) \leq O(BP\deff + \sqrt{\deff \beta_{T,1} TP}) \label{eq:nonlin_regret}
\end{equation}
with probability at least $1-\delta$
and the regret of  \cref{algo:par_nonlin_lazy_ucb}
satisfies,
\begin{equation*}
    \cR(T, P) \leq O(BP\deff + \sqrt{\deff \beta_{T,P} TP})
\end{equation*}
with probability at least $1-\delta$.
We make several comments on \cref{thm:nonlin_regret_bound}
\end{theorem}
\begin{itemize}
    \item Note that since $\cR(TP, 1) \leq O(\sqrt{\deff \beta_{T,1} TP})$, the second term in \cref{eq:nonlin_regret} matches the optimal regret achievable by a perfectly sequential, optimistic algorithm. \cref{thm:nonlin_regret_bound} shows that the cost of parallelism can be purely accounted in the leading term $O(BP\deff)$ which scales with the eluder dimension.
    \item \cref{thm:nonlin_regret_bound} applies uniformly to all models with bounded eluder dimension and covering numbers--including nonlinear function classes such as generalized linear models. \cref{thm:nonlin_regret_bound} shows linearity is not a barrier to achieving (asymptotically) perfect parallelism in the setting of contextual bandits.
    \item The regret bound for \cref{algo:par_nonlin_lazy_ucb} is weaker then that for \cref{algo:par_nonlin_ucb} due to the $\beta_{T,P}$ vs $\beta_{T,1}$ term, which results from inflating the confidence set to account for the lazy contributions. In the linear case we show via sharper analysis that the translation of \cref{algo:par_nonlin_lazy_ucb} need not incur this extra factor.
    \item The cost of misspecification is high. The regret contribution from the lack of a well-specified model scales with $\epsilon \sqrt{\deff} TP$. We return to this point in the case of the linear model where we show this scaling is in general unimprovable.
\end{itemize}

\cref{thm:nonlin_regret_bound} is holds generically for function class with bounded eluder dimension. Accordingly it can be instantiated in several examples with controlled eluder dimension and covering number using results from the literature \citep{russo2013eluder, osband2014model}. We highlight several of these examples below.
\begin{example}[Finite Sets]
\label{ex:finite}
Consider the case where $\cF$ any functions defined on $\cX$ (where $\cX_{t,p} \subseteq \cX$ for all $t \in [T], p \in [P]$) and $\abs{\cX}=m$. Then for any $\epsilon > 0$ $\dimeluder{\cF}{\epsilon} \leq m$ and $\log N(\cF, \epsilon, \norm{}_{\infty}/\delta) = \log m$. So the regret can be summarized as,
\begin{align*}
    \cR(T,P) \leq \tlO(BPm + R \log m \sqrt{mTP})
\end{align*}
\end{example}

\begin{example}[Linear Models]
\label{ex:lin_mod}
Consider the case where $\cF$ consists of $d$-dimensional linear models $\ttheta^\top \x$ satisfying $\norm{\ttheta} \leq S$ and $\norm{\x} \leq L$. Then since $\dimeluder{\cF}{\frac{B}{TP}} \leq \tlO(d)$ and $\log N(\cF, \frac{B}{TP}, \norm{}_{\infty}/\delta) = \tlO(d)$. So the regret can be summarized as,
\begin{align*}
    \cR(T,P) \leq \tlO(BPd + Rd \sqrt{TP} + \epsilon TP)
\end{align*}
\end{example}

\begin{example}[Quadratic Functions]
Consider the case where $\cF$ consists of $d \times d$-dimensional quadratic function $\x^\top \ttheta \x$ for $\ttheta \in \mR^{d \times d}$ satisfying $\norm{\ttheta}_F \leq S$ and $\norm{\x} \leq L$. Then since $\dimeluder{\cF}{\frac{B}{TP}} \leq \tlO(d^2)$ and $\log N(\cF, \frac{B}{TP}, \norm{}_{\infty}/\delta) = \tlO(d^2)$. So the regret can be summarized as,
\begin{align*}
    \cR(T,P) \leq \tlO(BPd^2 + Rd^2 \sqrt{TP} + \epsilon TP)
\end{align*}
\end{example}

\begin{example}[Generalized Linear Models]
Let $g(\cdot) : \mathbb{R}^d \to \mathbb{R}$ be a function with derivative $g'(\cdot) \in [a, b]$ for $a>0$ which defines the condition number $\kappa = \frac{b}{a}$. If $\cF = \{ g(\ttheta^\top \x) : \norm{\ttheta}_2 \leq S, \norm{\x}_{2} \leq L \}$. Then since $\dimeluder{\cF}{\frac{B}{TP}} \leq \tlO(d \kappa^2)$ and $\log N(\cF, \frac{B}{TP}, \norm{}_{\infty}/\delta) = \tlO(d)$, we also obtain,
\begin{align*}
    \cR(T,P) \leq \tlO(BPd + R \kappa d \sqrt{TP} + \kappa \epsilon TP)
\end{align*}

\end{example}

The general structure of each example is similar, with a burn-in term scaling with $\propto P$ and the eluder dimension, followed by the optimal regret term achievable by a purely sequential agent $\propto \sqrt{TP}$, and the misspecification cost $\propto \epsilon \sqrt{TP}$. To our knowledge, the results of \cref{thm:nonlin_regret_bound} are the first to establish the (asymptotic) optimal regret in the nonlinear setting of contextual bandits with bounded eluder dimension. The analysis, which is deferred to the Appendix, relies on the ability to argue the existence of long, parallel sequences of $\epsilon$-dependent actions/measure which relies on a  simple graph matching theorem. As our results also hold for the distributional eluder dimension, which is useful to the analysis of reinforcement learning (RL) \citet{jin2021bellman}, we believe these techniques may also enable the study of parallelism in the setting of RL.
Next, we specialize our results to the more familiar setting of linear rewards in the following sections where a more geometric understanding is provided.

\section{Parallelizing Linear Bandits}

\begin{algorithm}[!bt]
\caption{Parallel LinUCB}\label{algo:par_linucb}
\begin{algorithmic}[1]
\renewcommand{\algorithmicrequire}{\textbf{Input: }}
\renewcommand{\algorithmicensure}{\textbf{Output: }}
\REQUIRE $P, T, R, S, L, \lambda, \epsilon$, \textsc{DR} Routine.
\FOR{$t = 1:T$}
    \STATE Compute $\hbtheta_{t}$, $\V_{t,1}$, and $\beta_{t}$ as in \cref{eq:conf_ellipse1} and \cref{eq:conf_ellipse2}.
    \FOR{ $p=1:P$}
    \STATE Given $\cX_{t,p}$, compute $\y_{t, p} \leftarrow \arg \max_{\x \in \cX_{t, p}} \max_{\btheta \in \cC_{t,0}} \x^\top \btheta$ for $\cC_{t,0}(\hbtheta_{t}, \V_{t,1}, \beta_t, \epsilon)$ as in \cref{eq:conf_ellipse1}.
    \ENDFOR
    \STATE Compute $\tV_{t+1,1} = \V_{t,1} + \sum_{p=1}^{P} \y_{t,p} \y_{t,p}^\top$.
    \IF{ $\tV_{t+1,1}  \preceq 2 \V_{t,1}$ } \label{algo_linlazyts:if}
        \FOR{$p = 1:P$} 
            \STATE Set $\x_{t,p} \leftarrow \y_{t,p}$ and query $\x_{t,p}$ to receive reward $r_{t,p}$
            \hspace{1em}\smash{$\left.\rule{0pt}{1\baselineskip}\right\}\ \mbox{Executed In Parallel}$}
        \ENDFOR
    \ELSE
        \STATE Set $\{ \x_{t,p} \}_{p=1}^{P} \leftarrow \textsc{DR}(\cF_{t,P})$
        \FOR{$p = 1:P$} 
            \STATE Query $\x_{t,p}$ to receive reward $r_{t,p}$
            \hspace{1em}\smash{$\left.\rule{0pt}{1\baselineskip}\right\}\ \mbox{Executed In Parallel}$}
        \ENDFOR
    \ENDIF
\ENDFOR
\end{algorithmic}
\end{algorithm} 

\begin{algorithm}[!bt]
\caption{Parallel Lazy LinUCB}\label{algo:par_lazy_linucb}
\begin{algorithmic}[1]
\renewcommand{\algorithmicrequire}{\textbf{Input: }}
\renewcommand{\algorithmicensure}{\textbf{Output: }}
\REQUIRE $P, T, R, S, L, \lambda, \epsilon$, \textsc{DR} Routine.
\FOR{$t = 1:T$}
    \STATE Compute $\hbtheta_{t}$, and $\beta_{t}$ as in \cref{eq:conf_ellipse1} and \cref{eq:conf_ellipse2}.
    \FOR{$p = 1:P$} 
        \STATE Compute $\tV_{t, p} = \V_{t,1} + \sum_{k=1}^{p-1} \y_{t,k} \y_{t,k}^\top$.
        \STATE Given $\cX_
        {t,p}$, compute $\y_{t,p} \leftarrow \arg \max_{\x \in \cX_{t,p}} \max_{\btheta \in \cC_{t,p}} \x_{t}^\top \btheta$ for $\cC_{t,p}(\hbtheta_{t}, \tV_{t,p}, \beta_t, \epsilon)$ as in \cref{eq:conf_ellipse1}.
    \ENDFOR
    \STATE Compute $\tV_{t+1,1} = \V_{t,1} + \sum_{p=1}^{P} \y_{t,p} \y_{t,p}^\top$.
    \IF{ $\tV_{t+1,1}  \preceq 2 \V_{t,1}$ } \label{algo_linlazy:if}
        \FOR{$p = 1:P$} 
            \STATE Set $\x_{t,p} \leftarrow \y_{t,p}$ and query $\x_{t,p}$ to receive reward $r_{t,p}$
            \hspace{1em}\smash{$\left.\rule{0pt}{1\baselineskip}\right\}\ \mbox{Executed In Parallel}$}
        \ENDFOR
    \ELSE
        \STATE Set $\{ \x_{t,p} \}_{p=1}^{P} \leftarrow \textsc{DR}(\cF_{t,P})$
        \FOR{$p = 1:P$} 
            \STATE Query $\x_{t,p}$ to receive reward $r_{t,p}$
            \hspace{1em}\smash{$\left.\rule{0pt}{1\baselineskip}\right\}\ \mbox{Executed In Parallel}$}
        \ENDFOR
    \ENDIF
\ENDFOR
\end{algorithmic}
\end{algorithm} 

We now consider several specialized algorithms for (parallel) linear bandits that are direct analogs of \cref{algo:par_nonlin_ucb,,algo:par_nonlin_lazy_ucb}.  These algorithms are also inspired by the \textit{optimism} principle, but can take advantage of the explicit linear structure of the reward function. Such algorithms maintain a running estimate of $\hbtheta_t \approx \thetastar$ and the empirical covariance matrix of queried observations:

$$\V_{t,p} = \lambda \I_d + \sum_{a=1}^{t-1} \sum_{b=1}^{P} \x_{a,b} \x_{a,b}^\top + \sum_{k=1}^{p-1} \x_{t,k}\x_{t,k}^\top, $$
which are used to construct a confidence ellipsoid in parameter space where $\lambda$ is a regularization parameter. The best parameter in this set of plausible parameters---which allows the estimation of a maximum hypothetical reward---is used to optimistically guide exploration in the feature space. We use the notation $\mathcal{F}_{t, p-1}$
to define the filtration of all events that have occurred up until and including the revelation of the context $\cX_{t,p}$. Despite the topical differences in these algorithms, there is a common thread which ties together their analyses in the parallel setting in our framework:\footnote{The choice of two in the upper bound here is arbitrary and can be replaced with any universal constant $c>1$ without changing our results up to constants.}
\begin{samepage}
\begin{condition}
\label{cond:critical_inequality}[Critical Covariance Inequality]
An (estimated) covariance matrix $\V_{t,p} \in \cF_{t, p-1}$ is said to satisfy the \textit{critical covariance inequality} at round $t$ for processor $p$ if
\begin{equation}
\boxed{\V_{t,1} \preceq \V_{t,p} \preceq 2 \V_{t,1}.}
\end{equation}
We refer to any round $t$ for which the aforementioned inequality does not hold for any $p \in \{ 2, \hdots, P \}$ as a \textit{doubling round}\footnote{Note that similar to \citet{gu2021batched} we could also introduce a more refined scheme to adaptively synchronize across rounds by replacing this inequality with an analogous one involving log-determinants if our goal was to minimize the number of policy switches.}. 
\end{condition}
\end{samepage}
For the standard purely sequential setting of linear bandits (i.e., $P=1$), in each interaction round we select $\x_t$, gain the information $r_t$, update our model, and iterate. In a parallel setting, on the other hand, we must jointly select $\x_{t,p}$ for all $p \in [P]$ \textit{before} seeing any additional rewards in round $t$. \cref{cond:critical_inequality} ensures that up a factor of two, there is no direction along which our estimate of the covariance changes too rapidly within a single round. If we imagine unrolling the parallel dimension $p$ sequentially across time (i.e., consider the lexigraphic ordering of pairs $(t,p)$), \cref{cond:critical_inequality} ensures the covariance estimate is  quasi-static intra-round.  The significance of this simple condition is that once we receive reward $r_{t,p}$ for $p \in [P]$ at the end of the round $t$, it is nearly as if we received the reward $r_{t,p}$ immediately after selecting $\x_{t,p}$. We later provide more intuition as how this factors into our analysis and what algorithms satisfy this property.

Finally, in the event a particular round $t$ is a doubling round, we allow our algorithms to call a \textit{doubling round routine}, $\{ \x_{t,p} \}_{p=1}^{P} \leftarrow \textsc{DR}(\cF_{t,P})$, where  $\cF_{t,P}$ is the $\sigma$-algebra containing all information regarding past contexts, rewards, and selected actions. The doubling round routine allows our algorithm to make a different choice of actions instead of the actions $\{ \y_{t,p} \}_{p=1}^{P}$  suggested by the optimistic algorithm if round $t$ is a doubling round. In many cases, this routine can simply be taken to be the \textit{identity} map and non-trivial parallelism gains are still obtained. However, in \cref{sec:rich_context} we provide an example of a nontrivial choice of doubling round routine that can exploit the geometry of the context sets for improved performance. We note that our algorithms fall under the term of the \textit{fixed batch setting} as described in the adaptivity literature -- where synchronization between processors is specified at a predefined set of intervals. 

\subsection{Linear Upper Confidence Bound (UCB) Algorithms}
\label{sec:lin_ucb}

We first show how two natural algorithms, which are parallelized variants of the classic LinUCB algorithm of \citet{abbasi2011improved}, obtain the optimal sequential regret, up to a burn-in term, when they satisfy \cref{cond:critical_inequality}. In the following we use 
\begin{align}%\label{equation:theta_hat_estimator}
    \hbtheta_t = \argmin_{\btheta} \frac{1}{2} \sum_{a=1}^{t-1} \sum_{b=1}^{P} (r_{a, b}-\x_{a,b}^\top \btheta)^2 + \lambda \norm{\btheta}_2^2 \label{eq:ls_estimator}
\end{align}
to refer to the least-squares estimator using data until round $t$, and
\begin{align}
    & \cC_{t, p}(\hbtheta_{t}, \V_{t,p}, \beta_t, \epsilon) = \{ \btheta : \| \btheta - \hbtheta_t \|_{\V_{t, p}} \leq \sqrt{\beta_t(\delta)} + \sqrt{(t-1)P} \epsilon \} \\
    & \text{ for }  \V_{t,p} = \lambda \I_d + \sum_{a=1}^{t-1} \sum_{b=1}^{P} \x_{a,b} \x_{a,b}^\top + \sum_{k=1}^{p-1} \x_{t,k} \x_{t,k}^\top & \text{where } \label{eq:conf_ellipse1}
    \\
    & \sqrt{\beta_{t}(\delta)} =  R\sqrt{  \log\left( \frac{\det(\V_{t,0})}{\lambda^d \delta^2}\right) } + \sqrt{\lambda} S  \leq  R \sqrt{ d \log \left(\frac{1+ tPL^2/\lambda}{\delta} \right) } + \sqrt{\lambda} S \label{eq:conf_ellipse2}
\end{align}
to refer to a confidence ellipsoid which uses $\hbtheta_t$ as its center, but allows the matrix $\V_{t,p}$ (which includes intra-round updates) to modulate the exploration directions. As we will argue, these confidence ellipsoids satisfy the ``optimism" property in that they contain the unobserved $\thetastar$ with high probability. We note that the confidence ellipsoids also include an addition term proportional to $\epsilon$ to accommodate the nonlinearity of the objective. 

\cref{algo:par_linucb} exploits parallelism in a simple fashion. It finds the best optimistic upper bound on the reward in round $t$ for each context and allocates each of its $P$ parallel resources to querying those arms. Although this seems redundant (if for example all the context sets are equal), this strategy can still provide benefit because when \cref{algo:par_linucb} queries a common arm $\x$, $P$ times, the effective variance in the noise of the observed reward is reduced by a factor of $1/P$. As \cref{thm:regret_par_linucb} shows, even this simple parallelism-enabled noise reduction strategy can provide significant benefit.

\cref{algo:par_lazy_linucb} naturally encourages diversity in its parallel exploration strategy within a given round. Within a round $t$, \cref{algo:par_lazy_linucb} queries new actions $\x_{t,p}$ in the standard LinUCB fashion via an optimistic approach with an important caveat: while the covariance matrix is sequentially updated using the queried actions, a ``stale" mean estimate (with data from the first $t-1$ round) is used in the construction of its corresponding confidence ellipsoid since the rewards $r_{t,p}$ are not available intra-round. This update strategy exploits a key property of the linear regression estimator used to construct the confidence ellipsoid. The covariance matrix used to modulate exploration across directions does \textit{not} depend on the rewards $r_{t,p}$ (although the mean estimate does).\footnote{This is closely related to the fact that the conditional covariance of jointly Gaussian random variable only depends on the covariance of the original matrix.} One pitfall of such an approach is that shrinking the predicted variance in the absence of corresponding observations can lead to an ``overconfident" algorithm which may incorrectly exclude the true parameter $\thetastar$ from its confidence set. We compensate this aggressive exploration strategy by also using a lazy threshold width $\sqrt{2}\beta_t(\delta)$ which is inflated by a small multiplicative factor to allay this effect.

Our main result follows which bounds the regret of both \cref{algo:par_linucb} and \cref{algo:par_lazy_linucb}.

\begin{theorem}
    \label{thm:regret_par_linucb}
    Let \cref{assump:noise,,assump:data,,assump:param} hold and let $\cD_{t}$ denote the event that round $t$ is a doubling round (see \cref{cond:critical_inequality}). Then, for any choice of doubling-round routine $\textsc{DR}$, the regret of both \cref{algo:par_linucb,,algo:par_lazy_linucb} satisfy
    \begin{equation}
    \cR(T, P) \leq O \left( LS P\cdot \sum_{t=1}^T \Ind[\cD_t] \right) + \tlO \left(\sqrt{dTP} \max(R \sqrt{d}+\sqrt{\lambda} S + \sqrt{TP} \epsilon, LS) \right),
    \end{equation}
    with probability at least $1-\delta$.
\end{theorem}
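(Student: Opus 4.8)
The plan is to decompose the horizon $[T]$ into the set of \emph{doubling rounds} $\cD = \{t : \Ind[\cD_t] = 1\}$ and the set of \emph{good rounds} $\cG = [T] \setminus \cD$, and bound the two contributions to $\cR(T,P)$ separately. On doubling rounds the actions are chosen by an arbitrary \textsc{DR} routine; under \cref{assump:data,,assump:param} the per-query regret is at most $2(LS + \epsilon) = O(LS)$ (absorbing the $\epsilon$ term since guarantees are only non-vacuous for small $\epsilon$), so the $P$ queries in each such round cost $O(LS\,P)$, yielding the first term $O\!\left(LS\,P \sum_{t=1}^T \Ind[\cD_t]\right)$. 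The crux is the second term: I need to show the good-round regret matches the sequential LinUCB bound for $TP$ total queries, up to logs.

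First I would establish the \textbf{optimism / validity} of the confidence sets. The self-normalized martingale bound of \citet{abbasi2011improved} gives that, with probability $\geq 1-\delta$, simultaneously for all $t$, $\|\hbtheta_t - \btheta_{\mathrm{LS},t}\|_{\V_{t,0}} \leq \sqrt{\beta_t(\delta)}$ where $\btheta_{\mathrm{LS},t}$ is the noiseless least-squares target; adding the misspecification slack (the data until round $t$ contributes at most $\sqrt{(t-1)P}\,\epsilon$ in $\V_{t,0}$-norm, propagated through $\V_{t,0}^{-1/2}$) shows $\thetastar \in \cC_{t,0}(\hbtheta_t,\V_{t,0},\beta_t,\epsilon)$ for all $t$ on this event. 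For \cref{algo:par_linucb} the relevant ellipsoid is $\cC_{t,0}$ itself; for \cref{algo:par_lazy_linucb} I use \cref{cond:critical_inequality} — on a good round $\V_{t,1}\preceq\V_{t,p}\preceq 2\V_{t,1}$ — to relate $\|\cdot\|_{\tV_{t,p}}$ to $\|\cdot\|_{\V_{t,1}}$ up to a $\sqrt2$ factor, which is exactly why the inflated radius $\sqrt{2}\beta_t$ is used, so $\thetastar$ remains in $\cC_{t,p}$ on good rounds as well.

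Next, the \textbf{per-step regret bound}. On a good round $t$, by optimism $f(\x_{t,p}^\star) \le \la \x_{t,p}^\star,\btheta\ra + \epsilon \le \max_{\btheta\in\cC_{t,p}}\la\y_{t,p},\btheta\ra + \epsilon$, and a standard manipulation bounds the instantaneous regret $f(\x_{t,p}^\star)-f(\x_{t,p})$ by $2(\sqrt{\beta_t}+\sqrt{(t-1)P}\epsilon)\|\y_{t,p}\|_{\V_{t,p}^{-1}} + 2\epsilon$, clipped at $O(LS)$. The key structural point — enabled again by \cref{cond:critical_inequality} — is that $\|\y_{t,p}\|_{\V_{t,p}^{-1}}$ (the ellipsoid the algorithm actually uses, with intra-round updates) is within a constant of $\|\y_{t,p}\|_{\tV_{t,p}^{-1}}$ for the matrix that \emph{does} include all earlier same-round actions, so the quantities $\|\x_{t,p}\|_{\V_{t,p}^{-1}}$ behave exactly as in a sequential run over the lexicographically ordered pairs $(t,p)$. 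Summing over good rounds and applying Cauchy–Schwarz, $\sum \|\x_{t,p}\|_{\V_{t,p}^{-1}} \le \sqrt{TP}\sqrt{\sum \min(1,\|\x_{t,p}\|_{\V_{t,p}^{-1}}^2)}$, and the elliptical potential / ``telescoping determinant'' lemma bounds $\sum_{(t,p)} \min(1,\|\x_{t,p}\|_{\V_{t,p}^{-1}}^2) \le 2\log\frac{\det\V_{\mathrm{final}}}{\lambda^d} = \tlO(d)$ — here the $\V_{t,p}\preceq 2\V_{t,1}$ direction of the condition is what lets the sum over a good round's $P$ terms still telescope cleanly, losing only a constant. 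Combining: good-round regret is $\tlO\!\big(\sqrt{dTP}\,(\sqrt{\beta_T} + \sqrt{TP}\epsilon)\big)$ with $\sqrt{\beta_T} = \tlO(R\sqrt d + \sqrt\lambda S)$, and taking the clip into account replaces the bracket by $\max(R\sqrt d + \sqrt\lambda S + \sqrt{TP}\epsilon,\,LS)$, which is the stated second term.

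The \textbf{main obstacle} is making the ``quasi-static covariance'' argument fully rigorous: one must carefully verify that \cref{cond:critical_inequality} on good rounds simultaneously (i) preserves optimism for the lazy algorithm despite the stale mean $\hbtheta_t$ — i.e.\ that $\V_{t,1}\preceq\V_{t,p}$ together with the $\sqrt2$-inflated radius genuinely compensates — and (ii) permits the elliptical-potential sum over the full lexicographic sequence, where the subtlety is that the algorithm's own matrix $\V_{t,p}$ differs from the idealized $\tV_{t,p}$; the $\V_{t,p}\preceq 2\V_{t,1}\preceq 2\tV_{t,p}$ sandwich is what bridges them, but one must track that no more than constant factors (and logs, via $\det\V_{\mathrm{final}} \le (\lambda + TPL^2/d)^d$) are lost anywhere in this chain. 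Everything else is bookkeeping on top of the classic LinUCB analysis of \citet{abbasi2011improved}.
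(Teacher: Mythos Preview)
Your proposal is correct and follows essentially the same route as the paper: split into doubling versus non-doubling rounds, trivially bound the former by $O(LSP)$ per round, establish optimism of the (misspecification-corrected) confidence ellipsoids via the self-normalized bound, bound the per-step regret on good rounds by a confidence-width times $\|\x_{t,p}\|_{\V_{t,p}^{-1}}$ using Cauchy--Schwarz and \cref{cond:critical_inequality}, and finish with Cauchy--Schwarz plus the elliptical potential lemma over the lexicographic sequence.

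One small point of clarification on your ``main obstacle'': on a non-doubling round the algorithm sets $\x_{t,p}\leftarrow\y_{t,p}$, so the matrix $\V_{t,p}$ (built from played actions $\x_{t,k}$) and your ``idealized'' $\tV_{t,p}$ (built from proposed actions $\y_{t,k}$) coincide---there is no bridging to do between them. The actual subtlety, which the paper handles and you implicitly have, is that for \cref{algo:par_linucb} the confidence ellipsoid is shaped by $\V_{t,1}$ while the elliptical potential lemma needs $\|\x_{t,p}\|_{\V_{t,p}^{-1}}$; the paper resolves this by applying Cauchy--Schwarz directly in the $\V_{t,p}$-norm (valid for any PSD matrix) and then invoking $\V_{t,p}\preceq 2\V_{t,1}$ on the parameter side to get $\|\ttheta_{t,p}-\thetastar\|_{\V_{t,p}}\leq\sqrt{2}\|\ttheta_{t,p}-\thetastar\|_{\V_{t,1}}$. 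This is the precise mechanism, but your sandwich intuition captures it.
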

We now make several comments to interpret the result.
\begin{itemize}
    \item The second term in \cref{thm:regret_par_linucb} represents the near-optimal (up to log factors) regret a single processor could achieve in a total of $TP$ rounds interacting with a bandit instance in a purely sequential fashion.\footnote{The standard choice of regularizer in the sequential setting is taken as $\lambda=
L^2$.} The first term in \cref{thm:regret_par_linucb} represents the price of parallelization. \cref{thm:regret_par_linucb} hints at the prospect of obtaining a near-optimal worst-case regret (as $T \to \infty$) if the estimate of the covariance can be stabilized intra-round (so the algorithms do not suffer too many doubling rounds). We show how the \cref{cond:critical_inequality} can be enforced such that the first term in \cref{thm:regret_par_linucb} is \textit{independent} of $T$ and $d$ in the sequel. This can even be done in several cases when the choice of the doubling-round routine \textsc{DR} is taken to be the identity map.
\item \cref{thm:regret_par_linucb} also explicitly represents the scales of noise, covariates and parameters, and model misspecification (i.e., the values in \cref{assump:data,,assump:noise,assump:param}) instead of merely asserting that these quantities are $\Theta(1)$ as is standard in the bandits literature (see \citet{lattimore_szepesvari_2020}). Explicitly representing these quantities allows a fine-grained understanding of the interplay between parallelism and quantities such as \textsc{snr}, which may vary from application to application.
\item The cost of misspecification in the regret is high. An $\epsilon$-level of misspecification\footnote{Note this is only non-vacuous when $\epsilon \ll LS/\sqrt{d}$ since the regret can always be trivially bounded by $O(LS TP)$ under our assumptions.} contributes a linearly-scaling regret of $\epsilon \sqrt{d} TP$ in both $T$ and $P$. Balancing the trade-off between the variance $\tlO(Rd \sqrt{TP})$ and misspecification bias $\tlO(\epsilon \sqrt{d} TP)$ is more nuanced in the setting of sequential learning compared to that of i.i.d. supervised learning. In particular \cref{thm:regret_par_linucb} suggests that especially at long time scales and large levels of parallelism, the errors compounded by using an inflexible feature set can easily overwhelm the reduced exploration needed when regressing in a low-dimensional space. If large values of $P$ are desired the use of a flexible feature expansion (with a higher effective dimension) may be desirable.
\end{itemize}
\cref{sec:stable_cov} provides several sufficient conditions under which the covariance stability can be naturally satisfied. Further, \cref{sec:lb} provides instances that show that in the regime of sufficiently large $TP$ the regret bounds in \cref{thm:regret_par_linucb} are unimprovable.

\subsection{Linear Thompson Sampling (TS) Algorithms}\label{section:lints}

Following  Section~\ref{sec:lin_ucb}, we show that two parallel variants of linear Thompson sampling (TS)~\citep{agrawal2013thompson,agrawal2013further,abeille2017linear} can obtain optimal sequential regret, up to a burn in-term, when they satisfy Condition~\ref{cond:critical_inequality}. Despite being among the oldest bandit methods, and having poorer worst-case theoretical guarantees compared to their deterministic counterparts, Thompson sampling methods often achieve excellent performance in practice \citep{thompson1933likelihood, russo2017tutorial} . 

We use the same notation as in Section~\ref{sec:lin_ucb} and refer to the least-squares estimator using data until round $t$ as $ \hbtheta_t$, and defined as in Equation~\ref{eq:ls_estimator}, to describe our parallel TS variants. Algorithm~\ref{algo:par_lints}  is the corresponding Thompson sampling version of Algorithm~\ref{algo:par_linucb}. During each round $p \in [P]$, \cref{algo:par_lints} samples $P$ independent candidate parameters $\{\ttheta_{t,p}\}_{p=1}^P$
to induce exploration over the parameter set. 
Solving the optimization problems, $\argmax_{\x \in \cX_{t, p}}  \x^\top \ttheta_{t, p}$, over the possibly processor dependent contexts $\cX_{t,p}$, induces $p$ distinct arm choices for the different processors $\{\x_{t, p}\}_{p=1}^P$. While Algorithm~\ref{algo:par_lints} does not update the covariance of the sampling distribution while producing each of the $P$ candidate models $\{\ttheta_{t,p}\}_{p=1}^P$, \cref{algo:par_lazy_lints} mirrors~\cref{algo:par_lazy_linucb}. ~\cref{algo:par_lazy_lints} proceeds by sampling the model parameters sequentially across the different processors, but updates the sampling covariance matrix in between each sampling step intra-round. 

Our main result is the following bound for \cref{algo:par_lints,,algo:par_lazy_lints}:
\begin{theorem}\label{thm:linTS_parallel}
    Let \cref{assump:noise,,assump:data,,assump:param} hold and let $\cD_{t}$ denote the event that round $t$ is a doubling round (see \cref{cond:critical_inequality}). Then, for any choice of doubling-round routine $\textsc{DR}$, the regret of both \cref{algo:par_lints,,algo:par_lazy_lints} satisfy
    % \begin{equation}
    % \cR(T, P) \leq \tlO \left( LS \cdot \sum_{t=1}^T \Ind[\cD_t] + 4\gamma_{T, P}(\delta) \left( \sqrt{dTP \left( 1 + \frac{L^2}{\lambda}\right)\ln\left( \frac{d\lambda + TPL}{d\lambda}\right) }+  \frac{2L}{\sqrt{\lambda}}\sqrt{TP \log( 1/\delta)} \right) \right)
    % \end{equation}
    \begin{equation}
    \cR(T, P) \leq O \left( LS P \cdot \sum_{t=1}^T \Ind[\cD_t] \right) + \tlO \left(d \sqrt{TP \left(1+\frac{L^2}{\lambda}\right)}\left( R\sqrt{d} + S\sqrt{\lambda} + \sqrt{TP}\epsilon \right)  \right),
    \end{equation}
    with probability at least $1-3\delta$, whenever $\delta \leq \frac{1}{6}$.
\end{theorem}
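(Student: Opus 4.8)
The plan is to mirror the structure of the proof of \cref{thm:regret_par_linucb}, replacing the optimism-based regret control of LinUCB with the standard Thompson sampling argument, while re-using the doubling-round bookkeeping verbatim. Concretely, I would first split the rounds into the doubling rounds $\cD_t$ and the ``good'' rounds where \cref{cond:critical_inequality} holds. On doubling rounds, regardless of what $\textsc{DR}$ does, each of the $P$ per-processor instantaneous regrets is bounded by $2LS$ under \cref{assump:data,,assump:param} (since $|f(\x_{t,p}^\star)-f(\x_{t,p})|\le |\x_{t,p}^\star{}^\top\thetastar| + |\x_{t,p}^\top\thetastar| + 2\epsilon \lesssim LS$ when $\epsilon\lesssim LS$, and otherwise the whole bound is vacuous), giving the first term $O(LSP\sum_t \Ind[\cD_t])$. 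The work is therefore entirely on the good rounds.

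On a good round $t$, \cref{cond:critical_inequality} gives $\V_{t,1}\preceq \tV_{t,p}\preceq 2\V_{t,1}$ for all $p$, so the intra-round covariance used to define the sampling distribution (for \cref{algo:par_lazy_lints}) or the stale $\V_{t,1}$ (for \cref{algo:par_lints}) are all equivalent up to a factor of $2$ to the covariance $\V_{t,p}$ that \emph{would} have been used had the queries been made sequentially. I would then run the classical linear-TS decomposition (following \citet{agrawal2013thompson,abeille2017linear}): condition on the event that $\thetastar\in\cC_{t,1}$ for all $t$ — which holds with probability $\ge 1-\delta$ by the self-normalized martingale bound underlying \cref{eq:conf_ellipse2}, adjusted for the $\sqrt{(t-1)P}\,\epsilon$ misspecification slack — and separately on the anti-concentration event that the sampled $\ttheta_{t,p}$ is ``optimistic with constant probability''; the factor $\sqrt{d}$ inflation of the TS confidence radius (relative to LinUCB) is exactly what buys this anti-concentration, and is the source of the extra $\sqrt d$ and the $\sqrt{1+L^2/\lambda}$ in the bound. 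The per-step regret on a good round is then controlled, up to constants and the usual $1/\Pr[\text{optimistic}]$ factor, by $\|\x_{t,p}\|_{\tV_{t,p}^{-1}}$ times the (inflated) radius $\sqrt{\beta_t}+\sqrt{(t-1)P}\,\epsilon \lesssim R\sqrt d + S\sqrt\lambda + \sqrt{TP}\,\epsilon$ (absorbing the $\sqrt d$ TS inflation), plus a martingale-difference term handled by Azuma–Hoeffding, which is where the remaining failure probability $2\delta$ (and the constraint $\delta\le 1/6$) enters.

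To sum these per-step terms, I would invoke the elliptical-potential / log-determinant lemma: relabel the $(t,p)$ pairs lexicographically into a single sequence of $\le TP$ steps, note that \cref{cond:critical_inequality} ensures $\tV_{t,p}$ and $\V_{t,p}$ (the true running covariance including intra-round updates) agree up to a factor $2$ on good rounds so that $\sum \min(1,\|\x_{t,p}\|_{\tV_{t,p}^{-1}}^2) \lesssim \log\det(\V_{T+1,1}/\lambda\I) \lesssim d\log(1+TPL^2/\lambda)$, and apply Cauchy–Schwarz to get $\sum \|\x_{t,p}\|_{\tV_{t,p}^{-1}} \lesssim \sqrt{TP\cdot d\log(1+TPL^2/\lambda)}$. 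Multiplying by the radius gives the second term $\tlO\big(d\sqrt{TP(1+L^2/\lambda)}(R\sqrt d + S\sqrt\lambda + \sqrt{TP}\,\epsilon)\big)$, up to the polylog factors hidden in $\tlO$.

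The main obstacle is the intra-round staleness: one must verify that on good rounds the TS ``optimism with constant probability'' argument genuinely goes through even though, within a round, the mean $\hbtheta_t$ is frozen and only the covariance is advanced (for \cref{algo:par_lazy_lints}) or nothing is advanced at all (for \cref{algo:par_lints}). The key observation to push through is that \cref{cond:critical_inequality} makes the frozen/advanced covariance sandwiched between $\V_{t,1}$ and $2\V_{t,1}$, so both the concentration of $\thetastar$ about $\hbtheta_t$ and the anti-concentration of $\ttheta_{t,p}$ about $\hbtheta_t$ degrade only by universal constants relative to the sequential analysis; the inflated lazy radius $\sqrt 2\,\beta_t$ was introduced precisely to absorb the resulting constant-factor overconfidence without breaking optimism. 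Handling the misspecification term $\epsilon$ consistently through both the concentration step and the per-step regret bound (it appears additively in the radius and is then summed $TP$ times) is the other bookkeeping point that needs care.
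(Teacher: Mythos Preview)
Your proposal is correct and follows essentially the same route as the paper: split on doubling rounds, bound those trivially by $2LSP$, then on good rounds run the Abeille--Lazaric-style TS decomposition (concentration of $\hbtheta_t$, anti-concentration of $\ttheta_{t,p}$ giving optimism with probability $\ge 1/4$, the $1/\Pr[\text{optimistic}]$ blow-up handled via a ghost sample, a Hoeffding/Azuma martingale step to pass from conditional expectations back to realized $\|\x_{t,p}\|_{\V_{t,p}^{-1}}$, and finally the elliptical-potential sum), with \cref{cond:critical_inequality} absorbing the discrepancy between the stale/lazy covariances and the sequential $\V_{t,p}$ at the cost of universal constants. The paper's $\gamma_{t,p}(\delta)$ packages exactly the inflated radius you describe, and its Lemmas on anti-concentration, the ghost-sample bound, and the martingale conversion are precisely the three supporting steps you sketch; the $\delta\le 1/6$ constraint and the extra $2\delta$ failure arise where you indicate.
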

We make several comments on the result.
\begin{itemize}
    \item \cref{thm:linTS_parallel} has a similar flavor to \cref{thm:regret_par_linucb}---allowing for near perfect parallelization (as $T \to \infty$) relative to the sequential Thompson sampling algorithm when the first term is independent of $T$.\footnote{The standard bound for TS in the sequential setting can be obtained by setting $\lambda=L^2$.} As before, \cref{cond:critical_inequality} can be enforced such that the first term in \textit{independent} of $T$.
    \item Note that even in the sequential setting, the regret of linear Thompson sampling suffers an extra multiplicative $\sqrt{d}$  factor relative to LinUCB  \citep{abeille2017linear}. As \cref{thm:linTS_parallel} shows, the parallel variants of Thompson inherit this $\sqrt{d}$ factor. Despite this extra dimension-dependent  factor (which is needed to maintain the optimism property when using the noisily sampled candidate models for exploration) the performance of the Thompson sampling algorithm is often excellent in practice.
\end{itemize}

\begin{algorithm}[!bt]
\caption{Parallel LinTS}\label{algo:par_lints}
\begin{algorithmic}[1]
\renewcommand{\algorithmicrequire}{\textbf{Input: }}
\renewcommand{\algorithmicensure}{\textbf{Output: }}
\REQUIRE $P, T, R, S, L, \lambda, \epsilon$, \textsc{DR} Routine.
\FOR{$t = 1:T$}
    \STATE Compute $\hbtheta_{t}$, $\V_{t,1}$, and $\beta_{t}$ as in \cref{eq:ls_estimator}, \cref{eq:conf_ellipse1} and \cref{eq:conf_ellipse2}.
    \FOR{$p = 1:P$} 
        \STATE Sample $\boldsymbol{\eta}_{t, p} \sim \mathcal{N}(\mathbf{0}, \mathbb{I}_d )$.
        \STATE Compute parameter $\ttheta_{t,p} = \hbtheta_{t} +  \left(\sqrt{\beta_t} +\sqrt{(t-1)P} \epsilon\right)\V^{-1/2}_{t, 1} \boldsymbol{\eta}_{t, p}$.
        \STATE Given $\cX_{t, p}$, compute $\y_{t, p} \leftarrow \arg \max_{\x \in \cX_{t,p}}  \x^\top \ttheta_{t, p}$.
    \ENDFOR
    \STATE Compute $\tV_{t+1,1} = \V_{t,1} + \sum_{p=1}^{P} \y_{t,p} \y_{t,p}^\top$.
    \IF{ $\tV_{t+1,1}  \preceq 2 \V_{t,1}$ } \label{algo_tslin:if}
        \FOR{$p = 1:P$} 
            \STATE Set $\x_{t,p} \leftarrow \y_{t,p}$ and query $\x_{t,p}$ to receive reward $r_{t,p}$
            \hspace{1em}\smash{$\left.\rule{0pt}{1\baselineskip}\right\}\ \mbox{Executed In Parallel}$}
        \ENDFOR
    \ELSE
        \STATE Set $\{ \x_{t,p} \}_{p=1}^{P} \leftarrow \textsc{DR}(\cF_{t,P})$
        \FOR{$p = 1:P$} 
            \STATE Query $\x_{t,p}$ to receive reward $r_{t,p}$
            \hspace{1em}\smash{$\left.\rule{0pt}{1\baselineskip}\right\}\ \mbox{Executed In Parallel}$}
        \ENDFOR
    \ENDIF
\ENDFOR
\end{algorithmic}
\end{algorithm}

\begin{algorithm}[!bt]
\caption{Parallel Lazy LinTS}\label{algo:par_lazy_lints}
\begin{algorithmic}[1]
\renewcommand{\algorithmicrequire}{\textbf{Input: }}
\renewcommand{\algorithmicensure}{\textbf{Output: }}
\REQUIRE $P, T, R, S, L, \lambda, \epsilon$, \textsc{DR} Routine.
\FOR{$t = 1:T$}
    \STATE Compute $\hbtheta_{t}$, and $\beta_{t}$ as in \cref{eq:conf_ellipse1} and \cref{eq:conf_ellipse2}.
    \FOR{$p = 1:P$} 
         \STATE Compute $\tV_{t, p} = \V_{t,1} + \sum_{k=1}^{p-1} \y_{t,k} \y_{t,k}^\top$.
        \STATE Sample $\boldsymbol{\eta}_{t, p} \sim \mathcal{N}(\mathbf{0}, \mathbb{I}_d )$.
        \STATE Compute parameter $\ttheta_{t,p} = \hbtheta_{t} + \left(\sqrt{ 2\beta_t} +\sqrt{2(t-1)P} \epsilon\right)\V^{-1/2}_{t, p} \boldsymbol{\eta}_{t, p}$.
        \STATE Given $\cX_{t,p}$, compute $\y_{t,p} \leftarrow \arg \max_{\x \in \cX_{t,p}} \x^\top \ttheta_{t,p}$.
    \ENDFOR
\STATE Compute $\tV_{t+1,1} = \V_{t,1} + \sum_{p=1}^{P} \y_{t,p} \y_{t,p}^\top$.
    \IF{ $\tV_{t+1,1}  \preceq 2 \V_{t,1}$ } \label{algo_lin:if}
        \FOR{$p = 1:P$} 
            \STATE Set $\x_{t,p} \leftarrow \y_{t,p}$ and query $\x_{t,p}$ to receive reward $r_{t,p}$
            \hspace{1em}\smash{$\left.\rule{0pt}{1\baselineskip}\right\}\ \mbox{Executed In Parallel}$}
        \ENDFOR
    \ELSE
        \STATE Set $\{ \x_{t,p} \}_{p=1}^{P} \leftarrow \textsc{DR}(\cF_{t,P})$
        \FOR{$p = 1:P$} 
            \STATE Query $\x_{t,p}$ to receive reward $r_{t,p}$
            \hspace{1em}\smash{$\left.\rule{0pt}{1\baselineskip}\right\}\ \mbox{Executed In Parallel}$}
        \ENDFOR
    \ENDIF
\ENDFOR
\end{algorithmic}
\end{algorithm} 

\section{Stable Covariances}
\label{sec:stable_cov}
We demonstrate how \cref{cond:critical_inequality} can be naturally satisfied in a variety of settings. The bounds presented here have a common structure which takes the form:
\begin{align}
    \cR(T,P) \leq \tlO \left(\underbrace{R \sqrt{\snr} \cdot P\cdot \kappa}_{\text{burn-in}} + \cR(TP, 1) \right),
\end{align}
where $\cR(TP, 1)$ captures the regret of that learning algorithm operating in purely sequential fashion and $\kappa$ is a geometry-dependent constant. The price of parallelism is factored into the burn-in term. Although this term is subleading as $T \to \infty$, for many applications of interest (such as protein engineering) it may be the case that $P \sim T$ or $P \gg T$. Thus, understanding the value of $\kappa$ as a function of the geometry of the context set is a question of interest.

\subsection{Arbitrary Contexts}

Our first result shows in the general setting of linear contextual bandits, a uniform bound holds on the number of doubling rounds for any sequence of actions selected.

\begin{lemma}
\label{lem:unif_dr}
Let $\{ \cX_{t,p} \}_{t=1,p=1}^{T,P}$ be an arbitrary sequence of contexts. If \cref{assump:data} holds and the covariance is estimated as in \cref{eq:conf_ellipse2}, then almost surely over any sequence $\x_{t,p}$ of selected covariates, the number of total doubling rounds is bounded by at most $ \Big\lceil \frac{d}{\log(2)} \log \left( 1+ \frac{TP L^2}{d \lambda} \right) \Big\rceil$.
\end{lemma}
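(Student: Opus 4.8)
The plan is to use a potential-function argument based on the log-determinant of the running covariance matrix, which is the standard device for controlling "elliptical potential" growth. Define $\V_{t} := \V_{t,1} = \lambda \I_d + \sum_{a=1}^{t-1}\sum_{b=1}^{P} \x_{a,b}\x_{a,b}^\top$, the covariance at the start of round $t$ (before any intra-round updates). A round $t$ is a doubling round precisely when there is some $p \in \{2,\dots,P\}$ for which the critical covariance inequality fails, i.e. $\V_{t,p} \not\preceq 2\V_{t,1}$; since $\V_{t,p}$ is increasing in $p$, this is equivalent to $\tilde\V_{t+1,1} \not\preceq 2 \V_{t,1}$ (this is exactly the \textsc{if}-test in the algorithms, noting $\V_{t+1,1} = \tilde\V_{t+1,1}$ when the optimistic actions are played, and otherwise the \textsc{DR} routine only adds PSD terms, so the bound still holds almost surely). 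The key observation is that whenever $\V_{t+1,1} \not\preceq 2\V_{t,1}$, there is a unit vector $\u$ with $\u^\top \V_{t+1,1}\u > 2\, \u^\top \V_{t,1}\u$, and therefore $\det(\V_{t+1,1}) \geq 2 \det(\V_{t,1})$ — intuitively because along the eigenbasis of $\V_{t,1}$ the quadratic form at least doubles in one direction and is nondecreasing in the others. (Making this step fully rigorous requires a short argument: $\V_{t,1}^{-1/2}\V_{t+1,1}\V_{t,1}^{-1/2} \succeq \I$, has determinant $\det(\V_{t+1,1})/\det(\V_{t,1})$, and an eigenvalue $\geq 2$, so its determinant is $\geq 2$.)

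Given that, each doubling round multiplies $\det(\V_{t,1})$ by at least $2$, while on non-doubling rounds the determinant is still nondecreasing (the update adds a PSD matrix). Hence if $N$ is the total number of doubling rounds over $t \in [T]$,
\begin{equation*}
2^{N} \det(\V_{1,1}) \leq \det(\V_{T+1,1}).
\end{equation*}
Now I bound the two sides. On the left, $\V_{1,1} = \lambda \I_d$, so $\det(\V_{1,1}) = \lambda^d$. On the right, $\V_{T+1,1} = \lambda\I_d + \sum_{a=1}^{T}\sum_{b=1}^{P}\x_{a,b}\x_{a,b}^\top$, and by \cref{assump:data} each $\x_{a,b}\x_{a,b}^\top$ has trace $\leq L^2$, so $\tr(\V_{T+1,1}) \leq d\lambda + TPL^2$; by AM–GM on the eigenvalues, $\det(\V_{T+1,1}) \leq \left(\frac{\tr(\V_{T+1,1})}{d}\right)^d \leq \left(\lambda + \frac{TPL^2}{d}\right)^d$. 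Taking $\log_2$ of the displayed inequality gives
\begin{equation*}
N \leq \log_2\!\left(\frac{\det(\V_{T+1,1})}{\lambda^d}\right) \leq \frac{d}{\log 2}\log\!\left(1 + \frac{TPL^2}{d\lambda}\right),
\end{equation*}
and taking the ceiling yields the claimed bound. Note the argument is pathwise: it holds for every realization of the $\x_{t,p}$, hence almost surely, and uniformly over the doubling-round routine \textsc{DR} since that routine only ever contributes additional PSD rank-one terms to the covariance, which cannot decrease any determinant.

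The main obstacle — really the only non-routine point — is the implication "$\V_{t+1,1} \not\preceq 2\V_{t,1} \implies \det(\V_{t+1,1}) \geq 2\det(\V_{t,1})$"; one must be careful that a failure of a Loewner inequality along a single direction still forces the determinant ratio up, which works here only because $\V_{t+1,1} \succeq \V_{t,1}$ (so the congruent matrix $\V_{t,1}^{-1/2}\V_{t+1,1}\V_{t,1}^{-1/2}$ has all eigenvalues $\geq 1$, and at least one eigenvalue $> 2$, making its determinant $> 2$). Everything else is the standard elliptical-potential / log-determinant bookkeeping, and the trace–determinant bound via AM–GM.
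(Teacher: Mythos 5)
Your proposal is correct and follows essentially the same route as the paper: each doubling round forces $\det(\V_{t,P+1}) > 2\det(\V_{t,1})$ (the paper invokes its \cref{lem:psd_det}, which is exactly your congruence argument $\V_{t,1}^{-1/2}\V_{t+1,1}\V_{t,1}^{-1/2}\succeq \I$ with one eigenvalue exceeding $2$), and the total log-determinant growth is capped at $d\log\bigl(1+\tfrac{TPL^2}{d\lambda}\bigr)$ via the trace/AM--GM bound, which is the second half of the paper's elliptical potential lemma (\cref{lem::ellip_potential}). The only cosmetic difference is that you prove both auxiliary facts inline rather than citing them.
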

\begin{proof}
    First, note that if round $t$ is a doubling round, then there must exist some $\v \in \mathbb{S}^d$ such that $\v^\top \V_{t,P+1} \v > 2 \v^\top  \V_{t,1} \v$. An application of \cref{lem:psd_det} shows this implies $\det(\V_{t,P+1}) > 2 \ \det(\V_{t,1})$. So if $k$ doubling rounds elapse by the end of time $T$ it must be the case that $\det(\V_{T,P+1}) > 2^k \ \det(\V_{1,0}) \implies \log\left(\frac{\det(\V_{T,P+1})}{\det (\V_{1,0})}\right) > k \log(2)$. However by \cref{lem::ellip_potential}, for any sequence of selected covariates satisfying \cref{assump:data}, we have that $\log\left(\frac{\det(\V_{T,P+1})}{\det \V_{1,0}}\right) \leq d \log \left( 1+ \frac{TP L^2}{d \lambda} \right)$. So it follows $k < \left\lceil \frac{d}{\log(2)} \log \left( 1+ \frac{TP L^2}{d \lambda} \right) \right\rceil$.
\end{proof}

\cref{lem:unif_dr} ensures that in broad generality, the number of doubling rounds is bounded by $\tlO(d)$. Instantiating our previous results then gives the following.
\begin{corollary}\label{cor:arb_context_regret}
In the setting of \cref{thm:regret_par_linucb}, choosing $\lambda=L^2$ and taking the doubling-routine \textsc{DR} as the identity map, the regret of both \cref{algo:par_linucb,,algo:par_lazy_linucb} satisfy
\begin{equation*}
    \cR(T, P) \leq \tlO \left( R \cdot \left(P \sqrt{\snr} \cdot d + \sqrt{dTP} ( \sqrt{d}+\sqrt{\snr} + \frac{\epsilon}{R} \sqrt{TP}) \right) \right),
\end{equation*}
with probability at least $1-\delta$. In the setting of \cref{thm:linTS_parallel} with the choice $\lambda=L^2$ and also taking the doubling-routine \textsc{DR} as the identity map, the regret of \cref{algo:par_lints,,algo:par_lazy_lints} satisfy
    \begin{equation}
    \cR(T, P) \leq \tlO \left( R \left( P \sqrt{\snr} \cdot d + d\sqrt{TP} ( \sqrt{d} + \sqrt{\snr}  + \frac{\epsilon}{R} \sqrt{TP} \right)  \right),
    \end{equation}
    with probability at least $1-3\delta$, whenever $\delta \leq \frac{1}{6}$. 
    %\anote{I just realized this may require some small changes. The doubling directions need not be equal to the arms... }
\end{corollary}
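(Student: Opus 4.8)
The plan is to instantiate the two master regret bounds, \cref{thm:regret_par_linucb} and \cref{thm:linTS_parallel}, under the concrete parameter choices $\lambda = L^2$ and $\textsc{DR}$ equal to the identity map, and then fold the remaining pieces together. Both theorems are stated for \emph{any} doubling-round routine, so they apply verbatim with the identity map; the only data-dependent quantity left in each bound is the doubling-round count $\sum_{t=1}^T \Ind[\cD_t]$. This is exactly what \cref{lem:unif_dr} controls: under \cref{assump:data}, with the covariance estimated as in \cref{eq:conf_ellipse2} (which holds here, with $\lambda = L^2$), we have almost surely and for \emph{every} realized action sequence that $\sum_{t=1}^T \Ind[\cD_t] \le \lceil \frac{d}{\log 2}\log(1 + \frac{TPL^2}{d\lambda})\rceil = \tlO(d)$. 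Since this bound is deterministic, substituting it into the burn-in term of either theorem costs no additional failure probability, and the probability guarantees ($1-\delta$ for the LinUCB algorithms, $1-3\delta$ with $\delta \le \tfrac16$ for the LinTS algorithms) carry over unchanged.

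Next I would rewrite the bounds in terms of $\snr = (LS/R)^2$, equivalently $LS = R\sqrt{\snr}$. The burn-in term becomes $O(LSP)\cdot\tlO(d) = \tlO(R\sqrt{\snr}\,Pd)$ in all four cases. For \cref{algo:par_linucb,,algo:par_lazy_linucb}, plugging $\lambda = L^2$ into the second term of \cref{thm:regret_par_linucb} gives $\sqrt{\lambda}S = LS$, and since $R\sqrt d + LS + \sqrt{TP}\epsilon \ge LS$ the inner $\max(\cdot, LS)$ is attained by its first argument; pulling out a common factor $R$ yields $\tlO(R\sqrt{dTP}(\sqrt d + \sqrt{\snr} + \tfrac{\epsilon}{R}\sqrt{TP}))$. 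For \cref{algo:par_lints,,algo:par_lazy_lints}, plugging $\lambda = L^2$ into \cref{thm:linTS_parallel} makes the prefactor $\sqrt{1 + L^2/\lambda} = \sqrt 2$ a constant absorbed into $\tlO(\cdot)$, again $\sqrt{\lambda}S = LS$, and the same extraction of $R$ gives $\tlO(Rd\sqrt{TP}(\sqrt d + \sqrt{\snr} + \tfrac{\epsilon}{R}\sqrt{TP}))$. Adding the burn-in term to each of these reproduces the two displayed bounds.

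There is essentially no obstacle here — the corollary is a direct specialization. The only points requiring care are purely clerical: confirming that the hypotheses of \cref{lem:unif_dr} do not depend on how actions are selected (so the identity routine is admissible and the bound on doubling rounds holds on the same event as the regret bounds), and carrying out the algebra that collapses the $\max$ in \cref{thm:regret_par_linucb} and the $\sqrt{1+L^2/\lambda}$ prefactor in \cref{thm:linTS_parallel} once $\lambda = L^2$ is fixed and the polylogarithmic factors (including the $\log$ from the doubling-round count) are pushed into the $\tlO$ notation.
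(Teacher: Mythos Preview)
Your proposal is correct and takes essentially the same approach as the paper: apply \cref{lem:unif_dr} to bound $\sum_{t=1}^T \Ind[\cD_t]$ by $\tlO(d)$, then invoke \cref{thm:regret_par_linucb} and \cref{thm:linTS_parallel} with $\lambda=L^2$. The paper's own proof is a two-line version of exactly this, and your additional algebraic remarks (collapsing the $\max$, absorbing $\sqrt{1+L^2/\lambda}=\sqrt{2}$, rewriting in terms of $\snr$) are correct elaborations that the paper leaves implicit in the $\tlO$ notation.
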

\begin{proof}[Proof of \cref{cor:arb_context_regret}]
Note that by \cref{lem:unif_dr}, we must have that $\sum_{t=1}^T \Ind[\cD_t] \leq  \lceil \frac{d}{\log(2)} \log \left( 1+ \frac{TP L^2}{d \lambda} \right) \rceil$. An application of \cref{thm:regret_par_linucb,,thm:linTS_parallel} gives the result after choosing $\lambda=L^2$.
\end{proof}
We can interpret the result as follows.
 \begin{itemize}
    \item The baseline regret of Linear UCB and Lazy Linear UCB interacting in a purely sequential fashion (with the standard choice of regularizer $\lambda=L^2$) for $TP$ rounds scales as,
     \begin{equation*}
         \cR(TP, 1) \leq \tlO(R \sqrt{dTP} \cdot( \sqrt{d} + \sqrt{\snr} + \frac{\epsilon}{R} \sqrt{TP} )),
     \end{equation*}
     with an analogous expression inflated by an extra $\sqrt{d}$ holding for Thompson sampling and Lazy Thompson sampling.
     The canonical normalizations for the noise, parameter, and covariates in the literature assume $\snr=R=\Theta(1)$ as well as $\epsilon=0$, which simplifies to the oft-stated (and optimal) regret $\tlO(d \sqrt{TP})$. In this case, if $T \geq \tlOmega(P)$, the regret of our parallel algorithms nearly matches the optimal worst-case regret of a single sequential agent.
     \item \cref{thm:nonlin_regret_bound} and \cref{ex:lin_mod} essentially degenerate to the aforementioned result after identifying $B \leftrightarrow LS$, showing the analogy between these results. Note however the algorithmic implementation of \cref{algo:par_linucb,,algo:par_lazy_linucb} can use the linear structure in the context sets to obtain better practical performance.
     \item The result in \cref{cor:arb_context_regret} suggests that when we opt for a large choice of $P$, parallelism is particularly beneficial in the small $\snr$ regime. The fact that a low fidelity data-generation process benefits parellelism may seem counter-intuitive. This property arises in part because the algorithms we consider are optimistic -- so environments with large $\snr$ have large parameter norms necessitating the usage of large confidence sets which induce more regret.
 \end{itemize}

\subsection{Finite Context Sets}

Next we show how structure in the context set (in this case, finiteness of the action space) can be leveraged to bound the number of doubling rounds without modifying the standard choices of the hyperparameters for the optimistic algorithms considered here.

\begin{lemma}
\label{lem:doub_direction}
Let $\cX_{t,p} \subset \cX = \{ \x_i \}_{i=1}^m$ for all $t \in [T]$ and $p \in [P]$, where $\cX$ is a finite set of vectors and $|\cX| = m$. If \cref{assump:data} holds and the covariance is estimated as in \cref{eq:conf_ellipse2}, then almost surely over any sequence $\x_{t,p}$ of selected covariates, the number of total doubling rounds is bounded by at most $m \cdot \log_2(\lceil P \rceil)$.
\end{lemma}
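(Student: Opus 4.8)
The plan is to track, for each fixed arm $\x_i \in \cX$, how many times it can participate in a doubling round, and then sum over the $m$ arms. The key observation is a ``per-direction'' version of the determinant argument used in \cref{lem:unif_dr}. If round $t$ is a doubling round, then by \cref{cond:critical_inequality} there is some $p \in \{2, \dots, P\}$ and some unit vector $\v$ with $\v^\top \V_{t,p} \v > 2 \v^\top \V_{t,1} \v$; since $\V_{t,p} - \V_{t,1} = \sum_{k=1}^{p-1} \x_{t,k} \x_{t,k}^\top$, this forces $\sum_{k=1}^{p-1} (\v^\top \x_{t,k})^2 > \v^\top \V_{t,1} \v$. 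In particular at least one of the arms queried earlier that round, say $\x_{t,k} = \x_i$ for some $i$, must have ``large projection'' relative to $\V_{t,1}$ along $\v$. I will want to convert this into a statement that this arm, viewed as a rank-one update, at least doubles some scalar potential associated with $\x_i$.

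Concretely, for each arm $\x_i$ I would consider the scalar quantity $q_{t,i} = \x_i^\top \V_{t,1}^{-1} \x_i$ (equivalently $1 + \x_i^\top \V_{t,1}^{-1} \x_i$, to keep it bounded away from $0$), which is monotonically non-increasing in $t$ since $\V_{t,1}$ is non-decreasing. The first step is to show: if $\x_i$ is queried during a doubling round $t$ and is (one of) the arm(s) responsible for the violation, then $q_{t+1, i} \le \tfrac{1}{2} q_{t,i}$ (up to the constant in the condition), i.e.\ querying $\x_i$ in that round at least halves its potential. This should follow from the Sherman--Morrison formula together with the fact that the accumulated projections of the intra-round arms along the bad direction exceed $\v^\top \V_{t,1}\v$, so incorporating those rank-one terms shrinks the relevant quadratic form by at least a factor of two. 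Since $q_{1,i} \le L^2/\lambda \le 1$ (using \cref{assump:data} and $\lambda \ge L^2$, or more carefully tracking constants) and $q_{t,i} \ge$ some positive floor that cannot be driven below roughly $1/(\text{something polynomial})$ — actually the cleanest route is: $q_{t,i}$ starts at most $O(1)$ and each ``charged'' doubling round for arm $\x_i$ halves it, while it stays positive, so arm $\x_i$ can be charged at most $\log_2$ of the ratio of its initial to final value, which is $O(\log_2 P)$ after accounting for how small $\V_{t,1}$ can force $q$. Each doubling round charges at least one arm, so the total number of doubling rounds is at most $m \cdot \log_2(\lceil P \rceil)$.

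The step I expect to be the main obstacle is pinning down the correct scalar potential and showing the clean ``halving'' claim with the right logarithmic budget $\log_2(\lceil P\rceil)$ rather than something like $\log_2(TP)$. The subtlety is that the violation in \cref{cond:critical_inequality} is along an adversarially chosen direction $\v$, not along $\x_i$ itself, so I cannot directly argue about $\x_i^\top \V_{t,1}^{-1}\x_i$; I need an argument that ties the existence of a bad direction $\v$ to a concrete, monotone, arm-indexed quantity. The resolution is to note that at most $P$ arm-copies are added per round, each contributing a projection $(\v^\top \x_{t,k})^2$ bounded by $L^2$, so for the sum to exceed $\v^\top \V_{t,1}\v$ we need $\v^\top \V_{t,1}\v \le PL^2$, which caps how ``well-explored'' the bad direction can be — and since $\v^\top \V_{t,1}\v \ge \lambda$ always, the bad direction's potential lives in a bounded multiplicative range $[\lambda, PL^2]$. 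Charging each doubling round to the arm most aligned with $\v$ and showing that arm's alignment-potential at least doubles per charge then gives a budget of $\log_2(PL^2/\lambda)$ per arm; with the standard choice this is $O(\log_2 P)$, and summing over the $m$ arms yields the claimed bound. Once the potential is correctly identified, the remaining computation is a routine application of Sherman--Morrison and monotonicity.
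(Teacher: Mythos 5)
Your proposal has a genuine gap at exactly the step you flag as the main obstacle, and the resolution you sketch does not close it. You want to charge each doubling round to ``the arm most aligned with $\v$'' and claim that arm's ``alignment-potential at least doubles per charge,'' but no well-defined, monotone, arm-indexed potential is ever exhibited: the violating direction $\v$ is different in every doubling round, so a quantity like $(\v^\top\x_i)^2/\v^\top\V_{t,1}\v$ is not a single object you can track across rounds, and $q_{t,i}=\x_i^\top\V_{t,1}^{-1}\x_i$ (which \emph{is} monotone) need not shrink by any fixed factor just because the round was doubling along some unrelated $\v$ --- a doubling round can be produced by many arms each contributing a small amount along $\v$, with no individual arm's quadratic form changing appreciably. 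Moreover, even if the charging scheme worked, your budget is $\log_2(PL^2/\lambda)$ per arm, which matches the claimed $\log_2(\lceil P\rceil)$ only under an extra assumption like $\lambda\asymp L^2$; the lemma is stated for the covariance of \cref{eq:conf_ellipse2} with arbitrary $\lambda$, so the bound you would obtain is strictly weaker than what is asserted.

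The paper's proof sidesteps all of this with a purely combinatorial observation that your approach misses: because every selected action lies in the fixed finite set $\cX=\{\x_i\}_{i=1}^m$, the covariance decomposes as $\V_{t,1}=\lambda\I_d+\sum_{i=1}^m w_{t,1}(i)\,\x_i\x_i^\top$, where $w_{t,1}(i)$ is the integer number of times arm $\x_i$ has been pulled so far. If $w_{t,P+1}(i)\le 2w_{t,1}(i)$ for every $i$, then termwise $\V_{t,P+1}\preceq 2\V_{t,1}$ (the regularizer trivially satisfies $\lambda\I\preceq 2\lambda\I$), so every doubling round forces some arm $i$ to have its pull count more than double within that round. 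Since the within-round increment is at most $P$, each arm can have its count more than doubled in at most $\lceil\log_2 P\rceil$ rounds, and summing over the $m$ arms gives the bound --- no Sherman--Morrison, no direction $\v$, and no dependence on $\lambda$ or $L$. The structural fact that the PSD doubling condition is \emph{implied} by coordinatewise doubling of the weights is the idea your determinant-style, per-direction argument does not capture.
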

\begin{proof}
We first recall from \cref{cond:critical_inequality} that in a doubling round at time $t$  
\begin{equation}
    \V_{t,P+1} \not\preceq 2 \V_{t,1}. \label{eq:doub_violation}
\end{equation}
If $\cX_{t,p} \subset \cX = \{ \x_i \}_{i=1}^m$, then
\begin{equation*}
    \V_{t,1} = \lambda \mathbf{I}_d + \sum_{i=1}^m w_{t, 1}(i), \x_i\x_i^\top  
\end{equation*}
where $w_{t,1}(i)$ corresponds to the number of times action $\x_i$ has been played by all processors up to and including all $P$ actions played at time $t-1$. Whenever $t$ is a doubling round and~\cref{eq:doub_violation} holds, there must exist $i \in [m]$ such that
\begin{equation}
    w_{t, P+1}(i) > 2 w_{t,1}(i),
    \label{eq:weight_increment}
\end{equation}
since otherwise for all $i$ it would hold that $ w_{t, P+1}(i) \leq 2 w_{t,1}(i)$ implying that $ \V_{t,P+1} \preceq 2 \V_{t,1}$, contradicting~\cref{eq:doub_violation}. Observe that each time~\cref{eq:weight_increment} holds,  $w_{t, P+1}(i) - w_{t,1}(i) > w_{t,1}(i)$, implying that during round $t$ arm $i$ was pulled more times than the total number of times it has been pulled thus far. Then, for all $i \in [m]$, the difference $w_{t, P+1}(i ) - w_{t,1}(i) \leq P$ and therefore for any $i \in [m]$ condition~\cref{eq:weight_increment} cannot hold for more than $\lceil \log_2(P) \rceil$ iterations. Since there are only $m$ underlying vectors in the contexts the result follows.
\end{proof}
This observation implies a bound for our parallel bandit algorithms over finite context sets:
\begin{corollary}\label{cor:doub_dir_par_linucb_regret}
In the setting of \cref{thm:regret_par_linucb}, assume  the context sets $\cX_{t,p} \subset \cX$ for all $t \in [T], p \in [P]$ where $\abs{\cX}=m$ is a finite set of vectors. Then choosing $\lambda=L^2$ and taking the doubling-routine \textsc{DR} as the identity map, the regret of both \cref{algo:par_linucb,,algo:par_lazy_linucb} satisfy
\begin{equation*}
    \cR(T, P) \leq \tlO \left( R \cdot \left(P \sqrt{\snr} \cdot m  + \sqrt{dTP} ( \sqrt{d}+\sqrt{\snr} + \frac{\epsilon}{R} \sqrt{TP}) \right) \right),
\end{equation*}
with probability at least $1-\delta$. In the setting of \cref{thm:linTS_parallel} with the choice $\lambda=L^2$ and also taking the doubling-routine \textsc{DR} as the identity map, the regret of \cref{algo:par_lints,,algo:par_lazy_lints} satisfy
    \begin{equation}
    \cR(T, P) \leq \tlO \left( R \left( P \sqrt{\snr} \cdot m + d\sqrt{TP} ( \sqrt{d} + \sqrt{\snr}  + \frac{\epsilon}{R} \sqrt{TP} \right)  \right),
    \end{equation}
    with probability at least $1-3\delta$, whenever $\delta \leq \frac{1}{6}$. 
    %\anote{I just realized this may require some small changes. The doubling directions need not be equal to the arms... }
\end{corollary}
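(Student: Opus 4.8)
The proof of \cref{cor:doub_dir_par_linucb_regret} follows the same template as the proofs of \cref{cor:arb_context_regret} and \cref{cor:par_linucb_regret_large_reg}: it is a direct instantiation of the master regret bounds of \cref{thm:regret_par_linucb} and \cref{thm:linTS_parallel}, where the only new ingredient is a bound on the number of doubling rounds $\sum_{t=1}^{T}\Ind[\cD_t]$. The plan is therefore to apply \cref{lem:doub_direction} to control this quantity and then substitute.

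First I would invoke \cref{lem:doub_direction}: since the context sets satisfy $\cX_{t,p}\subset\cX$ with $|\cX|=m$, and \textsc{DR} is the identity map so that the algorithm's played actions $\x_{t,p}$ always lie in $\cX$, the lemma applies to the realized sequence of covariates and gives $\sum_{t=1}^{T}\Ind[\cD_t]\le m\cdot\log_2(\lceil P\rceil)=\tlO(m)$ almost surely. (Note the hypotheses of \cref{lem:doub_direction} are exactly \cref{assump:data} together with the covariance estimate \cref{eq:conf_ellipse2}, both in force here.) Next, substitute this bound into the first term of \cref{thm:regret_par_linucb}, which reads $O(LSP\cdot\sum_t\Ind[\cD_t])$; using $LS=R\sqrt{\snr}$ by definition of $\snr$, this becomes $\tlO(R\sqrt{\snr}\cdot Pm)$, matching the claimed burn-in term. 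Finally, set $\lambda=L^2$ in the second term of \cref{thm:regret_par_linucb}: the factor $\sqrt{dTP}\max(R\sqrt{d}+\sqrt{\lambda}S+\sqrt{TP}\epsilon, LS)$ simplifies — since $\sqrt{\lambda}S=LS=R\sqrt{\snr}\le R\sqrt{d}+R\sqrt{\snr}$, the max is dominated by $R(\sqrt{d}+\sqrt{\snr})+\sqrt{TP}\epsilon$ up to constants — yielding $\tlO(R\sqrt{dTP}(\sqrt{d}+\sqrt{\snr}+\tfrac{\epsilon}{R}\sqrt{TP}))$, which is the stated sequential term. The Thompson sampling statement is obtained identically from \cref{thm:linTS_parallel}: the doubling-round count is the same, the burn-in term is again $\tlO(R\sqrt{\snr}\cdot Pm)$, and setting $\lambda=L^2$ in the second term of \cref{thm:linTS_parallel} gives $\tlO(d\sqrt{TP}(R\sqrt{d}+R\sqrt{\snr}+\sqrt{TP}\epsilon))$ after absorbing the $\sqrt{1+L^2/\lambda}=\sqrt{2}$ factor into the $\tlO$.

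There is essentially no obstacle here — the corollary is bookkeeping. The one point deserving a line of justification is that \cref{lem:doub_direction} applies to the \emph{played} actions, not merely to the optimistic candidates $\y_{t,p}$; this is immediate because with \textsc{DR} equal to the identity, on every round (doubling or not) the algorithm sets $\x_{t,p}\in\{\y_{t,p}\}\subset\cX_{t,p}\subset\cX$, so the realized covariance $\V_{t,p}$ is exactly of the weighted form $\lambda\I_d+\sum_{i=1}^m w_{t,p}(i)\x_i\x_i^\top$ required by the lemma's proof. With that remark in place, the probability-$1-\delta$ (resp.\ $1-3\delta$) guarantees carry over verbatim from \cref{thm:regret_par_linucb} (resp.\ \cref{thm:linTS_parallel}).

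\begin{proof}[Proof of \cref{cor:doub_dir_par_linucb_regret}]
Since the doubling-routine \textsc{DR} is taken to be the identity map, on every round $t\in[T]$ and every processor $p\in[P]$ the played action satisfies $\x_{t,p}\in\cX_{t,p}\subset\cX$, so the realized sequence of covariates lies in the finite set $\cX=\{\x_i\}_{i=1}^m$. Hence \cref{lem:doub_direction} applies and gives, almost surely,
\begin{equation*}
    \sum_{t=1}^{T}\Ind[\cD_t]\le m\cdot\log_2(\lceil P\rceil).
\end{equation*}
Plugging this into the first term of \cref{thm:regret_par_linucb} and using $LS=R\sqrt{\snr}$ bounds the burn-in contribution by $\tlO(R\sqrt{\snr}\cdot Pm)$. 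For the second term, choosing $\lambda=L^2$ gives $\sqrt{\lambda}S=LS=R\sqrt{\snr}$, so $\max(R\sqrt{d}+\sqrt{\lambda}S+\sqrt{TP}\epsilon, LS)\le 2\big(R\sqrt{d}+R\sqrt{\snr}+\sqrt{TP}\epsilon\big)$, yielding a contribution of $\tlO\big(R\sqrt{dTP}\,(\sqrt{d}+\sqrt{\snr}+\tfrac{\epsilon}{R}\sqrt{TP})\big)$. Combining the two terms proves the bound for \cref{algo:par_linucb,,algo:par_lazy_linucb} with probability at least $1-\delta$.

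For \cref{algo:par_lints,,algo:par_lazy_lints}, the identical argument applies: the played actions again lie in $\cX$, so \cref{lem:doub_direction} gives $\sum_{t=1}^{T}\Ind[\cD_t]\le m\log_2(\lceil P\rceil)$, and substituting into \cref{thm:linTS_parallel} with $\lambda=L^2$ (so that $\sqrt{1+L^2/\lambda}=\sqrt{2}$ is absorbed into $\tlO$ and $S\sqrt{\lambda}=R\sqrt{\snr}$) gives
\begin{equation*}
    \cR(T,P)\le \tlO\Big(R\big(P\sqrt{\snr}\cdot m + d\sqrt{TP}\,(\sqrt{d}+\sqrt{\snr}+\tfrac{\epsilon}{R}\sqrt{TP})\big)\Big),
\end{equation*}
with probability at least $1-3\delta$ whenever $\delta\le\frac{1}{6}$.
\end{proof}
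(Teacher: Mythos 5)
Your proposal is correct and follows exactly the paper's route: invoke \cref{lem:doub_direction} to bound $\sum_{t=1}^{T}\Ind[\cD_t]$ by $\tlO(m)$, substitute into \cref{thm:regret_par_linucb,,thm:linTS_parallel}, and set $\lambda=L^2$ so that $\sqrt{\lambda}S=LS=R\sqrt{\snr}$. The paper's own proof is a two-line version of the same argument; your extra remarks (that the identity \textsc{DR} keeps the played actions inside $\cX$, and the explicit simplification of the max term) are harmless elaborations of steps the paper leaves implicit.
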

\begin{proof}[Proof of \cref{cor:doub_dir_par_linucb_regret}]
By \cref{lem:doub_direction}, we have that $\sum_{t=1}^T \Ind[\cD_t] \leq m \lceil \log_2(P) \rceil$. An application of \cref{thm:regret_par_linucb,,thm:linTS_parallel} gives the result after choosing $\lambda=L^2$.
\end{proof}

We now interpret this result.
\begin{itemize}
    \item As before, the second terms in \cref{cor:doub_dir_par_linucb_regret} captures the effect of perfect parallel speed-up---it is the regret that a single agent would achieve playing for a total of $TP$ rounds while the first term in \cref{cor:doub_dir_par_linucb_regret} is a burn-in term that bounds the number of rounds which may not be doubling rounds. As before \cref{thm:nonlin_regret_bound} and \cref{ex:finite} essentially degenerate to this result.
    \item Relative to \cref{cor:arb_context_regret} the scaling on the burn-in term, which is proportional to $P$, contains a factor $m$ instead of $d$. Thus, if $m \ll d$, \cref{cor:doub_dir_par_linucb_regret} shows how the algorithms considered here can take advantage of additional structure in the context set to mitigate the the cost of parallelism. This result directly mirrors \cref{ex:finite} obtained from our general nonlinear results.
    \item The proof of \cref{cor:doub_dir_par_linucb_regret} does not exploit any correlation structure in the action set to bound the number of doubling rounds---for example, clusters of arms that are tightly bunched together in the global context space under a suitable notion of distance. Under natural conditions on the action set, sharper instance-dependent bounds may be possible.
\end{itemize}

\subsection{Rich Context Sets}
\label{sec:rich_context}
Finally we consider a setting in which we have a sequence of context sets $\cX_{t,p}$ with regularity structure that are formally defined as follows.
\begin{definition}
\label{def:rich_exp_dist}
    The contexts $\cX_{t,p}$ and distributions $\pi_{t,p}(\cdot)$ are a pair of  \textit{rich exploration contexts/distributions} if there exist $\chi^2$, $\pi_{\max}^2$ and $\pi_{\min}^2$ such that, 
    \begin{itemize}
        \item $\x \sim \pi_{t,p}(\cdot)$ satisfies $\x \in \cX_{t,p}$ almost surely for all $t \in [T]$, $p \in [P]$.  
        \item For any sequence $\x_{t,p} \in \cX_{t,p}$ selected by the bandit algorithm for $t \in [T]$, $\sum_{p=1}^{P} \x_{t,p} \x_{t,p}^\top \leq P \chi^2 \I$. 
        \item Given $\x_{t,p} \sim \pi_{t,p}(\cdot)$, with population mean and covariance defined as, $\mE[\x_{t,p}]=\mu_{t,p}$ and 
        $\mE[(\x_{t,p}-\mu_{t,p}) (\x_{t,p}-\mu_{t,p})^\top ]=\mSigma_{\pi_{t,p}}$, we have that $\pi_{\max}^2 \I \succeq \mSigma_{\pi_{t,p}} \succeq \pi_{\min}^2 \I$ for all $t \in [T], p \in [P]$.
    \end{itemize}
\end{definition}
Intuitively, \cref{def:rich_exp_dist} guarantees the sequence of presented contexts are (1) sufficiently similar since there is a common positive semidefinite ordering for the covariances across all contexts, and (2) sufficiently cover all directions in $\mR^d$ when the parameters $\chi^2$, $\pi_{\max}^2$ and $\pi_{\min}^2$ are of the same order. If these conditions are satisfied then exploration distributions exist which can uniformly explore all directions of the underlying context sets well. We now provide a simple example of a set of stochastically generated contexts which obey \cref{def:rich_exp_dist}. For each processor, let there be a single context set randomly generated as $\cX_{p} = \{ \x : \x \sim \cD(\cdot) \}_{i=1}^m$ where $\cD(\cdot)$ is a $O(1)$-subgaussian and $O(1)$-bounded distribution in $\mR^d$ such that $\cX_{t,p} = \cX_{p}$ for all $t \in [T]$, $p \in [P]$. Now define $\pi_{t,p}(\cdot)$ as the uniform distribution over all the vectors in a given context $\cX_{p}$ at time $t$. Then using a simple matrix concentration argument, we can verify there exists $\chi^2 \leq \tlO(1/d)$, $\pi_{\max}^2 \leq \tlO(1/d)$ and $\pi_{\min}^2 \geq \tlOmega(1/d)$ in \cref{def:rich_exp_dist} when $P \geq \tlOmega(d)$ (with probability at least $1-\delta$ over the randomness in $\cD(\cdot)$ and $\pi(\cdot)$).

Sampling from a rich exploration policy (when it exists) can serve as effective doubling-round subroutine, since it can help stabilize the covariance intra-round in later rounds. Random exploration helps stabilize the covariance in later rounds as a consequence of concentration: given a set of randomly sampled covariates $\{ \x_{i, p} \}_{i=1, p=1}^{N, P}$, we expect $\frac{1}{NP} \sum_{j=1}^{N} \sum_{p=1}^{P} \x_{i,p} \x_{i,p}^\top \approx \mSigma_{\pi}$, where $\mSigma_{\pi} \approx \mSigma_{\pi_{t,p}}$ for all $t \in [T], p \in [P]$. So if a significant number of doubling rounds occur (during which \cref{algo:random_explore} is used as \textsc{DR}), then later rounds are unlikely to be doubling rounds since the covariance matrix will have a large component proportional $\pi_{\min}^2 \I \preceq \mSigma_{\pi}$ in its spectrum. 

\begin{algorithm}[!bt]
\caption{Random Exploration Subroutine}\label{algo:random_explore}
\begin{algorithmic}[1]
\renewcommand{\algorithmicrequire}{\textbf{Input: }}
\renewcommand{\algorithmicensure}{\textbf{Output: }}
\REQUIRE $\pi_{t,p}(\cdot)$.
\FOR{$p = 1:P$} 
    \STATE Sample $\x_{p} \sim \pi_{t,p}(\cdot)$
\ENDFOR
\RETURN $\{ \x_{p} \}_{p=1}^{P}$
\end{algorithmic}
\end{algorithm} 

Using this idea we obtain the following corollary.
\begin{corollary}\label{cor:rich_explore_par_linucb_regret}
In the setting of \cref{thm:regret_par_linucb}, assume  \cref{def:rich_exp_dist} holds for some rich exploration policies, $\{ \pi_{t,p}(\cdot) \}_{t=1,p=1}^{T,P}$, and context pairs, $\{ \cX_{t,p} \}_{t=1,p=1}^{T,P}$. Then choosing $\lambda=L^2$ and taking the doubling-routine \textsc{DR} as \cref{algo:random_explore} with these $\pi_{t,p}(\cdot)$, the regret of both \cref{algo:par_linucb,,algo:par_lazy_linucb} satisfy
\begin{equation*}
    \cR(T, P) \leq \tlO \left( R \cdot \left( \sqrt{\snr} \cdot \left(\frac{L^2 \pi_{\max}^2}{\pi_{\min}^4} + P\frac{\chi^2}{\pi_{\min}^2} \right)+ \sqrt{dTP} ( \sqrt{d}+\sqrt{\snr} + \frac{\epsilon}{R} \sqrt{TP}) \right) \right),
\end{equation*}
with probability at least $1-2\delta$. In the setting of \cref{thm:linTS_parallel} with the choice $\lambda=L^2$ and also taking the doubling-routine \textsc{DR} as \cref{algo:random_explore} with this $\pi_{t,p}(\cdot)$, the regret of \cref{algo:par_lints,,algo:par_lazy_lints} satisfy
    \begin{align*}
    \cR(T, P) \leq \tlO \left( R \left(  \sqrt{\snr} \cdot \left(\frac{L^2 \pi_{\max}^2}{\pi_{\min}^4} + P\frac{\chi^2}{\pi_{\min}^2} \right)  + d\sqrt{TP} ( \sqrt{d} + \sqrt{\snr}  + \frac{\epsilon}{R} \sqrt{TP} \right)  \right),
    \end{align*}
    with probability at least $1-4\delta$, whenever $\delta \leq \frac{1}{6}$.
\end{corollary}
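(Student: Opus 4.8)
The plan is to reduce the corollary to a single high-probability bound on the number of doubling rounds $\sum_{t=1}^T \Ind[\cD_t]$ and then quote \cref{thm:regret_par_linucb} (for the UCB variants) and \cref{thm:linTS_parallel} (for the TS variants) with $\lambda = L^2$. With that choice the second, "sequential'' terms of those theorems coincide (up to logarithmic factors, using $\sqrt{\lambda}S = LS = R\sqrt{\snr}$) with the second terms claimed in \cref{cor:rich_explore_par_linucb_regret}, while the first term in each theorem is $O(LS\,P\sum_t \Ind[\cD_t])$. Since $LS\,P\bigl(\thresh + \tfrac{2\chi^2}{\pi_{\min}^2}\bigr) = \tlO\bigl(R\sqrt{\snr}\,(\tfrac{L^2\pi_{\max}^2}{\pi_{\min}^4} + \tfrac{P\chi^2}{\pi_{\min}^2})\bigr)$, it suffices to show that with probability at least $1-\delta$ one has $\sum_{t=1}^T \Ind[\cD_t] \le \thresh + \tfrac{2\chi^2}{\pi_{\min}^2}$; a union bound with the failure events of the base theorems then yields the stated $1-2\delta$ (resp. $1-4\delta$) guarantees.

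The structural fact driving the count is that \emph{a doubling round can occur only while $\V_{t,1}\not\succeq P\chi^2\I$}. Indeed, the optimistic candidate arms $\y_{t,p}$ lie in $\cX_{t,p}$, so the second bullet of \cref{def:rich_exp_dist} gives $\sum_{p=1}^{P}\y_{t,p}\y_{t,p}^\top \preceq P\chi^2\I$; hence if $\V_{t,1}\succeq P\chi^2\I$ then $\tV_{t+1,1}=\V_{t,1}+\sum_p \y_{t,p}\y_{t,p}^\top \preceq 2\V_{t,1}$, which is exactly the test that the algorithms in \cref{algo:par_linucb,algo:par_lazy_linucb,algo:par_lints,algo:par_lazy_lints} use to declare round $t$ non-doubling (cf. \cref{cond:critical_inequality}). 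Since the covariance only grows, once $\V_{t,1}$ has acquired $P\chi^2\I$ in its spectrum no later round is a doubling round. The role of \cref{algo:random_explore} is to force this to happen after few doubling rounds: on the $k$-th doubling round it injects $\sum_{p=1}^P \x_{t,p}\x_{t,p}^\top$ into the covariance with $\x_{t,p}\sim\pi_{t,p}(\cdot)$ drawn independently given the past, each summand satisfying $\mathbf 0\preceq \x_{t,p}\x_{t,p}^\top\preceq L^2\I$ (by \cref{assump:data}) and $\mE[\x_{t,p}\x_{t,p}^\top\mid\text{past}]\succeq \mSigma_{\pi_{t,p}}\succeq \pi_{\min}^2\I$.

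Writing $\mathbf S_N$ for the total covariance injected over the first $N$ doubling rounds, the technical core is a matrix concentration bound for $\mathbf S_N$. Because the doubling rounds are chosen adaptively (their indices are stopping times for the filtration $\{\cF_{t,P}\}$) while the exploration noise inside each is fresh, I would apply a matrix Freedman/Bernstein inequality to the compensated sum $\mathbf S_N - \mE[\mathbf S_N\mid\cdot]$: its predictable quadratic variation obeys $\bigl\|\sum \mE[(\x_{t,p}\x_{t,p}^\top)^2\mid\text{past}]\bigr\|\le L^2\bigl\|\sum \mE[\x_{t,p}\x_{t,p}^\top\mid\text{past}]\bigr\| = \tlO(NP L^2\pi_{\max}^2)$ (controlling the mean contribution of the exploration distributions) and the per-term operator-norm bound is $L^2$. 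This yields that a deviation smaller than $\tfrac12\lambda_{\min}(\mE[\mathbf S_N\mid\cdot])\ge \tfrac12 NP\pi_{\min}^2$ holds with probability $\ge 1-\delta$ as soon as $N\ge \thresh$, and hence $\mathbf S_N\succeq \tfrac12 NP\pi_{\min}^2\,\I$. Now set $N_0 := \bigl\lceil \max\bigl(\thresh,\ \tfrac{2\chi^2}{\pi_{\min}^2}\bigr)\bigr\rceil$. If more than $N_0$ doubling rounds occurred, then after the first $N_0$ of them we would have $\V_{t,1}\succeq \mathbf S_{N_0}\succeq \tfrac12 N_0 P\pi_{\min}^2\,\I\succeq P\chi^2\,\I$, contradicting the structural fact above; therefore $\sum_{t=1}^T\Ind[\cD_t]\le N_0 = \thresh + \tlO(\chi^2/\pi_{\min}^2)$. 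Substituting this into \cref{thm:regret_par_linucb} and \cref{thm:linTS_parallel} with $\lambda=L^2$ gives the two displayed regret bounds.

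The step I expect to be the main obstacle is the matrix concentration for $\mathbf S_N$: one must (i) set up the martingale so that the adaptively-chosen doubling rounds do not spoil conditional independence of the fresh exploration noise, (ii) carry the Freedman-type bookkeeping of variance versus boundedness carefully enough that the burn-in threshold lands exactly at $\thresh$ — the extra factor $\pi_{\max}^2/\pi_{\min}^2$ relative to a naive multiplicative matrix-Chernoff bound is precisely the price of needing the Bernstein-style martingale argument — and (iii) apply the deviation bound at the data-dependent stopping count, which is handled by the reduction "if more than $N_0$ doubling rounds occur, examine the first $N_0$.'' Everything else is a direct substitution into the two regret theorems already established.
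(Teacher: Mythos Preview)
Your proposal is correct and follows essentially the same route as the paper: reduce to bounding $\sum_t\Ind[\cD_t]$, use the structural fact that no doubling round can occur once $\V_{t,1}\succeq P\chi^2\I$, and apply matrix concentration to the exploration samples injected by \cref{algo:random_explore} to show this threshold is reached after $\tlO\bigl(\tfrac{L^2\pi_{\max}^2}{P\pi_{\min}^4}+\tfrac{\chi^2}{\pi_{\min}^2}\bigr)$ doubling rounds. The only notable difference is that the paper invokes an i.i.d.\ matrix Bernstein inequality (\cref{lem:mat_conc}) and handles the randomness of the doubling-round indices via a case split on the event $\cG=\{\sum_t\Ind[\cD_t]\ge\lceil\thresh\rceil\}$, whereas you propose a matrix Freedman argument to deal with the adaptivity directly; your treatment is arguably cleaner on this point, but the two are interchangeable here and lead to the same threshold.
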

We can provide further intepretation of this result as follows.

\begin{itemize}
    \item The guarantee presented here resembles that of  \cref{cor:doub_dir_par_linucb_regret}. However, here the coefficient on the burn-in term (which bounds the number of doubling rounds) scales as $\frac{L^2 \pi_{\max}^2}{\pi_{\min}^4} + P\frac{\chi^2}{\pi_{\min}^2}$ and is valid even for infinite context sets.\footnote{Recall the coefficient in \cref{cor:doub_dir_par_linucb_regret} scales as $m$ where is a bound on the number of distinct vectors in the presented contexts.} The scaling with $\frac{\chi^2}{\pi_{\min}^2}$ is natural. Quantities proportional to $\frac{\chi^2}{\pi_{\min}^2}$ represent the cost of the non-homogeneous geometry of the presented contexts. 
    \item In contrast to  \cref{cor:arb_context_regret,,cor:doub_dir_par_linucb_regret}, \cref{cor:rich_explore_par_linucb_regret}
    takes advantage of a common geometric structure in the presented contexts $\cX_{t,p}$. \cref{cor:doub_dir_par_linucb_regret} is applicable to general context sets (for which we may have $\pi_{\min}^2 \approx 0$). However, if $m = \Theta(\exp(d))$ for example, the guarantee degrades badly. Similarly, \cref{cor:arb_context_regret} has a burn-in term scaled by $d$. If the context sets are well-conditioned, in the sense that $\frac{\chi^2}{\pi_{\min}^2} \ll d$, then the parallelism cost here is significantly lower then in the aforementioned cases. 
    \item In the example of rich context sets/distributions pair presented in the text, we can have $\frac{L^2 \pi_{\max}^2}{\pi_{\min}^4} + P\frac{\chi^2}{\pi_{\min}^2} \leq \tlO(P)$ with high probability when $P \gtrsim \tlOmega(d)$. As $P \to \infty$, the burn-in term only scales as $\tlO(P)$, which up to logarithmic factors is free of any explicit dependence on the size of the context sets.
\end{itemize}

\section{Parallel Regret Lower Bounds}
\label{sec:lb}
Lastly, we show that the upper bounds on the parallel regret provided in the previous sections are nearly matched by complementary information-theoretic lower bounds on the parallel regret under natural parameter scalings. These regret lower bounds are essentially synthesized from various results across the literature although we provide proofs for completeness. These bounds capture three specific terms. The first is the regret induced by the learning and optimally playing $\thetastar$ for a well-specified model, the second is the cost of misspecification, and the third a term capturing the difficulty of learning in the parallel setting. These come together in the following theorem.

\begin{theorem}
\label{thm:lb}

For any parallel bandit algorithm, there
exists bandit environments satisfying
\cref{assump:noise,,assump:data,,assump:param},

\begin{enumerate} 
\item Such that when $\epsilon=0$ there is a single global context set (i.e. $\cX = \cX_{t,p}$ for all $t \in [T], p \in [P]$) for which,
\begin{equation}
     \mE[\cR(T, P)] \geq  \Omega\left(Rd\sqrt{TP}\right) \label{eq:sphere_lb}
\end{equation}
when $T \geq d \max(1, \frac{1}{3 \sqrt{2}\sqrt{\snr}})$, 
\item A single (finite) global context set with $m$ vectors such that when $d=\lceil 8 \log(m)/\epsilon^2 \rceil$, $R=0$ and $LS \geq 1$, 
\begin{equation}
     \mE[\cR(T, P)] \geq  \Omega\left(\epsilon \sqrt{ \frac{d-1}{\log(m)}} \cdot \min(TP, m-1)  \right), and \label{eq:misspec_lb}
\end{equation}
\item An (oblivious) adversarial context for which $R=0$,
    \begin{equation}
        \mE[\cR(T, P)] \geq  \Omega\left(LS P \max\left(\min \left(\sqrt{d}, \frac{T}{\sqrt{d}} \right), 1\right)\right).
        \label{eq:linear_p_lb}        
    \end{equation}
\end{enumerate}
\end{theorem}
The proof of \cref{thm:lb} follows from three separate parts. The first component is a simple reduction which argues that the minimax regret of a parallel bandit algorithm, presented with contexts fixed across processors, must be at least as much as its  sequential counterpart given access to the same number of total arm queries. The first part of \cref{thm:lb} then follows from a standard construction for lower bounding the sequential regret of a bandit instance when the context set is taken to be a sphere. The second part is a (noiseless) lower bound which uses a probabilistic argument to witness a finite context set, function $f$, and $\thetastar$ for which the misspecification level must show up multiplicatively in the regret. The third component shows the existence of a hard instance for learning when adversarial contexts are presented which necessitates a linear dependence on $P$ similar to a construction in \citet[Theorem 1]{han2020sequential}.
We now make several comments to further interpret these results.
\begin{itemize}
    \item Together the terms  \cref{eq:sphere_lb,,eq:misspec_lb} show the main components of \cref{thm:regret_par_linucb} are unavoidable---in particular the term corresponding to the variance of learning $R d \sqrt{TP}$ and the term capturing the magnitude of misspecification $\epsilon \sqrt{d} TP$. The final term \cref{eq:misspec_lb} shows that in the general contextual bandit setting (for $T > d$) some nontrivial dimension dependence tied to $P$ -- of the form $P \sqrt{d}$ -- is needed.
    \item The first term \cref{eq:sphere_lb} in \cref{thm:lb} captures the variance of learning under optimism in the parallel regret. For this portion of the lower bound, the context set is taken to be the sphere, which satisfies the conditions of a rich exploration set when the exploration distribution is taken as the uniform distribution over the spherical shell (here $\ell=L$). Hence for $\snr \lesssim d$ and $\epsilon=0$, the guarantee from \cref{cor:rich_explore_par_linucb_regret} matches this lower bound up to logarithmic factors for sufficiently large $T$.
    % Similarly in the regime $P \lesssim d$, the guarantee from \cref{cor:par_linucb_regret_large_reg} which holds in the absence of any additional structure on the context sets also matches this lower bound up to logarithmic factors when $\epsilon=0$.
    \item To gain further intuition for \cref{eq:misspec_lb} it is helpful to consider the high-dimensional scaling limit where $m = \Theta(d^k)$ for $k \gg 1$ and $TP \ll m$. Then, under the conditions of the result, we can see for any $\epsilon$ there exists a sufficiently large $d$ so that the parallel regret satisfies $\tOmega(\epsilon \sqrt{d} TP)$. Hence (in the realistic regime) where the context set contains a large numbers of context vectors and there are not sufficiently many queries to observe all $m$ of them, the $\epsilon \sqrt{d} TP$ in the regret within \cref{thm:regret_par_linucb} is unavoidable up to logarithmic factors.
    \item Capturing the necessity of the ``burn-in" terms, which represent the price of parallelism in our upper bounds, is an interesting but challenging research direction. In particular, because in many applications the information-theoretic limits of learning when $P \sim T$ may be of interest. The term \cref{eq:linear_p_lb} shows that the linear $P$ term in the regret must have at least a $\sqrt{d}$ dependence in the presence of adversarial contexts. Closing the gap between this and the $P d$ term in our general upper bound is an open direction. Moreover, the interplay between the structure of the context set and the burn-in terms in the upper bounds in \cref{cor:arb_context_regret,,cor:doub_dir_par_linucb_regret,,cor:rich_explore_par_linucb_regret}
    seems quite nuanced. Any such lower bounds capturing these dependencies will likely need to be constructed on a case-specific basis for different context set geometries as well as be geared towards the small $T$ regime.
\end{itemize}

\section{Experiments}

Here we explore the performance of the parallel linear bandit algorithms presented in this paper on several synthetic and real problem instances of increasing complexity. While analysis in the bandits literature is often focused on minimizing regret, in the batch setting best arm identification may be of primary interest for some practical design settings where no cost is incurred for additional arms after the best performing arm within a round. Hence, we explore the performance of our family of algorithms in both parallel regret and best arm identification.

In the synthetic data settings, we investigate both a perfectly linear setting and a misspecified setting generated from the output of a randomly initialized neural network. The real data instances are derived from a material science and biological sequence design applications to provide breadth across a variety of context set geometries and application-specific behaviors. In the real data settings, we consider the performance over parallel variants of all algorithms considered herein against a baseline that is the $\epsilon$-greedy algorithm. Note that this baseline makes no structural assumptions on the conditional model $y_i | x_i$ (and as such is ``unbiased") but also is not able to take advantage of the covariates $\x_i$, since it doesn't construct a regression model to guide exploration. 

For all experimental setups, we fix the total number of arm queries $TP$ and run with three different levels of parallelism (i.e. $P = \{1, 10, 30\}$ for the superconductor setting and $P = \{1, 10, 100\}$ for all others) over 30 separate trials. All algorithms use a doubling round routine which is set to the identity map. As is common, for both Thompson sampling variants, we avoided inflating the confidence set radius by the additional $\sqrt{d}$ factor so it matches the confidence sets of the other algorithms. The misspecification parameter was set to zero for all experiments. The hyperparameters of each algorithm were tuned via a post-hoc grid search over a logarithmically-spaced grid for the random neural network and real data experiments (see \cref{sec:app_exp} for details) as in \citet{foster2018practical}.

\subsection{Synthetic Experiments}

We begin by testing the ability of our optimistic algorithms to parallelize on simulated data. We consider a problem in $d=100$ with a linear reward oracle whose underlying parameter $\thetastar/\norm{\thetastar}$ for $\thetastar \sim \cN(0, \I_d)$ subject to Gaussian additive noise $\epsilon \sim \cN(0, 1)$. We then generate a fixed, global context set $\cX = \{ \x_i/\norm{\x_i}_2 \}_{i=1}^{m}$ for $\x_i \sim \cN(0, \I_d)$ with $m=10^4$ actions. We then set $\cX_{t,p} = \cX$ for all $t \in [T]$ and $p \in [P]$. The hyperparameters of the algorithms were chosen according to their theoretically-motivated values $\lambda = 1, R=1, S=1$ with $\delta=1/T$. As \cref{fig:synthetic} shows, the parallel versions of each of the algorithms asymptotically achieve a nearly perfect speed-up with respect to parallelism as measured by the regret. As $T \to \infty$ the performance of the different types of base algorithms are comparable.

Next we investigate the parallelism of our methods under changing context sets. We generate a random context set $\cX_{t,p} = \{ \x_i/\norm{\x_i}_2 \}_{i=1}^{m}$ for $\x_i \sim \cN(0, \I_d)$ with $m=10^4$ actions for each timestep-processor pair $(t,p)$ with hyperparameters set as before. Once again we see in \cref{fig:randomcontext} that each algorithm achieves near perfect speed-up as measured by parallel regret and all base algorithms are asymptotically comparable. 

% Note that with a randomly sampled context from the unit ball at each arm query yields variance reduction in the parallel regret compared to the fixed global context while also achieving a lower regret overall.

% This reinforces the insight that fixed contexts or changing contexts which have a large number of overlapping arms as the bandit algorithms are not able to leverage concentration of measure to estimate $\theta$.

Finally, from our theoretical results we recall the importance of the covariances $\V_{t,p}$ remaining quasi-static intra-round. We examine this behavior in the synthetic linear reward setting by determining the minimal doubling round coefficient $\alpha^{\min}_{t,p}$ for each arm query which satisfies $0 \preceq \alpha \V_{t,1} - \V_{t,p}$, where we call $\alpha$ the \textit{doubling round coefficient}. That is, for any doubling round coefficient $\alpha \geq \alpha^{\min}_{t,p}$ for all $p \in [P]$, the critical covariance inequality is satisfied and no doubling round is triggered for that round. Note that in our theoretical results, we arbitrarily set $\alpha = 2$ for our analysis. As before, we generate a fixed global context and theoretically-motivated hyperparameter values, but this time with $d=20$ and $m=10^3$. We then run the algorithms without calling any doubling round routines and observe how the doubling round coefficient changes through time for $P=100$. At each $(t,p)$ timestep, we compute $\alpha^{\min}_{t,p}$ and plot it against the number of arm queries as shown in \cref{fig:doublinground}. As expected, we qualitatively notice a sawtooth pattern in all algorithms as the minimal doubling round coefficient increases for fixed $t$ as the covariance $\V_{t,p}$ gets updated for each additional processor $p$ which then resets back down at the end of each round. Additionally, all algorithms experience a dropoff in minimal doubling round coefficient with each successive round indicating that the covariance gets more stable as more arms are queried with all algorithms having an essentially flat $\alpha^{\min}_{t,p}$ near 1 as $t \geq 2000$. Finally, we observe that in the fixed context setting where diversity of arms isn't introduced via the context, LinUCB has the highest doubling round coefficients since the algorithm chooses the same arm for all processors leading for significant changes in the shape of the covariance within a round making the algorithm susceptible to overconfidence. Note that the other three algorithms have much smaller doubling round coefficients for a much shorter time due to the increased diversity in actions played in the bandit algorithm.

\begin{figure}[!hbt]
\centering
\begin{minipage}[c]{.24\linewidth}
\includegraphics[width=\linewidth]{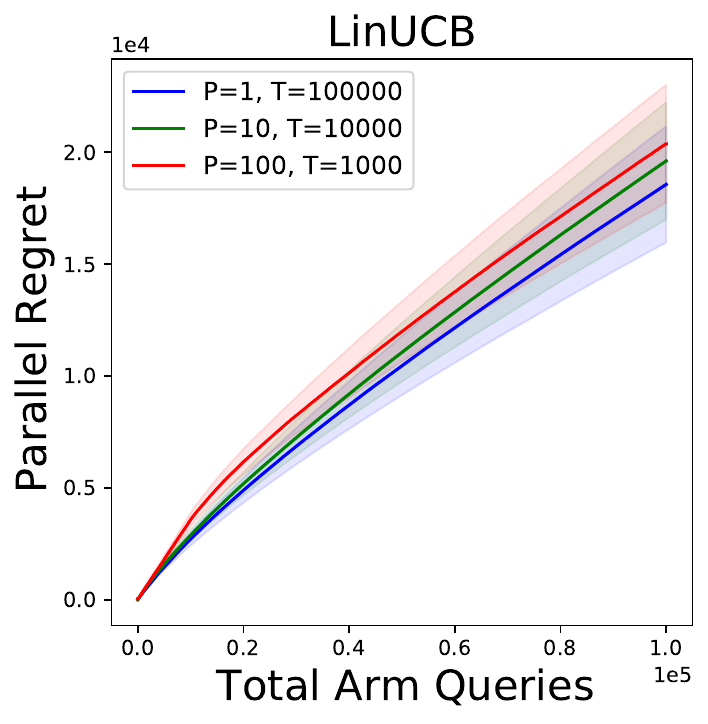}
\end{minipage}
\begin{minipage}[c]{.24\linewidth}
\includegraphics[width=\linewidth]{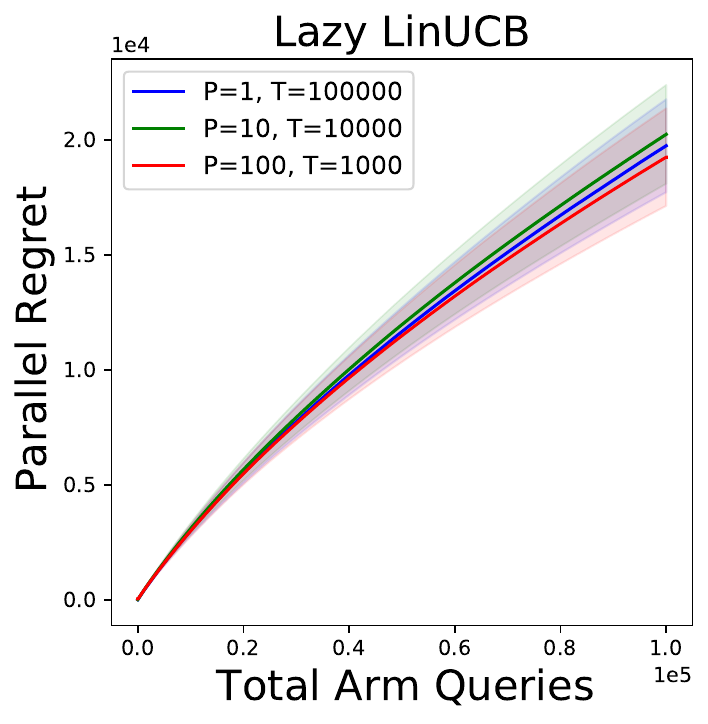}
\end{minipage}
\begin{minipage}[c]{.24\linewidth}
\includegraphics[width=\linewidth]{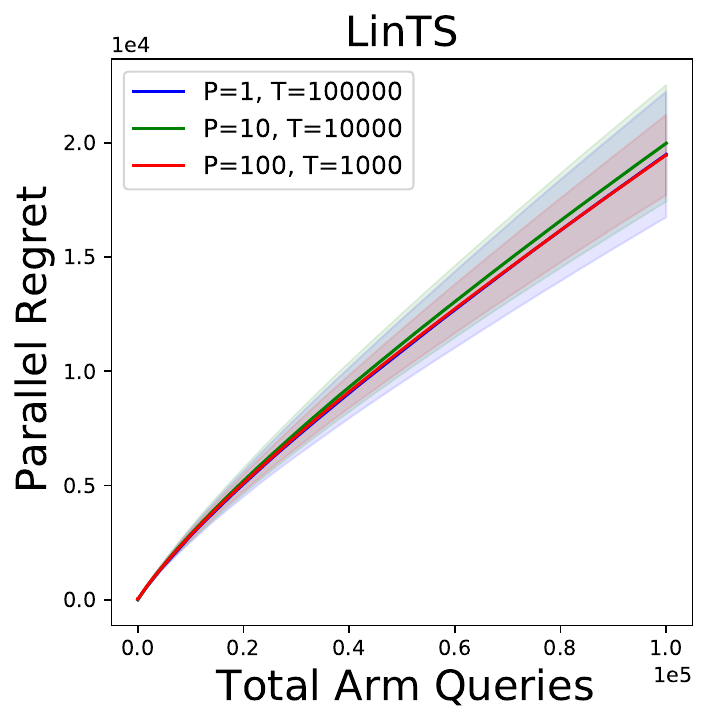}
\end{minipage}
\begin{minipage}[c]{.24\linewidth}
\includegraphics[width=\linewidth]{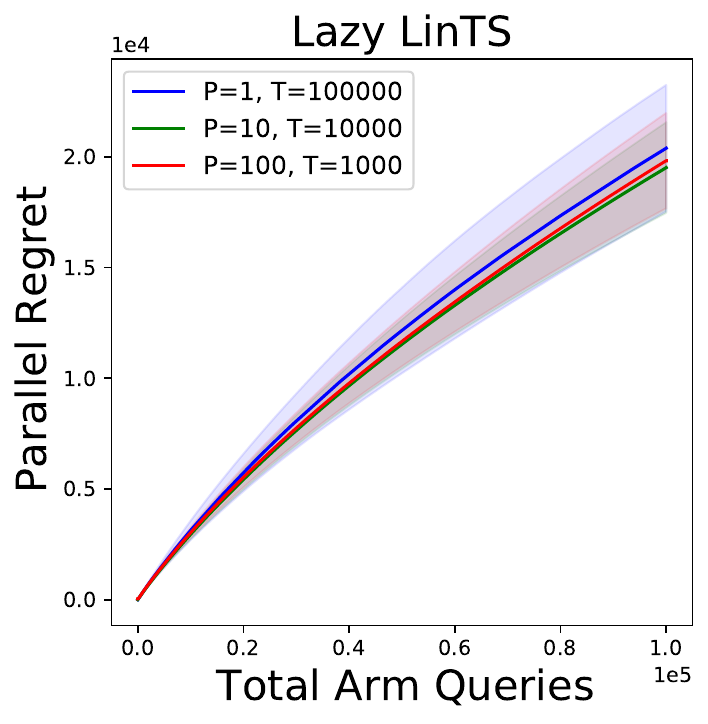}
\end{minipage}
\caption{Fixed context setting. From left to right: Regret of  LinUCB, Lazy LinUCB,  LinTS, and Lazy LinTS for varying values of $P$. The mean regret is plotted across $30$ runs with the standard deviation as the shaded region. Here $d=100$, $m=10^4$.
}
\label{fig:synthetic}
\end{figure}

% \begin{figure}[!hbt]
% \centering
% \begin{minipage}[c]{.4\linewidth}
% \includegraphics[width=\linewidth]{plots/synthetic/d=100,K=10000LinTS.pdf}
% \end{minipage}
% \begin{minipage}[c]{.4\linewidth}
% \includegraphics[width=\linewidth]{plots/synthetic/d=100,K=10000LazyLinTS.pdf}
% \end{minipage}
% \caption{Left: Regret of Parallel LinTS for varying values of $P$. Right: Regret of Parallel LazyLinTS for varying values of $P$. Here $d=100$, $m=10000$.
% }
% \label{fig:synthetic_lints}
% \end{figure}

\begin{figure}[!hbt]
\centering
\begin{minipage}[c]{.24\linewidth}
\includegraphics[width=\linewidth]{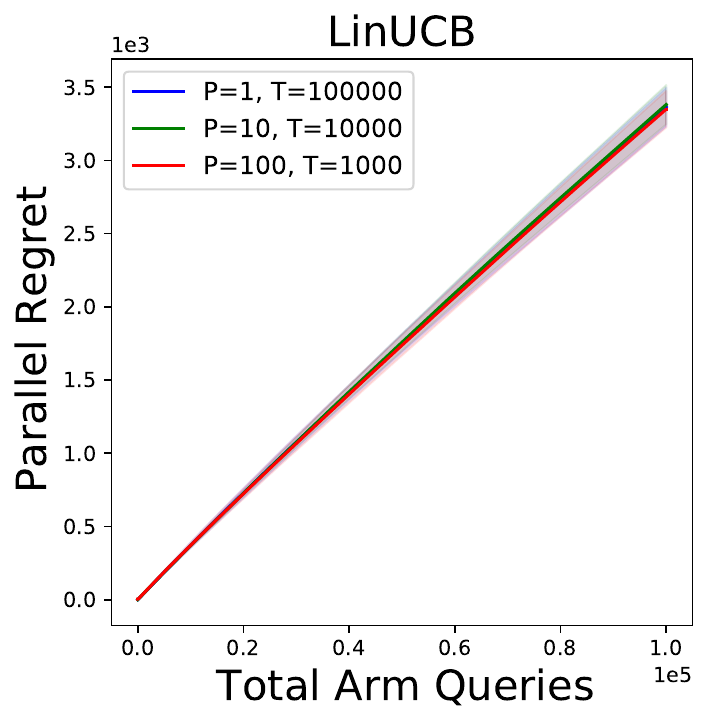}
\end{minipage}
\begin{minipage}[c]{.24\linewidth}
\includegraphics[width=\linewidth]{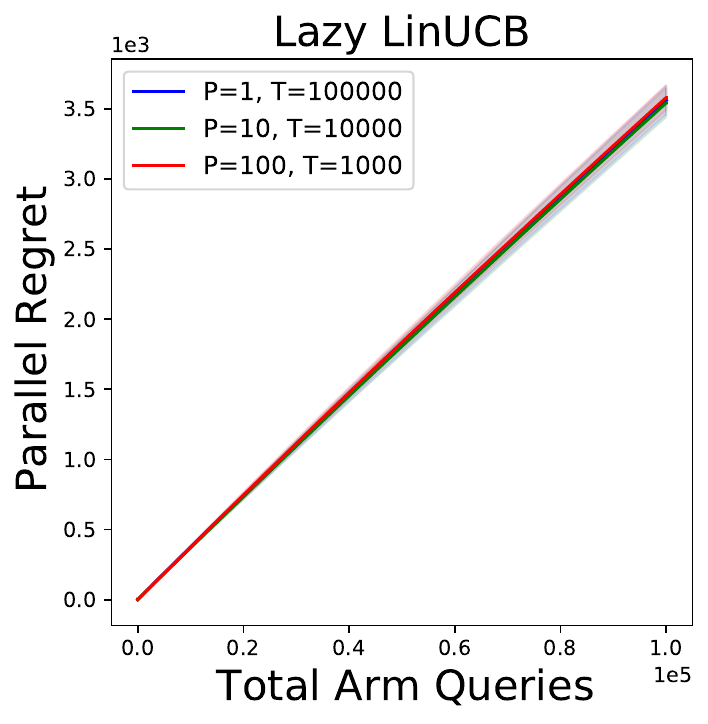}
\end{minipage}
\begin{minipage}[c]{.24\linewidth}
\includegraphics[width=\linewidth]{plots/randomcontext/randomcontext,parallelism,d=100,K=10000LazyLinUCB.pdf}
\end{minipage}
\begin{minipage}[c]{.24\linewidth}
\includegraphics[width=\linewidth]{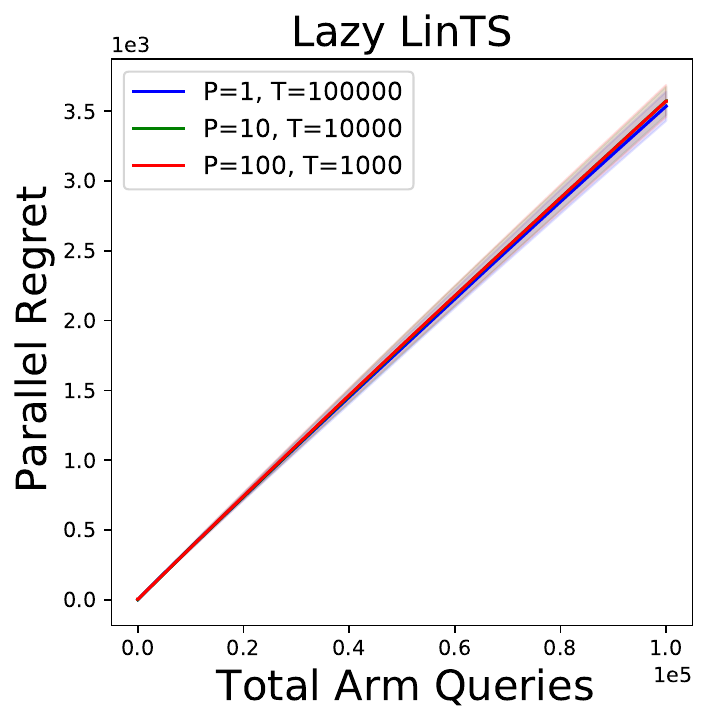}
\end{minipage}
\caption{Changing context setting. From left to right: Regret of LinUCB, Lazy LinUCB, LinTS, and Lazy LinTS for varying values of $P$. The mean regret is plotted across $30$ runs with the standard deviation as the shaded region. Here $d=100$, $m=10^4$.
}
\label{fig:randomcontext}
\end{figure}

\begin{figure}[!hbt]
\centering
\begin{minipage}[c]{.24\linewidth}
\includegraphics[width=\linewidth]{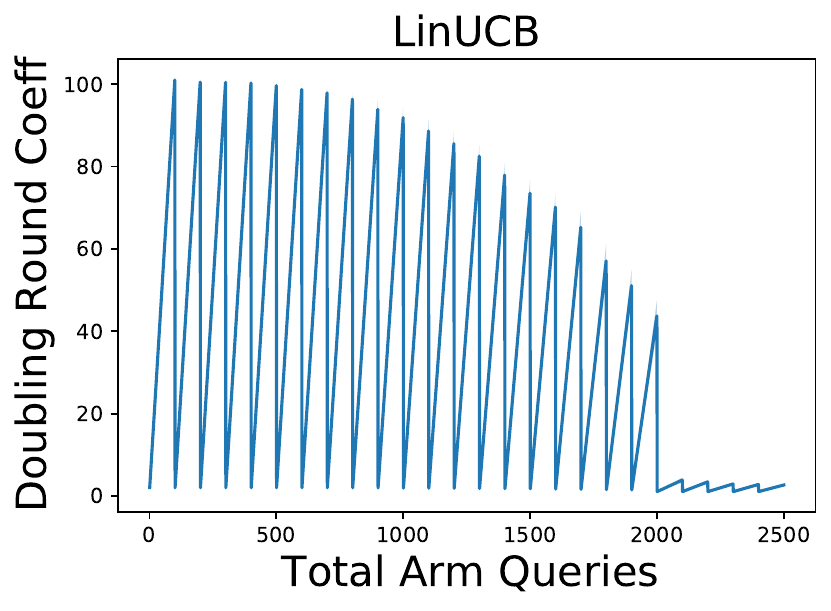}
\end{minipage}
\begin{minipage}[c]{.24\linewidth}
\includegraphics[width=\linewidth]{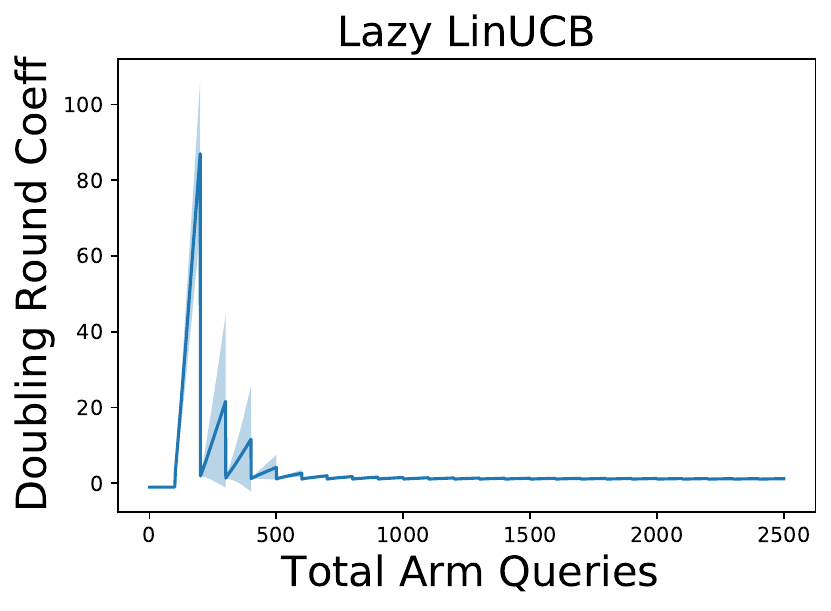}
\end{minipage}
\begin{minipage}[c]{.24\linewidth}
\includegraphics[width=\linewidth]{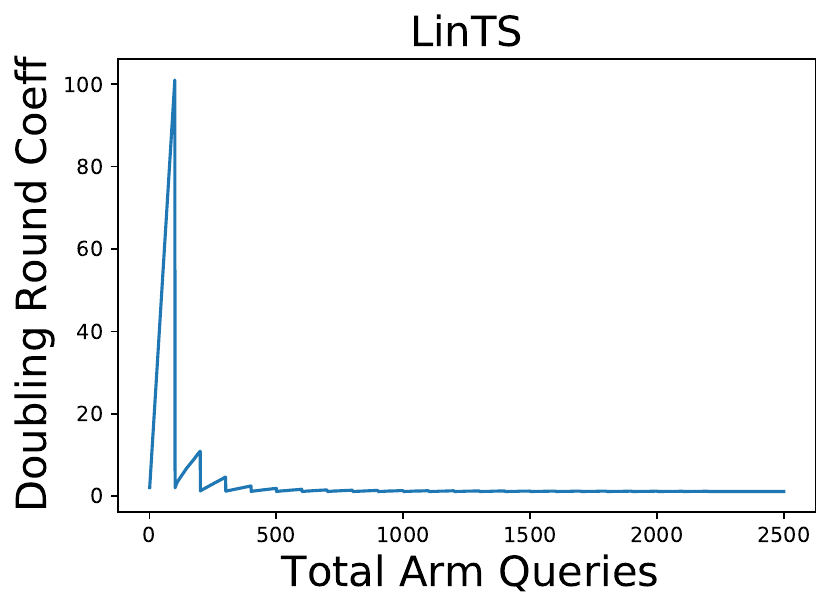}
\end{minipage}
\begin{minipage}[c]{.24\linewidth}
\includegraphics[width=\linewidth]{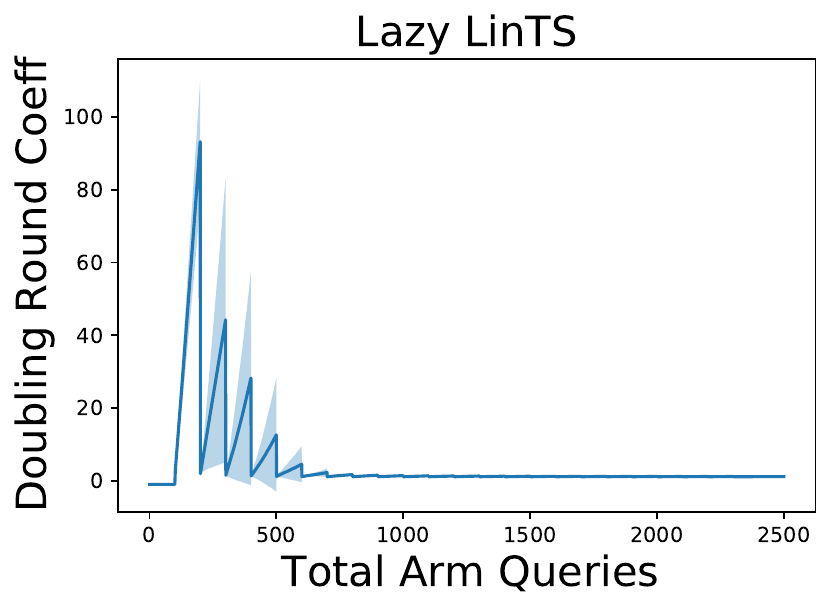}
\end{minipage}
\caption{Doubling round coefficients. From left to right: doubling round coefficients of LinUCB, Lazy LinUCB, LinTS, and Lazy LinTS. The mean coefficient is plotted across $30$ runs with the standard deviation as the shaded region and $d=20$, $m=10^3$, and $P=100$.
}
\label{fig:doublinground}
\end{figure}

\subsection{Randomly Initialized Neural Network Data}
Recent studies (\cite{angermueller2020population}, \cite{brookes2018design}) have modeled fitness landscapes from biological sequence design problems with randomly initialized neural networks.  Such randomly initialized neural network exhibit nearly linear properties for small parameter values; these properties are essential to the guarantees for our algorithms as demonstrated in the prequel.  Moreover, they deviate from a linear model sufficiently to serve as a good testbed for model misspecification. The ``biological sequence" input is modeled as a $14$-length binary string $x_i \in \{0,1\}^{14}$ ($m = 16,384$ sequences) to mimic the combinatorial nature of biological sequences. The fitness landscape $f(x_i)$ is modeled by a feedforward neural network with three hidden layers (of size 128, 256, and 512 hidden units) where each weight is sampled i.i.d.\ via Xavier initialization $w \sim \mathrm{Unif}(-\sqrt{6/(h_i + h_{i+1})},\sqrt{6/(h_i + h_{i+1}}))$ where $h_i$ is the number of units in layer $i$. The output $y_i$ of the randomly initialized feedforward neural network can be thought of as the oracle fitness landscape which we wish to optimize. Note unlike in common use cases for neural networks the initialized weights are never modified. To model experimental noise, we add Gaussian noise to generate the reward, $r_i = y_i + \epsilon_i$, where $\epsilon_i \sim \cN(0, 0.5^2)$. 

Our family of bandit algorithms were run with both linear features only ($d=14$) and quadratic features ($d=210$). In this setting, we first verify that a linear model is appropriate. The best fit linear model for linear features and quadratic features had an $R^2$ of $0.7$ and $0.87$, respectively.
 
As \cref{fig:randomnn} demonstrates in the quadratic feature setting, the variance between runs of the parallel regret for the lazy methods tends to be much higher than the non-lazy methods due to the correlation of covariance updates within a batch. LinUCB performs the best upfront in the purely sequential setting due to the high early round cost that Thompson sampling pays upfront, while paying a much lower price in regret in successive rounds. In higher parallelism regimes, Thompson sampling performs the best in terms of parallel regret after the first few hundred arm queries. This indicates that Thompson sampling benefits from encouraging diversity.

In comparison as shown in \cref{fig:1drandomnn}, the linear features perform demonstrably worse in terms of parallel regret than the quadratic features in just a few hundred arm queries across all levels of parallelization. Furthermore, the performance of LinUCB in the linear feature setting suffers most significantly relative to the other methods further confirming that diversity is important in settings of model misspecification. 

\begin{figure}[!hbt]
\centering
\begin{minipage}[c]{.24\linewidth}
\includegraphics[width=\linewidth]{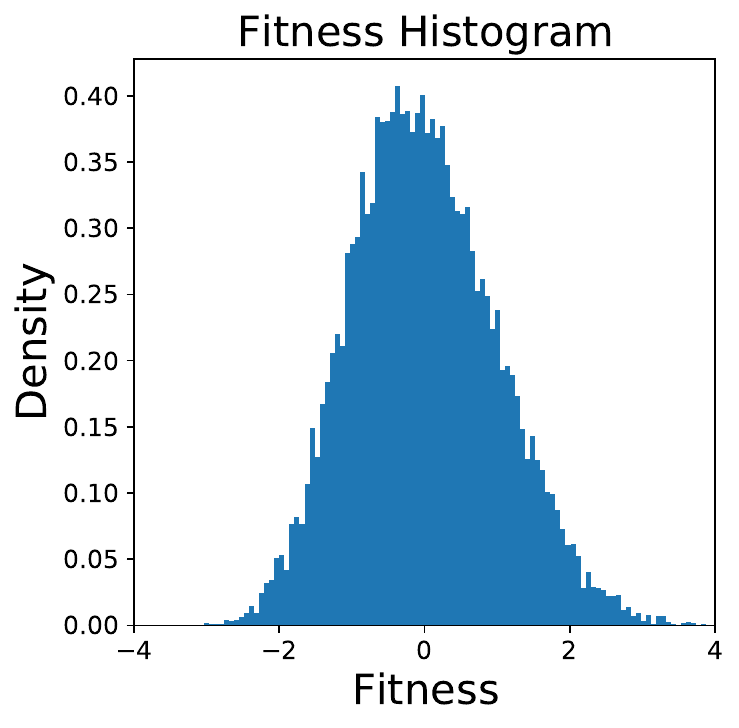}
\end{minipage}
\begin{minipage}[c]{.24\linewidth}
\includegraphics[width=\linewidth]{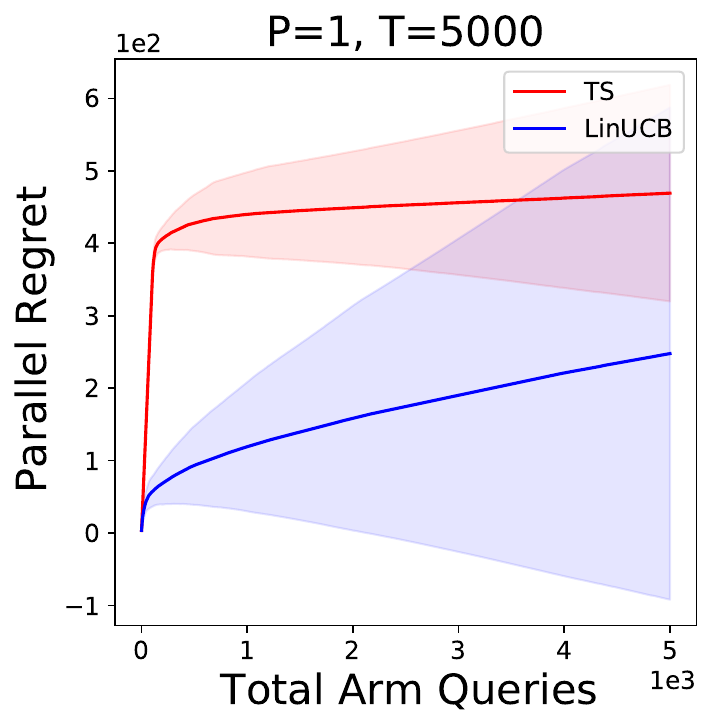}
\end{minipage}
\begin{minipage}[c]{.24\linewidth}
\includegraphics[width=\linewidth]{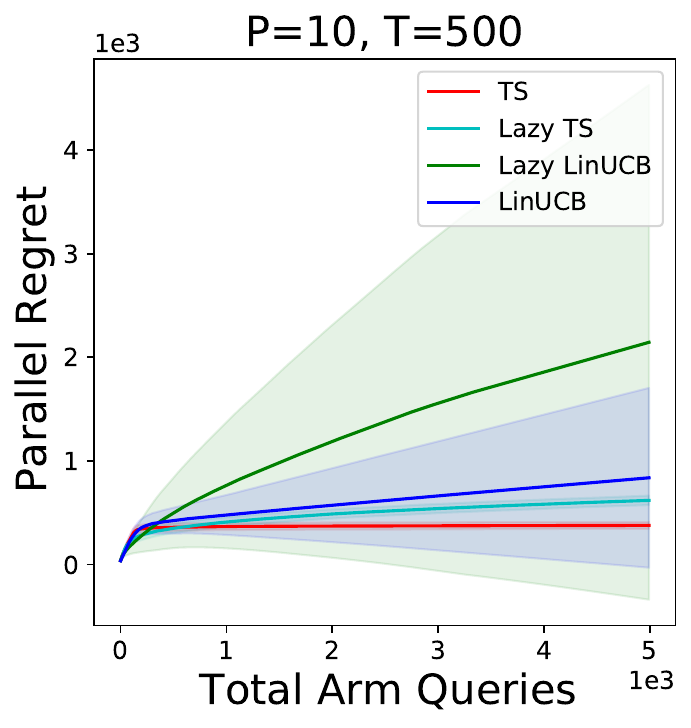}
\end{minipage}
\begin{minipage}[c]{.24\linewidth}
\includegraphics[width=\linewidth]{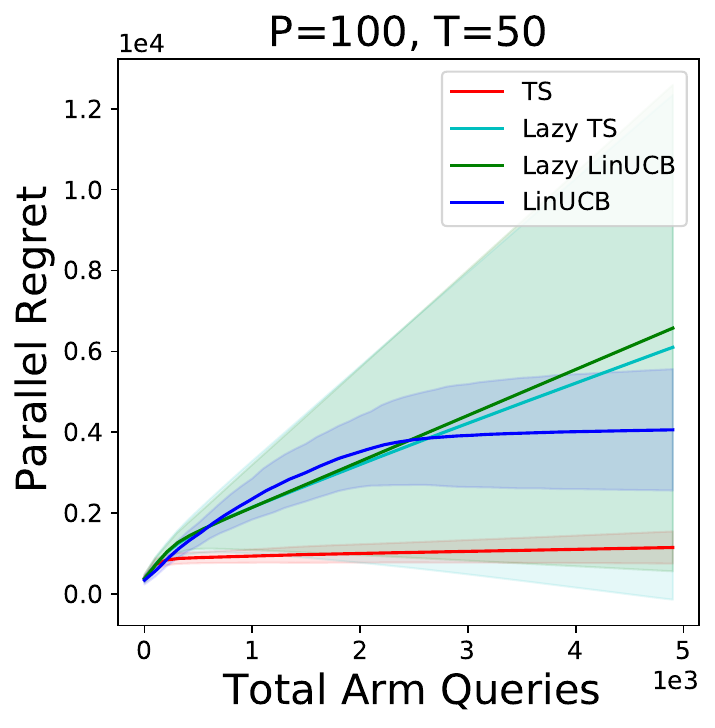}
\end{minipage}
\caption{Top Left: The histogram of fitness values for the RandomNN dataset. Top right: The parallel regret of the purely sequential setting for 5000 queries with a noise standard deviation of $0.5$. Bottom Left: The parallel regret for $P=10$. Bottom Right: The parallel regret for $P=100$. The mean regret and standard deviation are plotted as the solid line and shaded region in all plots.
}
\label{fig:randomnn}
\end{figure}

\subsection{Superconductor Data}

To assess the utility of the parallel bandit algorithms in a realistic setting we constructed a semi-synthetic problem using the UCI dataset in \citet{hamidieh2018data} consisting of a collection of superconducting materials along with their maximum superconducting temperature. The dataset consists of $m=21, 263$ superconducting materials, each with a $d=81$-dimensional feature vector, $x_i$, containing relevant attributes of the materials chemical constituents and a superconducting critical temperature $y_i$. We construct a finite-armed bandit oracle over the $m$ arms which returns a reward $r_i = y_i + \epsilon_i$ for $\epsilon_i \sim \cN(0, 100^2)$ (since $\max_i y_i = 185.0$). The  task in this example is to find the best superconducting material (or arm as measured by $y_i$) given access to a total number of arm queries $\ll m$. 
% The hyperparameters of each algorithm were tuned via a post-hoc grid search over a logarithmically-spaced grid (see Appendix for details).

% \begin{figure}[!hbt]
% \centering
% \begin{minipage}[c]{.49\linewidth}
% \includegraphics[width=\linewidth]{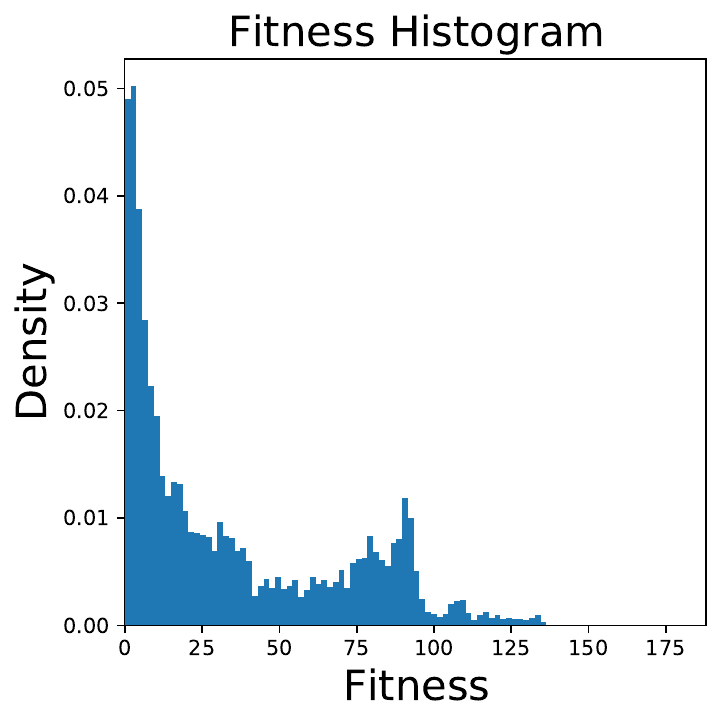}
% \end{minipage}
% \begin{minipage}[c]{.49\linewidth}
% \includegraphics[width=\linewidth]{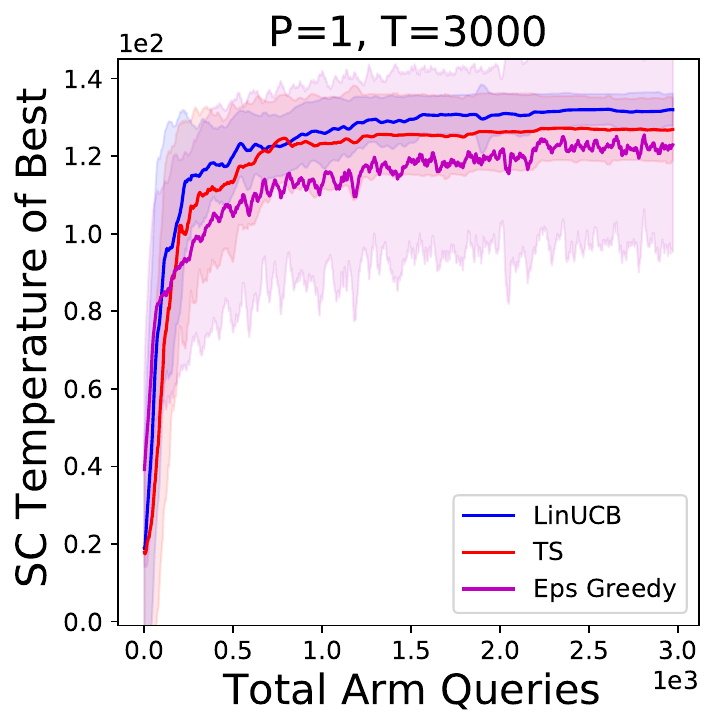}
% \end{minipage}
% \caption{Left: Fitness Histogram of Landscape. Right: Regret of all algorithms for $P=1$. Here the best superconducting material (by temperature) as determined by the algorithm at the time is displayed. Curves are also smoothed by a moving-average over a window of size 30 for clarity.
% }
% \label{fig:superconductor1}
% \end{figure}

% \begin{figure}[!hbt]
% \centering
% \begin{minipage}[c]{.49\linewidth}
% \includegraphics[width=\linewidth]{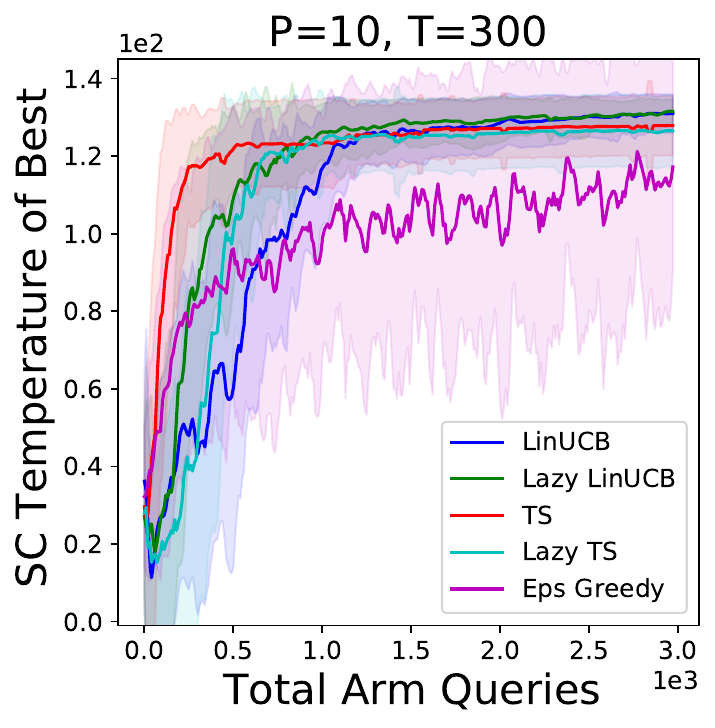}
% \end{minipage}
% \begin{minipage}[c]{.49\linewidth}
% \includegraphics[width=\linewidth]{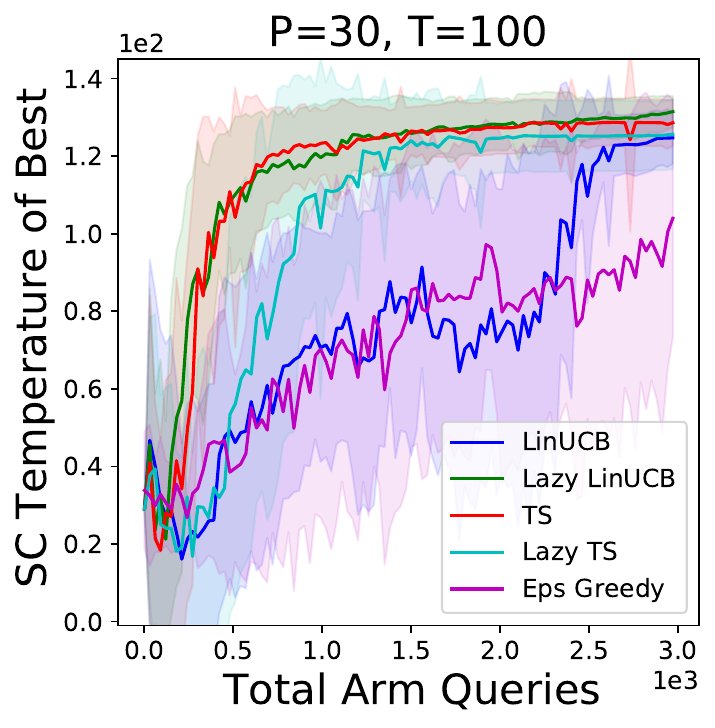}
% \end{minipage}
% \caption{Left: Regret of all algorithms for $P=10$. Right: Regret of all algorithms for $P=30$. Here the best superconducting material (by temperature) as determined by the algorithm at the time is displayed. Curves are also smoothed by a moving-average over a window of size 30 for clarity.
% }
% \label{fig:superconductor2}
% \end{figure}

\begin{figure}[!hbt]


\centering
\begin{minipage}[c]{.24\linewidth}
\includegraphics[width=\linewidth]{plots/superconductor/histogram.pdf}
\end{minipage}
\begin{minipage}[c]{.24\linewidth}
\includegraphics[width=\linewidth]{plots/superconductor/P=1,T=3000.pdf}
\end{minipage}
\begin{minipage}[c]{.24\linewidth}
\includegraphics[width=\linewidth]{plots/superconductor/P=10,T=300.pdf}
\end{minipage}
\begin{minipage}[c]{.24\linewidth}
\includegraphics[width=\linewidth]{plots/superconductor/P=30,T=100.pdf}
\end{minipage}
\caption{Leftmost: Fitness Histogram of Landscape. Left to right: Regret of all algorithms for $P=1, 10, \text{ and } 30$, respectively. Here the best superconducting material (by temperature) as determined by the algorithm at the time is displayed. Curves are also smoothed by a moving-average over a window of size 30 for clarity.
}
\label{fig:superconductor}
\end{figure}

As \cref{fig:superconductor} shows although $\epsilon$-greedy is a simple algorithm, it can achieve reasonable performance (at the cost of high variance), when $P=1$. Indeed prior work has shown that other greedy (linear bandit) algorithms are formidable baselines in setting with diverse covariates \citep{bayati2020unreasonable}. However, in our setting, it is still outperformed by all the linear bandit algorithms studied herein. We also see all algorithms quickly saturate to find superconducting materials with temperatures $y_i \approx 120$.

In the cases of $P=10$, and $P=30$ we see all the parallel variants of the linear algorithms studied herein achieve non-trivial parallelism gains; that is the number of sequential rounds needed to discover this best material does not scale linearly with $P$ for any of the methods. Remarkably, Thompson sampling suffers almost no loss in performance even when $P=30$ in this setting with real data where model misspecification is in full force. Thompson sampling outperforms all other algorithms when $P=10$ and $P=30$. As our results show, explicitly introducing diversity into the selection of actions provides value in this setting. 

\subsection{Transcription Factor Binding}
In order to evaluate the effectiveness of the family of proposed parallelized linear bandit algorithms in a realistic biological sequence design setting, we utilized a fully characterized experimental transcription factor binding affinity dataset from \cite{barrera2016survey} (using the software package in \cite{sinai2020adalead}). Changes in transcription factor binding affinity has been shown to have impact on gene regulatory function and subsequently is associated with disease risk. The dataset experimentally characterizes the binding affinity of all possible $8$-length DNA sequence motifs ($m = 4^8 = 65,536$) to a transcription factor DNA binding domain providing a good benchmark for our bandit methods in a real biological application with the combinatorial structure common in biological sequence design. In this setting the number of arms $m$ is $O(\exp(d))$.

Often in biological sequence design problems quadratic features are used to model pairwise interactions (referred to as epistasis in biology). We compared linear features with random ReLU features and a quadratic kernel and found linear features work best in and provide additional insight in \cref{sec:app_exp}.

The fully characterized landscape allows for exact computation of parallel regret and analysis of the impact of realistic forms of model misspecification. Each arm $x_i$ was one-hot encoded. The scaled binding affinity $y_i \in [0,1]$ measured the binding of the arm $x_i$ for the SIX6 REF R1 transcription factor target. The finite-armed bandit oracle as in the superconductor setting was modeled as $r_i = y_i + \epsilon_i$ where $\epsilon_i \sim \cN(0, 0.3^2)$. The task for this application is to find the sequence with the highest transcription factor binding affinity.

In evaluating the best arm reported across varying levels of parallelism, we can see that LinUCB and $\epsilon$-greedy consistently performs the worst with the other three algorithms (Lazy LinUCB, Thompson sampling, and Lazy Thompson sampling) perform comparably. This implies that diversity of arms is important particularly when the model is misspecified. Note that the right tail of the fitness histogram is rather heavy for this task such that getting to an arm with fitness above $0.9$ is rather simple in the noiseless setting and can be optimized in few arm queries as was shown in \cite{angermueller2020population}. However, since biological experiments often have large experimental error we add noise with standard deviation of $0.3$ making the problem significantly harder leading to worse performance of algorithms preventing $\epsilon$-greedy from beating the $0.9$ threshold at all. Similarly, we find the same relative performance of methods in terms of parallel regret as shown in \cref{sec:app_exp}.

\begin{figure}[!hbt]
\centering
\begin{minipage}[c]{.24\linewidth}
\includegraphics[width=\linewidth]{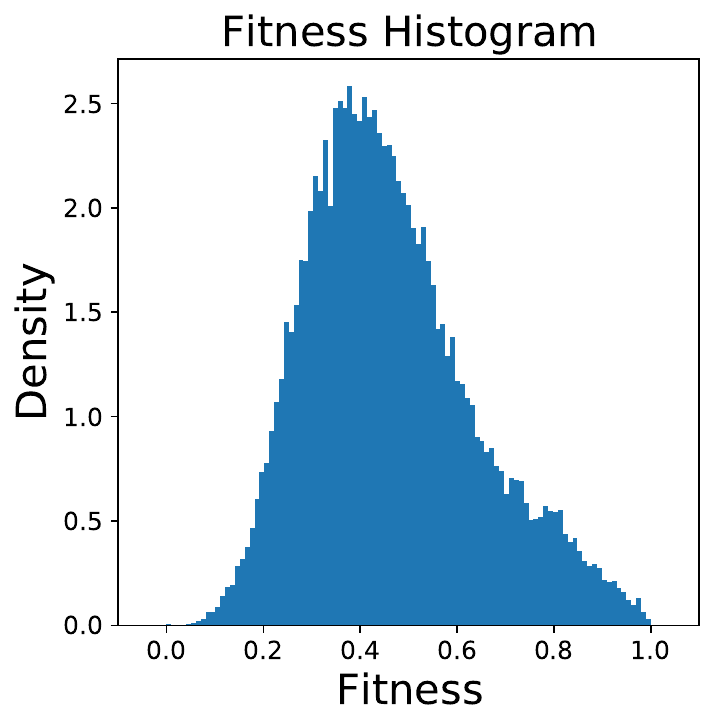}
\end{minipage}
\begin{minipage}[c]{.24\linewidth}
\includegraphics[width=\linewidth]{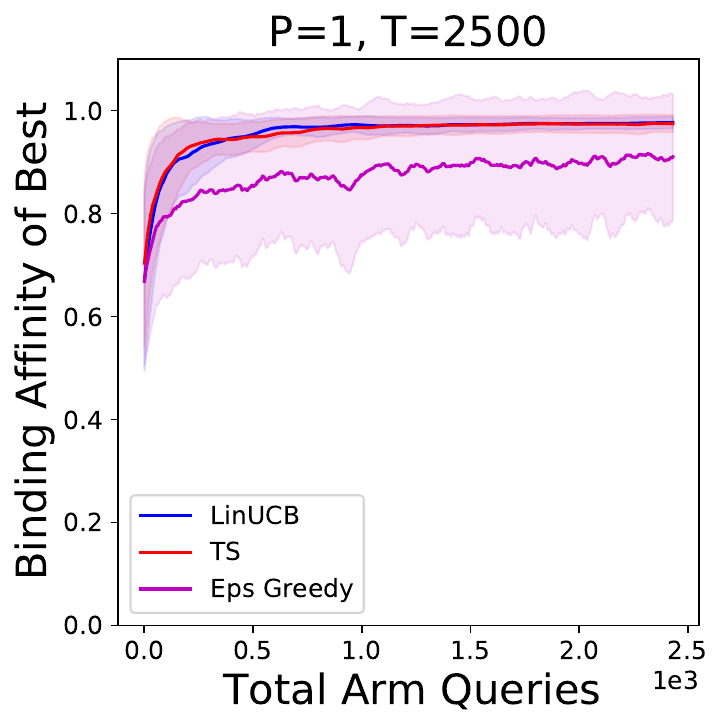}
\end{minipage}
\begin{minipage}[c]{.24\linewidth}
\includegraphics[width=\linewidth]{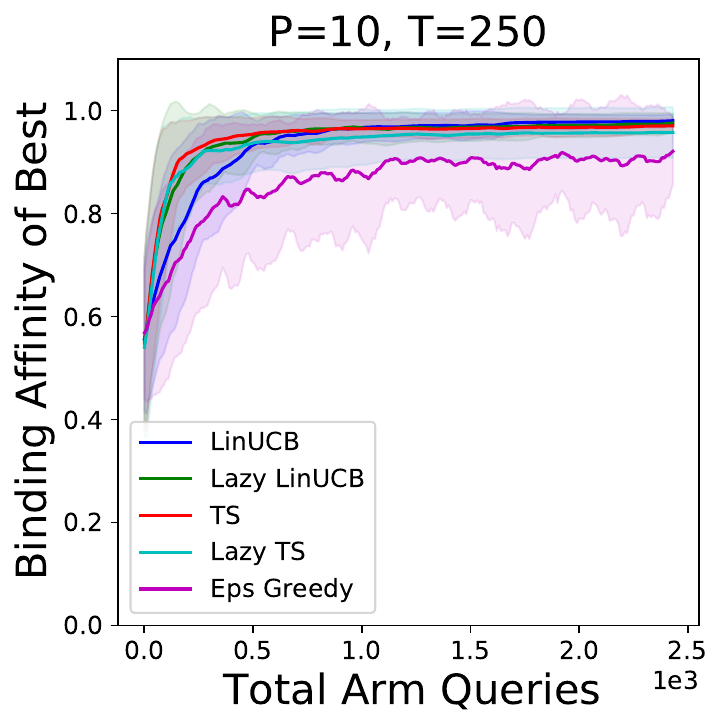}
\end{minipage}
\begin{minipage}[c]{.24\linewidth}
\includegraphics[width=\linewidth]{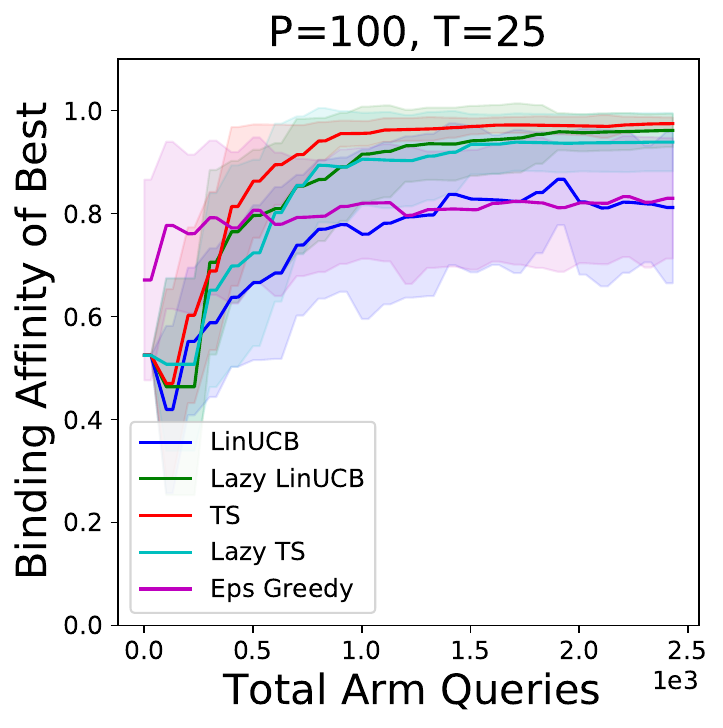}
\end{minipage}
\caption{TFBinding best arm with linear features. Leftmost: The fitness distribution of the dataset. From left to right: The best smoothed binding affinity for each round with error bars indicating standard deviation with $P=1, 10, \text{ and } 100$, respectively.
}
\label{fig:tfbinding}
\end{figure}

\section{Conclusions}
We have presented several parallel (contextual) bandit algorithms inspired by the optimism principle, and also investigated their specializations and extensions in the linear setting. In the general nonlinear setting we provided a general result showing how to characterize the complexity of learning in the parallel setting using the eluder dimension. In the linear setting, we showed how the corresponding notion  of covariance stability, provides a unified analysis of the linearUCB, lazy linearUCB, and Thompson sampling algorithms. Overall our regret upper bounds establish that the performance of these algorithms is nearly identical to that of their sequential counterparts (with the same total number of arm queries), up to a burn-in term which may depend on the geometry of the context set. Finally, we show that the parallelism gains suggested by our theory can also be achieved in several real datasets motivated by practical design problems and demonstrate the importance of diversity in problems that contain model misspecification.

Interesting directions for future work including extending the results herein to a suitably defined notion of best-arm identification. Similarly, understanding the impact of parallelism in simple, greedy heuristic algorithms (which nonetheless perform well in practice \citep{bayati2020unreasonable, sinai2020adalead}) is another important direction. Another interesting direction, would be to extend our techniques (namely characterizing the distributional eluder dimension in the parallel setting) to the setting of full reinforcement learning. 
%Another interesting direction is leveraging parallel processors to devise online model selection and hyperparameter tuning strategies.
Lastly, in applications such as protein engineering, it is often of interest to discover a diverse set of high-reward sequences under suitable notions of diversity.

\section{Acknowledgements}
This research is supported in part by the NIH R21-AI137433 and R35-GM134922 (JC and YSS). NT thanks the RISELab for it generous support.

\appendix
\newpage
\section{Proofs for Regret Upper Bounds for Eluder Dimension}

We first prove \cref{thm:nonlin_regret_bound}. For the following we find it useful to define the width function of a set $\tilde{\cF} \subset \cF$ at an action $\x$:
\begin{align*}
    w_{\tilde{\cF}}(\x) = \sup_{f, f' \in \tilde{\cF}} \abs{f(\x)-f'(\x)}
\end{align*}

\begin{proof}[Proof of \cref{thm:nonlin_regret_bound}]
    We first condition on the event in \cref{lem:conf_set_nonlin_uniform}, $\tf \in \bigcap_{t=1, p=1}^{\infty, P} \cF_{t,p}$, which holds probability at least $1-\delta$. For notational convenience we denote the action upper and lower bounds as $U_{t,p}(\x) = \sup \{ f(\x) : f \in \cF_{t,p} \}$ and $L_{t,p}(\x) = \inf \{ f(\x) : f \in \cF_{t,p} \}$. Hence if the best approximant satisfies $\tf \in \cF_{t,p}$ we have that $L_{t,p}(\x) \leq \tf(\x) \leq  U_{t,p}(\x)$ for all $\x$. On this event the regret of \cref{algo:par_nonlin_lazy_ucb} can be decomposed as,
    \begin{align*}
        & \fstar(\x_{t,p}^\star) - \fstar(\x_{t,p}) = \fstar(\x_{t,p}^\star) - \tf(\x_{t,p}^\star) + \tf(\x_{t,p}) -\fstar(\x_{t,p})  + \tf(\x_{t,p}^\star) - \tf(\x_{t,p}) \leq \\ & 2\epsilon + U_{t,p}(\x_{t,p}^\star) - L_{t,p}(\x_{t,p}) = 2 \epsilon + w_{\cF_{t,p}}(\x_{t,p}) + U_{t,p}(\x_{t,p}^\star) - U_{t,p}(\x_{t,p}) \leq 2\epsilon+ w_{\cF_{t,p}}(\x_{t,p}).
    \end{align*}
    
%     \begin{align*}
%         & \fstar(\x_{t,p}^\star) - \fstar(\x_{t,p}) = \fstar(\x_{t,p}^\star) - \tf(\x_{t,p}^\star) + \tf(\x_{t,p}) -\fstar(\x_{t,p})  + \tf(\x_{t,p}^\star) - \tf(\x_{t,p}) \leq \\ & 2\epsilon +  \tf(\x_{t,p}^\star) - \tf(\x_{t,p})  \leq 2\epsilon +  \tilde{f}_{t,p}( \x_{t,p}) - \tilde f(\x_{t,p}) \leq 2\epsilon + w_{\cF_{t,p}}(\x_{t,p}).
%     \end{align*}
% \textcolor{red}{where $\tilde{f}_{t,p} = \argmax_{f \in \mathcal{F}_{t-1, 0}  }   f(\x_{t,p}) $}

    where the first inequality by the misspecification bound and the last inequality follows by appealing to the optimism principle.
    The conclusion follows by summing over $t,p$ to obtain,
    \begin{align*}
        \cR(T,P) \leq O \left(\epsilon TP + \sum_{t=1,p=1}^{T,  P} w_{\cF_{t,p}} (\x_{t,p}) \right)
    \end{align*}
    and applying \cref{lem:nonlin_width_bound}. An identical argument shows \cref{algo:par_nonlin_ucb},
    \begin{align*}
     \fstar(\x_{t,p}^\star) - \fstar(\x_{t,p}) \leq 2\epsilon +  w_{\cF_{t,1}}(\x_{t,p})
    \end{align*} and summing shows 
    \begin{align*}
        \cR(T,P) \leq O \left(\epsilon TP + \sum_{t=1,p=1}^{T,  P} w_{\cF_{t,1}} (\x_{t,p}) \right).
    \end{align*}
Applying \cref{lem:nonlin_width_bound} gives the conclusion.
\end{proof}

\subsection{Bounding the Sum of widths}

We first bound the sum of the widths across the confidence sets. The argument follows similarly to that in \citet{russo2013eluder}.
\begin{lemma}
\label{lem:nonlin_width_bound}
    Let $\beta_{t,p}$ for $t \in \mathbb{N}$, $p \in [P]$ be a nondecreasing sequence, $\cF_{t,p}$ be defined as in \cref{eq:conf_set_nonlin}, and $\alpha_{T,P}^\cF = \frac{B}{TP}$. Then, almost surely,
    \begin{align*}
        \sum_{t=1}^{T} \sum_{p=1}^{P} w_{\cF_{t,p}}(\x_{t,p}) \leq \min O(\{ BTP, BPd + \sqrt{d \beta_{T,P} TP} \}).
    \end{align*}
    Similarly,
    \begin{align*}
        \sum_{t=1}^{T} \sum_{p=1}^{P} w_{\cF_{t,1}}(\x_{t,p}) \leq \min O(\{ BTP, BPd + \sqrt{d \beta_{T,1} TP} \})
    \end{align*}
\end{lemma}
\begin{proof}
    For ease of notation throughout we write $d = \dimeluder{\cF, \frac{1}{T}}$ and $w_{t,p}(\cdot) = w_{\cF_{t,p}}(\cdot)$. We first re-order the sequence in descending order $(w_{1,1}, \hdots, w_{T,P}) \to (w_{i_1}, w_{i_2}, \hdots, w_{i_{TP}})$. We have,
    \begin{align*}
      \sum_{t=1}^{T} \sum_{p=1}^{P} w_{\cF_{t,p}} = \sum_{k=1}^{TP} w_{i_k} = \sum_{k=1}^{TP} w_{i_k} \Ind[w_{i_k} \leq  \alpha_{T,P}^{\cF}] + \sum_{k=1}^{TP} w_{i_k} \Ind[w_{i_k} >  \alpha_{T,P}^{\cF}] \leq B + \sum_{k=1}^{TP} w_{i_k} \Ind[w_{i_k} >  \alpha_{T,P}^{\cF}]
    \end{align*}
    where the final step uses the fact that either $\alpha_{TP}^{\cF} = \frac{B}{TP}$ so $\sum_{k=1}^{TP} w_{i_k} \Ind[w_{i_k} \leq  \alpha_{TP}^{\cF}] \leq B$. By definition $w_{i_k} \leq 2B$. Additionally applying \cref{cor:sum_widths_e} shows that for $w_{i_k} > \frac{B}{TP}$ we have that
    \begin{align*}
        k \leq O \left(\frac{\beta_{T,P}}{w_{i_k}^2}+P \right) \dimeluder{\cF}{w_{i_k}} \leq O \left (\frac{\beta_{T,P}}{w_{i_k}^2}+P \right) \dimeluder{\cF}{\frac{B}{TP}}
    \end{align*}
    where the final step follows since $\dimeluder{\cF}{\epsilon'}$ is nonincreasing in $\epsilon'$. So when, $w_{i_k} \geq \frac{B}{TP}$, $w_{i_k} \leq O( \min \{B, \sqrt{\frac{\beta_{T,P}d}{k-Pd}} \})$.
    Therefore we have that,
    \begin{align*}
        & \sum_{k=1}^{TP} w_{i_k} \Ind[w_{i_k} >  \alpha_{T,P}^{\cF}] \leq O(B Pd) + O \left(\sum_{k=Pd+1}^{TP} \sqrt{\frac{d\beta_{T,P}}{k-Pd}} \right) \leq O(BPd +\sqrt{d \beta_{T,P}} \int_{0}^{TP-Pd} \frac{1}{\sqrt{k}} dk) \\
        & \leq O( BPd + \sqrt{d \beta_{T,P}} \sqrt{TP})
    \end{align*}
    Finally, using the fact the sum of widths is always bounded $O(TB)$ gives the conclusion that,
    \begin{align*}
    \sum_{t=1}^{T} \sum_{p=1}^{P} w_{\cF_{t,p}} \leq \min O(\{ BTP, BPd + \sqrt{d \beta_{T,P} TP} \})
    \end{align*}
    A similar argument shows that,
    \begin{align*}
    \sum_{t=1}^{T} \sum_{p=1}^{P} w_{\cF_{t,1}} \leq \min O(\{ BTP, BPd + \sqrt{d \beta_{T,1} TP} \})
    \end{align*}
\end{proof}

The generic result bounding the sum of widths in the context of the distributional eluder dimension can be directly specialized to handle the setting of contextual bandits. 

\begin{corollary}
\label{cor:sum_widths_e}
Let $\beta_{t,p}$ be a non-decreasing sequence and $\cF_{t,p}$ be defined in \cref{eq:conf_set_nonlin}, and $w_{t,p}(\x) = w_{\cF_{t,p}}(\x)$. Then we have that for all $t \in [T]$,
\begin{align*}
    \sum_{t=1}^{T} \sum_{p=1}^{P} \Ind[w_{t,p}(\x_{t,p}) > \epsilon] \leq 6 \left(\left(\frac{4\beta_{T,P}}{\epsilon^2} + P \right) \dimeluder{\cF}{\epsilon} \right)
\end{align*}
with probability 1.
Similarly, if $w_{t,p}(\x) = w_{\cF_{t,1}}(\x)$ an identical statement holds,
\begin{align*}
    \sum_{t=1}^{T} \sum_{p=1}^{P} \Ind[w_{t,p}(\x_{t,p}) > \epsilon] \leq 6 \left(\left(\frac{4\beta_{T,1}}{\epsilon^2} + P \right) \dimeluder{\cF}{\epsilon} \right)
\end{align*}
with probability 1.
\end{corollary}
\begin{proof}
The results follows by instantiating \cref{prop:sum_widths_de}. We prove the first result since the second follows identically. To instantiate the result we first let the sequence of measures $\{ \mu_{a,b} \}_{a=1, b=1}^{T, P}$ be equal to the Dirac measures located at the fixed actions -- $\mu_{t,p} = \delta_{\x_{t,p}}$. Now fix some sequence of pairs of functions $\{ (f_{t,p}, f_{t,p}') \}_{t=1,p=1}^{T,P}$ such that $f_{t,p}, f_{t,p}' \in \cF_{t,p}$. We set $\phi_{t,p} = f_{t,p}-f'_{t,p}$.  We must now certify the bound on $\sum_{a=1, b=1}^{t-1,p} (\mE_{\mu_{a,b}}[\phi_{t,p}])^2$. From the definitions and the triangle inequality we have that
\begin{align*}
    & \sum_{a=1, b=1}^{t-1,p-1} (\mE_{\mu_{a,b}}[\phi_{t,p}])^2 \leq \sum_{a=1, b=1}^{t-1,p-1} (f_{t,p}(\x_{a,b})-f_{t,p}'(\x_{a,b}))^2] \leq  2 \sum_{a=1, b=1}^{t-1,p-1} (f_{t,p}(\x_{a,b})-\hat{f}_t(\x_{a,b}))^2+ (\hat{f}_t(\x_{a,b})-f_{t,p}'(\x_{a,b}))^2]= \\
    & 2(\norm{f-\hat{f}_t}_{t,p}^2 + \norm{f'-\hat{f}_t}_{t,p})^2) 
    \leq 4 \beta_{t,p}.
\end{align*}
which holds for all $t \in [T], p \in [P]$. Hence applying \cref{prop:sum_widths_de} gives that
\begin{align*}
    \sum_{t=1}^{T} \sum_{p=1}^{P} \Ind[f(\x_{t,p}) - f'(\x_{t,p})> \epsilon] \leq O \left(\left(\frac{\beta_{T,P}}{\epsilon^2} + P \right) \dimeluder{\cF}{\epsilon} \right), \forall \{ (f_{t,p}, f_{t,p}') \}_{t=1,p=1}^{T,P} \text{ in }  \cF_{t,p}
\end{align*}
with probability 1. Since the statement holds uniformly $\forall \{ (f_{t,p}, f_{t,p}') \}_{t=1,p=1}^{T,P}$  in  $\cF_{t,p}$, taking the supremum over both sides gives the conclusion,
\begin{align*}
    \sum_{t=1}^{T} \sum_{p=1}^{P} \Ind[w_{t,p}(\x_{t,p}) > \epsilon] \leq 6 \left(\left(\frac{4\beta_{T,P}}{\epsilon^2} + P \right) \dimeluder{\cF}{\epsilon} \right)
\end{align*}
Note for the second statement we can pick the non-decreasing sequence of $\beta_{t,p}$ to match the choice appropriate for the widths in the second statement.
\end{proof}

The next series of results we show hold for the generalized notion of sequential complexity called the distributional eluder dimension introduced in \citet{jin2021bellman}.

\begin{definition}
[Distributional Independence and Distributional Eluder Dimension]
The generalization of the Eluder dimension follows as:

\begin{itemize}
\item A probability measure $\mu$ is $\epsilon$-dependent on a sequence of probability measures $\{\mu_i \}_{i=1}^{n}$ if any $\phi \in \Phi$ satisfying $\sqrt{\sum_{i=1}^n (\mE_{\mu_i}[\phi])^2} \leq \epsilon$ also satisfies $\abs{\mE_{\mu}[\phi]} \leq \epsilon$. A measure $\mu$ is epsilon-independent of $\{\mu_i \}_{i=1}^{n}$ with respect to $\Phi$ if $\mu$ is not $\epsilon$-dependent on $\{\mu_i \}_{i=1}^{n}$.
\item Consider a function class $\Phi$ on $\mathcal{X}$ and $\Pi$, a family of probability measures on $\mathcal{X}$. The $\epsilon$-distributional eluder dimension $\dimdeeluder{\Phi}{\Pi}{\epsilon}$ is the length of the longest sequence of measures in $\{ \mu_i \}_{i=1}^{n} \subset \Pi$ such that for some $\epsilon' \geq \epsilon$, every element is $\epsilon'$-independent of its predecessors.
\end{itemize}
\end{definition}

We describe sequences of measures (or actions) as being contained in $D_t = \bigcup_{A=1}^{t} \otimes_{a=1}^{A} B_{a}$
where $B_a$ denotes a batch of $P$ measures (or actions) at a time $a$. Using this we can prove the following result. This generalizes the corresponding result of \citet{russo2013eluder} to the parallel setting and to hold for the generalized distributional eluder dimension. It relies on a novel construction to show the existence of parallel $\epsilon$-dependent sequences using Hall's marriage theorem. We believe this result may find useful in showing similar results for the parallel RL setting where the sequential analog of the result is used.

\begin{proposition}
\label{prop:sum_widths_de}
Consider a function class $\Phi$ defined on $\cX$, and a family of probability measures $\Pi$ on $\cX$. Given a sequence of functions $\{ \phi_{a,b} \}_{a=1, b=1}^{T, P} \subset \Phi$ and measures  $\{ \mu_{a,b} \}_{a=1, b=1}^{T, P} \subset \Pi $ that satisfy for all $t \in [T], p \in [P]$, $\sum_{a=1, b=1}^{t-2,P} (\mE_{\mu_{a,b}}[\phi_{t,p}])^2 + \sum_{b=1}^{p-1} (\mE_{\mu_{t-1,b}}[\phi_{t,p}])^2\leq \beta_{t,p}$ for a non-decreasing sequence $\beta_{t,p}$\footnote{The sequence should be non-decreasing in both $t$ and $p$.} we have that for all $t \in [T]$,
\begin{align*}
    \sum_{t=1}^{T} \sum_{p=1}^{P} \Ind[\abs{\mE_{\mu_{t,p}}[\phi_{t,p}]} > \epsilon] \leq  6\left(\left(\frac{\beta_{T,P}}{\epsilon^2} + P \right) \dimdeeluder{\Phi}{\Pi}{\epsilon} \right)
\end{align*}
\end{proposition}
\begin{proof}
We first bound the number of distinct measure sequences in $D_{t-1}$ that a measure $\mu_{t,p} \in B_t$ can be $\epsilon$-dependent on when $\abs{\mE_{\mu_{t,p}}[\phi_{t,p}]} > \epsilon$. By the definition of the distributional $\epsilon$-independence, if $\abs{\mE_{\mu_{t,p}}[\phi_{t,p}]} > \epsilon$ and $\mu$ is $\epsilon$-dependent on a sequence $S \subset D_{t-1}$ then it must be the case $\sum_{\nu \in S} (\mathbb{E}_{\nu}[\phi_{t,p}])^2 \geq \epsilon^2$ (to see this consider the contrapositive). Hence if $\mu_{t,p}$ (satisfying $\abs{\mE_{\mu_{t,p}}[\phi_{t,p}]} > \epsilon$) is $\epsilon$-dependent on $K$ disjoint sequences, $S_1, \hdots, S_K$ in $D_{t-1}$ then it must be the case that 
\begin{align*}
\beta_{t,p} \geq \sum_{a=1}^{t-1} \sum_{b=1}^{p-1} (\mE_{\mu_{a,b}}[\phi_{t,p}])^2 \geq K \epsilon^2 \implies K \leq \frac{\beta_{t,p}}{\epsilon^2} \leq \frac{\beta_{T,P}}{\epsilon^2} 
\end{align*}
Now we prove a lower bound on $K$. We work in the general setting where we consider a batched sequence of measures $\{\bar{\mu}_{a, b} \}_{a \in [\tau], b\in [P_i]}$. Similar to before, we use the notation $\bar{B}_a = \{ \bar{\mu}_{a, b} \}_{b \in [P_i]}$ to denote the $a$th batch in $\{\bar{\mu}_{a, b} \}_{a \in [\tau], b\in [P_i]}$ and $\bar{D}_t = \bigcup_{A=1}^{t} \otimes_{a=1}^{A}  \bar{B}_{a}$. For ease of notation we will let $d = \dimdeeluder{\Phi}{\Pi}{\epsilon}$ in the following.

Fix a time $\tau \in \mathbb{N}$, let $P_i \leq P$ be the batch-size of time $i$. Define $\tilde K$ to be the largest integer such that, $\tilde K d  \leq \sum_{i=1}^{\tau-1} P_i$. Therefore $\sum_{i=1}^{\tau-1} P_i -d  \leq \tilde Kd$. Now we show there exists a batch number $\ell \leq \tau$ and a within-batch index $i$ such that $\mu_{l,i}$ is $\epsilon$-dependent on a subset  of disjoint sequences of size at least $\tilde{K}/2$ of a set of $\tilde{K}$ disjoint sequences $\bar{S}_1, \hdots, \bar{S}_{\tilde{K}} \in \bar{D}_t$

% Fix a time $\tau \in \mathbb{N}$, which we will eventually take to be sufficiently large, and define $\tilde{K}$ as the largest integer such that $\tilde{K}d+P \leq \tau P$, so in fact $\tau P-d < \tilde{K}d+P \leq \tau P$. Now we show there exists a batch number $\ell \leq \tau$ and a within-batch index $i$ such that $\mu_{l,i}$ is $\epsilon$-dependent on a subset  of disjoint sequences of size at least $\tilde{K}/2$ of a set of $\tilde{K}$ disjoint sequences $\bar{S}_1, \hdots, \bar{S}_{\tilde{K}} \in \bar{D}_t$

We now constructively build the sequences $\bar{S}_1, \hdots, \bar{S}_{\tilde{K}}$ by setting the first element in each $\bar{S}_i$ equal to $i$th element in the set $\{\bar{\mu}_{a, b} \}_{a \in [\tau], b\in [p]}$ lexigraphically. Since the argument will proceed iteratively, we will denote the running batch index in the argument as $k$--which initialized to $k= \argmin \{ j \text{ s.t. } \sum_{i=1}^j P_i \geq \tilde{K} \}$ at the beginning of the construction.
Now if there existed a measure $\mu \in \bar{B}_{k+1}$ such that it is $\epsilon$-dependent on at least $\tilde{K}/2$ of the sequences $\{ \bar{S}_i \}_{i=1}^{\tilde{K}}$ the result follows. Hence we consider the case where $\mu \in \bar{B}_{k+1}$ is $\epsilon$-independent of at least $\tilde{K}/2$ sequences.

We now seek to show the sequences $\{ \bar{S}_i \}_{i=1}^{\tilde{K}}$ can be expanded using elements from $\bar{B}_{k+1}$ while preserving their $\epsilon$-independence. To this end
consider a bipartite graph with vertex sets consisting of $S_1, \hdots, S_{\tilde{K}}$ and the elements in $\bar{B}_{k+1}$. Let an edge exist between $\mu \in \bar{B}_{k+1}$ and $\bar{S}_j \in \{ \bar{S}_i \}_{i=1}^{\tilde{K}}$ if $\mu$ is $\epsilon$-independent of $\bar{S}_j$. If for each $\mu \in \bar{B}_{k+1}$ there are at least $\tilde{K}/2$ sequences in $\{ \bar{S}_i \}_{i=1}^{\tilde{K}}$ for which each $\mu$ is $\epsilon$-independent of and $\frac{\tilde{K}}{2} \geq P \geq P_{k+1}$, \cref{lem:graph_lemma} implies the existence of a matching such that
there is a $\bar{B}_{k+1}$-perfect matching to nodes in $\{ \bar{S}_i \}_{i=1}^{\tilde{K}}$ of size $P_{k+1}$. Note that by taking $\tau$ such that $\sum_{i=1}^{\tau-1} P_i \geq 2Pd+d$ we can always ensure $\frac{\tilde{K}}{2} \geq P$ so the matching always exist.

Then, if the index $k=\tau$ at this iteration it must be the case that $\sum_{i=1}^{\tau-1} P_i$  have been accommodated in $\tilde{K}$ sequences. By the definition of $\tilde{K} \leq (\sum_{i=1}^{\tau-1} P_i)/d$ it must be the case that each $\bar{S}_i$ satisfies $\abs{\bar{S}_i} \geq d$ to accommodate the $\sum_{i=1}^{\tau-1} P_i$ points. However since each element in each sequence is  $\epsilon$-independent of its predecessors we must have $\abs{\bar{S}_i}=d$. Hence once the time $\tau$ is reached there must exist a measure $\mu_{l,i} \in \bar{B}_{\tau}$ such that is $\epsilon$-dependent on at least $\tilde{K}/2 \geq \frac{\sum_{i=1}^{\tau-1} P_i-d}{2d}$ subsequences.

Combining the results of the previous two points if $
\sum_{i=1}^{\tau-1} P_i \geq 2Pd+d$                there must exist a batch number $\ell$ and within batch index $i$ such that $\mu_{\ell, i}$ is $\epsilon$-dependent on at least $\tilde{K}/2 \geq \frac{\sum_{i=1}^{\tau-1} P_i-d}{2d}$ sequences. Hence if $\sum_{i=1}^{\tau-1} P_i \geq 2Pd+d$ and  $\frac{\tilde{K}}{2} \geq \frac{\sum_{i=1}^{\tau-1} P_i-d}{2d} \geq \frac{\beta_{T,P}}{\epsilon^2}+1$ we must have $\epsilon$-dependence for at least $\frac{\beta_{T,P}}{\epsilon^2}$ sequences. Both of these conditions can be guaranteed by taking $\tau$ such that $\sum_{i=1}^{\tau-1} P_i \geq \max\left( 2Pd+d , \frac{2\beta_{t,p}d}{\epsilon^2}+2d \right)$. Since $Pd \geq d$ both of these conditions hold provided $\sum_{i=1}^{\tau-1} P_i \geq  5Pd +\frac{2\beta_{T,P}d}{\epsilon^2}$. When this holds there must exist a measure $\mu_{l,i} $ such that it is $\epsilon$-dependent on at least $\frac{     \beta_{T,P}}{\epsilon^2}+1$ sequences in any batched sequence of measures.

We have proven that in any batch sequence of measures $\{\bar{\mu}_{a, b} \}_{a \in [\tau], b\in [P_i]}$ where $\tau $ satisfies $\sum_{i=1}^{\tau-1} P_i \geq  5Pd +\frac{2\beta_{T,P}d}{\epsilon^2}$ there must exist a measure $\mu_{l,j} $ with $l \leq \tau$ and $j\leq P_l$ such that it is $\epsilon$-dependent on at least $\frac{     \beta_{T,P}}{\epsilon^2}+1$ sequences.

% Both of these conditions can be guaranteed by taking $\tau P \geq \max(3Pd, \frac{\beta_{t,p}d}{\epsilon^2}+2d)$. We can assume without loss of generality $P \geq 2$, since the case $P=1$ holds immediately so $\frac{(\tau-1)P-d}{d} \geq \frac{\tau P}{2d}-1$. So both conditions hold if $\tau P \geq \max(3Pd, 3 \frac{\beta_{t,p}d}{\epsilon^2}+3d)$; or equivalently if $\tau P \geq 6Pd + 6 \frac{\beta_{t,p}d}{\epsilon^2}$ then there must exist a measure $\mu_{l,i} \in \bar{B}_{\tau}$ such that it is $\epsilon$-dependent on at least $\frac{     \beta_{T,P}}{\epsilon^2}+1$ sequences in any batched sequence of measures. 

We can now apply the result to the series of measures $\mu_{a,b}$ for $a \in [T], b \in [P]$, and in particular to any such that $\abs{\mE_{\mu_{t,p}}[\phi_{t,p}]} > \epsilon$. The previous result imply that if  $\sum_{t=1}^{T} \sum_{p=1}^{P} \Ind[\abs{\mE_{\mu_{t,p}}[\phi_{t,p}]} > \epsilon] \geq 5Pd + 2 \frac{\beta_{t,p}d}{\epsilon^2}$ there must exist a measure $\mu$ contained in the set of measures satisfying $\abs{\mE_{\mu_{t,p}}[\phi_{t,p}]} > \epsilon$ that is $\epsilon$-dependent on at least $\frac{ \beta_{T,P}}{\epsilon^2}+1$ sequences (note the sum $\sum_{t=1}^{T} \sum_{p=1}^{P} \Ind[\abs{\mE_{\mu_{t,p}}[\phi_{t,p}]} > \epsilon]$ corresponds to the sum of the $P_i$s in the previous argument). However this is impossible by the original argument, so the conclusion 
\begin{align*}
    \sum_{t=1}^{T} \sum_{p=1}^{P} \Ind[\abs{\mE_{\mu_{t,p}}[\phi_{t,p}]} > \epsilon] \leq O\left(\frac{\beta_{T,P}d}{\epsilon^2} + Pd \right)
\end{align*}
follows.
\end{proof}
\begin{lemma}
\label{lem:graph_lemma}
If G is a bipartite graph with vertex sets $A, B$ such that $|A| = K$ and $|B|=P$. If for all nodes $v \in B$, $\abs{N(v)} \geq K/2$ then there exists a $B$-perfect matching; that is there is a perfect matching between the nodes in B to nodes in A.
\end{lemma}
\begin{proof}
    This result follows immediately from an application of the Hall's marriage theorem.
\end{proof}

\subsection{Constructing Confidence Sets}
We recall several pieces of notation introduce in the main text. The action-induced empirical 2-norm is $\norm{g}_{2,t,p}^2 = \sum_{a=1}^{t-1} \sum_{b=1}^{P} g(\x_{a,b})^2 + \sum_{b=1}^{p-1} g(\x_{t,b})^2$. We also consider the empirical squared loss $L_{2, t} = \sum_{a=1}^{t-1} \sum_{p=1}^{P} (f(\x_{a,b})-r_{a,b})^2$ and the corresponding least-squares minimizer over some base function class $\cF$ as $\hat{f}_t = \argmin_{f \in \cF_{t}} L_{2, t}$. We then consider the confidence sets:
\begin{align}
\label{eq:conf_set_nonlin}
    \cF_{t,p} = \{ f \in \cF : \norm{f-\hat{f}_t}_{t, p} \leq \sqrt{\beta_{t,p}(\cF, \alpha, \delta, \epsilon)} \}
\end{align}
We also recall the best approximant to the true function $\fstar$ in the function class $\cF$ in the sup-norm $\tf$. The arguments herein are modifications of the corresponding ones in \citet{russo2013eluder} to handle parallelism and model misspecification.

Using this we construct a series of tight confidence sets,
\begin{lemma}
\label{lem:conf_set_nonlin}
For any $\delta > 0$ and $f : \mathcal{X} \to \mathbb{R}$,
\begin{align*}
     L_{2,t}(f) \geq L_{2,t}(\tf)  + \frac{1}{4} \norm{\tf-f}_{t,p}^2 - 8 \epsilon^2 TP - 8 R^2 \log(\frac{1}{\delta}), \forall t \in \mathbb{N}
\end{align*}
with probability at least $1-\delta$.
\end{lemma}
\begin{proof}
The proof follows similarly to the proof of \citet{russo2013eluder}, except we also account for misspecification bias. Let $\cH_{t,p}$ be the $\sigma$-algebra of all events up to time $t$, and parallel batch index $p$. Expanding the square shows that \begin{align*}
    & Z_{t,p} = (\tf(\x_{t,p})-r_{t,p})^2-(f(\x_{t,p})-r_{t,p})^2 = \\ & (\tf(\x_{t,p})-\fstar(\x_{t,p}))^2 + (\tf(\x_{t,p})-\fstar(\x_{t,p})) \xi_{t,p} - (f(\x_{t,p})-\fstar(\x_{t,p}))^2 - \xi_{t,p} (f(\x_{t,p})-\fstar(\x_{t,p})) = \\
    &  (\tf(\x_{t,p})-\fstar(\x_{t,p}))^2 - (f(\x_{t,p})-\fstar(\x_{t,p}))^2 + \xi_{t,p}(\tf(\x_{t,p})-f(\x_{t,p}))
\end{align*}
\begin{align*}
    & \mu_{t,p} = \mE[Z_{t,p} | \cH_{t,p} ] = (\tf(\x_{t,p})-\fstar(\x_{t,p}))^2 - (f(\x_{t,p})-\fstar(\x_{t,p}))^2 \\
    & \psi_{t,p}(\lambda) = \log \mE[\exp(\lambda [Z_{t,p} - \mu_{t,p}])]| \cH_{t,p}] \leq \frac{\left(2 \lambda(\tf(\x_{t,p})-f(\x_{t,p}))R \right)^2}{2}
\end{align*}
Further manipulations show that,
\begin{align*}
    & \mu_{t,p} = (\tf(\x_{t,p})-\fstar(\x_{t,p}))^2 - (f(\x_{t,p})-\fstar(\x_{t,p}))^2 = (\tf(\x_{t,p})-\fstar(\x_{t,p}))^2 - (f(\x_{t,p})-\tf(\x_{t,p}) +\tf(\x_{t,p})- \fstar(\x_{t,p}))^2 \\
    & =-(f(\x_{t,p})-\tf(\x_{t,p}))^2-2(f(\x_{t,p})-\tf(\x_{t,p}))(\tf(\x_{t,p})-\fstar(\x_{t,p})) \leq -\frac{1}{2}(f(\x_{t,p})-\tf(\x_{t,p}))^2 + 8(\tf(\x_{t,p})-\fstar(\x_{t,p}))^2
\end{align*}
using the inequality $ab \leq \frac{1}{4}a^2+4b^2$.
Now applying Lemma 4 (a martingale concentration result) in \citet{russo2013eluder} shows that for all $\lambda \geq 0$, $x \geq 0$,
\begin{align*}
    & \Pr \left(\sum_{t=1,p=1}^{T,P} \lambda Z_{t,p} \leq x  + \sum_{t=1,p=1}^{T,P}[\lambda \mu_{t,p} + \psi_{t,p}(\lambda)], \forall t \in \mathbb{N}, p \in [P] \right) \geq 1-\exp(-x) \implies \\
    & \sum_{t=1,p=1}^{T,P} Z_{t,p} \leq \frac{x}{\lambda} + \sum_{t=1,p=1}^{T,P} (\tf(\x_{t,p})-\fstar(\x_{t,p}))^2 - (f(\x_{t,p})-\fstar(\x_{t,p}))^2 +  2\lambda R^2 \left(\tf(\x_{t,p})-f(\x_{t,p}) \right)^2 \leq \\
    & \frac{x}{\lambda} + \sum_{t=1,p=1}^{T,P} 8(\tf(\x_{t,p})-\fstar(\x_{t,p}))^2  +  (2\lambda R^2-\frac{1}{2}) \left(\tf(\x_{t,p})-f(\x_{t,p}) \right)^2
\end{align*}
with probability at least $1-\exp(-x)$. Choosing $\lambda = \frac{1}{8 R^2}$ and $x=\log(\frac{1}{\delta})$ shows,
\begin{align*}
   & L_{2,t}(f) \geq L_{2,t}(\tf) - 8 \sum_{t=1, p=1}^{T,P} (\tf(\x_{t,p})-\fstar(\x_{t,p}))^2 + \frac{1}{4} \norm{\tf-f}_{t,p}^2 - 8 R^2 \log(\frac{1}{\delta}) \implies \\
   & L_{2,t}(f) \geq L_{2,t}(\tf) - 8 \epsilon^2 TP + \frac{1}{4} \norm{\tf-f}_{t,p}^2 - 8 R^2 \log(\frac{1}{\delta})
\end{align*}
\end{proof}

We also need to provide a uniform confidence band around the least-squares estimate to ensure this band contains the true underlying function with high probability:

\begin{lemma}
\label{lem:conf_set_nonlin_uniform}
For all $\delta > 0, \alpha > 0$, the sets $\cF_{t,p}$ in  \cref{eq:conf_set_nonlin} satisfy,
\begin{align*}
    \tf \in \bigcap_{t=1, p=1}^{\infty, P} \cF_{t,p}
\end{align*}
with probability at least $1-\delta$.
\end{lemma}
\begin{proof}
Let $\cF_{\alpha} \subseteq \cF$ be an $\alpha$-cover of $\cF$ in the sup-norm (i.e. for any $f \in \cF$ there exists an $\falpha \in \cF_{\alpha}$ such that $\norm{\falpha-f}_{\infty} \leq \alpha$). By a union bound, we have that
\begin{align*}
    L_{2,t}(\falpha)-L_{2,t}(\tf) \geq \frac{1}{4} \norm{\falpha-\tf}_{t,1}^2 -8\epsilon^2 TP- 8R^2 \log(\abs{\cF^{\alpha}}/\delta), \forall t \in \mathbb{N}, f \in \cF^{\alpha}
\end{align*}
Accounting for the discretization error, shows that with probability at least $1-\delta$ for all $t \in \mathbb{N}$ and $f \in \cF$,
\begin{align*}
    & L_{2,t}(f)-L_{2,t}(\tf) \geq \frac{1}{4}\norm{f-\tf}_{t,1}^2-8R^2 \log(\abs{\cF_{\alpha}}) - 8 \epsilon^2 TP+ \\
    &  \underbrace{\frac{1}{4}  \norm{\falpha - \tf}^2_{t,1}-\frac{1}{4} \norm{f-\tf}_{t,1}^2+ L_{2,t}(f) - L_{2,t}(\tf)}_{\text{Discretization Error}}
\end{align*}
where $\falpha$ is the closest function in the cover to $f$. Additionally, adding the ``lazy" terms both sides of the inequality gives,
\begin{align*}
    & L_{2,t}(f)-L_{2,t}(\tf) + \frac{1}{4}\abs{\sum_{b=1}^{p-1}(f(\x_{t,b})-\tf(\x_{t,b}))^2} \geq \frac{1}{4}\norm{f-\tf}_{t,p}^2-8R^2 \log(\abs{\cF_{\alpha}}) + \text{Discretization Error}
\end{align*}
which also holds uniformly. Now using the fact that $\abs{\sum_{b=1}^{p-1}(f(\x_{t,b})-\tf(\x_{t,b}))^2} \leq 2(p-1) B^2$ from the boundedness of $\cF$, \cref{lem:discretization_error} to control the discretization error, and taking $\hat{f}_t$ is chosen as $f$ (so as to minimize $L_{2,t}(f)$), we find:
\begin{align*}
    \frac{1}{4} \norm{\hat{f}_t-\tf}_{t,p}^2 \leq 8R^2 \log(\abs{\cF_{\alpha}}/\delta)) +\alpha \eta_t + 2(p-1)B^2 + 8\epsilon^2 TP
\end{align*}
for $\eta_t = [(t-1)P]6 B + (t-1)P  R \sqrt{8 \log(4(tP)^2/\delta)}$. Rearranging and taking the infimum over $\alpha$-covers gives the result as:
\begin{align*}
    \norm{\hat{f}_t - \fstar}_{t,p} \leq \sqrt{\underbrace{8R^2\log(N(\cF, \alpha,\norm{\cdot}_{\infty})/\delta) + 2 \alpha \eta_t + 4(p-1)B^2 + 8\epsilon^2 TP}_{\beta_{t,p}(\cF, \alpha, \delta, \epsilon)}}
\end{align*}

\end{proof}

We also need to control the discretization error arising from the covering.
\begin{lemma}
\label{lem:discretization_error}
    Let $\falpha$ be some element satisfying $\norm{\falpha-f}_{\infty} \leq \alpha$ for some $f \in \cF$. Then under \cref{assump:functions}, with probability at least $1-\delta$,
    \begin{align*}
        & \left| \frac{1}{4} \norm{\falpha-\tf}_{t,1} - \frac{1}{4} \norm{f-\tf}_{t,1} + L_{2,t}(f)-L_{2,t}(\falpha) \right| \leq \\
        & [(t-1)P]6 \alpha B + (t-1)P \alpha R \sqrt{8 \log(4(tP)^2/\delta)}, \forall t \in \mathbb{N}
    \end{align*}
\end{lemma}
\begin{proof}
Note that due to \cref{assump:functions} we only need consider $\alpha \leq B$. Using \cref{assump:functions} for any $\x \in \cX_{t,p}$,
    \begin{align*}
        \left | \falpha(\x)^2-f(\x)^2 \right | \leq \max_{-\alpha \leq y \leq \alpha} \left | (f(\x)+y)^2-f(\x)^2 \right | = 2 f(\x) \alpha + \alpha^2 \leq 2 B \alpha + \alpha^2
    \end{align*}
    which implies,
    \begin{align*}
        \left | (\falpha(\x)-\tf(\x))^2 - (f(\x)-\tf(\x))^2 \right | = \left | \falpha(\x)^2-f(\x)^2 + 2 \tf(\x) (f(\x)-\falpha(\x)) \right | \leq 4 B \alpha + \alpha^2
    \end{align*}
    and 
    \begin{align*}
        \left | (r_{t,p}-f(\x))^2 - (r_{t,p}-\falpha(\x))^2 \right | = \abs{2 r_{t,p}(\falpha(\x)-f(\x)) - f(\x)^2-\falpha(\x)^2} \leq 2\alpha \abs{r_{t,p}} + 2B \alpha+\alpha^2
    \end{align*}
Summing over the appropriate indices,
\begin{align*}
    & \left| \frac{1}{4} \norm{\falpha-\tf}_{t,1} - \frac{1}{4} \norm{f-\tf}_{t,1} + L_{2,t}(f)-L_{2,t}(\falpha) \right| \leq \\
    & \frac{1}{2} \sum_{a=1}^{t-1} \sum_{b=1}^{P} [4B \alpha + \alpha^2] + \sum_{a=1}^{t-1} \sum_{b=1}^{P} [2\alpha \abs{r_{t,p}} + 2B \alpha+\alpha^2] \leq \\
    & [(t-1)P]4B \alpha  + 2 \alpha \sum_{a=1}^{t-1}\sum_{b=1}^{P} \abs{r_{t,p}}
\end{align*}
Using the $R$-sub-gaussianity of $\xi_{t,p}$ we have that $\Pr \left[\abs{\xi_{t,p}} > R \sqrt{2 \log(2/\delta)} \right] \leq \delta$, so by a union bound
\begin{align*}
    \Pr \left[\exists a \in [t-1], \exists p \in [P], \abs{\xi_{t,p}} \geq R \sqrt{2 \log(4(tP)^2/\delta))}\right] \leq \frac{\delta}{2} \sum_{a=1}^{t-1} \sum_{b=1}^P \frac{1}{(tP)^2} \leq  \delta.
\end{align*}
Since $\abs{r_{t,p}} \leq B+\abs{\xi_{t,p}}$ we can conclude that the discretization error is bounded by
\begin{align*}
    [(t-1)P]6 \alpha B + (t-1)P \alpha R \sqrt{8 \log(4(tP)^2/\delta)}. 
\end{align*}
\end{proof}
\section{Proofs for Regret Upper Bounds}

In this section we provide the proofs of the regret upper bounds for all of the linear algorithms considered.

\subsection{Proofs for \cref{sec:lin_ucb}}
Here we include the Proof of \cref{thm:regret_par_linucb}.

\begin{proof}[Proof of \cref{thm:regret_par_linucb}]
% As is standard, we will begin by conditioning on the event the confidence ellipsoids for both algorithms contain the true parameter for all $t \in \mathbb{N}$.
% Formally, we define, 
% \begin{align*}
%   \mathcal{E}_1 = \{  \thetastar \in \cC_{t,0}( \hbtheta_t, \V_{t,0}, \beta_t(\delta) ) :  \forall t \in \mathbb{N} \}.
% \end{align*}
% which defines the sequence of confidence ellipsoid parameterized by rounds for \cref{algo:par_linucb} and
% \begin{align*}
%   \mathcal{E}_2 = \{  \thetastar \in \cC_{t,p}( \hbtheta_t, \V_{t,p}, \beta_t(\delta) ) :  \forall t \in \mathbb{N} \}.
% \end{align*}
% which defines the sequence of confidence ellipsoid parameterized by rounds and processors for \cref{algo:par_linucb}. For convenience we will explicitly represent this conditioning by indicator random variables in our proof (the manipulations of which hold identically for $\cE_1$ and $\cE_2$).

We first decompose the regret for \cref{algo:par_linucb} by splitting into the linear term and misspecification component.
\begin{align*}
    & \cR(T, P) = \sum_{t=1}^{T} \left(  \sum_{p=1}^P   f(\x^{\star}_{t,p}) - f(\x_{t,p}) \right) = \sum_{t=1}^{T} \left(  \sum_{p=1}^P   \langle \x_{t,p}^\star-\x_{t,p}, \thetastar \rangle \right) + \sum_{t=1}^T \Big( \sum_{p=1}^P f(\x^{\star}_{t,p}) - \langle \x^{\star}_{t,p}, \thetastar \rangle + f(\x_{t,p}) - \langle \x_{t,p}, \thetastar \rangle \Big).
\end{align*}
The second term can be immediately upper bounded as,
\begin{align*}
    \sum_{t=1}^T \Big( \sum_{p=1}^P f(\x^{\star}_{t,p}) - \langle \x^{\star}_{t,p}, \thetastar \rangle + f(\x_{t,p}) - \langle \x_{t,p}, \thetastar \rangle \Big) \leq 2 \epsilon TP.
\end{align*}
using \cref{assump:param}. 
We now approach the linearized reward term. We split this term in each round over the event $\cD_t$,
\begin{align*}
     \sum_{t=1}^{T} \left(  \sum_{p=1}^P   \langle \x_{t,p}^\star-\x_{t,p}, \thetastar \rangle \right) = \sum_{t=1}^{T} \Ind[\cD_t] \left(  \sum_{p=1}^P   \langle \x_{t,p}^\star-\x_{t,p}, \thetastar \rangle \right) + \sum_{t=1}^{T} \Ind[\cD_t^c] \left(  \sum_{p=1}^P   \langle \x_{t,p}^\star-\x_{t,p}, \thetastar \rangle \right)
\end{align*}
The first term here can be bounded using \cref{assump:data,,assump:param} along with the Cauchy-Schwarz inequality which gives $\left(  \sum_{p=1}^P   \langle \x_{t,p}^\star-\x_{t,p}, \thetastar \rangle \right) \leq 2 LS P$ so:
\begin{align*}
    \sum_{t=1}^{T} \Ind[\cD_t] \left(  \sum_{p=1}^P   \langle \x_{t,p}^\star-\x_{t,p}, \thetastar \rangle \right) \leq 2 L S P \sum_{t=1}^T \Ind[\cD_t].
\end{align*}
Note the above bounds hold for any choices of $\x_{t,p} \in \cX_{t,p}$ selected by any doubling-round routine.
We now turn our attention to the second term. For this term we use essentially the same techniques to bound the instantaneous regret by the exact same value for both \cref{algo:par_linucb} and \cref{algo:par_lazy_linucb}, but separate the analysis into two cases for clarity.
\begin{itemize}
\item For \cref{algo:par_linucb} we refer to the optimistic model of processor $p$ at round $t$ as:
\begin{align*}
    \ttheta_{t,p} =\argmax_{\theta \in \cC_{t,1}( \hbtheta_{t}, \V_{t,1}, \beta_t(\delta), \epsilon )}\left( \max_{\x \in \cX_{t,p}} \ \langle \x, \btheta \rangle \right)
\end{align*}
for \cref{algo:par_linucb}. Conditioned on the event in \cref{thm:conf_ellipse_misspec_linucb}--which we denote $\cE_1$--the models $\ttheta_{t,p}$ are optimistic:
\begin{equation*}
   \langle \x_{t,p}, \ttheta_{t, p} \rangle \Ind[\cE_1] \geq \langle \x_{t,p}^\star, \thetastar \rangle \Ind[\cE_1].
\end{equation*}
Hence, 
\begin{align*}
    & \Ind[\cE_1]   \Ind[\cD_t^c]    \langle \x_{t,p}^\star-\x_{t,p}, \thetastar \rangle  \leq \Ind[\cE_1]  \Ind[\cD_t^c]   \langle \x_{t,p}, \ttheta_{t,p}-\thetastar \rangle  \leq \\
    & \Ind[\cE_1]  \Ind[\cD_t^c]  \| \x_{t,p}\|_{\V^{-1}_{t, p}} \| \ttheta_{t,p}-\thetastar \|_{\V_{t,p}}  \leq 2\sqrt{2} \Ind[\cE]  \Ind[\cD_t^c]  \| \x_{t,p}\|_{\V^{-1}_{t, p}} \| \hbtheta_{t}-\thetastar \|_{\V_{t,1}} 
\end{align*}
using optimism in the first inequality, Cauchy-Schwartz in the second, and the fact that on event $\cD_{t}^c$ round $t$ is not a generalized doubling round in the in the final inequality. Finally, recall on the event $\cE_1$, $\Ind[\cE_1] \| \hbtheta_{t} - \thetastar \|_{\V_{t, 1}} \leq (\sqrt{\beta_t(\delta)} + \sqrt{(t-1)P}\epsilon) )\Ind[\cE_1]$ and note $\beta_t(\delta)$ is an increasing function of $t$ so $\beta_t(\delta) \leq \beta_{T}(\delta)$ for all $t \leq T$. Hence it follows,
\begin{align*}\Ind[\cE_1]   \Ind[\cD_t^c]    \langle \x_{t,p}^\star-\x_{t,p}, \thetastar \rangle \leq \Ind[\cE_1] 2\sqrt{2} (\sqrt{\beta_{T}(\delta)} + \sqrt{(t-1)P} \epsilon) \| \x_{t,p}\|_{\V^{-1}_{t, p}} 
\end{align*}
additionally relaxing $\Ind[\cD_t^c] \leq 1$. 
\item For \cref{algo:par_lazy_linucb} we also refer to the optimistic model of processor $p$ at round $t$ as:
\begin{align*}
    \ttheta_{t,p}' =\argmax_{\theta \in \cC_{t,p}( \hbtheta_{t}, \V_{t,p}, 2\beta_t(\delta), 2 \epsilon )}\left( \max_{\x \in \cX_{t, p}} \ \langle \x, \btheta \rangle \right).
\end{align*} 
Conditioned on the event of  \cref{thm:conf_ellipse_misspec_lazy_linucb} restricted to not being a doubling round, the models $\ttheta'_{t,p}$ are optimistic:
\begin{equation*}
   \langle \x_{t,p}, \ttheta'_{t, p} \rangle \Ind[\cE_2] \Ind[\cD_{t}^c] \geq \langle \x_{t,p}^\star, \thetastar \rangle \Ind[\cE_2] \Ind[\cD_{t}^c].
\end{equation*}
Hence, 
\begin{align*}
    & \Ind[\cE_2]   \Ind[\cD_t^c]    \langle \x_{t,p}^\star-\x_{t,p}, \thetastar \rangle  \leq \Ind[\cE_2]  \Ind[\cD_t^c]   \langle \x_{t,p}, \ttheta'_{t,p}-\thetastar \rangle  \leq \\
    & \Ind[\cE_2]  \Ind[\cD_t^c]  \| \x_{t,p}\|_{\V^{-1}_{t, p}} \| \ttheta'_{t,p}-\thetastar \|_{\V_{t,p}} 
\end{align*}
using optimism in the first inequality, Cauchy-Schwartz in the second. Finally, on the event $\cE_2 \cap \cD_t^c$, $\Ind[\cE_2] \Ind[\cD_t^c] \| \ttheta'_{t,p} - \thetastar \|_{\V_{t, p}} \leq 2 \sqrt{2} \sqrt{\beta_t(\delta)} \Ind[\cE_2] \Ind[\cD_t^c] \leq 2 \sqrt{2} \sqrt{\beta_T(\delta)} \Ind[\cE_2] \Ind[\cD_t^c]$. Hence it follows,
\begin{align*}\Ind[\cE_2]   \Ind[\cD_t^c]    \langle \x_{t,p}^\star-\x_{t,p}, \thetastar \rangle \leq 2 \sqrt{2} \Ind[\cE_2] (\sqrt{\beta_{T}(\delta)}+\sqrt{(t-1)P} \epsilon) \| \x_{t,p}\|_{\V^{-1}_{t, p}} 
\end{align*}
additionally relaxing $\Ind[\cD_t^c] \leq 1$. 
\end{itemize}
The remainder of the proof follows identically for both \cref{algo:par_linucb,,algo:par_lazy_linucb}. Without loss of generality we use $\cE$ to refer to either event $\cE_1$ or $\cE_2$ in the following (note both hold with probability at least $1-\delta$ by \cref{thm:conf_ellipse_linucb,,thm:conf_ellipse_lazy_linucb}). Recalling that the instantaneous regret is $\leq 2 LS$ we can combine this bound with the aforementioned bounds to conclude that,
\begin{align*}
    & \Ind[\cE]  \cdot \left( \sum_{t=1}^{T} \Ind[\cD_t^c] \left(  \sum_{p=1}^P   \langle \x_{t,p}^\star-\x_{t,p}, \thetastar \rangle \right) \right) \\ 
    & \stackrel{(i)}{\leq} \Ind[\cE] \cdot \sqrt{ TP \left( \sum_{t=1}^{T} \Ind[\cD_t^c] \left(  \sum_{p=1}^P   \langle \x_{t,p}^\star-\x_{t,p}, \thetastar \rangle^2 \right) \right)} \\
    & \stackrel{(ii)}{\leq}
    4\sqrt{2}  \Ind[\cE] \sqrt{ TP \sum_{t=1}^{T} \sum_{p=1}^P \min((LS)^2,   2(\beta_{T}(\delta) + (t-1)P \epsilon^2) \| \x_{t, p}\|^2_{\V_{t, p}^{-1}})} \\
    & \stackrel{(iii)}{\leq} 4\sqrt{2}  \Ind[\cE] \sqrt{ TP \sum_{t=1}^{T} \sum_{p=1}^P \min((LS)^2,   2 (\beta_{T}(\delta) + TP \epsilon^2) \| \x_{t, p}\|^2_{\V_{t, p}^{-1}})} \\
    & \stackrel{(iv)}{\leq} 4\sqrt{2} \Ind[\cE] \sqrt{ TP 2 (\beta_{T}(\delta) + TP \epsilon^2) \max \left(2, \frac{(LS)^2}{2 (\beta_{T}(\delta) + TP \epsilon^2)} \right) \cdot \sum_{t=1}^T \sum_{p=1}^P \log \left(1+ \sum_{t=1}^{T} \sum_{p=1}^P \| \x_{t, p}\|^2_{\V_{t, p}^{-1}} \right)} \\
    & \stackrel{(v)}{\leq} 8 \Ind[\cE] \sqrt{TP} \max(\sqrt{2} \sqrt{\beta_{T}(\delta)} + \sqrt{TP} \epsilon, LS) \cdot \sqrt{d \log\left( 1+\frac{TP L^2}{\lambda} \right)}.
\end{align*}
Inequality $(i)$ follow by Cauchy-Schwarz. Inequality $(ii)$ employs both our bounds on the instantaneous regret. Inequality $(iii)$ follows by upper bounding $t-1 \leq T$ for the misspecification term.
Inequality $(iv)$ follows because for all $a, x > 0$, we have $\min(a,x ) \leq \max(2, a) \log(1+x)$. Inequality $(v)$ follows because $\sum_{t=1}^T \sum_{p=1}^P \log(1 + \| \x_{t,p} \|^2_{\V^{-1}_{t,p}} ) \leq d \log\left( 1+\frac{PT L^2}{\lambda}) \right)$ by instantiating \cref{lem::ellip_potential}. Assembling the bounds in the original regret splitting over doubling rounds and accounting for the original misspecification term shows that, 
\begin{align*}
    \Ind[\cE] \cR(T, P) \leq \Ind[\cE] 8 \cdot \left (LS P\sum_{t=1}^T \Ind[\cD_{t}] + \sqrt{TP} \max(\sqrt{2}(\sqrt{\beta_{T}(\delta)} + \sqrt{TP} \epsilon), LS) \cdot \sqrt{d \log\left( 1+\frac{TP L^2}{\lambda} \right)} + \epsilon TP \right)
\end{align*}
where $\sqrt{\beta_{T}(\delta)} \leq R \sqrt{ d \log \left(\frac{1+ TPL^2/\lambda}{\delta} \right) } + \sqrt{\lambda} S$. This inequality holds on the event $\cE$ which occurs with probability at least $1-\delta$ for both \cref{algo:par_linucb} and \cref{algo:par_lazy_linucb}. Inserting this value for $\beta_{T}(\delta)$ and hiding logarithmic factors shows on the event $\cE$,
\begin{align*}
    \cR(T, P) \leq \tlO \left( LS P\cdot \sum_{t=1}^T \Ind[\cD_t] + \sqrt{dTP} \max(\sqrt{2}(R \sqrt{d}+\sqrt{\lambda} S + \sqrt{TP} \epsilon), LS) \right)
\end{align*}
\end{proof}

\subsection{Proofs of Section~\ref{section:lints}}

We start by stating a folklore lemma regarding the anti-concentration properties of a Gaussian distribution. 

\subsubsection{Concentration and Anti-Concentration properties of the Gaussian distribution}

\begin{lemma}\label{lemma::anticoncentration_gaussian}
Let $X$ be a random variable distributed according to $\mathcal{N}(\mu, \sigma^2)$, a one dimensional Gaussian distribution with mean $\mu$ and variance $\sigma^2$. The following holds:
\begin{equation*}
    \mathbb{P}\left( X-\mu \geq \tau \right) \geq \frac{1}{\sqrt{2\pi}} \frac{\sigma \tau  }{\tau^2 + \sigma^2 }\exp\left( -\frac{\tau^2}{2\sigma^2}\right) 
\end{equation*}
\end{lemma}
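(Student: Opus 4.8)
The plan is to prove the Gaussian anti-concentration bound directly from the integral definition of the tail probability, using a standard trick to lower-bound the Mills-ratio-type integral. Without loss of generality I would reduce to the standard normal case by substituting $Z = (X-\mu)/\sigma \sim \mathcal{N}(0,1)$, so that the claim becomes
\begin{equation*}
    \mathbb{P}(Z \geq s) \geq \frac{1}{\sqrt{2\pi}} \frac{s}{s^2+1} \exp\left(-\frac{s^2}{2}\right)
\end{equation*}
where $s = \tau/\sigma$; the stated inequality follows by multiplying numerator and denominator of $\frac{s}{s^2+1}$ by $\sigma^2$ and noting $\{X-\mu \geq \tau\} = \{Z \geq s\}$.

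The key step is to write $\mathbb{P}(Z \geq s) = \frac{1}{\sqrt{2\pi}} \int_s^\infty e^{-x^2/2}\,dx$ and produce a clean lower bound on this integral. The standard approach I would take is to introduce the auxiliary function $g(x) = \left(\frac{1}{s} - \frac{1}{x^3}\right) e^{-x^2/2}$ or, more simply, to observe that $\frac{d}{dx}\left(-\frac{1}{x}e^{-x^2/2}\right) = \left(\frac{1}{x^2}+1\right)e^{-x^2/2} \geq \left(1 + \frac{1}{s^2}\right)e^{-x^2/2}$ is \emph{not} quite the right direction, so instead I would use the complementary trick: for $x \geq s > 0$ we have $1 \leq \frac{x^2+1}{s^2+1}\cdot\frac{s^2}{x^2}$... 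Actually the cleanest route is to verify that
\begin{equation*}
    \frac{d}{dx}\left[-\left(\frac{1}{x} - \frac{s}{x^2+s^2}\cdot\text{(something)}\right)e^{-x^2/2}\right]
\end{equation*}
is messy, so I would fall back on the textbook identity: integrate by parts or directly check that $h(x) = \frac{x}{x^2+1}e^{-x^2/2}$ satisfies $-h'(x) = \left(1 - \frac{x^2-1}{(x^2+1)^2}\right)e^{-x^2/2} \leq e^{-x^2/2}$ for all $x$ (since $\frac{x^2-1}{(x^2+1)^2} \leq \frac{1}{4}$... one must be careful near $x$ small). The robust version that definitely works: check $-\frac{d}{dx}\left[\frac{x}{x^2+1}e^{-x^2/2}\right] = \frac{(x^2+1)^2 - (1-x^2)}{(x^2+1)^2} e^{-x^2/2} = \left(1 + \frac{x^4+3x^2}{(x^2+1)^2}\right)e^{-x^2/2}\cdot\frac{1}{?}$— I would do this computation carefully to confirm the derivative of $\frac{x}{x^2+1}e^{-x^2/2}$ is $\geq -e^{-x^2/2}$, then integrate from $s$ to $\infty$: $\frac{s}{s^2+1}e^{-s^2/2} = \int_s^\infty \left(-\frac{d}{dx}\left[\tfrac{x}{x^2+1}e^{-x^2/2}\right]\right)dx \leq \int_s^\infty e^{-x^2/2}\,dx$, which is exactly $\sqrt{2\pi}\,\mathbb{P}(Z\geq s)$.

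The main obstacle is purely the sign/monotonicity bookkeeping in the derivative computation: one needs $\frac{d}{dx}\left(\frac{x}{x^2+1}e^{-x^2/2}\right) \geq -e^{-x^2/2}$ to hold for all $x > 0$ (equivalently the ratio $\frac{x}{x^2+1}$ has derivative $\frac{1-x^2}{(x^2+1)^2}$, and after the product rule the inequality reduces to $\frac{1-x^2}{(x^2+1)^2} - \frac{x^2}{x^2+1} \geq -1$, i.e. $\frac{1-x^2}{(x^2+1)^2} \geq \frac{x^2}{x^2+1} - 1 = \frac{-1}{x^2+1}$, i.e. $\frac{1-x^2}{x^2+1} \geq -1$, i.e. $1 - x^2 \geq -(x^2+1)$, i.e. $2 \geq 0$ — which holds trivially). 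So once the algebra is arranged this way it collapses immediately; the only real work is presenting the reduction to $\mathcal{N}(0,1)$ and this one-line antiderivative identity. Since this is a "folklore lemma" I would keep the proof to a few lines and not belabor it.
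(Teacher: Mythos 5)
The paper states this as a ``folklore lemma'' and gives no proof, so there is nothing to compare against; your argument stands on its own and it is correct. The reduction to the standard normal via $s=\tau/\sigma$, the antiderivative identity for $h(x)=\frac{x}{x^2+1}e^{-x^2/2}$, and the final verification that $h'(x)\geq -e^{-x^2/2}$ (which indeed collapses to $1\geq -1$) together give exactly the claimed bound after writing $\frac{s}{s^2+1}=\frac{\sigma\tau}{\tau^2+\sigma^2}$. Two small points for the write-up: strip out the exploratory false starts in the middle (the $1/x^3$ auxiliary function, the abandoned $(x^2-1)/(x^2+1)^2\leq 1/4$ detour) and present only the final clean computation; and note explicitly that the argument assumes $\tau>0$, which is without loss of generality since for $\tau\leq 0$ the right-hand side is nonpositive while the left-hand side is at least $1/2$, so the inequality is trivial there.
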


We will also make use of the following concentration inequality for Lipschitz functions of Gaussian vectors:

\begin{theorem}[Theorem 2.4 in \cite{wainwright2019high}]\label{theorem::gaussian_Vectors_lipschitz_concentration} 
Let $\boldsymbol{\eta} \sim \mathcal{N}( \mathbf{0}, \mathbb{I}_d)$ be a standard Gaussian vector and let $f: \mathbb{R}^d \rightarrow \mathbb{R}$ be $L-$Lipschitz with respect to the Euclidean norm. Then the variable $f(\boldsymbol{\eta} ) - \mathbb{E}\left[ f( \boldsymbol{\eta} )\right]$ is subgaussian with parameter at most $L$ and hence:
\begin{equation*}
    \mathbb{P}\left( f(X) \geq \mathbb{E}\left[ f(X) \right] + t \right) \leq \exp\left( -\frac{t^2}{2L^2}\right)
\end{equation*}
\end{theorem}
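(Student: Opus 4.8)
The plan is to establish the stated inequality through the Herbst argument: I will show that the centered logarithmic moment generating function $\psi(\lambda) := \log \mE\left[ e^{\lambda(f(\boldsymbol{\eta}) - \mE[f(\boldsymbol{\eta})])}\right]$ satisfies $\psi(\lambda) \le \tfrac{\lambda^2 L^2}{2}$ for all $\lambda > 0$, and then conclude by a Chernoff bound. First I would reduce to smooth $f$: an $L$-Lipschitz $f$ is differentiable almost everywhere with $\norm{\nabla f} \le L$ wherever the gradient exists (Rademacher), and convolving with a Gaussian mollifier produces smooth $f_\varepsilon$ that remain $L$-Lipschitz and converge pointwise to $f$; since Gaussian tails make $\mE[e^{\lambda f}]$ finite for every $\lambda$, the bound for smooth $f$ transfers to the general case by dominated convergence. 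Granting the subgaussian bound on $\psi$, Markov's inequality gives $\Pr(f(\boldsymbol{\eta}) - \mE[f(\boldsymbol{\eta})] \ge t) \le e^{-\lambda t + \lambda^2 L^2/2}$, and the choice $\lambda = t/L^2$ produces exactly $e^{-t^2/(2L^2)}$, which is the claimed tail (the lower tail follows by applying the result to $-f$).

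The engine is the Gaussian log-Sobolev inequality: for every smooth, square-integrable $g$ with $\norm{\nabla g}$ square-integrable,
\begin{equation*}
    \mathrm{Ent}_\gamma(g^2) := \mE[g^2 \log g^2] - \mE[g^2]\log \mE[g^2] \le 2\, \mE\left[\norm{\nabla g}^2\right],
\end{equation*}
where $\gamma$ is the standard Gaussian measure on $\mR^d$. Applying this with $g = \exp\!\left(\tfrac{\lambda}{2}(f - \mE[f])\right)$, so that $\norm{\nabla g}^2 = \tfrac{\lambda^2}{4} e^{\lambda(f - \mE[f])}\norm{\nabla f}^2 \le \tfrac{\lambda^2 L^2}{4} e^{\lambda(f - \mE[f])}$, and dividing both sides by $\mE[e^{\lambda(f - \mE[f])}]$, the log-Sobolev inequality collapses to the differential inequality $\lambda \psi'(\lambda) - \psi(\lambda) \le \tfrac{\lambda^2 L^2}{2}$, equivalently $\frac{d}{d\lambda}\!\left(\frac{\psi(\lambda)}{\lambda}\right) \le \frac{L^2}{2}$. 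Because $f$ has been centered, $\psi(0) = 0$ and $\psi(\lambda)/\lambda \to \psi'(0) = 0$ as $\lambda \to 0^+$, so integrating from $0$ to $\lambda$ yields $\psi(\lambda)/\lambda \le \tfrac{\lambda L^2}{2}$, i.e.\ $\psi(\lambda) \le \tfrac{\lambda^2 L^2}{2}$, which is precisely subgaussianity with parameter $L$.

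The main obstacle is supplying the Gaussian log-Sobolev inequality, since it is this ingredient that carries the sharp constant responsible for the optimal $\tfrac{1}{2L^2}$ in the exponent. I would establish it by tensorization --- the inequality factorizes over product measures, reducing the claim to the one-dimensional standard Gaussian --- and prove the scalar case either via Gross's original two-point inequality combined with the central limit theorem, or, more transparently, through the Ornstein--Uhlenbeck semigroup $(P_s)$ using the commutation identity $\nabla P_s g = e^{-s} P_s \nabla g$ and differentiating $\mathrm{Ent}_\gamma(P_s g^2)$ along the flow. I would also note a shorter route that sidesteps log-Sobolev at the cost of the sharp constant: interpolating between independent copies via $\boldsymbol{\eta}(\theta) = \boldsymbol{\eta}\sin\theta + \boldsymbol{\eta}'\cos\theta$, applying Jensen's inequality over $\theta \in [0, \tfrac{\pi}{2}]$, and using the scalar Gaussian moment generating function gives $\psi(\lambda) \le \tfrac{\pi^2}{8}\lambda^2 L^2$ and hence a tail $e^{-2t^2/(\pi^2 L^2)}$ --- of the correct subgaussian order and fully adequate for the bandit analysis, but with a constant worse than the one stated.
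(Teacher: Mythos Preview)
The paper does not actually prove this theorem: it is quoted verbatim as Theorem~2.4 from \cite{wainwright2019high} and used as a black box in the proof of Lemma~\ref{lemma::anticoncentration_concentration_gaussians}. So there is no ``paper's own proof'' to compare against.

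That said, your proposal is correct and is exactly the standard route taken in the cited reference: the Herbst argument applied to the Gaussian log-Sobolev inequality yields the sharp subgaussian parameter $L$, and the Chernoff step then gives the stated tail. Your reduction to smooth $f$ via mollification, the computation $\lambda\psi'(\lambda)-\psi(\lambda)\le \lambda^2 L^2/2$, and the integration using $\psi(\lambda)/\lambda\to 0$ are all in order. The alternative interpolation argument you sketch (the Pisier--Maurey rotation trick) is also valid and, as you note, loses only a constant factor; it would suffice for every downstream use in this paper, since Lemma~\ref{lemma::anticoncentration_concentration_gaussians} only needs a bound of the form $\Pr(\|\boldsymbol{\eta}\|\ge \sqrt{d}+\sqrt{c\log(1/\delta)})\le\delta$ for some universal $c$.
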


We'll make use of these two results to prove Lemma~\ref{lemma::anticoncentration_concentration_gaussians} which we restate for the reader's convencience:

\begin{lemma}
\label{lemma::anticoncentration_concentration_gaussians}
The Gaussian distribution satisfies (anticoncentration) for every $\bv \in \mathbb{R}^d$ with $\| \bv \|=1$:
\begin{equation}\label{equation::anti_concentration_gaussian_appendix}
    \mathbb{P}_{\boldsymbol{\eta} \sim \mathcal{N}( \mathbf{0}, \mathbb{I}_d)} \left(\bv^\top \boldsymbol{\eta} \geq 1\right) \geq  \frac{1}{4}.  
\end{equation}
And (concentration), $\forall \delta \in (0,1)$:
\begin{equation}\label{equation::concentration_gaussian_appendix}
    \mathbb{P}_{\boldsymbol{\eta} \sim \mathcal{N}( \mathbf{0}, \mathbb{I}_d)} \left(\| \boldsymbol{\eta}\| \leq \sqrt{d} + \sqrt{2\log\left(\frac{1}{\delta}\right) }\right) \geq 1-\delta. 
\end{equation}
\end{lemma}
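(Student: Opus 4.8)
The plan is to dispatch the two displayed inequalities separately, invoking exactly one of the two recorded auxiliary results for each.

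For the anticoncentration bound \eqref{equation::anti_concentration_gaussian_appendix}, the key observation is that $\|\bv\|=1$ forces $\bv^\top\boldsymbol{\eta}$ to be a one-dimensional standard Gaussian: it is a linear functional of $\boldsymbol{\eta}\sim\mathcal{N}(\mathbf{0},\mathbb{I}_d)$, hence $\bv^\top\boldsymbol{\eta}\sim\mathcal{N}(0,\|\bv\|^2)=\mathcal{N}(0,1)$. The claim is then a \emph{lower} bound on a Gaussian upper tail, which I would read off from \cref{lemma::anticoncentration_gaussian} applied with $\mu=0$, $\sigma=1$, and $\tau=1$; this yields $\Pr(\bv^\top\boldsymbol{\eta}\ge 1)\ge \tfrac{1}{\sqrt{2\pi}}\cdot\tfrac{1}{2}\cdot e^{-1/2}$, and it only remains to lower-bound this explicit absolute constant by the stated $\tfrac14$ (this quantity enters the downstream Thompson-sampling analysis only as a fixed positive constant, so any such bound suffices). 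Nothing here is delicate since $\mu,\sigma,\tau$ are all fixed.

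For the concentration bound \eqref{equation::concentration_gaussian_appendix}, I would use that the Euclidean norm $f(\z)=\|\z\|$ is $1$-Lipschitz with respect to $\|\cdot\|_2$ (by the reverse triangle inequality), so \cref{theorem::gaussian_Vectors_lipschitz_concentration} applies with $L=1$ and gives $\Pr(\|\boldsymbol{\eta}\|\ge \mE\|\boldsymbol{\eta}\|+t)\le e^{-t^2/2}$ for every $t>0$. Two elementary steps then close the argument: first, Jensen's inequality gives $\mE\|\boldsymbol{\eta}\|\le\sqrt{\mE\|\boldsymbol{\eta}\|^2}=\sqrt d$, so the event $\{\|\boldsymbol{\eta}\|\ge\sqrt d+t\}$ is contained in $\{\|\boldsymbol{\eta}\|\ge\mE\|\boldsymbol{\eta}\|+t\}$; second, choosing $t=\sqrt{2\log(1/\delta)}$ makes the tail bound equal to $\delta$. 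Combining the two yields $\Pr(\|\boldsymbol{\eta}\|\le\sqrt d+\sqrt{2\log(1/\delta)})\ge1-\delta$.

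There is essentially no real obstacle here — this is a folklore statement and each half reduces to a one-line invocation of a stated lemma/theorem. The only points worth care are that \cref{lemma::anticoncentration_gaussian} provides the genuinely harder lower-tail estimate (rather than a Chernoff-type upper bound), and that in the second part one should center the concentration inequality at the Jensen bound $\sqrt d\ge\mE\|\boldsymbol{\eta}\|$ rather than at the exact value $\sqrt2\,\Gamma(\tfrac{d+1}{2})/\Gamma(\tfrac d2)$ — using the crude bound is exactly what keeps the deviation term $\sqrt{2\log(1/\delta)}$ free of any dependence on the dimension $d$.
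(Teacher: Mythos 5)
Your route is exactly the paper's: reduce the first claim to the one--dimensional tail of $\mathcal{N}(0,1)$ and invoke \cref{lemma::anticoncentration_gaussian} with $\mu=0$, $\sigma=1$, $\tau=1$; prove the second by combining \cref{theorem::gaussian_Vectors_lipschitz_concentration} (with the $1$-Lipschitz function $\|\cdot\|$) with the Jensen bound $\mE\|\boldsymbol{\eta}\|\le\sqrt{d}$ and the choice $t=\sqrt{2\log(1/\delta)}$. The concentration half is complete and correct as you wrote it.

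The gap is in the step you describe as ``it only remains to lower-bound this explicit absolute constant by the stated $\tfrac14$'': that comparison fails. The constant produced by \cref{lemma::anticoncentration_gaussian} at $\tau=\sigma=1$ is $\frac{1}{2\sqrt{2\pi}}e^{-1/2}\approx 0.121$, which is strictly less than $\tfrac14$. Worse, no argument can close this, because the inequality \eqref{equation::anti_concentration_gaussian_appendix} is false as stated: for $Z\sim\mathcal{N}(0,1)$ one has $\Pr(Z\ge 1)=1-\Phi(1)\approx 0.159<\tfrac14$ (the threshold would have to be lowered to $z_{0.75}\approx 0.674$ for the probability to reach $\tfrac14$). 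The paper's own proof stumbles at exactly the same spot --- it writes ``$\ge \frac{1}{\sqrt{2\pi}}\cdot\frac{1}{2}\exp(-\frac{1}{2})\ge$'' and never supplies the right-hand side. The honest conclusion of your argument is $\Pr(\bv^\top\boldsymbol{\eta}\ge 1)\ge \frac{1}{2\sqrt{2\pi}}e^{-1/2}>\frac{1}{9}$, and one should either restate the lemma with that (or any) explicit absolute constant $p_0\in(0,\tfrac19]$, or keep $\tfrac14$ and lower the threshold inside the probability. Since downstream (in \cref{lemma::lower_bound_prob_optimism} and \cref{lemma::using_optimism}) this quantity only enters as a fixed positive constant in denominators of the form $p_0-\frac{\delta}{2(t(P-1)+p)^2}$, the repair costs only universal constants in the regret bound, but as written your final step would not compile into a valid proof of the stated inequality.
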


\begin{proof}
Equation~\ref{equation::anti_concentration_gaussian_appendix} is a simple consequence of the following two observations:
\begin{enumerate}
    \item For any unit norm vector $\bv \in \mathbb{R}^d$ the random variable $X = \bv^\top \eta $ is distributed as a one dimensional Gaussian with unit variance $\mathcal{N}( 0, 1)$.
    \item Setting parameters $\mu = 0,\sigma =1$, and $\tau = 1$ Lemma~\ref{lemma::anticoncentration_gaussian} implies that $\mathbb{P}\left( X \geq 1 \right) \geq \frac{1}{\sqrt{2\pi}}\cdot \frac{1}{2} \exp( - \frac{1}{2}) \geq $.
\end{enumerate}

Equation~\ref{equation::concentration_gaussian_appendix} instead follows from Theorem~\ref{theorem::gaussian_Vectors_lipschitz_concentration}. Since the function $f(\cdot) = \| \cdot \|$ is $1-$Lipschitz and $\mathbb{E}_{\boldsymbol{\eta} \sim \mathcal{N}(\mathbf{0}, \mathbb{I}_d) }\left[ \| \boldsymbol{\eta} \|  \right] \leq (\mathbb{E}_{\boldsymbol{\eta} \sim \mathcal{N}(\mathbf{0}, \mathbb{I}_d) }\left[ \| \boldsymbol{\eta} \|^2  \right])^{1/2} = \sqrt{d}  $ :

\begin{equation*}
    \mathbb{P}_{\boldsymbol{\eta} \sim \mathcal{N}( \mathbf{0}, \mathbb{I}_d )}\left( \| \boldsymbol{\eta} \| \geq \sqrt{d}  + \sqrt{2\log\left(\frac{1}{\delta}\right) }    \right)  \leq \delta.
\end{equation*}
The result follows.
\end{proof}

Recall that:

\begin{equation*}
      \sqrt{\beta_{t}(\delta)} =  R\sqrt{  \log\left( \frac{\det(\V_{t-1,0})}{\lambda^d \delta^2}\right) } + \sqrt{\lambda} S  \leq  R \sqrt{ d \log \left(\frac{1+ tPL^2/\lambda}{\delta} \right) } + \sqrt{\lambda} S
\end{equation*}

\subsubsection{ Concentration of $\ttheta_{t,p} $ }

The main objective of this section is to show that with high probability the sampled parameter $\ttheta_{t,p}$ is not too far from the true parameter $\btheta_\star$ for all times $t$ and processors $p$. The result is encapsuled by Lemma~\ref{lemma::conditional_closeness_theta_tilde}. 

\begin{lemma}\label{lemma::conditional_closeness_theta_tilde}
Let:
\begin{equation*}
\gamma_{t, p}(\delta) :=     \sqrt{2}(\sqrt{\beta_t(\delta)} + \sqrt{(t-1)P} \epsilon) \left( \sqrt{d} + 2\sqrt{ \log\left( \frac{t(P-1) + p}{\delta}\right)}  + 1\right).
\end{equation*}
The following conditional probability bound holds:
\begin{equation}\label{equation::conditional_closeness_theta_tilde_1}
    \mathbb{P}\left( \Ind[ \cD^c_t \cap \mathcal{E} ] \| \ttheta_{t,p} - \btheta_\star \|_{\V_{t,p}} \geq \gamma_{t, p}(\delta)  \Big| \mathcal{F}_{t, p-1} \right) \leq \frac{\delta}{2(t(P-1) + p)^2}
\end{equation}
And therefore with probability at least $1-2\delta$ and for all $t \in N$ simultaneously:
\begin{equation}\label{equation::conditional_closeness_theta_tilde}
    \| \ttheta_{t,p} - \btheta_\star \|_{\V_{t,p}} \leq \gamma_{t, p}(\delta) 
\end{equation}
We refer to this event as $\mathcal{E}'$.
\end{lemma}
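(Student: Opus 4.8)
The plan is to run a triangle inequality through the least-squares centre, $\|\ttheta_{t,p}-\thetastar\|_{\V_{t,p}} \le \|\ttheta_{t,p}-\hbtheta_t\|_{\V_{t,p}} + \|\hbtheta_t-\thetastar\|_{\V_{t,p}}$, controlling the second summand by the confidence-ellipsoid event together with the critical covariance inequality, and the first summand by Gaussian norm concentration. For the second summand, on $\cD_t^c$ we have $\V_{t,p}\preceq\V_{t,P+1}\preceq 2\V_{t,1}$ (the first relation always holds since $\V_{t,P+1}-\V_{t,p}\succeq 0$, the second is exactly the non-doubling condition of \cref{cond:critical_inequality}), so $\|\hbtheta_t-\thetastar\|_{\V_{t,p}} \le \sqrt{2}\,\|\hbtheta_t-\thetastar\|_{\V_{t,1}}$; and on the confidence event $\cE$ — under which $\|\hbtheta_t-\thetastar\|_{\V_{t,1}} \le \sqrt{\beta_t(\delta)}+\sqrt{(t-1)P}\,\epsilon$ for all $t$, as used in the proof of \cref{thm:regret_par_linucb} — this summand is at most $\sqrt{2}\,(\sqrt{\beta_t(\delta)}+\sqrt{(t-1)P}\,\epsilon)$.

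For the first summand I substitute the sampling rule. For \cref{algo:par_lints}, $\ttheta_{t,p}-\hbtheta_t = (\sqrt{\beta_t}+\sqrt{(t-1)P}\,\epsilon)\,\V_{t,1}^{-1/2}\boeta_{t,p}$, so $\|\ttheta_{t,p}-\hbtheta_t\|_{\V_{t,p}}^2 = (\sqrt{\beta_t}+\sqrt{(t-1)P}\,\epsilon)^2\,\boeta_{t,p}^\top \V_{t,1}^{-1/2}\V_{t,p}\V_{t,1}^{-1/2}\boeta_{t,p}$, which on $\cD_t^c$ is at most $2(\sqrt{\beta_t}+\sqrt{(t-1)P}\,\epsilon)^2\|\boeta_{t,p}\|^2$; for \cref{algo:par_lazy_lints} the step size is inflated by $\sqrt 2$ but the preconditioner is $\V_{t,p}$ itself, so the quadratic form collapses to $2(\sqrt{\beta_t}+\sqrt{(t-1)P}\,\epsilon)^2\|\boeta_{t,p}\|^2$ with no use of $\cD_t^c$. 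Either way, on $\cD_t^c\cap\cE$,
\[
\|\ttheta_{t,p}-\thetastar\|_{\V_{t,p}} \le \sqrt{2}\,\bigl(\sqrt{\beta_t(\delta)}+\sqrt{(t-1)P}\,\epsilon\bigr)\bigl(\|\boeta_{t,p}\|+1\bigr).
\]
Since $\boeta_{t,p}\sim\cN(\mathbf 0,\I_d)$ is independent of $\cF_{t,p-1}$, while $\hbtheta_t$, $\V_{t,1}$, $\V_{t,p}$ and the relevant single-round restriction of $\cE$ are $\cF_{t,p-1}$-measurable, the Gaussian norm bound \eqref{equation::concentration_gaussian_appendix} of \cref{lemma::anticoncentration_concentration_gaussians}, applied conditionally with failure probability $\delta' := \delta/(2(t(P-1)+p)^2)$, gives $\|\boeta_{t,p}\| \le \sqrt d + \sqrt{2\log(1/\delta')}$ with conditional probability at least $1-\delta'$. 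A one-line computation shows $\sqrt{2\log(2N^2/\delta)} \le 2\sqrt{\log(N/\delta)}$ whenever $N\ge 1$ and $\delta\le\tfrac12$ (with $N = t(P-1)+p$), so on that event the displayed right-hand side is $\le \gamma_{t,p}(\delta)$, which is \eqref{equation::conditional_closeness_theta_tilde_1}.

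To pass to the uniform statement, I take expectations to remove the conditioning and union-bound over all pairs $(t,p)$: each positive integer $n$ is attained by at most two pairs via $n=t(P-1)+p$, so the total failure probability is at most $2\sum_{n\ge1}\delta/(2n^2) = \delta\pi^2/6 < 2\delta$, giving \eqref{equation::conditional_closeness_theta_tilde} on the event $\cE'$; together with $\cE$ and restricted to non-doubling rounds (all that the downstream regret argument uses, doubling rounds being charged separately) this proves the lemma. The main obstacle is the measurability bookkeeping: the non-doubling event $\cD_t^c$ depends on all $P$ processors' actions in round $t$, hence on $\boeta_{t,1},\dots,\boeta_{t,P}$, so it is not $\cF_{t,p-1}$-measurable and cannot be conditioned on directly — the fix is to invoke $\cD_t^c$ only through the pointwise domination $\V_{t,p}\preceq 2\V_{t,1}$ and to replace the global event $\cE$ by its single-round restriction, leaving the only surviving randomness, $\boeta_{t,p}$, genuinely independent of $\cF_{t,p-1}$. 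A secondary nuisance is calibrating constants so that the lazy and non-lazy samplers — differing by a $\sqrt 2$ in the step size and by whether $\V_{t,1}$ or $\V_{t,p}$ preconditions $\boeta_{t,p}$ — yield exactly the same bound $\gamma_{t,p}(\delta)$.
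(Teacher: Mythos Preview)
Your proposal is correct and follows essentially the same route as the paper: triangle inequality through $\hbtheta_t$, control of $\|\hbtheta_t-\thetastar\|_{\V_{t,p}}$ via the confidence event $\cE$ and the critical covariance inequality, control of $\|\ttheta_{t,p}-\hbtheta_t\|_{\V_{t,p}}$ via Gaussian norm concentration (this is the content of the paper's Lemma~\ref{lemma::concentration_theta_tilde}), and a union bound summing $\delta/(2(t(P-1)+p)^2)$. You are in fact more careful than the paper on two points it glosses over---distinguishing the non-lazy sampler (where $\V_{t,1}^{-1/2}$ preconditions and one needs $\V_{t,p}\preceq 2\V_{t,1}$) from the lazy one (where the quadratic form collapses exactly), and flagging that $\cD_t^c$ is not $\cF_{t,p-1}$-measurable---though your union-bound bookkeeping is slightly loose: summing from $n\ge 1$ gives Gaussian failure $\delta\pi^2/6$, which together with $\Pr(\cE^c)\le\delta$ overshoots $2\delta$; starting the sum at $n\ge P$ (the true minimum of $t(P-1)+p$) recovers the stated $1-2\delta$ for $P\ge 2$.
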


In order to prove~\cref{lemma::conditional_closeness_theta_tilde} let's start by showing that for any time-step $t$ and processor $t$ the sample $\ttheta_{t-1, p}$ is close to $\btheta_\star$ with high probability:
\begin{lemma}\label{lemma::concentration_theta_tilde}
The following conditional probability bound holds:
\begin{equation*}
    \mathbb{P}\left( \| \ttheta_{t, p} - \hbtheta_{t}\|_{\V_{t, p}}  \geq \sqrt{2\beta_t(\delta) } \left( \sqrt{d} + 2\sqrt{ \log\left( \frac{t(P-1) + p}{\delta}\right)} \right)  \Big | \mathcal{F}_{t, p-1} \right) \leq \frac{\delta}{2(t(P-1) + p)^2}
\end{equation*}
Where $\mathcal{F}_{t, p-1}$ corresponds to the sigma algebra generated by all the events up to and including the reveal of contexts $\cX_{t, p}$. And therefore with probability at least $1-\delta$ simultaneously and unconditionally for all $t \in \mathbb{N}$:
\begin{equation*}
    \|  \ttheta_{t,p}  - \hbtheta_{t}\|_{\V_{t,p}} \leq \sqrt{2}(\sqrt{\beta_t(\delta)} + \sqrt{(t-1)P} \epsilon) \left( \sqrt{d} + 2\sqrt{ \log\left( \frac{t(P-1) + p}{\delta}\right)} \right)
\end{equation*}
\end{lemma}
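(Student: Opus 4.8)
The plan is to exploit the explicit Gaussian structure of the sampled parameter. Recall that in both Algorithm~\ref{algo:par_lints} and Algorithm~\ref{algo:par_lazy_lints}, we have $\ttheta_{t,p} = \hbtheta_t + c_{t}\,\V_{t,p}^{-1/2}\boeta_{t,p}$ with $\boeta_{t,p}\sim\cN(\zero,\I_d)$ drawn fresh and independently of $\cF_{t,p-1}$, where $c_t = \sqrt{\beta_t(\delta)}+\sqrt{(t-1)P}\,\epsilon$ (or the $\sqrt2$-inflated version for the lazy variant). Hence, conditionally on $\cF_{t,p-1}$, the matrix $\V_{t,p}$ is deterministic and $\ttheta_{t,p}-\hbtheta_t = c_t\,\V_{t,p}^{-1/2}\boeta_{t,p}$, so that
\[
\|\ttheta_{t,p}-\hbtheta_t\|_{\V_{t,p}} = c_t\,\|\V_{t,p}^{1/2}\V_{t,p}^{-1/2}\boeta_{t,p}\| = c_t\,\|\boeta_{t,p}\|.
\]
Thus the quantity of interest reduces to the norm of a standard Gaussian vector in $\mR^d$.

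The main step is then a direct application of the Gaussian norm concentration bound, namely Equation~\eqref{equation::concentration_gaussian_appendix} of Lemma~\ref{lemma::anticoncentration_concentration_gaussians}, with the failure probability set to $\delta' = \frac{\delta}{2(t(P-1)+p)^2}$. This gives, conditionally on $\cF_{t,p-1}$,
\[
\Pr\!\left(\|\boeta_{t,p}\| \geq \sqrt d + \sqrt{2\log(1/\delta')}\,\Big|\,\cF_{t,p-1}\right) \leq \delta'.
\]
Since $\sqrt{2\log(1/\delta')} = \sqrt{2\log\big(2(t(P-1)+p)^2/\delta\big)} \leq 2\sqrt{\log\big((t(P-1)+p)/\delta\big)}$ (absorbing the factor-of-two inside the logarithm and the outer $\sqrt 2$ into the constant $2$, using $t(P-1)+p\ge1$ and $\delta\le1$), we obtain the stated conditional bound with $\sqrt{2\beta_t(\delta)}$ replaced by $c_t$ up to the $\sqrt2$ — matching the lazy algorithm's inflated coefficient; for the non-lazy variant the bound is only tighter. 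The claim that the displayed conditional probability bound uses $\sqrt{2\beta_t(\delta)}$ rather than $\sqrt{2}(\sqrt{\beta_t(\delta)}+\sqrt{(t-1)P}\epsilon)$ corresponds to the well-specified case $\epsilon=0$; in general one carries the extra misspecification term along as in the unconditional statement.

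Finally, to pass from the conditional, per-$(t,p)$ bound to the simultaneous unconditional statement, I would take expectations to remove the conditioning (the conditional bound holds for every realization of $\cF_{t,p-1}$, hence unconditionally for each fixed $(t,p)$) and then union bound over all pairs $(t,p)$. Indexing the pairs by $N = t(P-1)+p$ which ranges over the positive integers, the total failure probability is at most $\sum_{N\ge1}\frac{\delta}{2N^2} = \frac{\delta}{2}\cdot\frac{\pi^2}{6} \leq \delta$, yielding the unconditional conclusion that simultaneously for all $t,p$,
\[
\|\ttheta_{t,p}-\hbtheta_t\|_{\V_{t,p}} \leq \sqrt{2}\big(\sqrt{\beta_t(\delta)}+\sqrt{(t-1)P}\,\epsilon\big)\Big(\sqrt d + 2\sqrt{\log\big((t(P-1)+p)/\delta\big)}\Big).
\]
I do not anticipate a serious obstacle here: the only mild subtlety is bookkeeping the constants when folding the $2(t(P-1)+p)^2/\delta$ inside the square-root logarithm into the clean form $2\sqrt{\log((t(P-1)+p)/\delta)}$, and being careful that the coefficient $c_t$ differs by a factor $\sqrt2$ between the lazy and non-lazy algorithms (the lazy case being the binding one, which is why the final bound carries the $\sqrt2$). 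Everything else is an immediate consequence of the reduction to $\|\boeta_{t,p}\|$ and standard Gaussian concentration.
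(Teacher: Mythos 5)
Your proposal is correct and follows essentially the same route as the paper's proof: reduce $\|\ttheta_{t,p}-\hbtheta_t\|_{\V_{t,p}}$ to $c_t\|\boeta_{t,p}\|$ via the explicit Gaussian sampling rule, apply the Gaussian norm concentration bound of Lemma~\ref{lemma::anticoncentration_concentration_gaussians} with failure probability $\delta/(2(t(P-1)+p)^2)$, and union bound over the $(t,p)$ pairs. Your explicit bookkeeping of the constant in $\sqrt{2\log(2N^2/\delta)}\le 2\sqrt{\log(N/\delta)}$ (valid for $\delta\le 1/2$) and your observation about the missing misspecification term in the conditional display are both slightly more careful than the paper's own write-up.
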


\begin{proof}

In order to bound $\| \ttheta_{t,p}- \hbtheta_{t} \|_{\V_{t,p}}$ we make use of Lemma~\ref{lemma::anticoncentration_concentration_gaussians}. Observe that by definition:

\begin{equation}\label{equation::definition_eta}
    \| \ttheta_{t,p} - \hbtheta_{t}\|_{\V_{t,p}} = \sqrt{2}(\sqrt{\beta_t(\delta)} + \sqrt{(t-1)P} \epsilon) \| \boldsymbol{\eta}_{t,p}\|_2.
\end{equation}

Therefore a simple use of Lemma~\ref{lemma::anticoncentration_concentration_gaussians} implies that (concentration):
\begin{equation*}
    \mathbb{P}_{\boldsymbol{\eta}_{t,p} \sim \mathcal{N}(\mathbf{0}, \mathbb{I}_d ) } \left( \| \boldsymbol{\eta}_{t,p} \| \leq \sqrt{d} + 2\sqrt{ \log\left( \frac{t(P-1) + p}{\delta}\right)}\Big | \mathcal{F}_{t, p-1} \right) \leq \frac{\delta}{2(t(P-1) + p)^2}.   
\end{equation*}
And therefore as a consequence of Equation~\ref{equation::definition_eta}:
\begin{equation*}
    \mathbb{P}\left( \| \ttheta_{t, p} - \hbtheta_{t}\|_{\V_{t, p}}  \geq \sqrt{2}(\sqrt{\beta_t(\delta)} + \sqrt{(t-1)P} \epsilon) \left( \sqrt{d} + 2\sqrt{ \log\left( \frac{t(P-1) + p}{\delta}\right)} \right)  \Big | \mathcal{F}_{t, p-1} \right) \leq \frac{\delta}{2(t(P-1) + p)^2}
\end{equation*}
Furthermore, a simple union bound implies that for all $t \in \mathbb{N}$:
\begin{equation}\label{equation::anytime_concentration_eta}
    \mathbb{P}\left( \exists t \text{ s.t. } \| \boldsymbol{\eta}_{t,p} \|_2 \geq \sqrt{d} + 2\sqrt{\log\left(\frac{t(P-1) + p}{\delta}\right) }      \right) \leq  \frac{\delta}{2}\sum_{t=1}^\infty \sum_{p=1}^P \frac{1}{(t(P-1) + p)^2} \leq \delta
\end{equation}
Combining equations~\ref{equation::definition_eta} and~\ref{equation::anytime_concentration_eta} yields:
\begin{equation*}
    \mathbb{P}\left(  \| \ttheta_{t,p} - \hbtheta_{t}\|_{\V_{t,p}} \leq \sqrt{2}(\sqrt{\beta_t(\delta)} + \sqrt{(t-1)P} \epsilon) \left( \sqrt{d} + 2\sqrt{ \log\left( \frac{t(P-1) + p}{\delta}\right)} \right)   \right) \leq \delta
\end{equation*}
\end{proof}

Lemma~\ref{lemma::concentration_theta_tilde}, conditioning on the $1-\delta$ probability event $\mathcal{E}$, a simple use of the triangle inequality along with the identity $\sum_{p=1}^P \sum_{t=1}^\infty \frac{1}{(t(P-1)+p)^2} = \frac{\pi^2}{6} < 2$ finalizes the proof of Lemma~\ref{lemma::conditional_closeness_theta_tilde}. From now on we will denote as $\mathcal{E}'$ to the $1-2\delta$ probability event defined by Lemma~\ref{lemma::conditional_closeness_theta_tilde}.

\subsubsection{Anti-concentration of $ \ttheta_{t, p}$ }

The main objective of this section will be to prove the following upper bound for the instantaneous regret $\btheta_\star^\top \x_{t, p}^\star - \btheta_\star^\top \x_{t, p}$ in the event the round is not a doubling round and $\mathcal{E}'$ holds. This is one of the main components of the proof of~\cref{thm:linTS_parallel}.

\begin{lemma}\label{lemma::using_optimism}
Let $\ttheta_{t, p}'$ be a copy of $\ttheta_{t, p}$, equally distributed to $\ttheta_{t, p}$ and independent of it conditionally on $\mathcal{F}_{t, p-1}$. We call $\x_{t, p}'$ to the resulting argmax action $\argmax_{\x \in \cX_{t, p}} \langle \ttheta_{t, p}', \x \rangle$. The following inequality holds:

\begin{equation*}
    \Ind[ \cD^c_t \cap \mathcal{E}' ] \left( \btheta_\star^\top \x_{t, p}^\star - \btheta_\star^\top \x_{t, p} \right)\leq  \frac{2\gamma_{t, p}(\delta)}{\frac{1}{4}-\frac{\delta}{2(t(P-1) + p)^2}} \Ind[ \cD^c_t \cap \mathcal{E}' ]\mathbb{E}\left[  \|\x_{t, p}'\|_{\V_{t, p}^{-1}}  \Big| \mathcal{F}_{t,p-1} \right] +\gamma_{t, p}(\delta) \Ind[ \cD^c_t \cap \mathcal{E}' ]\| \x_{t,p}\|_{\V_{t,p}^{-1}}.
\end{equation*}
\end{lemma}

Before proving \cref{lemma::using_optimism} we show that with constant probability, the estimated value of the action taken at time and processor tuple $(t, p)$ is optimistic with constant probability:

\begin{lemma}\label{lemma::lower_bound_prob_optimism}
For all $t \in N$:
\begin{equation*}
    \mathbb{P}\left(  \ttheta_{t, p}^\top \x_{t,p} \geq \btheta_\star^\top \x_{t,p}^\star \Big| \mathcal{F}_{t, p-1}, \mathcal{E} \right) \geq \frac{1}{4}
\end{equation*}
\end{lemma}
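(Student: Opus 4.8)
The plan is to reduce the statement to the one–dimensional Gaussian anti-concentration bound of \cref{lemma::anticoncentration_concentration_gaussians}. Write $\rho_t := \sqrt{\beta_t(\delta)} + \sqrt{(t-1)P}\,\epsilon$ for the confidence/sampling radius, so that in \cref{algo:par_lints} we have $\ttheta_{t,p} = \hbtheta_t + \rho_t \V_{t,1}^{-1/2}\boeta_{t,p}$ with $\boeta_{t,p}\sim\cN(\mathbf 0,\I_d)$ drawn independently of $\cF_{t,p-1}$ (in \cref{algo:par_lazy_lints} the sampling matrix is $\V_{t,p}$ and the radius is $\sqrt{2}\rho_t$, matching the inflated confidence set, and the argument below is identical in that case). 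Since $\x_{t,p}^\star\in\cX_{t,p}$ and $\x_{t,p}$ maximizes $\langle\,\cdot\,,\ttheta_{t,p}\rangle$ over $\cX_{t,p}$, we have $\ttheta_{t,p}^\top\x_{t,p}\ge\ttheta_{t,p}^\top\x_{t,p}^\star$, so it suffices to lower-bound the conditional probability that $\ttheta_{t,p}^\top\x_{t,p}^\star\ge\btheta_\star^\top\x_{t,p}^\star$.

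Next I would decompose
\[
\ttheta_{t,p}^\top\x_{t,p}^\star - \btheta_\star^\top\x_{t,p}^\star \;=\; (\hbtheta_t-\btheta_\star)^\top\x_{t,p}^\star \;+\; \rho_t\,\langle \V_{t,1}^{-1/2}\x_{t,p}^\star,\,\boeta_{t,p}\rangle,
\]
and use that on $\cE$ the estimate $\hbtheta_t$ lies in its confidence ellipsoid, i.e.\ $\|\hbtheta_t-\btheta_\star\|_{\V_{t,1}}\le\rho_t$; by Cauchy--Schwarz the first term is then at least $-\rho_t\|\x_{t,p}^\star\|_{\V_{t,1}^{-1}}$. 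Setting the unit vector $\bv := \V_{t,1}^{-1/2}\x_{t,p}^\star/\|\x_{t,p}^\star\|_{\V_{t,1}^{-1}}$ (the degenerate case $\x_{t,p}^\star=\mathbf 0$ being trivial), this shows that on $\cE$ the left-hand side is at least $\rho_t\|\x_{t,p}^\star\|_{\V_{t,1}^{-1}}\big(\langle\bv,\boeta_{t,p}\rangle-1\big)$, which is nonnegative whenever $\langle\bv,\boeta_{t,p}\rangle\ge 1$. Since $\bv$ is $\cF_{t,p-1}$-measurable and $\boeta_{t,p}$ is a fresh standard Gaussian independent of $\cF_{t,p-1}$, the quantity $\langle\bv,\boeta_{t,p}\rangle$ is conditionally a standard one-dimensional Gaussian, and the anti-concentration bound of \cref{lemma::anticoncentration_concentration_gaussians} gives $\Pr(\langle\bv,\boeta_{t,p}\rangle\ge 1\mid\cF_{t,p-1})\ge\tfrac14$. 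Combining the pieces yields the claim.

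The one point requiring care --- and the main obstacle --- is the conditioning on $\cE$: since $\cE$ is an ``all times'' event involving $\hbtheta_s$ for $s>t$, which themselves depend on $\boeta_{t,p}$ through future action choices, conditioning on it could a priori distort the law of $\boeta_{t,p}$ given $\cF_{t,p-1}$. I would resolve this by observing that the only consequence of $\cE$ used above is the $\cF_{t,p-1}$-measurable bound $\|\hbtheta_t-\btheta_\star\|_{\V_{t,1}}\le\rho_t$ (with $\hbtheta_t$ built only from data up to round $t-1$), so one may replace $\cE$ by this restriction without affecting the conditional distribution of $\boeta_{t,p}$, after which the independence/anti-concentration argument goes through verbatim and the extra lower-order probability loss (relevant for the later union bounds) is absorbed into the $\tfrac14$ constant.
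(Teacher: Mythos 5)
Your proof is correct and follows essentially the same route as the paper's: reduce via the argmax property to showing $\ttheta_{t,p}^\top\x_{t,p}^\star\ge\btheta_\star^\top\x_{t,p}^\star$, decompose into the Gaussian perturbation term plus the estimation error $(\hbtheta_t-\btheta_\star)^\top\x_{t,p}^\star$, bound the latter by Cauchy--Schwarz on the event $\cE$, and invoke the Gaussian anti-concentration bound of \cref{lemma::anticoncentration_concentration_gaussians}. Your extra discussion of why conditioning on the all-times event $\cE$ does not distort the conditional law of $\boeta_{t,p}$ addresses a subtlety the paper leaves implicit, and is consistent with how the lemma is actually invoked downstream (where the conditioning is only on $\cF_{t,p-1}$ and the small probability correction is subtracted explicitly).
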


\begin{proof}
Recall that whenever $\mathcal{E}$ holds:
\begin{equation*}
    \| \hbtheta_t - \btheta_\star \|_{\V_{t,p}} \leq \sqrt{2}(\sqrt{\beta_t(\delta)} + \sqrt{(t-1)P} \epsilon).
\end{equation*}

Notice that by definition $\x_{t,p}$ satisfies:
\begin{equation*}
    \ttheta_{t, p}^\top \x_{t,p} \geq \ttheta_{t, p}^\top \x_{t,p}^\star.
\end{equation*}
Therefore:
\begin{align*}
    \ttheta_{t, p}^\top \x_{t, p}^\star &= \left(\ttheta_{t, p} - \hbtheta_t \right)^\top \x_{t, p}^\star + \left( \hbtheta_t - \btheta_\star \right)^\top \x_{t, p}^\star + \btheta_\star^\top \x_{t, p}^\star\\
    &\stackrel{(i)}{\geq}   \sqrt{2}(\sqrt{\beta_t(\delta)} + \sqrt{(t-1)P} \epsilon) \boldsymbol{\eta}_{t, p}^\top \V_{t,p}^{-1/2} \x_{t, p}^\star - \| \hbtheta_t - \btheta_\star \|_{\V_{t,p}} \| \x_{t,p}^\star \|_{\V_{t,p}^{-1}} + \btheta_\star^\top \x_{t, p}^\star \\
    &\stackrel{(ii)}{\geq} \sqrt{2}(\sqrt{\beta_t(\delta)} + \sqrt{(t-1)P} \epsilon) \boldsymbol{\eta}_{t, p}^\top \V_{t,p}^{-1/2} \x_{t, p}^\star - \sqrt{2}(\sqrt{\beta_t(\delta)} + \sqrt{(t-1)P} \epsilon) \| \x_{t,p}^\star \|_{\V_{t,p}^{-1}} + \btheta_\star^\top \x_{t,p}^\star  \\
    &= \sqrt{2}(\sqrt{\beta_t(\delta)} + \sqrt{(t-1)P} \epsilon)\left( \boldsymbol{\eta}_{t, p}^\top \V_{t,p}^{-1/2} \x_{t,p}^\star - \| \V_{t,p}^{-1/2} \x_{t,p}^\star\|_2 \right)  .  
\end{align*}
Inequality $i$ holds as a consequence of Cauchy Schwartz inequality. Inequality $(ii)$ holds by conditioning on $\mathcal{E}$ and because $t \in N$. 

By Equation~\ref{equation::anti_concentration_gaussian_appendix} in Lemma~\ref{lemma::anticoncentration_concentration_gaussians}, and by noting that $\x^\star_{t,p}$ is conditionally independent of $\boldsymbol{\eta}_{t, p}$, we can infer that $\boldsymbol{\eta}_{t, p}^\top \V_{t,p}^{-1/2} \x_{t,p}^\star  \geq \| \V_{t,p}^{-1/2} \x_{t,p}^\star\|_2$ with probability at least $1/4$. The result follows.

\end{proof}

Let's define the set of optimistic model parameters:
\begin{equation*}
    \boldsymbol{\Theta}_{t, p}= \{ \btheta \in \mathbb{R}^d \text{ s.t. }  \max_{\x \in \cX_{t, p}}  \x^\top \btheta \geq (\x_{t,p}^\star)^\top \btheta_\star  \}.  
\end{equation*}

Where $\mathcal{D}_t$ denotes the event that round $t$ is a doubling round. We now show how to bound the instantaneous regret $r_{t,p}$ during all rounds by using these results:

\begin{proof}[Proof of~\cref{lemma::using_optimism}]
Recall that $\mathcal{E}'$ is the event defined by Equation~\ref{equation::conditional_closeness_theta_tilde} in Lemma~\ref{lemma::conditional_closeness_theta_tilde} applied to the $\{\ttheta_{t,p}\}_{t, p}$ sequence. Define $\{\mathcal{E}''_{t,p}\}_{t,p}$ be the corresponding event family defined by Equation~\ref{equation::conditional_closeness_theta_tilde_1} in Lemma~\ref{lemma::conditional_closeness_theta_tilde} applied to the $\{\ttheta_{t,p}'\}_{t, p}$ sequence. It follows that $\mathbb{P}(\mathcal{E}_{t,p}''|\mathcal{F}_{t, p-1}) \geq 1-\frac{\delta}{2(t(P-1)+p)^2}$.
Notice that if $\cD^c_t \cap \mathcal{E}'$ holds (meaning $\| \ttheta_{t,p} - \btheta_\star \|_{\V_{t,p}} \leq \delta_{t, p}$ and because $\x_{t,p} = \argmax_{\x \in \cX_{t,p}} \ttheta_{t,p}^\top \x_{t,p}$ then:
\begin{equation*}
    \ttheta_{t, p}^\top \x_{t,p} \geq \inf_{\btheta \in \cC(\btheta_{\star}, \V_{t, p}, \gamma_{t, p}(\delta))} \max_{\x \in \cX_{t,p} } \btheta_{t,p}^\top \x := \bar{\btheta}_{t, p}^\top \bar{\x}_{t, p} .
\end{equation*}

When $\ttheta_{t, p}'$ is optimistic:
\begin{equation}\label{equation::optimism_conditioning}
    \btheta_\star^\top \x_{t, p}^\star - \ttheta_{t,p}^\top \x_{t, p} \leq \langle  \ttheta_{t, p}', \x_{t, p}' \rangle - \bar{\btheta}_{t, p}^\top \bar{\x}_{t,p} \quad \Big|   \ttheta_{t, p}' \in \boldsymbol{\Theta}_{t, p}.
\end{equation}
 Therefore:
\begin{align*}
\Ind[ \cD^c_t \cap \mathcal{E}' ]\left(\btheta_\star^\top \x_{t,p}^\star - \btheta_\star^\top \x_{t,p} \right)&= \Ind[ \cD^c_t \cap \mathcal{E}' ]\left(\btheta_\star^\top \x_{t,p}^\star - \ttheta_{t,p}^\top \x_{t,p} \right) + \Ind[ \cD^c_t \cap \mathcal{E}' ]\left( \ttheta_{t,p}^\top \x_{t,p} - \btheta_\star^\top \x_{t,p}\right)\\
&\leq \Ind[ \cD^c_t \cap \mathcal{E}' ]\left(\btheta_\star^\top \x_{t,p}^\star - \ttheta_{t,p}^\top \x_{t,p} \right) + \Ind[ \cD^c_t \cap \mathcal{E}' ]\| \ttheta_{t,p} - \btheta_\star\|_{\V_{t,p}} \| \x_{t,p}\|_{\V_{t,p}^{-1}} \\
    &\stackrel{(i)}{\leq} \Ind[ \cD^c_t ]\mathbb{E}\left[   \Ind[  \mathcal{E}' ]\left(\langle \ttheta_{t, p}', \x_{t, p}' \rangle - \bar{\btheta}_{t, p}^\top \bar{\x}_{t, p}   \right)  \Big| \mathcal{F}_{t,p-1}, \ttheta_{t,p}' \in \boldsymbol{\Theta}_{t, p}, \mathcal{E}''_{t,p} \right] + \\
    &\text{  }~\quad \Ind[ \cD^c_t \cap \mathcal{E}' ]\| \ttheta_{t,p} - \btheta_\star\|_{\V_{t,p}} \| \x_{t,p}\|_{\V_{t,p}^{-1}} \\
    &\stackrel{(ii)}{\leq}  \Ind[ \cD^c_t  ]\mathbb{E}\left[  \Ind[ \mathcal{E}' ] \langle \ttheta_{t, p}'- \bar{\btheta}_{t, p}, \x_{t, p}' \rangle  \Big| \mathcal{F}_{t,p-1}, \ttheta_{t, p}' \in \boldsymbol{\Theta}_{t, p}, \mathcal{E}''_{t,p} \right] + \\
    &~\text{   }~\quad \Ind[ \cD^c_t \cap \mathcal{E}' ]\| \ttheta_{t,p} - \btheta_\star\|_{\V_{t,p}} \| \x_{t,p}\|_{\V_{t,p}^{-1}}\\
    &\stackrel{(iii)}{\leq}\Ind[ \cD^c_t ] \mathbb{E}\left[ \Ind[  \mathcal{E}' ] \|   \ttheta_{t, p}'- \bar{\btheta}_{t, p}\|_{\V_{t, p}} \|\x_{t, p}'\|_{\V_{t, p}^{-1}}  \Big| \mathcal{F}_{t,p-1}, \ttheta_{t, p}' \in \boldsymbol{\Theta}_{t, p},\mathcal{E}''_{t,p}\right] + \\
    &~\text{    }~\quad \Ind[ \cD^c_t \cap \mathcal{E}' ]\| \ttheta_{t,p} - \btheta_\star\|_{\V_{t,p}} \| \x_{t,p}\|_{\V_{t,p}^{-1}} \\
    &\stackrel{(iv)}{\leq} 2 \gamma_t(\delta) \Ind[ \cD^c_t \cap \mathcal{E}' ]\mathbb{E}\left[  \|\x_{t, p}'\|_{\V_{t, p}^{-1}}  \Big| \mathcal{F}_{t, p-1}, \ttheta_{t, p}' \in \boldsymbol{\Theta}_{t, p}, \mathcal{E}''_{t,p} \right] + \\
    &~\text{   }~\quad \Ind[ \cD^c_t \cap \mathcal{E}' ]\| \ttheta_{t,p} - \btheta_\star\|_{\V_{t,p}} \| \x_{t,p}\|_{\V_{t,p}^{-1}}\\
    &\stackrel{(v)}{\leq} \frac{2\gamma_{t, p}(\delta)}{\frac{1}{4}-\frac{\delta}{2(t(P-1) + p)^2}} \Ind[ \cD^c_t \cap \mathcal{E}' ]\mathbb{E}\left[  \|\x_{t, p}'\|_{\V_{t, p}^{-1}}  \Big| \mathcal{F}_{t,p-1} \right] + \gamma_{t, p}(\delta) \Ind[ \cD^c_t \cap \mathcal{E}' ]\| \x_{t,p}\|_{\V_{t,p}^{-1}}
\end{align*}

Inequality $(i)$ follows by Equation~\ref{equation::optimism_conditioning}, $(ii)$ by the definition of $\bar{x}_t$, $(iii)$ is a consequence of Cauchy Schwartz, $(iv)$ follows by the definition of $\mathcal{E}'$ and $\mathcal{E}''_{t,p}$, and $(v)$ follows because $\| \x_{t,p}' \|_{\V_{t, p}^{-1}} $ is nonnegative and because by Lemma~\ref{lemma::conditional_closeness_theta_tilde}, it follows that $\mathbb{P}\left( \ttheta_{t, p}' \in \boldsymbol{\Theta}_t, \mathcal{E}''_{t,p} | \mathcal{F}_{t,p-1}\right) \geq  \frac{1}{4} - \frac{\delta}{2(t(P-1)+p)^2} $. The result follows.

\end{proof}

\subsubsection{Ancillary Lemmas}

In the proof of~\cref{thm:linTS_parallel} we will also make use of the following supporting result:

\begin{lemma}\label{lemma::upper_bounding_norm_inverse_exp_to_true}
Similar to Lemma~\ref{lemma::using_optimism},  let $\ttheta_{t, p}'$ copy of $\ttheta_{t, p}$, equally distributed to $\ttheta_{t, p}$ and independent conditionally on $\mathcal{F}_{t, p-1}$. We call $\x_{t, p}'$ to the resulting argmax action $\argmax_{\x \in \cX_{t, p}} \langle \x, \ttheta_{t, p}'\rangle$.  With probability at least $1-\delta$:
\begin{equation*}
    \sum_{t=1}^T \sum_{p=1}^P \mathbb{E}[  \| \x'_{t,p}   \|_{\V^{-1}_{t, p}}  | \mathcal{F}_{t, p-1}] \leq \sum_{t=1}^T \sum_{p=1}^P  \| \x_{t,p}\|_{\V^{-1}_{t, p}}  + \frac{2L}{\sqrt{\lambda}}\sqrt{TP \log( 1/\delta)}
\end{equation*}

\end{lemma}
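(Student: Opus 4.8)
The plan is to exploit the fact that, conditionally on $\cF_{t,p-1}$, the auxiliary (``ghost'') action $\x'_{t,p}$ is an exact independent copy of the played action $\x_{t,p}$, and then replace the sum of conditional expectations by the sum of realized values via martingale concentration. First I would observe that in both \cref{algo:par_lints} and \cref{algo:par_lazy_lints} the matrix $\V_{t,p}$ (equivalently $\tV_{t,p}$) is $\cF_{t,p-1}$-measurable, since it is assembled only from the regularizer and the actions $\x_{a,b}$ with $(a,b)$ preceding $(t,p)$ in lexicographic order, all of which are revealed by the time the context $\cX_{t,p}$ is presented; crucially it never uses the round-$t$ rewards. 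Since $\ttheta'_{t,p}$ has, by construction, the same conditional law as $\ttheta_{t,p}$ given $\cF_{t,p-1}$, and $\x_{t,p}=\argmax_{\x\in\cX_{t,p}}\langle\x,\ttheta_{t,p}\rangle$ while $\x'_{t,p}=\argmax_{\x\in\cX_{t,p}}\langle\x,\ttheta'_{t,p}\rangle$, the random variables $\|\x_{t,p}\|_{\V^{-1}_{t,p}}$ and $\|\x'_{t,p}\|_{\V^{-1}_{t,p}}$ are equal in conditional distribution given $\cF_{t,p-1}$; in particular their conditional means agree:
\[
\mE\big[\|\x'_{t,p}\|_{\V^{-1}_{t,p}}\mid\cF_{t,p-1}\big]=\mE\big[\|\x_{t,p}\|_{\V^{-1}_{t,p}}\mid\cF_{t,p-1}\big].
\]

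Next I would convert the sum of conditional means of $\|\x_{t,p}\|_{\V^{-1}_{t,p}}$ into the sum of realized values. Ordering the index pairs $(t,p)$ lexicographically, define $Z_{t,p}:=\mE[\|\x_{t,p}\|_{\V^{-1}_{t,p}}\mid\cF_{t,p-1}]-\|\x_{t,p}\|_{\V^{-1}_{t,p}}$. Since $\|\x_{t,p}\|_{\V^{-1}_{t,p}}$ becomes measurable once $\x_{t,p}$ is chosen and $\mE[Z_{t,p}\mid\cF_{t,p-1}]=0$, the family $\{Z_{t,p}\}$ is a martingale difference sequence adapted to the (lexicographically indexed) filtration $\{\cF_{t,p-1}\}$. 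Because $\V_{t,p}\succeq\lambda\I_d$ and $\|\x_{t,p}\|\le L$ by \cref{assump:data}, we have $0\le\|\x_{t,p}\|_{\V^{-1}_{t,p}}\le L/\sqrt{\lambda}$, and hence $|Z_{t,p}|\le L/\sqrt{\lambda}$ almost surely.

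Finally I would apply the Azuma--Hoeffding inequality to this $TP$-term bounded martingale difference sequence: with probability at least $1-\delta$,
\[
\sum_{t=1}^T\sum_{p=1}^P Z_{t,p}\;\le\;\sqrt{2\,TP\,\tfrac{L^2}{\lambda}\,\log(1/\delta)}\;\le\;\frac{2L}{\sqrt{\lambda}}\sqrt{TP\log(1/\delta)}.
\]
Combining this with the conditional-mean identity from the first step and rearranging gives exactly the claimed bound. The only genuinely delicate point is the measurability bookkeeping — confirming $\V_{t,p}\in\cF_{t,p-1}$ and that the conditional-copy construction of $\ttheta'_{t,p}$ is legitimate — but this follows immediately from the definition of the filtration and the fact that the exploration covariance at $(t,p)$ depends only on past actions, not on round-$t$ rewards; the remaining manipulations are routine.
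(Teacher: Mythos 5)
Your proposal is correct and follows essentially the same route as the paper: the paper's proof likewise defines the lexicographically indexed martingale difference sequence $Z_{t,p}=\|\x_{t,p}\|_{\V^{-1}_{t,p}}-\mE[\|\x'_{t,p}\|_{\V^{-1}_{t,p}}\mid\cF_{t,p-1}]$, bounds $|Z_{t,p}|$ via $\|\x\|_{\V^{-1}_{t,p}}\le L/\sqrt{\lambda}$, and invokes a Hoeffding/Azuma bound. You simply make explicit two steps the paper leaves implicit — the $\cF_{t,p-1}$-measurability of $\V_{t,p}$ and the equality of conditional means of $\|\x_{t,p}\|_{\V^{-1}_{t,p}}$ and $\|\x'_{t,p}\|_{\V^{-1}_{t,p}}$ — which is exactly the bookkeeping needed for the martingale property to hold.
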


\begin{proof}

Define the martingale difference sequence $Z_{t, p} =  \| \x_{t,p}   \|_{\V^{-1}_{t, p}} - \mathbb{E}[  \| \x'_{t,p}   \|_{\V^{-1}_{t, p}}  | \mathcal{F}_{t, p-1}] $ (the indexing is lexicographic over pairs $(t, p)$). It is easy to see that $| Z_{t,p} | \leq 2\frac{L}{\sqrt{\lambda}}$, since for all valid $\x \in \cX_{t, p}$, $\| \x \|_{\V_{t, p}^{-1}} \leq \frac{L}{\sqrt{\lambda}}$ for all $t,p$ pairs. Consequently a simple use of Hoeffding bound yields the result.

\end{proof}

\subsection{Proof of Theorem~\ref{thm:linTS_parallel}} 

We proceed to prove the general version of Theorem~\ref{thm:linTS_parallel}:

\begin{theorem}%\label{theorem:linTS_parallel}
    Let \cref{assump:noise,,assump:data,,assump:param} hold and $\cD_{t}$ denote the event that round $t$ is a doubling round (see \cref{cond:critical_inequality}). Then the regret of both \cref{algo:par_lints,,algo:par_lazy_lints} satisfy ,
    \begin{equation}
    \cR(T, P) \leq \tlO \left( LS P \cdot \sum_{t=1}^T \Ind[\cD_t] + 4\gamma_T(\delta) \left( \sqrt{dTP \left( 1 + \frac{L^2}{\lambda}\right)\ln\left( \frac{d\lambda + TPL}{d\lambda}\right) }+  \frac{2L}{\sqrt{\lambda}}\sqrt{TP \log( 1/\delta)} \right) + \epsilon TP \right)
    \end{equation}
    with probability at least $1-3\delta$, whenever $\delta \leq \frac{1}{6}$. Here     $$\gamma_{T}(\delta) = \sqrt{2}(\sqrt{\beta_T(\delta)} + \sqrt{(T-1)P} \epsilon) \left( \sqrt{d} + 2\sqrt{ \log\left( \frac{T(P-1) + P}{\delta}\right)}  + 1\right).$$ 
\end{theorem}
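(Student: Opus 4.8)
The plan is to follow the same skeleton as the proof of \cref{thm:regret_par_linucb}: split the parallel regret into (i) a misspecification term, (ii) a doubling-round term, and (iii) a non-doubling-round term, and then control (iii) using the Thompson-sampling-specific concentration and anti-concentration lemmas established above (\cref{lemma::conditional_closeness_theta_tilde,,lemma::using_optimism,,lemma::upper_bounding_norm_inverse_exp_to_true}). Concretely, writing $\cR(T,P) = \sum_{t,p}\la \x_{t,p}^\star - \x_{t,p},\thetastar\ra + \sum_{t,p}\big(f(\x^\star_{t,p})-\la\x^\star_{t,p},\thetastar\ra + f(\x_{t,p})-\la\x_{t,p},\thetastar\ra\big)$, the second sum is at most $2\epsilon TP$ by \cref{assump:param}; then I split the linearized regret over $\Ind[\cD_t]$ and $\Ind[\cD_t^c]$, bounding the doubling-round contribution by $2LSP\sum_t \Ind[\cD_t]$ via Cauchy--Schwarz and \cref{assump:data,,assump:param} exactly as in the LinUCB proof (this bound is routine-round-routine agnostic).

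For the non-doubling rounds, the key input is \cref{lemma::using_optimism}: on the event $\cD_t^c\cap\cE'$,
\[
\la \x_{t,p}^\star - \x_{t,p},\thetastar\ra \le \frac{2\gamma_{t,p}(\delta)}{\tfrac14 - \tfrac{\delta}{2(t(P-1)+p)^2}}\,\mE\!\left[\|\x'_{t,p}\|_{\V_{t,p}^{-1}}\mid\cF_{t,p-1}\right] + \gamma_{t,p}(\delta)\,\|\x_{t,p}\|_{\V_{t,p}^{-1}}.
\]
Using $\delta\le\tfrac16$ the denominator is bounded below by a universal constant (e.g.\ $\ge \tfrac18$), so the prefactor is $O(\gamma_{T}(\delta))$ after monotonicity of $\gamma_{t,p}$ in $(t,p)$. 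Summing over $t,p$ and applying \cref{lemma::upper_bounding_norm_inverse_exp_to_true} (which holds w.p. $\ge 1-\delta$) converts $\sum_{t,p}\mE[\|\x'_{t,p}\|_{\V_{t,p}^{-1}}\mid\cF_{t,p-1}]$ into $\sum_{t,p}\|\x_{t,p}\|_{\V_{t,p}^{-1}} + \tfrac{2L}{\sqrt\lambda}\sqrt{TP\log(1/\delta)}$. Hence the non-doubling regret is $\le O(\gamma_T(\delta))\big(\sum_{t,p}\|\x_{t,p}\|_{\V_{t,p}^{-1}} + \tfrac{2L}{\sqrt\lambda}\sqrt{TP\log(1/\delta)}\big)$. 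The crucial point that makes $\V_{t,p}^{-1}$ (with intra-round updates) the right matrix here is that on $\cD_t^c$ we have $\V_{t,1}\preceq\V_{t,p}\preceq 2\V_{t,1}$, so the elliptical-potential argument applies to the lexicographically-ordered sequence $(t,p)$; this is why \cref{cond:critical_inequality} is exactly what is needed, and it is identical for \cref{algo:par_lints} (covariance frozen at $\V_{t,1}$, so trivially $\V_{t,p}=\V_{t,1}$) and \cref{algo:par_lazy_lints} (covariance updated intra-round, needs the critical inequality). Then Cauchy--Schwarz gives $\sum_{t,p}\|\x_{t,p}\|_{\V_{t,p}^{-1}}\le\sqrt{TP\sum_{t,p}\|\x_{t,p}\|^2_{\V_{t,p}^{-1}}}$, and \cref{lem::ellip_potential} (together with $\|\x_{t,p}\|^2_{\V_{t,p}^{-1}}\le (L^2/\lambda)$ to pass from $\min(1,\cdot)$ to the log) bounds $\sum_{t,p}\|\x_{t,p}\|^2_{\V_{t,p}^{-1}}\le (1+L^2/\lambda)\,d\log(1+TPL^2/(d\lambda))$.

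Assembling: with probability at least $1 - 3\delta$ (a union bound over $\cE$, $\cE'$, and the event of \cref{lemma::upper_bounding_norm_inverse_exp_to_true}; note $\cE'$ already absorbs $\cE$ and costs $2\delta$),
\[
\cR(T,P)\le \tlO\!\left(LSP\sum_{t=1}^T\Ind[\cD_t] + \gamma_T(\delta)\Big(\sqrt{dTP(1+\tfrac{L^2}{\lambda})\log(1+\tfrac{TPL^2}{d\lambda})} + \tfrac{2L}{\sqrt\lambda}\sqrt{TP\log(1/\delta)}\Big) + \epsilon TP\right),
\]
which is the general form claimed. Plugging $\gamma_T(\delta) = \sqrt2(\sqrt{\beta_T(\delta)}+\sqrt{(T-1)P}\epsilon)(\sqrt d + 2\sqrt{\log(\tfrac{T(P-1)+P}{\delta})}+1)$ and $\sqrt{\beta_T(\delta)}\le R\sqrt{d\log((1+TPL^2/\lambda)/\delta)}+\sqrt\lambda S$, absorbing polylog factors into $\tlO$, and dropping the lower-order $\sqrt{TP}$ term against the $\sqrt{dTP(1+L^2/\lambda)}$ term yields the stated $\tlO\big(d\sqrt{TP(1+L^2/\lambda)}(R\sqrt d + S\sqrt\lambda + \sqrt{TP}\epsilon)\big)$ second term.

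The main obstacle I anticipate is the bookkeeping around the conditional/independent copy $\ttheta'_{t,p}$: one must be careful that the "ghost sample" argument in \cref{lemma::using_optimism} composes correctly with the outer sum and with \cref{lemma::upper_bounding_norm_inverse_exp_to_true}, i.e.\ that the conditional expectations are taken with respect to the right filtration $\cF_{t,p-1}$ and that the martingale-difference / Hoeffding step is legitimate under the lexicographic ordering. The other delicate point is verifying that on $\cD_t^c$ the lazy variant's intra-round covariance $\V_{t,p}$ genuinely satisfies $\V_{t,1}\preceq\V_{t,p}\preceq 2\V_{t,1}$ so that both the optimism bound (via $\gamma_{t,p}$, which uses $\V_{t,p}$) and the elliptical potential bound go through uniformly — but this is exactly \cref{cond:critical_inequality} and is handled the same way as in \cref{thm:regret_par_linucb}. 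Everything else is routine.
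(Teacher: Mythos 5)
Your proposal is correct and follows essentially the same route as the paper's proof: the same three-way decomposition (misspecification, doubling rounds, non-doubling rounds), the same use of \cref{lemma::using_optimism} with the prefactor controlled via $\delta \leq \tfrac{1}{6}$, the same conversion of conditional expectations via \cref{lemma::upper_bounding_norm_inverse_exp_to_true}, and the same Cauchy--Schwarz plus elliptical-potential finish, with the identical $1-3\delta$ union-bound accounting. The delicate points you flag (filtration bookkeeping for the ghost sample $\ttheta'_{t,p}$ and the role of \cref{cond:critical_inequality} for the lazy variant) are exactly the ones the paper's argument relies on and resolves in the same way.
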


\begin{proof}[Proof of \cref{thm:linTS_parallel}]

The regret in terms of $f(\cdot)$ can be linearized at the cost of an additive $2\epsilon \sqrt{d}TP$ as in the Proof of \cref{thm:regret_par_linucb}. After this we can  decompose the regret for \cref{algo:par_lints}, splitting on the event each round over the event $\cD_t$,
\begin{align*}
    & \cR(T, P) = \sum_{t=1}^{T} \left(  \sum_{p=1}^P   \underbrace{\langle \x_{t,p}^\star-\x_{t,p}, \thetastar \rangle}_{r_{t,p}} \right) = \sum_{t=1}^{T} \Ind[\cD_t] \left(  \sum_{p=1}^P   \langle \x_{t,p}^\star-\x_{t,p}, \thetastar \rangle \right) + \sum_{t=1}^{T} \Ind[\cD_t^c] \left(  \sum_{p=1}^P   \langle \x_{t,p}^\star-\x_{t,p}, \thetastar \rangle \right)
\end{align*}
The first term can be bounded using \cref{assump:data,,assump:param} along with the Cauchy-Schwarz inequality which gives $\left(  \sum_{p=1}^P   \langle \x_{t,p}^\star-\x_{t,p}, \thetastar \rangle \right) \leq 2 LS P$ so:
\begin{align*}
    \sum_{t=1}^{T} \Ind[\cD_t] \left(  \sum_{p=1}^P   \langle \x_{t,p}^\star-\x_{t,p}, \thetastar \rangle \right) \leq 2 L S P \sum_{t=1}^T \Ind[\cD_t].
\end{align*}
This holds for any sequence of $\x_{t,p}$ chosen by the doubling round routine $\textsc{DR}$. We turn our attention to the second term. 

\begin{equation}
     \sum_{t=1}^{T} \Ind[\cD_t^c] \left(  \sum_{p=1}^P   \langle \x_{t,p}^\star-\x_{t,p}, \thetastar \rangle \right) =  \sum_{t=1}^{T} \left(  \sum_{p=1}^P   \Ind[\cD_t^c \cap \mathcal{E}']  \langle \x_{t,p}^\star-\x_{t,p}, \thetastar \rangle \right) +  \sum_{t=1}^{T} \left(  \sum_{p=1}^P  \Ind[\cD_t^c \cap \left(\mathcal{E}'\right)^c]  \langle \x_{t,p}^\star-\x_{t,p}, \thetastar \rangle \right)\label{equation::terms_to_bound}
\end{equation}
Notice that:
\begin{align*}
    \sum_{t=1}^{T}  \sum_{p=1}^P  \Ind[\cD_t^c \cap \left(\mathcal{E}'\right)^c]   \langle \x_{t,p}^\star-\x_{t,p}, \thetastar \rangle  &\leq \sum_{t=1}^{T}  \sum_{p=1}^P  2\Ind[\cD_t^c \cap \left(\mathcal{E}'\right)^c]  LS\leq 2LS TP \Ind[  \left(\mathcal{E}'\right)^c ] %\leq 2LSTP \delta.
\end{align*}

And therefore we can forget this term when we condition on $\mathcal{E}'$, an event that occurs with probability at least $1-2\delta$ (recall $\mathcal{E'}$ as the event from \cref{lemma::conditional_closeness_theta_tilde}).

It remains to bound the first term of Equation~\ref{equation::terms_to_bound}. 
\begin{align*}
    \sum_{t=1}^{T} \sum_{p=1}^P   \Ind[\cD_t^c \cap \mathcal{E}']   \langle \x_{t,p}^\star-\x_{t,p}, \thetastar \rangle  &\stackrel{(i)}{\leq} \sum_{t=1}^T   \sum_{p=1}^P  \frac{8}{1-3\delta}\gamma_{t, p}(\delta) \Ind[ \cD^c_t \cap \mathcal{E}' ] \mathbb{E}\left[ \| \x'_{t,p} \|_{\V_{t, p}^{-1}}   |\mathcal{F}_{t, p-1}  \right]  + \\
    &\text{  }~\quad \gamma_{t, p}(\delta) \Ind[ \cD^c_t \cap \mathcal{E}' ]\| \x_{t,p}\|_{\V_{t, p}^{-1}}   \\
    &\stackrel{(ii)}{\leq} \sum_{t=1}^T   \sum_{p=1}^P \frac{8}{1-3\delta} \gamma_{t, p}(\delta) \mathbb{E}\left[ \| \x'_{t,p} \|_{\V_{t-1, p-1}^{-1}}   |\mathcal{F}_{t-1, p-1}  \right]    + \gamma_{t, p}(\delta) \| \x_{t,p}\|_{\V_{t, p}^{-1}} \\
    &\stackrel{(iii)}{\leq}\left(\frac{8}{1-3\delta}+1\right)\gamma_{T, P}(\delta) \left( \sum_{t=1}^T \sum_{p=1}^P  \| \x_{t,p} \|_{\V_{t-1, p-1}^{-1}}+  \frac{2L}{\sqrt{\lambda}}\sqrt{TP \log( 1/\delta)} \right) \\
    &\leq 16\gamma_T(\delta) \left( \sqrt{TP}\sum_{t=1}^T \sum_{p=1}^P  \| \x_{t,p} \|^2_{\V_{t-1, p-1}^{-1}}+  \frac{2L}{\sqrt{\lambda}}\sqrt{TP \log( 1/\delta)} \right) \\
    &\leq 16\gamma_T(\delta) \left( \sqrt{dTP \left( 1 + \frac{L^2}{\lambda}\right)\ln\left( \frac{d\lambda + TPL}{d\lambda}\right) }+  \frac{2L}{\sqrt{\lambda}}\sqrt{TP \log( 1/\delta)} \right)
\end{align*}
%\nnote{We may need to keep $\cE$ indicator here?}
Inequality $(i)$ holds by Lemma~\ref{lemma::using_optimism} and the assumption that $\delta \leq \frac{1}{6}$. Inequality $(ii)$ holds because all terms $\|\x'_{t,p} \|_{\V_{t, p}^{-1}}$ and $\| \x_{t,p} \|_{\V_{t,p}^{-1}}$ are nonnegative. Inequality $(iii)$ holds with probability at least $1-\delta$ and is a consequence of lexicographic monotonicity (in $t,p$) of $\gamma_{t, p}(\delta)$ and Lemma~\ref{lemma::upper_bounding_norm_inverse_exp_to_true}. The last two inequalities are a simple consequence of the determinant lemma (\cref{lem::ellip_potential}). 
\end{proof}

\subsection{Auxiliary Results}
Here we summarize the self-normalized vector martingale inequality used to establish the confidence ball for the least-squares estimator in a well-specified linear model,
\begin{align}
    r_{t, p} = \x_{t, p}^\top \thetastar + \xi_{t, p}. \label{eq:lin_bandit}
\end{align}
Here $\xi_{t,p}$ is an i.i.d. noise process.

\begin{theorem}\label{thm:conf_ellipse_linucb}[Theorem 1 in \cite{abbasi2011improved}] 
For all $t \in \mathbb{N}$:
\begin{equation*}
    \norm{\hbtheta_{t}-\thetastar}_{\V_{t,1}} \leq \sqrt{\beta_t(\delta)}
\end{equation*}
with probability at least $1-\delta$. Moreover, on this event by definition,
\begin{equation*}
    \thetastar \in \cC(\hbtheta_{t}, \V_{t,1}, \beta_t(\delta)).
\end{equation*}
\end{theorem}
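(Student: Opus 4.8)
The claim is exactly the ridge-regression confidence-ellipsoid bound of \cite{abbasi2011improved}, and the plan is to check that our parallel data stream meets its hypotheses and then recall the three ingredients of that proof. First I would reduce to the purely sequential self-normalized setting: ordering the index pairs $(a,b)$ lexicographically, the action $\x_{a,b}$ is chosen from $\cX_{a,b}$ using only the information in $\cF_{a,b-1}$ and is therefore predictable, while by \cref{assump:noise} the noise $\xi_{a,b}$ is conditionally $R$-subgaussian given $\cF_{a,b-1}$ and $\x_{a,b}$. Thus $r_{a,b}=\x_{a,b}^\top\thetastar+\xi_{a,b}$ is an ordinary linear model driven by a martingale-difference noise sequence, and the parallel/batch structure enters only through the (lexicographically ordered) index set over which $\V_{t,1}$ accumulates, namely all queries from rounds $1,\dots,t-1$.

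Next I would unroll the ridge estimator. Writing $\mS_t:=\sum_{a=1}^{t-1}\sum_{b=1}^{P}\x_{a,b}\,\xi_{a,b}$ and using $\hbtheta_t=\V_{t,1}^{-1}\sum_{a<t,b}\x_{a,b}r_{a,b}$, substitution of $r_{a,b}=\x_{a,b}^\top\thetastar+\xi_{a,b}$ gives $\hbtheta_t-\thetastar=\V_{t,1}^{-1}(\mS_t-\lambda\thetastar)$, so by the triangle inequality in the $\V_{t,1}$-norm, $\V_{t,1}\succeq\lambda\I$, and $\norm{\thetastar}\le S$,
\begin{equation*}\norm{\hbtheta_t-\thetastar}_{\V_{t,1}}\le\norm{\mS_t}_{\V_{t,1}^{-1}}+\lambda\norm{\thetastar}_{\V_{t,1}^{-1}}\le\norm{\mS_t}_{\V_{t,1}^{-1}}+\sqrt{\lambda}\,S.\end{equation*}
Everything then reduces to controlling the self-normalized term $\norm{\mS_t}_{\V_{t,1}^{-1}}$.

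For that term I would use the method of mixtures. For each fixed $\x\in\mR^d$, the process $\exp\!\big(\la\x,\mS_t\ra-\tfrac{R^2}{2}\,\x^\top(\V_{t,1}-\lambda\I)\x\big)$ is a nonnegative supermartingale with unit initial value, by the conditional $R$-subgaussianity of each $\xi_{a,b}$. Integrating it against the Gaussian mixing measure $\x\sim\cN(\mathbf 0,(R^2\lambda)^{-1}\I)$ and evaluating the resulting Gaussian integral produces a nonnegative supermartingale equal to $(\lambda^d/\det\V_{t,1})^{1/2}\exp\!\big(\tfrac{1}{2R^2}\norm{\mS_t}_{\V_{t,1}^{-1}}^2\big)$; a stopped-supermartingale argument (needed to obtain a statement simultaneous over all $t$) together with Markov's inequality then yields, with probability at least $1-\delta$ and for every $t$,
\begin{equation*}\norm{\mS_t}_{\V_{t,1}^{-1}}^2\le 2R^2\log\!\Big(\tfrac{\det(\V_{t,1})^{1/2}}{\lambda^{d/2}\,\delta}\Big)=R^2\log\!\Big(\tfrac{\det(\V_{t,1})}{\lambda^d\,\delta^2}\Big).\end{equation*}
Combining this with the previous display and the definition of $\sqrt{\beta_t(\delta)}$ in \cref{eq:conf_ellipse2} gives $\norm{\hbtheta_t-\thetastar}_{\V_{t,1}}\le\sqrt{\beta_t(\delta)}$ on this event; the coarser explicit bound follows from $\det\V_{t,1}\le(\lambda+tPL^2/d)^d$ (AM--GM on the eigenvalues, since $\V_{t,1}$ is $\lambda\I$ plus at most $tP$ rank-one terms of squared norm $\le L^2$ by \cref{assump:data}). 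The inclusion $\thetastar\in\cC(\hbtheta_t,\V_{t,1},\beta_t(\delta))$ is then immediate from the definition of $\cC$ (with $\epsilon=0$, since \cref{eq:lin_bandit} is well specified).

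The main obstacle is the method-of-mixtures / pseudo-maximization step: verifying that the mixed process is a supermartingale, carrying out the Laplace (Gaussian) integration in closed form, and using the stopping-time construction to upgrade a fixed-$t$ tail bound to one uniform in $t$. This is precisely the content of \cite{abbasi2011improved}, so in practice I would invoke it rather than reprove it; the filtration check of the first step and the algebra of the second are routine.
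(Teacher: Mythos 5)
Your proposal is correct and matches the paper, which supplies no proof of its own but simply invokes the self-normalized confidence-ellipsoid result of \citet{abbasi2011improved}; your filtration check (lexicographic ordering makes each $\x_{a,b}$ predictable and each $\xi_{a,b}$ conditionally $R$-subgaussian, so the parallel stream is just a sequential one of length $(t-1)P$) is exactly the observation needed to apply that theorem here, and your sketch of its internals (ridge unrolling, method of mixtures, stopped supermartingale, AM--GM determinant bound) is the standard argument. No gaps.
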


We can now prove a generalization of this result which applies to the analysis of the lazy LinUCB algorithm in a well-specified model.

\begin{theorem}\label{thm:conf_ellipse_lazy_linucb}

Let $N \subseteq [T]$ be the set of rounds which are not doubling rounds (see \cref{cond:critical_inequality}). Then, 
\begin{equation*}
    \forall t \in N, \quad \norm{\hbtheta_{t}-\thetastar}_{\V_{t,p}} \leq \sqrt{2} \sqrt{\beta_t(\delta)}
\end{equation*}
with probability at least $1-\delta$. Moreover, on this event by definition,
\begin{align*}
    \thetastar \in \cC(\hbtheta_{t}, \V_{t, p}, 2 \beta_t(\delta)).
\end{align*}
\end{theorem}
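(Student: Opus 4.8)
The plan is to reduce the statement to Theorem~\ref{thm:conf_ellipse_linucb} via two structural facts. First, the regularized least-squares estimator $\hbtheta_t$ in \cref{eq:ls_estimator} depends only on the data collected during rounds $1,\dots,t-1$; it does not see the intra-round actions $\x_{t,1},\dots,\x_{t,P}$, so the \emph{center} of the confidence ellipsoid is frozen as the processor index $p$ sweeps through round $t$, and only the shape matrix $\V_{t,p}$ changes. Second, on a round that is not a doubling round, the matrices $\V_{t,1}$ and $\V_{t,p}$ are comparable in the Loewner order, so the associated norms differ by at most a factor $\sqrt 2$. Combining these reduces the whole claim to a one-line positive-semidefinite manipulation on top of the standard self-normalized martingale bound.

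Concretely, I would first invoke Theorem~\ref{thm:conf_ellipse_linucb}, which furnishes a single event $\cE$ with $\Pr(\cE)\ge 1-\delta$ on which $\norm{\hbtheta_t-\thetastar}_{\V_{t,1}}\le\sqrt{\beta_t(\delta)}$ holds simultaneously for \emph{every} $t\in\mathbb{N}$. Since the index set $N$ of non-doubling rounds is random but measurable with respect to the history, it suffices to argue deterministically on $\cE$: fix a realization in $\cE$, then fix $t\in N$ and $p\in[P]$. By construction of \cref{algo:par_lazy_linucb}, a non-doubling round is precisely one on which the IF branch executes, so $\x_{t,p}\leftarrow\y_{t,p}$ for all $p$ and the test $\tV_{t+1,1}=\V_{t,1}+\sum_{k=1}^{P}\y_{t,k}\y_{t,k}^\top\preceq 2\V_{t,1}$ passed. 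Because $p-1\le P$ and each $\y_{t,k}\y_{t,k}^\top\succeq 0$, dropping the tail terms gives $\V_{t,p}=\V_{t,1}+\sum_{k=1}^{p-1}\x_{t,k}\x_{t,k}^\top\preceq\tV_{t+1,1}\preceq 2\V_{t,1}$.

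Evaluating this Loewner inequality on the quadratic form at $w=\hbtheta_t-\thetastar$ yields $\norm{\hbtheta_t-\thetastar}_{\V_{t,p}}^2=w^\top\V_{t,p}w\le 2\,w^\top\V_{t,1}w=2\norm{\hbtheta_t-\thetastar}_{\V_{t,1}}^2\le 2\beta_t(\delta)$, i.e.\ $\norm{\hbtheta_t-\thetastar}_{\V_{t,p}}\le\sqrt{2\beta_t(\delta)}$, which is the first claim (and it holds on $\cE$, hence with probability at least $1-\delta$). The ``moreover'' part is then immediate from unwinding the definition of $\cC(\hbtheta_t,\V_{t,p},2\beta_t(\delta))$ in \cref{eq:conf_ellipse1} specialized to $\epsilon=0$: it is exactly the set of $\btheta$ with $\norm{\btheta-\hbtheta_t}_{\V_{t,p}}\le\sqrt{2\beta_t(\delta)}$, and $\thetastar$ has just been shown to lie in it.

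There is no serious obstacle; the only point requiring care is the bookkeeping that identifies the matrix controlled by the doubling-round test with the matrix $\V_{t,p}$ appearing in the statement — they coincide because on non-doubling rounds the played actions $\x_{t,k}$ equal the candidate actions $\y_{t,k}$, so one must be explicit about restricting attention to $t\in N$ before the comparison $\V_{t,p}\preceq 2\V_{t,1}$ can be used. Everything else is the self-normalized bound of \cite{abbasi2011improved}, and the factor $\sqrt 2$ traces back directly to the constant $2$ in \cref{cond:critical_inequality}.
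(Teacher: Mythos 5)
Your proof is correct and follows essentially the same route as the paper's: on a non-doubling round the critical covariance inequality gives $\V_{t,p}\preceq 2\V_{t,1}$, which converts the $\V_{t,p}$-norm bound into the $\V_{t,1}$-norm bound supplied by Theorem~\ref{thm:conf_ellipse_linucb}, yielding the $\sqrt{2}$ factor. Your additional bookkeeping (identifying the tested matrix with $\V_{t,p}$ because played actions equal candidates on non-doubling rounds) is a slightly more explicit rendering of what the paper leaves implicit, but the argument is the same.
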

\begin{proof}
Let $N \subseteq \mathbb{N}$ be the set of rounds which are not doubling rounds. Then if $t \in N$,
\begin{align*}
    \norm{\hbtheta_{t}-\thetastar}_{\V_{t, p}} \leq \sqrt{2} \norm{\hbtheta_{t}-\thetastar}_{\V_{t,1}}
\end{align*}
from the definition in \cref{cond:critical_inequality}. Hence,
\begin{align*}
    & \Pr[\norm{\hbtheta_{t}-\thetastar}_{\V_{t,p}} \geq \sqrt{2} \sqrt{\beta_t(\delta)} , t \in N] \leq \Pr[\norm{\hbtheta_{t}-\thetastar}_{\V_{t,1}} \geq \sqrt{\beta_t(\delta)}, t \in N] \leq \\
    & \Pr[\norm{\hbtheta_{t}-\thetastar}_{\V_{t,1}} \geq \sqrt{\beta_t(\delta)}, \forall t \in \mathbb{N}]
     \leq \delta.
\end{align*}
where the final inequality follows by \cref{thm:conf_ellipse_linucb}.

\end{proof}

Define $\mathcal{E}$ to the $1-\delta$ probability event defined in Theorem~\ref{thm:conf_ellipse_lazy_linucb}:
\begin{equation*}
    \mathcal{E} := \{     \forall t \in N, \quad \norm{\hbtheta_{t}-\thetastar}_{\V_{t,p}} \leq \sqrt{2} \sqrt{\beta_t(\delta)}  \} 
\end{equation*}

We are now in a position to prove generalizations of these results which provide valid confidence sets for the linear regression estimator in \textit{misspecified} models. In summary, the confidence sets are modified with a growing, additive correction to accommodate the bias arising from the misspecification.

\begin{theorem}\label{thm:conf_ellipse_misspec_linucb}
If the rewards are generated from a model satisfying \cref{assump:param}, then for all $t \in \mathbb{N}$:
\begin{equation*}
    \norm{\hbtheta_{t}-\thetastar}_{\V_{t,1}} \leq \sqrt{\beta_t(\delta)} + \sqrt{(t-1)P} \epsilon
\end{equation*}
with probability at least $1-\delta$. Moreover, on this event by definition,
\begin{equation*}
    \thetastar \in \cC(\hbtheta_{t}, \V_{t,1}, \beta_t(\delta), \epsilon).
\end{equation*}
\end{theorem}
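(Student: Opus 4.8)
## Proof Proposal

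The plan is to reduce the misspecified case to the well-specified case of \cref{thm:conf_ellipse_linucb} by treating the misspecification as an additional, bounded, deterministic perturbation to the observed rewards. First I would write $r_{a,b} = f(\x_{a,b}) + \xi_{a,b} = \x_{a,b}^\top \thetastar + \Delta_{a,b} + \xi_{a,b}$, where $\Delta_{a,b} := f(\x_{a,b}) - \x_{a,b}^\top \thetastar$ satisfies $|\Delta_{a,b}| \leq \epsilon$ by \cref{assump:param}. The least-squares estimator using data through round $t$ is $\hbtheta_t = \V_{t,1}^{-1} \sum_{a=1}^{t-1}\sum_{b=1}^{P} \x_{a,b} r_{a,b}$ (with the regularization absorbed into $\V_{t,1} = \lambda\I_d + \sum_{a<t,b}\x_{a,b}\x_{a,b}^\top$). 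Substituting the decomposition of $r_{a,b}$ and following the standard identity, we get
\begin{align*}
\hbtheta_t - \thetastar = \V_{t,1}^{-1}\Big(\sum_{a,b}\x_{a,b}\xi_{a,b}\Big) + \V_{t,1}^{-1}\Big(\sum_{a,b}\x_{a,b}\Delta_{a,b}\Big) - \lambda \V_{t,1}^{-1}\thetastar.
\end{align*}

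Next I would bound each piece in the $\V_{t,1}$-norm by the triangle inequality. The noise term plus the regularization term is exactly what \cref{thm:conf_ellipse_linucb} (Theorem 1 of \citet{abbasi2011improved}) controls: $\|\V_{t,1}^{-1}(\sum_{a,b}\x_{a,b}\xi_{a,b}) - \lambda\V_{t,1}^{-1}\thetastar\|_{\V_{t,1}} \leq \sqrt{\beta_t(\delta)}$ with probability at least $1-\delta$. For the bias term I would write $\|\V_{t,1}^{-1}\sum_{a,b}\x_{a,b}\Delta_{a,b}\|_{\V_{t,1}} = \|\sum_{a,b}\x_{a,b}\Delta_{a,b}\|_{\V_{t,1}^{-1}}$ and bound this deterministically. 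Writing $\bepsilon$ for the vector of $\Delta_{a,b}$ over the $(t-1)P$ past observations and $\X$ for the design matrix whose rows are $\x_{a,b}^\top$, this is $\|\X^\top\bepsilon\|_{\V_{t,1}^{-1}} = \sqrt{\bepsilon^\top \X\V_{t,1}^{-1}\X^\top\bepsilon} \leq \|\bepsilon\|_2 \cdot \|\X\V_{t,1}^{-1}\X^\top\|^{1/2}$. Since $\V_{t,1} = \lambda\I_d + \X^\top\X \succeq \X^\top\X$, a standard computation gives $\X\V_{t,1}^{-1}\X^\top \preceq \I$ (equivalently, the eigenvalues of $\X(\lambda\I+\X^\top\X)^{-1}\X^\top$ are of the form $\sigma_i^2/(\lambda+\sigma_i^2) \leq 1$), so $\|\X\V_{t,1}^{-1}\X^\top\| \leq 1$. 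Combined with $\|\bepsilon\|_2 \leq \sqrt{(t-1)P}\,\epsilon$, this yields the bias bound $\sqrt{(t-1)P}\,\epsilon$.

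Adding the two bounds via the triangle inequality gives $\|\hbtheta_t - \thetastar\|_{\V_{t,1}} \leq \sqrt{\beta_t(\delta)} + \sqrt{(t-1)P}\,\epsilon$ on the $1-\delta$ event, which holds for all $t \in \mathbb{N}$ simultaneously (since the martingale bound in \cref{thm:conf_ellipse_linucb} is any-time). The membership $\thetastar \in \cC(\hbtheta_t,\V_{t,1},\beta_t(\delta),\epsilon)$ is then immediate from the definition of $\cC$ in \cref{eq:conf_ellipse1}. I do not anticipate a genuine obstacle here; the only point requiring mild care is the operator-norm bound $\|\X\V_{t,1}^{-1}\X^\top\|\leq 1$, which could alternatively be argued by noting $\bepsilon^\top\X\V_{t,1}^{-1}\X^\top\bepsilon \leq \bepsilon^\top\X(\X^\top\X)^{-1}\X^\top\bepsilon \leq \|\bepsilon\|_2^2$ using that $\X(\X^\top\X)^{+}\X^\top$ is an orthogonal projection, together with monotonicity of the map $M \mapsto \X M^{-1}\X^\top$ under the PSD order. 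One should also double-check the indexing convention so that $(t-1)P$ is indeed the number of past observations entering $\hbtheta_t$ and $\V_{t,1}$, consistent with \cref{eq:ls_estimator} and \cref{eq:conf_ellipse1}.
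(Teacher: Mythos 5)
Your proposal is correct and follows essentially the same route as the paper: decompose the error into the well-specified part, controlled by the self-normalized bound of \citet{abbasi2011improved}, plus a misspecification bias term bounded deterministically by $\sqrt{(t-1)P}\,\epsilon$. The only difference is that you prove the projection bound $\|\X^\top\bepsilon\|_{\V_{t,1}^{-1}}\leq\|\bepsilon\|_2$ directly via the eigenvalues $\sigma_i^2/(\lambda+\sigma_i^2)\leq 1$, whereas the paper cites it from \citet[Lemma 8]{zanette2020learning}; your argument is a valid self-contained substitute.
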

\begin{proof}
    The argument uses a bias-variance decomposition. First, define the linearized reward $\tlr_{a,b} = \x_{a,b}^\top \thetastar + \xi_{a,b}$ and linear estimator using these rewards as $\ttheta = \V_{t, 1}^{-1} (\sum_{a=1}^{t-1} \sum_{b=1}^{P}  \x_{a,b} \tlr_{a,b})$. By definition, $r_{a,b}-\tlr_{a,b} = \epsilon_{a,b}$ (which all satisfy $\abs{\epsilon_{a,b}} \leq \epsilon$ uniformly for all $\x_{a,b}$ by assumption). Then, 
\begin{align*}
    & \hbtheta_{t} - \thetastar = \ttheta - \thetastar = \ttheta - \thetastar + \V_{t, 0}^{-1} (\sum_{a=1}^{t-1} \sum_{b=1}^{P}  \x_{a,b} \epsilon_{a, b}) \implies \\
    & \norm{\hbtheta_{t}-\thetastar}_{\V_{t,1}} \leq \norm{\ttheta_{t}-\thetastar}_{\V_{t,1}} + \norm{\sum_{a=1}^{t-1} \sum_{p=1}^{P} \x_{a,b} \epsilon_{a,b}}_{\V^{-1}_{t,1}}
\end{align*}
The first term can be bounded by $\sqrt{\beta_t(\delta)}$ with probability $1-\delta$ exactly by using \cref{thm:conf_ellipse_linucb}. Using the projection bound in \citet[Lemma 8]{zanette2020learning} it follows that, 
\begin{align}
    \norm{\sum_{a=1}^{t-1} \sum_{p=1}^{P} \x_{a,b} \epsilon_{a,b}}_{\V^{-1}_{t,1}} \leq \sqrt{(t-1)P} \epsilon
\end{align}
since $\abs{\epsilon_{a,b}} \leq \epsilon$ uniformly for all $a,b$.

\end{proof}
The analogue for the lazy confidence set follows similarly,
\begin{theorem}\label{thm:conf_ellipse_misspec_lazy_linucb}
Let $N \subseteq [T]$ be the set of rounds which are not  doubling rounds (see \cref{cond:critical_inequality}). Then, 
If the rewards are generated from a model satisfying \cref{assump:param}, for all $t \in N$:
\begin{equation*}
    \norm{\hbtheta_{t}-\thetastar}_{\V_{t,p}} \leq \sqrt{2}(\sqrt{\beta_t(\delta)} + \sqrt{(t-1)P} \epsilon)
\end{equation*}
with probability at least $1-\delta$. Moreover, on this event by definition,
\begin{equation*}
    \thetastar \in \cC(\hbtheta_{t}, \V_{t,1}, 2 \beta_t(\delta), 2 \epsilon).
\end{equation*}
\end{theorem}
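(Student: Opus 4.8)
The plan is to obtain this as an immediate corollary of \cref{thm:conf_ellipse_misspec_linucb} combined with the defining property of a non-doubling round. All of the probabilistic work --- the self-normalized martingale bound controlling the variance part of $\hbtheta_t - \thetastar$ and the deterministic projection bound controlling the misspecification bias --- is already packaged inside \cref{thm:conf_ellipse_misspec_linucb}, and crucially that theorem is stated only in terms of the ``base'' covariance $\V_{t,1}$, which does not depend on the intra-round action choices $\x_{t,1},\dots,\x_{t,p-1}$ and is therefore predictable in the natural filtration. The only new ingredient needed here is to pass from a bound in the $\V_{t,1}$-norm to one in the (larger) $\V_{t,p}$-norm, and \cref{cond:critical_inequality} provides exactly this on non-doubling rounds.

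In detail: let $\mathcal{E}_0$ be the event supplied by \cref{thm:conf_ellipse_misspec_linucb}, so $\Pr[\mathcal{E}_0]\ge 1-\delta$ and, on $\mathcal{E}_0$, the bound $\norm{\hbtheta_t-\thetastar}_{\V_{t,1}} \le \sqrt{\beta_t(\delta)} + \sqrt{(t-1)P}\,\epsilon$ holds simultaneously for every $t\in\mathbb{N}$. Work on $\mathcal{E}_0$ and fix any round $t\in N$. By definition of $N$ (\cref{cond:critical_inequality}) we have $\V_{t,1}\preceq\V_{t,p}\preceq 2\V_{t,1}$ for every $p\in[P]$, hence for $\z = \hbtheta_t-\thetastar$ we get $\norm{\z}_{\V_{t,p}}^2 = \z^\top\V_{t,p}\z \le 2\,\z^\top\V_{t,1}\z = 2\norm{\z}_{\V_{t,1}}^2$. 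Chaining this with the $\V_{t,1}$-bound valid on $\mathcal{E}_0$ gives $\norm{\hbtheta_t-\thetastar}_{\V_{t,p}} \le \sqrt{2}\bigl(\sqrt{\beta_t(\delta)} + \sqrt{(t-1)P}\,\epsilon\bigr)$, which is exactly the asserted inequality; and since it holds on the single event $\mathcal{E}_0$ for whichever rounds turn out to be non-doubling, it holds for all $t\in N$ with probability at least $1-\delta$. The ``moreover'' is then just a restatement: the displayed bound says precisely that $\hbtheta_t - \thetastar$ has $\V_{t,p}$-norm (and a fortiori, since $\V_{t,1}\preceq\V_{t,p}$, also $\V_{t,1}$-norm) within the inflated radius, i.e.\ $\thetastar \in \cC(\hbtheta_t,\V_{t,1},2\beta_t(\delta),2\epsilon)$ by the definition in \cref{eq:conf_ellipse1}.

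I do not anticipate a genuine obstacle; the one point deserving a sentence of care is that $N$ is a random index set, so the statement should be read as ``on the probability-$(1-\delta)$ event $\mathcal{E}_0$, the bound holds for every round that turns out to be non-doubling,'' rather than fixing $t$ in advance and conditioning on $\{t\in N\}$ --- and since $\mathcal{E}_0$ already delivers the $\V_{t,1}$-bound uniformly over all $t$, no further union bound or measurability argument is needed. (Running the identical argument with \cref{thm:conf_ellipse_linucb} in place of \cref{thm:conf_ellipse_misspec_linucb} recovers the well-specified lazy bound of \cref{thm:conf_ellipse_lazy_linucb}.)
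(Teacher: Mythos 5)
Your proposal is correct and follows essentially the same route as the paper: both use the critical covariance inequality on non-doubling rounds to pass from the $\V_{t,1}$-norm to the $\V_{t,p}$-norm at the cost of a $\sqrt{2}$ factor, and then invoke the misspecified confidence bound for $\V_{t,1}$ (the paper re-derives it inline via the linearized estimator $\ttheta$ and \cref{thm:conf_ellipse_linucb}, whereas you cite \cref{thm:conf_ellipse_misspec_linucb} as a black box, which is the same content). Your remark on the measurability of the random index set $N$ is a welcome clarification that the paper handles implicitly by placing the event $\{t\in N\}$ inside the probability.
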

\begin{proof}
First, if $t \in N$,
\begin{align*}
    \norm{\hbtheta_{t}-\thetastar}_{\V_{t, p}} \leq \sqrt{2} \norm{\hbtheta_{t}-\thetastar}_{\V_{t, 1}}
\end{align*}
from the definition in \cref{cond:critical_inequality}. The remainder of the argument follows as in the previous result,
\begin{align*}
    & \Pr[\norm{\hbtheta_{t}-\thetastar}_{\V_{t, p}} \geq \sqrt{2}( \sqrt{\beta_t(\delta)} + \sqrt{(t-1)P} \epsilon) , t \in N] \leq \Pr[\norm{\hbtheta_{t}-\thetastar}_{\V_{t,1}} \geq \sqrt{\beta_t(\delta)} + \sqrt{(t-1)P} \epsilon, t \in N] \leq \\
    & \Pr[\norm{\ttheta_{t}-\thetastar}_{\V_{t,1}} \geq \sqrt{\beta_t(\delta)}, \forall t \in \mathbb{N}]
     \leq \delta.
\end{align*}
where the final inequality follows by \cref{thm:conf_ellipse_linucb}, since $\ttheta = \V_{t, 1}^{-1} (\sum_{a=1}^{t-1} \sum_{b=1}^{P}  \x_{a,b} \tlr_{a,b})$ is the estimator utilizing the linearized rewards.
\end{proof}

Next we recall the elliptical potential lemma which control the volumetric growth of the space spanned by sequence of covariance matrices.
\begin{lemma}\label{lem::ellip_potential}[Lemma 19.4 in \citet{lattimore_szepesvari_2020}] 
Let $\V_0 \in \mR^{d \times d}$ be a positive-definite matrix, $\x_1, \hdots, \x_n \in \mR^d$ be a sequence of vectors with $\norm{\x_i} \leq L$ for all $i \in [n]$, and $\V_n = \V_0 + \sum_{s \leq n} \x_s \x_s^\top$. Then,
\begin{equation*}
    \sum_{s=1}^n \log(1+\| \x_s \|^2_{\V^{-1}_{s-1}}) = \log \left (\frac{\det \V_n}{\det \V_0} \right) 
\end{equation*}
\begin{equation*}
    \log \left( \frac{\det \V_n}{\det \V_0} \right) \leq d \log \left(\frac{\tr(\V_0) + n L^2}{d} \right) - \log( \det \V_0 )
\end{equation*}
\end{lemma}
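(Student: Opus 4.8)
The plan is to establish the two displays in turn; both are elementary, and the argument follows \citet[Lemma 19.4]{lattimore_szepesvari_2020}. First I would prove the exact identity using the rank-one update structure $\V_s = \V_{s-1} + \x_s\x_s^\top$ together with the matrix determinant lemma: for invertible $\mat{A}$ and a vector $\u$, $\det(\mat{A} + \u\u^\top) = \det(\mat{A})\,(1 + \u^\top \mat{A}^{-1}\u)$ (which is just $\det(\I + \mat{M}\mat{N}) = \det(\I + \mat{N}\mat{M})$ applied to $\mat{M} = \mat{A}^{-1/2}\u$, $\mat{N} = \u^\top \mat{A}^{-1/2}$). Applying this with $\mat{A} = \V_{s-1}$ and $\u = \x_s$ gives $\det\V_s = \det\V_{s-1}\,(1 + \|\x_s\|^2_{\V_{s-1}^{-1}})$. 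Taking logarithms of both sides and telescoping over $s = 1,\dots,n$ collapses the sum to $\log\big(\det\V_n/\det\V_0\big) = \sum_{s=1}^n \log\big(1 + \|\x_s\|^2_{\V_{s-1}^{-1}}\big)$, which is the claimed identity.

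For the second display I would pass to the eigenvalues $\lambda_1,\dots,\lambda_d > 0$ of $\V_n$ (positive definite since $\V_0 \succ 0$ and each increment is positive semidefinite). The AM--GM inequality gives $\det\V_n = \prod_{i=1}^d \lambda_i \le \big(\tfrac1d\sum_{i=1}^d \lambda_i\big)^d = \big(\tr(\V_n)/d\big)^d$. The trace is bounded via $\tr(\V_n) = \tr(\V_0) + \sum_{s=1}^n \tr(\x_s\x_s^\top) = \tr(\V_0) + \sum_{s=1}^n \|\x_s\|^2 \le \tr(\V_0) + nL^2$, using $\|\x_s\| \le L$. Combining the two bounds yields $\det\V_n \le \big((\tr(\V_0) + nL^2)/d\big)^d$; taking logarithms and subtracting $\log\det\V_0$ produces the stated inequality for $\log(\det\V_n/\det\V_0)$.

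There is no substantive obstacle here. The only points meriting a word of care are (i) verifying that every $\V_{s-1}$ appearing in the telescoping step is invertible, so that the inverses and the $\|\cdot\|_{\V_{s-1}^{-1}}$ norms are well-defined---this is immediate since $\V_0 \succ 0$ and each $\x_s\x_s^\top \succeq 0$, hence $\V_{s-1} \succeq \V_0 \succ 0$; and (ii) ensuring the scalar factor extracted from the determinant lemma is exactly $1 + \x_s^\top\V_{s-1}^{-1}\x_s = 1 + \|\x_s\|^2_{\V_{s-1}^{-1}}$, matching the summand. Everything else is routine bookkeeping.
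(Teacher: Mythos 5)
Your proof is correct and is exactly the standard argument from the cited source (Lemma 19.4 of Lattimore--Szepesv\'ari), which the paper itself does not reprove: the matrix determinant lemma plus telescoping for the identity, and AM--GM on the eigenvalues together with the trace bound $\tr(\V_n)\le \tr(\V_0)+nL^2$ for the inequality. Nothing is missing.
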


Finally, we state a prove a simple fact from linear algebra.

\begin{lemma}
If $\A \succeq \B \succ 0$,
\begin{align*}
    \forall \x \neq 0 \in \mR^d, \quad \frac{\x^\top \A \x}{\x^\top \B \x} \leq \frac{\det(\A)}{\det(\B)}.
\end{align*}
\label{lem:psd_det}
\end{lemma}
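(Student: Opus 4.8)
\textbf{Proof proposal for \cref{lem:psd_det}.}
The plan is to reduce to the case $\B = \I$ by a congruence transformation, and then observe that for a matrix dominating the identity the Rayleigh quotient is controlled by the determinant. Concretely, since $\B \succ 0$ it has a positive-definite square root $\B^{1/2}$; I would set $\y = \B^{1/2}\x$ (so $\y \neq 0$ whenever $\x \neq 0$) and $\M = \B^{-1/2}\A\B^{-1/2}$. Then the left-hand side rewrites cleanly as a Rayleigh quotient of $\M$:
\begin{equation*}
\frac{\x^\top \A \x}{\x^\top \B \x} = \frac{\y^\top \M \y}{\y^\top \y} \leq \lambda_{\max}(\M),
\end{equation*}
where $\lambda_{\max}(\M)$ denotes the largest eigenvalue of the symmetric PSD matrix $\M$.

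Next I would use the hypothesis $\A \succeq \B$: conjugating by $\B^{-1/2}$ preserves the Loewner order, so $\M = \B^{-1/2}\A\B^{-1/2} \succeq \B^{-1/2}\B\B^{-1/2} = \I$. Hence every eigenvalue of $\M$ satisfies $\lambda_i(\M) \geq 1$. Therefore
\begin{equation*}
\det(\M) = \prod_{i=1}^d \lambda_i(\M) = \lambda_{\max}(\M)\prod_{i \neq i^\star}\lambda_i(\M) \geq \lambda_{\max}(\M),
\end{equation*}
using that the remaining factors are each at least $1$. Finally, multiplicativity of the determinant gives $\det(\M) = \det(\B^{-1/2})\det(\A)\det(\B^{-1/2}) = \det(\A)/\det(\B)$, and chaining the three displays yields the claim.

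There is no serious obstacle here — it is an elementary fact. The only point that requires a moment's thought is choosing the right transformation: conjugating by $\B^{-1/2}$ is what simultaneously normalizes the denominator of the Rayleigh quotient to $\y^\top\y$ and turns the determinant ratio $\det(\A)/\det(\B)$ into a single determinant $\det(\M)$, after which the eigenvalue bookkeeping ($\lambda_i \geq 1$ so $\lambda_{\max} \leq \prod_i \lambda_i$) closes the argument. One should also note the degenerate-free-ness: $\B \succ 0$ ensures $\B^{1/2}$ is invertible so the substitution is a bijection, and $\det(\B) > 0$ so the right-hand side is well defined.
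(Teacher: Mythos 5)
Your proof is correct and follows essentially the same route as the paper's: conjugate by $\B^{-1/2}$ to reduce the Rayleigh quotient to the operator norm of $\M = \B^{-1/2}\A\B^{-1/2}$, note $\M \succeq \I$ so all eigenvalues are at least $1$, and conclude $\lambda_{\max}(\M) \leq \det(\M) = \det(\A)/\det(\B)$. Your write-up is in fact slightly more explicit about the eigenvalue bookkeeping than the paper's.
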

\begin{proof}[Proof of \cref{lem:psd_det}]
To this end, let $\x = \B^{-1/2} \y$ for $\y \in \mR^d$. Then note that,
\begin{align*}
    \sup_{\x \neq 0} \frac{\x^\top \A \x}{\x^\top \B \x} = \norm{\B^{-1/2} \A \B^{-1/2}}_2
\end{align*}
by the definition of the operator norm. Similarly, we can rewrite $\frac{\det(\A)}{\det(\B)} = \det(\B^{-1/2} \A \B^{-1/2})$.
The claim then follows because all the eigenvalues of $\B^{-1/2} \A \B^{-1/2}$ are $\geq 1$ (note $\inf_{\x : \norm{\x}_2=1} \x^\top \B^{-1/2} \A \B^{-1/2} \x = 1 + \x^\top \B^{-1/2} (\A-\B) \B^{-1/2} \x \geq 1$ since $\A \succeq \B$).
\end{proof}

\section{Proofs in \cref{sec:rich_context}}

We now present the proof of  \cref{cor:rich_explore_par_linucb_regret}.

\begin{proof}[Proof of \cref{cor:rich_explore_par_linucb_regret}]
As a consequence of \cref{thm:regret_par_linucb} we have for any choice of doubling round routine that the regret of \cref{algo:par_linucb} and \cref{algo:par_lazy_linucb} obeys,
\begin{align*}
    \Ind[\cE] \cR(T, P) \leq \Ind[\cE] 8 \cdot \left (\underbrace{LS P\sum_{t=1}^T \Ind[\cD_{t}]}_{A} + \underbrace{\sqrt{TP} \max(\sqrt{\beta_{T}(\delta)} + \sqrt{TP} \epsilon, LS) \cdot \sqrt{d \log\left( 1+\frac{TP L^2}{\lambda} \right)} + \epsilon TP}_{B} \right)
\end{align*}
where $\sqrt{\beta_{T}(\delta)} \leq R \sqrt{ d \log \left(\frac{1+ TPL^2/\lambda}{\delta} \right) } + \sqrt{\lambda} S$. This inequality holds on the event $\cE$ which occurs with probability at least $1-\delta$ for both \cref{algo:par_linucb} and \cref{algo:par_lazy_linucb}. Now we introduce an additional event $\cG = \{ \sum_{t=1}^{T} \Ind[\cD_t] \geq \lceil \thresh \rceil \}$. 
Then we can consider two cases,
\begin{itemize}
    \item First, 
    \begin{align*}
        \Ind[\cG^c] (A + B) \leq \Ind[\cG^c]  \left(LS \left\lceil \thresh \right\rceil + B \right) 
    \end{align*}
    simply by definition of the event.
    \item Second, by definition of the $\V_{t,p}$ and $\cG$ all rounds which are not doubling rounds (denoted by the set $N$) lead to randomly generated covariates being used to estimate the covariance. Let $r_1 \in N$ be the first doubling round to exceed the threshold $\lceil \thresh \rceil$. We claim (with high probability) there can be no more doubling rounds after round $r_1$. This follows since first by
    \cref{lem:mat_conc} on the event $\cG$,
    \begin{align*}
        \Vert \sum_{a \in N} \sum_{p=1}^{P} \x_{a, p} \x_{a,p}^\top- (\mSigma_{\pi_{a,p}} + \mu_{a,p} \mu_{a,p}^\top)) \Vert_2 \leq \begin{cases}
        & \frac{1}{10} \pi_{\min}^2 \abs{N} P \quad \text{ when } \abs{N} > 1\\
        & \frac{1}{10} \pi_{\min}^2 P \quad \text{ when } \abs{N} = 1
        \end{cases}
    \end{align*}
    with probability at least $1-\delta$ (denote this further event $\cC$). This previous inequality follows by considering the two separate cases where $\abs{N} > 1$ and $\abs{N}=1$ respectively.
    Thus in any round $t>r_1$ we must have with probability $1-\delta$,
    \begin{align*}
        & \V_{t,1} \succeq \sum_{a \in N} \sum_{p=1}^{P} (\mSigma_{\pi_{a,p}} +\mu_{i,p} \mu_{i,p}^\top) - \frac{\pi_{\min}^2 \abs{N} P}{10} \I \succeq  \frac{9}{10} \abs{N}P \pi_{\min}^2 \I \quad \text{ when } \abs{N} > 1 \\
        & \V_{t,1} \succeq P (\mSigma_{\pi_{1,p}} +\mu_{1,p} \mu_{1,p}^\top)  - \frac{\ell^2  P}{10} \I \succeq  \frac{9}{10} P \ell^2 \I \quad \text{ when } \abs{N}=1
    \end{align*}
    using \cref{def:rich_exp_dist}. In summary for all $\abs{N} \geq 1$ we then have that 
    \begin{align*}
        \V_{t,1} \succeq  \frac{9}{10} \abs{N}P \pi_{\min}^2 \I.
    \end{align*}
    If, additionally it is the case that $\abs{N} \geq \lceil \frac{10}{9} \frac{\chi^2}{\pi_{\min}^2} \rceil$, let $r_2$ be the first round after which this occurs. Then it  follows that for all $t>\max(r_1, r_2)$, 
    \begin{align}
        & \Ind[\cC] (\sum_{p=1}^{P} \x_{t,p} \x_{t,p}^\top) \preceq \Ind[\cC] P\chi^2 \I \preceq \frac{9}{10} \abs{N} P \pi_{\min}^2 \I \preceq \V_{t,1},
    \end{align}
    so no $t > \max(r_1, r_2)$ can be a doubling round on this event. Concluding we have that,
    \begin{align*}
        \Ind[\cG] \Ind[\cC] A \leq LSP \cdot  \max(\lceil \thresh \rceil, \lceil \frac{10}{9} \frac{\chi^2}{\pi_{\min}^2} \rceil).
    \end{align*}
    Together, we obtain,
    \begin{align*}
        \Ind[\cG] \Ind[\cC] (A + B) \leq \Ind[\cG] \Ind[\cC] \left( LSP \cdot  \max(\lceil \thresh \rceil, \lceil \frac{10}{9} \frac{L^2}{\ell^2} \rceil) + B \right)
    \end{align*}
\end{itemize}
Assembling and summing these two cases, then shows that,
\begin{align*}
    \Ind[\cC] (A+B) \leq (  LSP \cdot  \max(\lceil \thresh \rceil, \lceil \frac{10}{9} \frac{\chi^2}{\ell^2} \rceil) + B).
\end{align*}
So it follows that, 
\begin{align*}
    & \Ind[\cE]\Ind[\cC] \cR(T, P) \leq \Ind[\cE]\Ind[\cC] \cdot 8 \cdot ( LSP \cdot  \max(\lceil \thresh \rceil, \lceil \frac{10}{9} \frac{\chi^2}{\ell^2} \rceil)  + \\
    & \sqrt{TP} \max(\sqrt{\beta_{T}(\delta)} + \sqrt{TP} \epsilon, LS) \cdot \sqrt{d \log\left( 1+\frac{TP \chi^2}{\lambda} \right)} + \epsilon TP )
\end{align*}
where both $\cE$ and $\cC$ hold with probability $1-\delta$. We can simplify the first term to,
\begin{align*}
    LSP \cdot  \max(\lceil \thresh \rceil, \lceil \frac{10}{9} \frac{\chi^2}{\ell^2} \rceil ) \leq \tlO(LSP \cdot(\thresh + 1 + \frac{\chi^2}{\ell^2} ) ) \leq \tlO( LS (\frac{L^4}{\ell^4}+P \frac{\chi^2}{\ell^2}))
\end{align*}

Hiding logarithmic factors, this implies that,
\begin{align*}
    \cR(T, P) \leq \tlO \left( R \left( (\frac{L^2 \norm{\Sigpi}}{\ell^4} + P \frac{\chi^2}{\ell^2}) \sqrt{\snr} + d \sqrt{TP} + \frac{\epsilon}{R} TP \right)\right) 
\end{align*}
with probability at least $1-2 \delta$.

An identical argument establishes the result for the Thompson sampling algorithms save with Thompson sampling regret $\cR(T, P)$ used in place of the LinUCB regrets in the previous argument.
\end{proof}

We now present the matrix concentration result we use,
\begin{lemma}
    Let \cref{def:rich_exp_dist} and \cref{assump:data} hold and consider $N$ i.i.d. copies of sets (with $P$ elements) sampled from \cref{algo:random_explore}, labeled as $\{ \x_{i, p} \}_{i=1, p=1}^{N, P}$. Then,
    \begin{align*}
        \Vert \frac{1}{NP} (\sum_{i=1}^{N} \sum_{p=1}^{P} \x_{i,p} \x_{i,p}^\top - (\mSigma_{\pi_{i,p}} + \mu_{i,p} \mu_{i,p}^\top)) \Vert_{2} \leq 12 \left( L \sqrt{\pi_{\max}^2}  \sqrt{\frac{\log(4d/\delta)}{NP}} + \frac{L^2 \log(4d/\delta)}{NP}  \right) 
    \end{align*}
    with probability at least $1-\delta$.
    \label{lem:mat_conc}
\end{lemma}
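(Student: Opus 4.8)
The plan is to read \cref{lem:mat_conc} as a textbook application of a matrix Bernstein inequality. Write $M_{i,p} := \mE[\x_{i,p}\x_{i,p}^\top]$ and, by the definitions of the population mean and covariance in \cref{def:rich_exp_dist}, note $M_{i,p} = \mSigma_{\pi_{i,p}} + \mu_{i,p}\mu_{i,p}^\top$, so that the quantity of interest is exactly $\tfrac{1}{NP}\,\big\|\sum_{i=1}^N\sum_{p=1}^P Z_{i,p}\big\|_2$ with $Z_{i,p} := \x_{i,p}\x_{i,p}^\top - M_{i,p}$. Each $Z_{i,p}$ is symmetric and mean-zero, and because the $N$ sets are drawn independently and \cref{algo:random_explore} draws each of its $P$ elements independently from the respective $\pi_{i,p}(\cdot)$, the family $\{Z_{i,p}\}_{i\in[N],\,p\in[P]}$ is independent. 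This is the only structural point to verify; once it is in place, no adaptivity needs to be handled and plain (non-martingale) matrix Bernstein applies.

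Next I would assemble the two inputs to the inequality. For the almost-sure bound, \cref{assump:data} gives $\|\x_{i,p}\|\le L$, hence $\|\x_{i,p}\x_{i,p}^\top\|_2\le L^2$ and $\|M_{i,p}\|_2\le L^2$, so $\|Z_{i,p}\|_2\le 2L^2$ surely. For the matrix variance, expand $Z_{i,p}^2 = \|\x_{i,p}\|^2\x_{i,p}\x_{i,p}^\top - \x_{i,p}\x_{i,p}^\top M_{i,p} - M_{i,p}\x_{i,p}\x_{i,p}^\top + M_{i,p}^2$; taking expectations and dropping the PSD term $M_{i,p}^2$ gives $\mE[Z_{i,p}^2] \preceq \mE[\|\x_{i,p}\|^2\x_{i,p}\x_{i,p}^\top] \preceq L^2 M_{i,p} = L^2(\mSigma_{\pi_{i,p}}+\mu_{i,p}\mu_{i,p}^\top)$, where the second inequality uses $\|\x_{i,p}\|^2\le L^2$ pointwise. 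Using $\mSigma_{\pi_{i,p}}\preceq\pi_{\max}^2\I$ from \cref{def:rich_exp_dist} and $\|\mu_{i,p}\|\le\mE[\|\x_{i,p}\|]\le L$, summing yields the variance proxy $\sigma^2 := \big\|\sum_{i,p}\mE[Z_{i,p}^2]\big\|_2 \le NP\,L^2(\pi_{\max}^2+L^2)$.

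Plugging $b = 2L^2$ and this $\sigma^2$ into the user-friendly matrix Bernstein tail bound (giving $\Pr(\|\sum_{i,p}Z_{i,p}\|_2 \ge u)\le 2d\exp(-\tfrac{u^2/2}{\sigma^2+bu/3})$) and solving for the $\delta$-level deviation gives $\|\sum_{i,p}Z_{i,p}\|_2 \lesssim \sqrt{\sigma^2\log(d/\delta)} + L^2\log(d/\delta)$ with probability at least $1-\delta$. Dividing through by $NP$ and splitting $\sqrt{\pi_{\max}^2+L^2}\le\sqrt{\pi_{\max}^2}+L$ produces a bound of the form $C\big(L\sqrt{\pi_{\max}^2}\,\sqrt{\log(4d/\delta)/(NP)} + L^2\log(4d/\delta)/(NP)\big)$ once the $L$-dependent lower-order contributions are collected into the last summand.

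The only real work is bookkeeping rather than ideas: carrying the universal constants through the Bernstein tail and the $\sqrt{a+b}$-split carefully enough that the final constant is $\le 12$, and folding the residual $L^2\sqrt{\log(4d/\delta)/(NP)}$ term (coming from the $\mu_{i,p}\mu_{i,p}^\top$ part of the variance) into the stated $L^2\log(4d/\delta)/(NP)$ scaling, which is harmless because \cref{lem:mat_conc} is only invoked with $NP\gtrsim\log(4d/\delta)$ and with substantial constant slack. I would regard this constant-tracking, together with pinning down which off-the-shelf matrix Bernstein statement is cited, as the main — but essentially routine — obstacle.
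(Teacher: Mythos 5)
There is a genuine gap, and it sits exactly where you wave your hands at the end. Applying matrix Bernstein directly to the uncentered summands $Z_{i,p}=\x_{i,p}\x_{i,p}^\top-\mE[\x_{i,p}\x_{i,p}^\top]$ forces the variance proxy $\sigma^2\le NP\,L^2(\pi_{\max}^2+L^2)$, because $\|\mE[\x_{i,p}\x_{i,p}^\top]\|_2=\|\mSigma_{\pi_{i,p}}+\mu_{i,p}\mu_{i,p}^\top\|_2$ can be as large as $\pi_{\max}^2+L^2$. After dividing by $NP$ and splitting $\sqrt{\pi_{\max}^2+L^2}\le\sqrt{\pi_{\max}^2}+L$, you are left with the term $L^2\sqrt{\log(4d/\delta)/(NP)}$, and your claim that this can be ``folded into'' $L^2\log(4d/\delta)/(NP)$ goes the wrong way: precisely in the regime $NP\gtrsim\log(4d/\delta)$ where the lemma is invoked, $x:=\log(4d/\delta)/(NP)\le1$ and hence $\sqrt{x}\ge x$, so the residual \emph{dominates} the second summand rather than being absorbed by it. Nor can it be absorbed into the first summand, since the interesting regime for \cref{def:rich_exp_dist} (e.g.\ the paper's example with $\pi_{\max}^2=\tlO(1/d)$ and $L=O(1)$) has $\pi_{\max}\ll L$. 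So your argument proves a strictly weaker bound with leading term $L^2\sqrt{\log(4d/\delta)/(NP)}$ in place of $L\sqrt{\pi_{\max}^2}\sqrt{\log(4d/\delta)/(NP)}$, which would also degrade the burn-in coefficient $\frac{L^2\pi_{\max}^2}{\pi_{\min}^4}$ in \cref{cor:rich_explore_par_linucb_regret} to roughly $\frac{L^4}{\pi_{\min}^4}$.

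The missing idea is a centering step, which is what the paper's proof does. Write $\x\x^\top-\mSigma-\mu\mu^\top=\bigl[(\x-\mu)(\x-\mu)^\top-\mSigma\bigr]+\bigl[\mu(\x-\mu)^\top+(\x-\mu)\mu^\top\bigr]$ and apply matrix Bernstein separately to each mean-zero sum. For the first sum the matrix variance is controlled by $\mE\bigl[\|\x-\mu\|^2(\x-\mu)(\x-\mu)^\top\bigr]\preceq 4L^2\mSigma_{\pi_{i,p}}\preceq 4L^2\pi_{\max}^2\I$, and for the cross term one similarly gets a variance of order $L^2\pi_{\max}^2$ using $\|\mu\|\le L$ and $\mE[(\x-\mu)(\x-\mu)^\top]=\mSigma_{\pi_{i,p}}\preceq\pi_{\max}^2\I$; in both cases the $L^2$-sized mean contribution that poisons your variance proxy never appears. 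Each piece then yields $O\bigl(L\sqrt{\pi_{\max}^2}\sqrt{\log(4d/\delta)/(NP)}+L^2\log(4d/\delta)/(NP)\bigr)$ with probability $1-\delta/2$, and a union bound gives the stated constant. Everything else in your write-up (independence of the $\{Z_{i,p}\}$, the almost-sure bound, the identity $\mE[\x\x^\top]=\mSigma+\mu\mu^\top$) is fine.
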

\begin{proof}
We first center the expression around its mean $\mE_{\pi_{i,p}}[\x_{i,p}]=\mu_{i,p}$. That is,
\begin{align*}
        & \Vert \frac{1}{NP} \sum_{i=1}^{N} \sum_{p=1}^{P} \x_{i,p} \x_{i,p}^\top - (\mSigma_{\pi_{i,p}} + \mu_{i,p} \mu_{i,p}^\top) \Vert_{2} \leq \\
        & \Vert \frac{1}{NP} \sum_{i=1}^{N} \sum_{p=1}^{P} (\x_{i,p}-\mu_{i,p})(\x_{i,p}-\mu_{i,p})^\top - \mSigma_{\pi_{i,p}} \Vert_{2} + 2 \Vert \frac{1}{NP} \sum_{i=1}^{N} \sum_{p=1}^{P} \mu_{i,p} (\x_{i,p}-\mu_{i,p})^\top -\mu_{i,p}\mu_{i,p}^\top \Vert_2
\end{align*}

We now apply the matrix Bernstein inequality to control this first term \citep[Theorem 1.6.2]{tropp2012user}. Note that for all $i$, $\norm{(\x_{i,p}-\mu_{i,p})(\x_{i,p}-\mu_{i,p})^\top} \leq 2L^2$ and the matrix variance is bounded by $\norm{\mE[(\x_{i,p}-\mu_{i,p})(\x_{i,p}-\mu_{i,p})^\top - \mSigma_{\pi_{i,p}})^2]}_2 \leq  \cdot \norm{\mE[\norm{\x_{i,p}-\mu}_2^2 (\x_{i,p}-\mu)(\x_{i,p}-\mu)^\top)]}_2 \leq 2L^2 \pi_{\max}^2$. Thus we obtain,
\begin{align}
     \Vert \frac{1}{NP} \sum_{i=1}^{N} \sum_{p=1}^{P} (\x_{i,p}-\mu_{i,p})(\x_{i,p}-\mu_{i,p})^\top - \mSigma_{\pi_{i,p}} \Vert_{2} \leq 4 \left( L \sqrt{\pi_{\max}^2}  \sqrt{\frac{\log(4d/\delta)}{NP}} + \frac{L^2 \log(4d/\delta)}{NP} \right)
\end{align}
with probability at least $1-\delta/2$.
For the second term note by Jensen's inequality that $\norm{\mu_{i,p}}_2 = \norm{\mE[\x_{i,p}]}_2 \leq \mE[\norm{\x_{i,p}}_2] \leq L$ since $\norm{\x_{i,p}}_2 \leq L$. An identical calculation before to bound almost surely bound this term and its matrix variance we have that,
\begin{align*}
    \Vert \frac{1}{NP} \sum_{i=1}^{N} \sum_{p=1}^{P} \mu_{i,p} (\x_{i,p}-\mu_{i,p})^\top -\mu_{i,p}\mu_{i,p}^\top \Vert_2 \leq 4 \left( L \sqrt{\pi_{\max}^2}  \sqrt{\frac{\log(4d/\delta)}{NP}} + \frac{L^2 \log(4d/\delta)}{NP} \right)
\end{align*}
by the matrix Bernstein inequality with probability at least $1-\delta/2$. Summing the terms and applying a union bound over the events on which they hold gives the result.
\end{proof}

\section{Proofs in \cref{sec:lb}}
Here we include the proof of the main lower bound.
\label{sec:app_lb}

\begin{proof}[Proof of \cref{thm:lb}]

The proof follows by first noting that in each of the instances claimed a single, fixed global context vector is used for all time and processors. Hence the parallel to sequential regret reduction established in \cref{prop:regret_reduction} is applicable. Thus it suffices to establish the lower bounds for $R(TP, 1)$ in lieu of $R(T, P)$ for the instances claimed. The first term/result is an immediate consequence of \cref{lem::lower_bound_sphere}. The second term/result we can obtain from \cref{lem:misspec_lower_bound}. For the validity of the results, we inherit the constraints $d \geq  \lceil 8 \log(m) L^2/\epsilon^2 \rceil$ and $S \geq \norm{\thetastar}_2 = \frac{\epsilon}{L} \sqrt{\frac{d-1}{8 \log(m)}} \implies \epsilon^2 \leq \frac{8 (LS)^2 \log(m)}{d-1}$. If we take $d=\lceil 8 \log(m)/\epsilon^2 \rceil$ then the second constraint reduces too $\epsilon^2 \leq \frac{8 (LS)^2 \log(m)}{8 \log(m)/\epsilon^2} \implies LS \geq 1$. The final component follows directly from \cref{lem:adv_context_lower_bound}.

\end{proof}

\subsection{Parallel to Sequential Reduction}
Here we establish the reduction from parallel regret to sequential regret when considering parallel linear bandits  where there is a single fixed action set/context set across all processors at a given time. So $\cX_{t,p} = \cX_t$ across all processors $p \in [P]$. Additionally, we assume as in the preamble that the reward of an action $\x$ is determined by $r = f(\x)+\epsilon$--so the law of the rewards is completely specified by a mean reward function $f$ and mean-zero noise distribution $\epsilon$.

We formalize the reduction from parallel to sequential bandits by first defining the canonical bandit environment. We consider the bandit instance to be indexed by the law of the rewards and the sequence of context sets.

\paragraph{Sequential}

In a model of purely sequential interaction we consider instances defined by two ingredients:
\begin{itemize}
    \item the conditional distribution of the policy $\pi_t(\cdot | \x_{i <t}, \cX_{i \leq t}, r_{i < t})$.
    \item the sequence of reward distributions $\nu_s \equiv \Pr_t(\cdot |  x_{i <t}, \cX_{i \leq t}, r_{i < t}, f) \equiv \Pr_{\x_t}(\cdot | f)$ for selected actions and the sequence of presented contexts $\cX_t$.
\end{itemize}

\paragraph{Parallel}
In a model of parallel interaction,
there are two ingredients:
\begin{itemize}
    \item the conditional distribution of policy $\psi_{t, p}(\cdot | x_{i,j <t, p}, \cX_{i \leq t}, r_{i,j < t,p})$. 
    \item the sequence of reward distributions $\nu_p \equiv \Pr_{t, p}(\cdot |  \x_{i,j < t, p}, \cX_{i \leq t}, r_{i, j < t, 1}) \equiv \Pr_{\x_{t,p}}(\cdot | f)$ for selected actions and sequence of selected contexts $\cX_t$ (which for fixed $t$ are equal across all $p \in [P]$).
\end{itemize}

To formalize the reduction we make the following claim:
\begin{proposition}
If we consider the lexicographic ordering for $t, p \in [T, P]$, then for any sequence of parallel policy-environment interactions with law $( \psi_{t,p}(\cdot), \Pr_{t,p}(\cdot | f) )$ and presented context sets $\cX_t$ (identical across $p \in [P]$), there exists a corresponding coupling to a purely sequential bandit environment $(\pi_{m}(\cdot), \Pr_m(\cdot | f))$ for $m \in [TP]$ and sequence of context sets $\cX_m = \cX_t$ for $m \in [tP, (t+1)P]$  with an identical distribution.
\label{prop:par_to_seq_reduction}
\end{proposition}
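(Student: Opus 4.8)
The plan is to exhibit an explicit, measure-preserving coupling between the parallel interaction and the sequential one obtained by ``unrolling'' the processor index. I would work on a common probability space carrying i.i.d.\ noise variables $\{\xi_{t,p}\}_{t\in[T],p\in[P]}$ drawn from the reward-noise law, together with i.i.d.\ uniform seeds $\{U_{t,p}\}$ that realize the internal randomness of the policies (so that drawing $\x_{t,p}\sim\psi_{t,p}(\cdot\mid\text{history})$ becomes applying a deterministic map to the history and $U_{t,p}$, the standard representation of a Markov kernel). Re-index lexicographically via $m=(t-1)P+p$ with $p\in[P]$, and define the sequential instance by: at time $m$ present the context set $\cX_m:=\cX_t$ --- well defined \emph{precisely because} the parallel instance uses a context common across processors within round $t$ --- use noise $\xi_m:=\xi_{t,p}$ and seed $U_m:=U_{t,p}$, and let the sequential policy $\pi_m$ be the map that, given the time-$m$ history $(\cX_{1:m},\x_{1:m-1},r_{1:m-1})$, first deletes the intra-super-round rewards $r_{(t-1)P+1},\dots,r_{m-1}$ (those unavailable in the batch model) and then applies $\psi_{t,p}$ to the reduced history. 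This $\pi_m$ is a bona fide sequential policy, since a sequential learner may always discard part of its observed history.

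Next I would prove by induction on $m$ (equivalently on $(t,p)$ in lexicographic order) that the two processes produce pathwise-identical action/reward sequences, i.e.\ $\x_m=\x_{t,p}$ and $r_m=r_{t,p}$ for all $(t,p)$. The base case is immediate. For the inductive step, the induction hypothesis makes the time-$m$ sequential history agree with the parallel history through $(t,p{-}1)$; after the deletion step it contains exactly what $\psi_{t,p}$ conditions on --- the contexts $\cX_{1:t}$, the actions $\x_{i,j}$ for $(i,j)$ preceding $(t,p)$, and the rewards $r_{i,j}$ with $i<t$ --- so applying $\psi_{t,p}$ with the shared seed $U_m$ gives $\x_m=\x_{t,p}$. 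The rewards then agree because both equal $f(\x_{t,p})+\xi_{t,p}$ with the same $f$ and the coupled noise $\xi_m=\xi_{t,p}$. Since the context sequences agree by construction, the interaction $(\cX_{1:t},\{\x_{t,p}\},\{r_{t,p}\})$ and its re-indexed sequential counterpart coincide almost surely, hence have the same law; marginalizing the auxiliary seeds yields the claimed distributional identity. As the corollary actually used in the proof of \cref{thm:lb}: because $\x^\star_{t,p}=\argmax_{\x\in\cX_t}f(\x)$ is likewise common across processors, the pathwise identity gives $\cR(T,P)=\cR(TP,1)$ on this coupling, so for any parallel algorithm there is a sequential algorithm over $TP$ rounds with the same regret distribution, and the minimax parallel regret (over context sequences fixed across processors) is at least the minimax sequential regret over $TP$ rounds.

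The one genuinely delicate point --- and the step I would be most careful about --- is the bookkeeping around the ``reduced-history'' map: one must verify that the $\sigma$-algebra generated by the data fed to $\psi_{t,p}$ is a sub-$\sigma$-algebra of the time-$m$ sequential history (nothing the parallel policy needs is missing; the sequential learner merely holds the \emph{extra}, unused intra-round rewards), and that the deletion map is measurable so $\pi_m$ is well defined. The noise coupling is routine, the per-round noise being i.i.d.\ with the same law in both models, and the policy-randomness coupling is the standard kernel-as-deterministic-map device. No concentration or analytic estimates enter; the entire content is in setting up the re-indexing and the information-containment relationship correctly.
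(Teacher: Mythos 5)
Your proposal is correct and follows essentially the same route as the paper: both define the sequential policy at lexicographic time $m$ to be the parallel policy $\psi_{t,p}$ applied after discarding the intra-round rewards it would not have seen, couple the policy/environment randomness so the action and reward sequences coincide pathwise, and proceed by induction over the blocks of length $P$. Your version is simply a more explicit rendering of the paper's sketch (introducing uniform seeds to realize the kernels and flagging the measurability of the history-deletion map), but the underlying argument is identical.
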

\begin{proof}
 We can construct the sequential environment inductively from the sequence of parallel interaction by coupling. To see this consider the first round of parallel interactions which are described by the measure,
    \begin{align*}
        \Pi_{p=1}^P \psi_{1, p}( \cdot |\x_{i, j < 1, p}, \cX_1, r_{i, j < 1, 1}) \Pr_{\x_{1,p}}(\cdot | f).
    \end{align*}
By defining the measure over sequential policy-environment interactions,
    \begin{align*}
        \Pi_{m=1}^P \pi_{m}( \cdot |\x_{a < m} \cX_m, r_{a < m}) \Pr_{\x_{m}}(\cdot | f)
    \end{align*}
we can set, $\pi_{m}( \cdot |\x_{a < m} \cX_{a<m}, r_{a < m}) = \psi_{1, p}( \cdot |\x_{i, j < 1, p} \cX_1, r_{i, j < 1, 1})$ by ignoring the conditioning on the further contexts and reward information in the sequential interaction to enforce equality of the policies. Since the policies are tamen to be identical, by coupling the randomness between the sequential and parallel policies/environments, the sequence of selected actions will be identical (this uses the fact the context sets in the parallel blocm and sequential interaction can be tamen equal). Inductively proceeding with the construction over the blocms of parallel interaction completes the argument.
\end{proof}

With this claim in hand the reduction follows since the expected regret of an algorithm over an environment (in expectation) is determined by the law of the policy-environment interactions.
Before beginning we first introduce the following notation for the expected parallel regret (further indexed by policy and environment,
\begin{equation}
    R_{\psi, \nu_p}(T, P) = \sum_{t=1}^T \sum_{p=1}^P \max_{\x \in \cX_t} \mu(\x) - \mE_{\psi, \nu_p}[\mu(\x)]
\end{equation}
where $\psi$ denotes the parallel policy and $\nu_p$ the parallel environment indexed by the mean reward and sequence of context sets. Similarly, the sequential regret can be defined analogously as,
\begin{equation}
    R_{\pi, \nu_s}(TP, 1) = \sum_{a=1}^{T P} \max_{\x \in \cX_a} \mu(\x) - \mE_{\pi, \nu_s}[\mu(\x)]
\end{equation}
where $\pi$ denotes the sequential policy and $\nu_s$ the sequential environment indexed by the mean reward and sequence of context sets which \textit{are fixed to be the same over consecutive blocms of length $P$}.

\begin{proposition}
\label{prop:regret_reduction}
Consider both a parallel bandit policy/environment class and sequential bandit policy/environment class as defined in \cref{prop:par_to_seq_reduction}. Then,
    \begin{align}
    \inf_{\psi} \sup_{\nu_p} R_{\psi, \nu_p}(T, P) \geq \inf_\pi \sup_{\nu_s} R_{\pi, \nu_s} (TP, 1)
    \end{align}
where the infima over $\psi$ is tamen place over the class of all parallel policies and the infima on the right hand side is tamen over the class of all sequential policies.
\end{proposition}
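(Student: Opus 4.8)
The plan is to exploit the fact that a sequential learner is strictly more powerful than a parallel one: it receives feedback after every single query, whereas the parallel learner must commit to a whole block of $P$ actions before seeing any of their rewards. Concretely, I would show that every parallel policy can be simulated by a sequential policy that simply \emph{discards} the extra within-block feedback it is entitled to, and then invoke the coupling of \cref{prop:par_to_seq_reduction} to transfer the regret value verbatim. It therefore suffices to prove: for every parallel policy $\psi$ there is a sequential policy $\pi$ with $\sup_{\nu_s} R_{\pi,\nu_s}(TP,1) \le \sup_{\nu_p} R_{\psi,\nu_p}(T,P)$; taking $\inf$ over $\pi$ on the left and then over $\psi$ gives the stated inequality.

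First I would fix an arbitrary parallel policy $\psi = \{\psi_{t,p}\}$ and, using the lexicographic identification $a = (t-1)P + p \in [TP]$, define a sequential policy $\pi = \{\pi_a\}$ by letting $\pi_a$ draw its action from $\psi_{t,p}(\cdot \mid \x_{i,j<t,p}, \cX_{i\le t}, r_{i,j<t,1})$ --- i.e.\ $\pi_a$ uses the actions already chosen within the current block $t$ together with \emph{only} the rewards observed in blocks $1,\dots,t-1$, ignoring the rewards $r_{(t-1)P+1},\dots,r_{a-1}$ it has in fact already received. This is a legitimate (possibly randomized) sequential policy because everything it conditions on is measurable with respect to the sequential learner's history, and randomized policies are coupled by sharing their internal randomness. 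Applying \cref{prop:par_to_seq_reduction} to this $\pi$, the joint law of $(\cX_a,\x_a,r_a)_{a\in[TP]}$ under $(\pi,\nu_s)$ --- with $\nu_s$ the sequential environment whose contexts satisfy $\cX_a = \cX_t$ on block $t$ and whose reward law matches that of $\nu_p$ --- coincides with the joint law of $(\cX_t,\x_{t,p},r_{t,p})_{t,p}$ under $(\psi,\nu_p)$.

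Since $R_{\psi,\nu_p}(T,P) = \sum_{t=1}^T\sum_{p=1}^P\bigl(\max_{\x\in\cX_t}\mu(\x) - \mE_{\psi,\nu_p}[\mu(\x_{t,p})]\bigr)$ and $R_{\pi,\nu_s}(TP,1) = \sum_{a=1}^{TP}\bigl(\max_{\x\in\cX_a}\mu(\x) - \mE_{\pi,\nu_s}[\mu(\x_a)]\bigr)$ are both functionals of this common law (using $\cX_a = \cX_t$ in the sequential sum to pair the terms), I would conclude the pointwise identity $R_{\pi,\nu_s}(TP,1) = R_{\psi,\nu_p}(T,P)$ for matched environments. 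Because the sequential minimax on the right-hand side ranges precisely over environments with contexts constant on blocks of length $P$, and every such environment is the image of some parallel environment under this matching, taking suprema gives $\sup_{\nu_s} R_{\pi,\nu_s}(TP,1) = \sup_{\nu_p} R_{\psi,\nu_p}(T,P)$. Then $\inf_{\pi'}\sup_{\nu_s} R_{\pi',\nu_s}(TP,1) \le \sup_{\nu_p} R_{\psi,\nu_p}(T,P)$ for every $\psi$, and taking the infimum over $\psi$ yields the claim.

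I expect the main obstacle to be purely a matter of careful bookkeeping rather than of substance: one must check that the feedback-discarding sequential policy is well-defined and measurable (in particular that $\psi_{t,p}$'s conditioning $\sigma$-algebra is genuinely a sub-$\sigma$-algebra of the sequential history), and that the coupling of \cref{prop:par_to_seq_reduction} can be arranged at the level of randomized policies so that action sequences --- and hence regrets --- agree pathwise, not just in the distribution of a single coordinate. Everything else follows from the elementary observation that the expected regret depends on the policy--environment pair only through the induced law on context/action/reward sequences.
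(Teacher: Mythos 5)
Your proposal is correct and follows essentially the same route as the paper: construct for each parallel policy $\psi$ the induced sequential policy that discards intra-block feedback, invoke the coupling of \cref{prop:par_to_seq_reduction} to get the pointwise regret identity, take suprema over the matched environment classes (contexts constant on blocks of length $P$), and finish by relaxing the infimum to the full class of sequential policies. The only difference is presentational—you make the feedback-discarding simulation explicit at the level of this proposition, whereas the paper folds it into the coupling construction of \cref{prop:par_to_seq_reduction}.
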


\begin{proof}
By the the preceding claim in \cref{prop:par_to_seq_reduction}
every pair of parallel  $(\psi, \nu_p)$ measures can be reproduced by a corresponding sequential measure $(\pi, \nu_s)$. Let the set of such induced sequential policies be $\cP$. So the pointwise inequality,
\begin{align}
        R_{\psi, \nu_p} (T, P) =  R_{\pi, \nu_s} (TP, 1)
\end{align}
holds by simply re-indexing the summation in lexigraphic order since the expectations are identical. Hence, for a fixed $\psi$ (and induced $\pi$) the equality also holds after taming a supremum over the same indexing set on both sides (parameterized by $f$ and $\cX_t$),
\begin{align}
        \sup_{\nu_p} R_{\psi, \nu_p} (T, P) =  \sup_{\nu_s} R_{\pi, \nu_s} (TP, 1)
\end{align}
Now taming an infima over the policy class $\psi$ and equivalent induced sequential policy class shows,
\begin{align}
\inf_{\psi} \sup_{\nu_p} R_{\psi, \nu_p} (T, P) = \inf_{\pi \in \cP} \sup_{\nu_s} R_{\pi, \nu_s} (TP, 1) \geq \inf_{\pi} \sup_{\nu_1} R_{\pi, \nu_s} (TP, 1)
\end{align}
by relaxing the final infima to tame place over the class of all sequential policies.
\end{proof}

\subsection{Unit Ball Lower Bound}

Here we record the lower bound over a fixed action set for the unit ball for a sequential bandit instance. The proof is an immediate generalization of~\citep[Theorem 24.2]{lattimore_szepesvari_2020} with the scales restored. Throughout this section we assume that the rewards are generated as,
\begin{align}
    r_{t,p} = \x_{t,p}^\top \thetastar + \xi_{t, p}
\end{align}
where $\xi_{t,p} \sim \cN(0, R^2)$.

\begin{lemma}\label{lem::lower_bound_sphere}
Let the fixed action set $\cX  = \{ \x \in \mathbb{R}^d : \| \x \|_2 \leq L\} $ and parameter set $\Theta = \{ \pm \Delta \}^d$ for $\Delta = \frac{R\sqrt{d}}{3\sqrt{2T}L}$. If $T \geq d \max(1, \frac{1}{3 \sqrt{2} \sqrt{\snr}})$, then for any policy $\pi$, there is a vector $\thetastar \in \Theta$ such that: 
\begin{equation*}
     R_{\pi, (\cX, \btheta)}(T, 1) \geq  \Omega\left(Rd\sqrt{T}\right).
\end{equation*}
\end{lemma}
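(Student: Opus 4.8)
The plan is to reduce the $d$-dimensional unit-ball bandit problem to a product of $d$ independent scalar sub-problems, one per coordinate, and then apply a standard two-point (Bretagnolle--Huber) testing argument in each coordinate. Concretely, fix a policy $\pi$. For a parameter $\btheta \in \Theta = \{\pm\Delta\}^d$, the optimal action is $\x^\star = L\,\btheta/\|\btheta\| = (L/\sqrt d)\,\sgn(\btheta)$, and the instantaneous regret of playing $\x$ is $\langle \x^\star - \x, \btheta\rangle = \sum_{j=1}^d \bigl(\tfrac{L}{\sqrt d}\abs{\theta_j} - x_j \theta_j\bigr)$. Summing over $t \in [T]$, the expected regret decomposes coordinatewise as $R_{\pi,(\cX,\btheta)}(T,1) = \sum_{j=1}^d \mE_{\btheta}\bigl[\sum_{t=1}^T \bigl(\tfrac{L}{\sqrt d}\abs{\theta_j} - x_{t,j}\theta_j\bigr)\bigr]$, and since $\|\x_t\|\le L$ one checks $\tfrac{L}{\sqrt d}\abs{\theta_j} - x_{t,j}\theta_j \ge 0$ coordinatewise only in an averaged sense; the cleaner route is to lower bound the $j$-th contribution by $\Delta$ times the expected number of rounds in which $\sgn(x_{t,j}) \ne \sgn(\theta_j)$, up to the factor $L/\sqrt d$.

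Next I would average over $\btheta$ drawn uniformly from $\Theta$. By linearity it suffices to handle a single coordinate $j$: let $\btheta$ and $\btheta'$ differ only in coordinate $j$ (sign flip). The Bretagnolle--Huber inequality gives that for any event $A$ measurable with respect to the interaction, $\Pr_{\btheta}(A) + \Pr_{\btheta'}(A^c) \ge \tfrac12\exp(-\KL(\Pr_\btheta \,\|\, \Pr_{\btheta'}))$. Taking $A$ to be the event that coordinate $j$ of the chosen action has the wrong sign for at least half the rounds, the regret under one of $\btheta, \btheta'$ is at least $\Omega(\tfrac{L}{\sqrt d}\cdot\Delta\cdot T)$ times that probability. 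The KL divergence between the two Gaussian-reward environments is $\KL = \tfrac{1}{2R^2}\mE_{\btheta}\bigl[\sum_{t=1}^T (x_{t,j}\cdot 2\Delta)^2\bigr] \le \tfrac{2\Delta^2}{R^2}\mE_\btheta\bigl[\sum_{t=1}^T x_{t,j}^2\bigr]$. Summing this over all $d$ coordinates and using $\sum_{j} x_{t,j}^2 = \|\x_t\|^2 \le L^2$ bounds the \emph{total} KL budget $\sum_j \KL_j \le 2\Delta^2 T L^2/R^2$, so on average $\KL_j \le 2\Delta^2 T L^2/(dR^2)$. Choosing $\Delta = \tfrac{R\sqrt d}{3\sqrt{2T}L}$ makes this average KL at most $1/9$, hence for a constant fraction of coordinates the Bretagnolle--Huber bound yields a constant lower bound on the wrong-sign probability.

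Assembling the pieces: for at least $d/2$ coordinates (say), the wrong-sign-fraction event has probability $\gtrsim$ constant under one of the two sign choices, contributing $\gtrsim \tfrac{L}{\sqrt d}\cdot \Delta \cdot T$ to the regret in expectation over the uniform prior on $\Theta$; summing over those coordinates gives expected regret $\gtrsim d \cdot \tfrac{L}{\sqrt d}\cdot \Delta T = \sqrt d\, L \Delta T = \sqrt d\, L\cdot \tfrac{R\sqrt d}{3\sqrt 2 \sqrt T L}\cdot T = \tfrac{1}{3\sqrt2} R d \sqrt T = \Omega(R d\sqrt T)$. Since the max over $\btheta \in \Theta$ is at least the average, there exists $\thetastar$ attaining this bound, which is the claim. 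The conditions $T \ge d$ and $T \ge d/(3\sqrt2\sqrt{\snr})$ are exactly what is needed to ensure $\Delta$ is small enough that $\x^\star = (L/\sqrt d)\sgn(\btheta)$ indeed lies in $\cX$ (no constraint issue, since $\|\x^\star\| = L$) and, more importantly, that $\|\thetastar\|_2 = \Delta\sqrt d = \tfrac{R\sqrt d}{3\sqrt2\sqrt T L}\cdot\sqrt d \le S$, i.e. the constructed instance respects \cref{assump:param}; one should double-check the algebra here.

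The main obstacle I anticipate is the bookkeeping that makes the coordinatewise decomposition of the regret rigorous: because the action lies in the \emph{ball} rather than on a product of intervals, the per-coordinate "regret" $\tfrac{L}{\sqrt d}\abs{\theta_j} - x_{t,j}\theta_j$ need not be individually nonnegative, so one cannot simply say "regret $\ge$ sum of nonnegative coordinate regrets." The standard fix (as in \citet[Theorem 24.2]{lattimore_szepesvari_2020}) is to note that $\langle \x^\star - \x_t, \btheta\rangle \ge \tfrac{1}{2L}\norm{\x^\star\|\btheta\|/L - \btheta}^2$-type inequalities, or more simply to lower bound the regret by restricting attention to the sign pattern of $\x_t$ and using that $\langle \x^\star-\x_t,\btheta\rangle \ge \tfrac{L}{2\sqrt d}\sum_j \abs{\theta_j}\Ind[\sgn(x_{t,j})\ne\sgn(\theta_j)]$ does \emph{not} hold verbatim either — so the cleanest path is to follow the reference's argument line-by-line, tracking the constants $L, R, S$ through, and verifying that the stated hypothesis on $T$ is precisely the one that keeps all normalizations valid. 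Everything else is a routine instantiation of the Gaussian two-point method.
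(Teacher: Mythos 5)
Your overall strategy is the right one and is essentially the paper's (and Lattimore--Szepesv\'ari's): a coordinatewise Assouad-type argument, flipping one sign of $\btheta\in\{\pm\Delta\}^d$ at a time, bounding the Gaussian KL between the two interaction measures by $O(\Delta^2/R^2)\,\mE_\btheta[\sum_t x_{t,i}^2]$, summing the KL budget over coordinates against $\|\x_t\|^2\le L^2$, and finishing with an averaging ("randomization hammer") step. The scale bookkeeping at the end ($\sqrt d\,L\Delta T = \Theta(Rd\sqrt T)$, and $\|\thetastar\|_2=\sqrt d\,\Delta\le S$ forcing the $\snr$ condition) also matches the paper. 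But the one step you explicitly leave unresolved is precisely the step that carries the proof, so as written the argument has a genuine gap rather than a routine omission.

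Concretely: your proposed per-coordinate event ("coordinate $j$ has the wrong sign for at least half the rounds") does not yield a valid regret lower bound, because the per-coordinate contributions $\tfrac{L}{\sqrt d}\abs{\theta_i}-x_{t,i}\theta_i$ are not individually nonnegative when the action set is the ball --- a single coordinate with $\abs{x_{t,i}}>L/\sqrt d$ contributes negatively, and the aggregate negative contribution of the other $d-1$ coordinates can be of the same order $\Delta L/\sqrt d$ as the gain you are trying to isolate. You acknowledge this and then defer to "follow the reference line-by-line," which is not a proof. The missing idea is the elementary inequality
\begin{equation*}
\sum_{i=1}^d\Bigl(\tfrac{L}{\sqrt d}-x_{t,i}\,\sgn(\theta_i)\Bigr)\;\ge\;\frac{\sqrt d}{2L}\sum_{i=1}^d\Bigl(\tfrac{L}{\sqrt d}-x_{t,i}\,\sgn(\theta_i)\Bigr)^2,
\end{equation*}
which follows from expanding the square and using $\|\x_t\|^2\le L^2$; it converts the regret into a sum over coordinates of \emph{nonnegative} quantities $U_i=\sum_{t\le\tau_i}(\tfrac{L}{\sqrt d}-x_{t,i}\sgn(\theta_i))^2$. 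One then needs the stopping time $\tau_i=\min(T,\min\{t:\sum_{s\le t}x_{s,i}^2\ge \alpha T/d\})$ for two reasons: it caps $U_i$ by $O(L^2T/d)$ so that the change-of-measure penalty $\|U_i\|_\infty\sqrt{\KL}$ (or the Bretagnolle--Huber analogue) is controlled, and it caps the per-coordinate KL by $O(\Delta^2(\alpha T/d+L^2)/R^2)$ without appealing to an averaging-over-coordinates argument (your "on average $\KL_j\le\cdots$, hence for a constant fraction of coordinates" step would otherwise need Markov's inequality and a union with the regret bound, which you do not carry out). Supplying the quadratic lower bound and the truncation argument would close the gap and recover the paper's proof.
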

\begin{proof}[Proof of \cref{lem::lower_bound_sphere}]
Let $\mathcal{A} = \{ \x \in \mathbb{R}^d : \| \x \|_2 \leq L\}$ and $\btheta \in \mathbb{R}^d$ such that $\| \btheta\|_2^2 = S^2$. Let $\Delta  =  \frac{R\sqrt{d}}{3\sqrt{2T}L}$ and $\btheta \in \{ \pm \Delta \}^d$ and for all $i \in [d]$ define $\tau_i = \min( T, \min(t: \sum_{s=1}^t \x_{s, i}^2 \geq \frac{\alpha n}{d} \} $. We will set $\alpha = L^2$ at the end of the proof but in order to mame the derivations clearer and easier to read we will meep this $\alpha$ explicit. Then for any policy $\pi$:

\begin{align*}
      R_{\pi, (\cX, \btheta)}(T, 1) &= \Delta \mathbb{E}_{\theta}\left[ \sum_{t=1}^T \sum_{i=1}^d \left( \frac{L}{\sqrt{d}} -\x_{t,i} \mathrm{sign}(\theta_i)\right)   \right] \\
      &\geq \frac{\Delta \sqrt{d}}{2L} \mathbb{E}_{\theta}\left[   \sum_{t=1}^T \sum_{i=1}^d \left( \frac{L}{\sqrt{d}} - \x_{t, i} \mathrm{sign}(\theta_i)\right)^2    \right] \\
      &\geq \frac{\Delta \sqrt{d}}{2L} \sum_{i=1}^d \mathbb{E}_{\theta}\left[   \sum_{t=1}^{\tau_i}  \left( \frac{L}{\sqrt{d}} - \x_{t, i} \mathrm{sign}(\theta_i)\right)^2    \right]
\end{align*}

Where the first inequality uses that $\| \x_t \|^2_2 \leq L^2$. Fix $i \in [d]$. For $x\in \{ -1, 1\}$, define $U_i(x) = \sum_{t=1}^{\tau_i} \left(  \frac{L}{\sqrt{d}} - \x_{t,i}x\right)^2$. And let $\theta' \in \{\pm \Delta\}^d$ be another parameter vector with $\theta_j = \theta_j'$ for $j \neq i$ and $\theta_i' = -\theta_i$. Assume without loss of generality that $\theta_i >0$. Let $\Pr_{\btheta}$ and $\Pr_{\btheta'}$ be the laws of $U_i(1)$ w.r.t. the bandit learner interaction measure induced by $\theta$ and $\theta'$ respectively. By a simple calculation we conclude that:

\begin{align}
    \KL(\Pr_{\btheta}, \Pr_{\btheta'}) \leq  2 \frac{\Delta^2}{4R^2} \mathbb{E}_{\theta}\left[ \sum_{t=1}^{\tau_i} \x_{t,i}^2 \right] 
\end{align}
Also, observe that:

\begin{equation*}
    U_i(1) = \sum_{t=1}^{\tau_i} \left(L/\sqrt{d} - \x_{t, i} \right)^2 \leq 2L^2 \sum_{t=1}^{\tau_i} \frac{1}{d} + 2\sum_{t=1}^{\tau_i} \x_{t,i}^2 \leq \left( \frac{2L^2T + 2\alpha T}{d} + 2L^2 \right) 
\end{equation*}

Then:

\begin{align*}
    \mathbb{E}_\theta[U_i(1)] &\geq \mathbb{E}_{\theta'}[U_i(1) ] - \left( \frac{2L^2T + 2\alpha T}{d} + 2L^2 \right)  \sqrt{\KL(\Pr_{\btheta}, \Pr_{\btheta'})} \\
    &\geq \mathbb{E}_{\theta'}[U_i(1) ] - \left( \frac{2L^2T + 2\alpha T}{d} + 2L^2 \right) \frac{\Delta}{2R} \sqrt{ \mathbb{E}_{\theta}\left[ \sum_{t=1}^{\tau_i} \x_{t,i}^2 \right]  }\\
    &\geq \mathbb{E}_{\theta'}[U_i(1) ] - \left( \frac{2L^2T+ 2\alpha T}{d} + 2L^2 \right) \frac{\Delta}{2R} \sqrt{\frac{T\alpha}{d} + L^2 } \\
    &\geq  \mathbb{E}_{\theta'}[U_i(1) ] -  \left( \frac{2L^2T + 4\alpha T}{d} \right) \frac{\Delta}{2R} \sqrt{\frac{2T\alpha}{d}  }
\end{align*}

The last inequality follows by assuming $\alpha n \geq dL^2$, which holds because by assumption $d \geq L^2$ (recall $\alpha = L^2$).

We can then conclude that:

\begin{align*}
    \mathbb{E}_{\btheta}\left[ U_i(1) \right]  + \mathbb{E}_{\btheta'}\left[ U_i(-1) \right] &\geq \mathbb{E}_{\btheta'}\left[  U_i(1) + U_i(-1)  \right] -  \left( \frac{2L^2T + 4\alpha T}{d} \right) \frac{\Delta}{2R} \sqrt{\frac{2T\alpha}{d}  }\\
    &= 2\mathbb{E}_{\theta'}\left[ \frac{\tau_i L^2}{d} + \sum_{t=1}^{\tau_i} \x_{t, i}^2  \right] -  \left( \frac{2L^2n + 4\alpha T}{d} \right) \frac{\Delta}{2R} \sqrt{\frac{2T\alpha}{d}  }\\
    &\geq \min\left(\frac{2\alpha T}{d},\frac{2L^2 T}{d} \right) -   \left( \frac{2L^2T + 4\alpha T}{d} \right) \frac{\Delta}{2R} \sqrt{\frac{2T\alpha}{d}  }
\end{align*}

 Therefore using the Randomization Hammer we conclude there must exist a parameter $\theta \in \{ \pm \Delta \}^d$ such that $R_{\pi, (\cX, \btheta)}(T, 1)$ such that:
\begin{equation*}
     R_{\pi, (\cX, \btheta)}(T, 1) \geq d\frac{\Delta \sqrt{d}}{2L} \left( \min\left(\frac{2\alpha T}{d},\frac{2L^2 T}{d} \right) -   \left( \frac{2L^2T + 4\alpha T}{d} \right) \frac{\Delta}{2R} \sqrt{\frac{2T\alpha}{d}  } \right).
\end{equation*}
Let $\alpha = L^2$ and $\Delta = \frac{R\sqrt{d}}{3\sqrt{2T}L}$. In this case:
\begin{equation*}
    R_{\pi, (\cX, \btheta)}(T, 1) \geq \frac{Rd}{6\sqrt{2}}\sqrt{T}.
\end{equation*}
The additional constraint on $T$ comes from the fact we must have that $S \geq \norm{\thetastar}_2 = \sqrt{d} \Delta = \frac{Rd}{3 \sqrt{2} TL} \implies T \geq \frac{dR}{3 \sqrt{2}LS} = \frac{d}{3 \sqrt{2}\sqrt{\snr}}$.
\end{proof}

% As a simple consequence of \cref{lem::lower_bound_sphere} is that for sequential environments
% \begin{equation}\label{equation::lower_bound_sequential}
%     \inf_\pi \sup_{\nu_s} R_{TP, 1} (\pi, \nu_s) \geq  \Omega\left(Rd\sqrt{T}\right).
% \end{equation}
% Equation~\ref{equation::lower_bound_sequential} in combination with Proposition~\ref{proposition:reduction_parallel_sequential} implies the following regret lower bound for parallel environments . 
% \begin{corollary}\label{corollary::lower_bound_parallel}
% Consider the parallel bandit policy/environment class as defined in \cref{prop:par_to_seq_reduction}. Then,
%     \begin{align}
%     \inf_{\psi} \sup_{\nu_p} R_{T, P}(\psi, \nu_p) \geq  \Omega\left(Rd\sqrt{TP}\right).
%     \end{align}
% where the infima over $\psi$ is tamen place over the class of all parallel policies.
% \end{corollary}

% Whenever $d \geq \snr$, $\epsilon = 0$ and $\lambda = L^2$ the lower bound in Theorem~\ref{corollary::lower_bound_parallel} matches the order of the upper bound in Theorem~\ref{thm:regret_par_linucb} up to logarithmic factors. 

\subsection{Misspecification Lower Bound}

In this section record a scale-aware version of the lower bound for misspecified linear bandits from~\citep{lattimore2020learning}. For future reference, recall the definition of misspecification stated in~\cref{assump:param} specialized to a finite context set of size $m$: Let $\cX \subset \mathbb{R}^d$ be a finite context set of size $m$. A function $f: \cX \rightarrow \mathbb{R}$ is $\epsilon-$close to linear if there is a parameter $\thetastar$ such that for all $\x \in \cX$,
\begin{align*}
    \abs{f(\x)-\x^\top \thetastar} \leq \epsilon .
\end{align*}

Before stating our main result we will require the following supporting lemma from~\citep{lattimore2020learning},

\begin{lemma}\label{lemma::negative_result_gellert} For any $\epsilon, L > 0$, and $d \in [m]$ with $d \geq \lceil 8 \log(m)L^2/\epsilon^2 \rceil$, and an action set $\cX = \{ \x_i\}_{i=1}^m \subset \mathbb{R}^d$ satisfying $\| \x_i \|_2 = L$ for all $i$ and such that for all $i,j \in [m]$ with  $i \neq j$, $| \x_i^\top \x_j | \leq L^2 \sqrt{\frac{8\log(m)}{d-1}}$.  
\end{lemma}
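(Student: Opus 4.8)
The statement is an existence claim: it asserts that for the stated parameters there is an action set $\cX = \{\x_i\}_{i=1}^m \subset \mR^d$ of vectors of common norm $L$ which is pairwise nearly orthogonal (an almost-orthogonal system of the type used in the Johnson--Lindenstrauss lemma). The plan is to establish this by the probabilistic method, exhibiting such a configuration as a positive-probability event under a simple random construction. First I would dispose of the degenerate case: if $t := \sqrt{8\log(m)/(d-1)} \ge 1$, then by Cauchy--Schwarz $\abs{\x_i^\top \x_j} \le \norm{\x_i}_2 \norm{\x_j}_2 = L^2 \le L^2 t$ automatically for \emph{any} family of norm-$L$ vectors (e.g.\ $\x_1 = \cdots = \x_m = L\e_1$), so the claim is trivial; hence assume $t < 1$ from now on.

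Next I would take the random construction $\x_i = (L/\sqrt{d})\,\bm{\sigma}_i$, where $\bm{\sigma}_1,\dots,\bm{\sigma}_m \in \{\pm 1\}^d$ are independent, uniformly random sign vectors. Then $\norm{\x_i}_2 = L$ deterministically, so only near-orthogonality needs checking. For a fixed pair $i \ne j$ we have $\x_i^\top \x_j = (L^2/d)\sum_{k=1}^d \sigma_{i,k}\sigma_{j,k}$, and the $d$ products $\sigma_{i,k}\sigma_{j,k}$ are i.i.d.\ uniform signs, so Hoeffding's inequality yields
\[
\Pr\!\left(\abs{\x_i^\top \x_j} \ge L^2 t\right) \;=\; \Pr\!\left(\Bigl|\sum_{k=1}^d \sigma_{i,k}\sigma_{j,k}\Bigr| \ge d t\right) \;\le\; 2\exp\!\left(-\tfrac{d t^2}{2}\right) \;\le\; 2\exp(-4\log m) \;=\; 2m^{-4},
\]
using $dt^2/2 = \tfrac{d}{d-1}\cdot 4\log m \ge 4\log m$. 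A union bound over the fewer than $m^2/2$ pairs shows that with probability at least $1 - m^{-2} > 0$, every pair $i\ne j$ satisfies $\abs{\x_i^\top\x_j} \le L^2\sqrt{8\log(m)/(d-1)}$; on this event all the $\x_i$ are in particular distinct (a collision would give $\abs{\x_i^\top\x_j} = L^2 > L^2 t$). Therefore such an action set exists, which is the claim.

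The hypothesis $d \ge \lceil 8\log(m)L^2/\epsilon^2 \rceil$ is not used in the construction itself; it is recorded because it is the regime in which this geometry can subsequently be coupled to an $\epsilon$-misspecified reward function to produce the lower bound of \cref{thm:lb}. I do not expect a genuine obstacle here: the only points needing a word of care are (i) peeling off the vacuous case $t \ge 1$, and (ii) noting that when $t < 1$ there are enough sign patterns to host $m$ distinct vectors, which holds because $t < 1$ forces $d - 1 > 8\log m$ and hence $2^d > m$; both are one-liners. If one prefers a construction that is manifestly symmetric across all of $\mathbb{S}^{d-1}$, the same argument runs verbatim with each $\x_i$ drawn uniformly from the sphere of radius $L$, replacing Hoeffding's inequality by standard spherical concentration for $\langle \x_i,\x_j\rangle/L^2 \mid \x_i$; the only cost is tracking a slightly less transparent constant in the exponent, which is precisely why the threshold in the statement carries the generous constant $8$.
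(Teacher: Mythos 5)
Your proof is correct. For the record, the paper does not actually prove this lemma: it simply defers to \citet[Lemma 3.1]{lattimore2020learning}, whose underlying argument is the same probabilistic-method construction of a Johnson--Lindenstrauss-type almost-orthogonal system that you carry out (the cited source samples uniformly from the sphere, whereas you use scaled Rademacher vectors, which makes the Hoeffding step cleaner and the constant $8$ transparent via $dt^2/2 = \tfrac{d}{d-1}\cdot 4\log m \ge 4\log m$). Your union bound, the handling of the vacuous regime $t \ge 1$, and the observation that the hypothesis $d \ge \lceil 8\log(m)L^2/\epsilon^2\rceil$ plays no role in the construction itself are all sound. The only nit: in the degenerate branch $t \ge 1$ you propose $\x_1 = \cdots = \x_m = L\e_1$, which gives a family with repeats rather than $m$ distinct arms; since the downstream misspecification lower bound needs $m$ distinguishable arms, you should instead pick any $m$ distinct norm-$L$ vectors there (still trivial, as Cauchy--Schwarz alone gives $\abs{\x_i^\top\x_j} \le L^2 \le L^2 t$). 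This is a one-word fix and does not affect the substance.
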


\begin{proof}
This is simply a slightly modified version of  in \citet[Lemma 3.1]{lattimore2020learning}. 
\end{proof}
% \anote{Verify the scalings in the lemma above}

\begin{lemma}
\label{lem:misspec_lower_bound}
Let $\epsilon, L >0$ and $m \in \mathbb{N}$. For any $d \in [m]$ with $d \geq \lceil 8 \log(m)L^2/\epsilon^2 \rceil$ consider a finite action set $\cX \subset \mathbb{R}^d$ of size $m$ satisfying $\| \x_i \|_2 = L$ for all $i \in [m]$. Then for any policy $\pi$, there is a parameter $\thetastar$, $\epsilon$-close to a function $f : \cX \rightarrow \mathbb{R}^m$ for which: 
%\anote{Checm the assumptions of the theorem}
\begin{equation*}
     R_{\pi, (\cX, \thetastar, f)}(T, 1) \geq  \epsilon \sqrt{ \frac{d-1}{8\log(m)}}\frac{\min(T, m-1)}{4}  .
\end{equation*}
Moreover, this parameter $\thetastar$ satisfies $\norm{\thetastar}_2 = \frac{\epsilon}{L} \sqrt{\frac{d-1}{8 \log(m)}}$.
%\anote{What $\theta$ goes inside the $R$ above?}

\end{lemma}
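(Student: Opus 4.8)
\textbf{Proof proposal for \cref{lem:misspec_lower_bound}.}
The plan is to construct a hard instance using the near-orthogonal action set guaranteed by \cref{lemma::negative_result_gellert}, together with a randomization (``needle in a haystack'') argument over which single arm is secretly optimal. First I would invoke \cref{lemma::negative_result_gellert} to fix an action set $\cX = \{\x_1,\dots,\x_m\}$ with $\norm{\x_i}_2 = L$ and $|\x_i^\top\x_j| \leq L^2\sqrt{8\log(m)/(d-1)}$ for all $i\neq j$. For each index $j\in[m]$ I would define a candidate reward function $f_j$ that is identically $0$ on every arm except $\x_j$, where $f_j(\x_j) = c\epsilon$ for an appropriate universal constant $c$ (e.g. $c=1$ up to the $\sqrt{(d-1)/(8\log m)}$ scaling). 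The key point is that since $R=0$ the rewards are deterministic, so the only way a policy can distinguish $f_j$ from the all-zero function is to actually query arm $\x_j$.

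The second step is to verify each $f_j$ is $\epsilon$-close to linear with a parameter of the claimed norm. Take $\thetastar_j = \alpha \x_j$ with $\alpha$ chosen so that $\x_j^\top\thetastar_j = \alpha L^2 = f_j(\x_j)$; the misspecification error on an off-target arm $\x_i$ is $|f_j(\x_i) - \x_i^\top\thetastar_j| = |\alpha|\,|\x_i^\top\x_j| \leq |\alpha| L^2 \sqrt{8\log(m)/(d-1)}$, and choosing $f_j(\x_j)$ proportional to $\epsilon\sqrt{(d-1)/(8\log m)}$ makes this exactly $\le \epsilon$ while simultaneously giving $\norm{\thetastar_j}_2 = |\alpha| L = \frac{\epsilon}{L}\sqrt{\frac{d-1}{8\log(m)}}$, matching the stated norm. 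This is mostly bookkeeping with the constants.

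The third and central step is the regret lower bound via an averaging/pigeonhole argument. Fix any deterministic policy $\pi$ (randomized policies follow by averaging). Run $\pi$ against the all-zero environment $f_0 \equiv 0$; in the first $\min(T, m-1)$ rounds it queries at most $\min(T,m-1)$ distinct arms, so there exists an index $j^\star$ among the $m$ arms that is never queried in those rounds (since $\min(T,m-1) \le m-1 < m$). Because the rewards are deterministic and all observed rewards under $f_0$ and under $f_{j^\star}$ agree on every queried arm (both are $0$ away from $\x_{j^\star}$), $\pi$ produces exactly the same sequence of actions in the $f_{j^\star}$ environment for those rounds. Hence in the $f_{j^\star}$ environment $\pi$ never plays the optimal arm $\x_{j^\star}$ during those $\min(T,m-1)$ rounds, incurring instantaneous regret $f_{j^\star}(\x_{j^\star}) - 0 = c\epsilon\sqrt{(d-1)/(8\log m)}$ each round, for a total of at least $\epsilon\sqrt{\frac{d-1}{8\log(m)}}\cdot\frac{\min(T,m-1)}{4}$ after absorbing constants. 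The final write-up would then handle randomized policies by noting that the above gives, for each deterministic realization, a bad $j^\star$, and a standard argument (fix the policy's internal randomness, apply the above, then average) transfers the bound; alternatively one puts a uniform prior on $j$ and lower-bounds the Bayesian regret directly.

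The main obstacle is the adaptivity of the argument in step three: one must be careful that ``the set of arms queried'' can itself depend on the environment, so the clean statement is that against $f_0$ the policy queries some set $Q$ of size $\le \min(T,m-1)$, and \emph{then} one picks $j^\star\notin Q$ and argues by induction on rounds that the trajectories under $f_0$ and $f_{j^\star}$ coincide (the induction is needed precisely because the policy is adaptive — at each round the histories agree so the next action agrees). Getting this coupling argument stated cleanly, and pinning down the universal constant so that both the $\epsilon$-closeness bound and the final $\frac{1}{4}$ factor hold simultaneously, is the only real subtlety; everything else is routine.
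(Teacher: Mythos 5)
Your construction is the same as the paper's: the near-orthogonal arm set from \cref{lemma::negative_result_gellert}, the spiked reward $f_j$ supported on a single arm $\x_j$, the parameter $\thetastar_j = \delta\x_j$ with $\delta = \frac{\epsilon}{L^2}\sqrt{\frac{d-1}{8\log m}}$ (which gives exactly the claimed norm and the $\epsilon$-closeness on off-target arms), and the noiseless indistinguishability/coupling to the all-zero environment $f_0$. That part is all correct.

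The gap is in your third step, the counting argument. Your pigeonhole (``some arm $j^\star$ is never queried in the first $\min(T,m-1)$ rounds, so under $f_{j^\star}$ the policy incurs full regret every round'') is valid only for a \emph{deterministic} policy, where the adversary can see which arm goes unqueried and choose $f_{j^\star}$ accordingly. For a randomized policy --- and the algorithms in this paper, e.g.\ Thompson sampling, are randomized --- the unqueried arm $j^\star(\omega)$ depends on the realization of the internal randomness, and the adversary must commit to a single environment $f_j$ in advance. Your proposed fix (``fix the randomness, apply the above, then average'') does not close this: averaging over realizations, each arm $j$ is fully unqueried with probability that can be as small as $\frac{m-\min(T,m-1)}{m}$, so the best single environment you can extract from the ``never-queried arm'' observation alone has expected regret only $\Omega\bigl(\frac{m-\min(T,m-1)}{m}\cdot L^2\delta\min(T,m-1)\bigr)$, which collapses to $\Omega(L^2\delta)$ once $T\geq m-1$ --- a factor of $m$ short of the claim. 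The missing idea, which is how the paper proceeds, is to credit \emph{every} arm $j$ (not just the never-queried one) with regret $L^2\delta\cdot\mE_0[\min(T,T_j-1)]$, where $T_j$ is the first time arm $j$ is pulled under the coupled $f_0$ run, and then average over $j$: for a policy that never repeats an arm before discovery, the values $\{T_j-1\}_{j=1}^m$ are a permutation of $\{0,1,\dots,m-1\}$, and the elementary computation $\frac{1}{m}\sum_{j}\min(T,T_j-1)\geq \frac{\min(T,m-1)}{4}$ yields the stated bound (and is where the factor $\frac14$ actually comes from). Without this ``partial regret before first discovery'' accounting, the bound in the regime $T\gtrsim m$ is not reachable by your argument.
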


\begin{proof}
Observe that by assumption since  $d \geq \lceil 8 \log(m)L^2/\epsilon^2 \rceil$ we have that $\epsilon \geq L \sqrt{\frac{8\log(m)}{d} }$. By \cref{lemma::negative_result_gellert}, we may choose $\cX = \{ \x_i\}_{i=1}^m \subset \mathbb{R}^d $  satisfying:
\begin{enumerate}
    \item $\| \x_i \|_2 = L$ for all $i \in [m]$.
    \item $| \x_i^\top \x_j | \leq L^2 \sqrt{\frac{8\log(m)}{d-1}}$ for all $i \neq j$. %\anote{Maybe this is supposed to be $\epsilon$?}
\end{enumerate}

We now consider a family of $m$ bandit instances indexed by each of the arms $\x_i \in \cX$. For any $\x_i \in \cX$ define $\thetastar_i = \delta \x_i$ with $\delta = \frac{\epsilon}{L^2}\sqrt{ \frac{d-1}{8\log(m)}}$ and define $f_i$ as:
\begin{equation*}
    f_i(\x) = \begin{cases}
           L^2 \delta &\text{if } \x = \x_i\\
            0 &\text{o.w.}
        \end{cases}
\end{equation*}

By definition $f_i$ is $\epsilon-$close to linear since for all $\x \in \cX$ with $\x \neq \x_i$,
\begin{equation*}
    \langle \x, \thetastar_i  \rangle \leq \delta L^2 \sqrt{\frac{8\log(m)}{d-1}}  = \epsilon \quad\text{ and }\quad    \langle \x_i, \thetastar  \rangle = L^2 \delta 
\end{equation*}
%\nnote{What is $\mathbf{a}^\star$?}

Denote by $\x^{(t)}$ the action played by algorithm $\pi$ at time $t$ and define $\tau_i = \max\left\{ t \leq n: \x^{(s)} \neq \x_i \forall s \leq t \right\}$. Then $\mathbb{E}[ R_{\pi, (\cX, \thetastar, f)}(T, 1)  ]  \geq L^2\delta \mathbb{E}_i[\tau_i] $. Where $\mathbb{E}_i$ denotes the expectation under the law of bandit problem $\thetastar_i$ and algorithm $\pi$. Observe that any algorithm $\pi$ that queries arm $\x_j$ with $j \neq i$ more than once before pulling arm $\x_i$ will have a larger $\mathbb{E}_i[\tau_i]$ than one that only queries each arm once before querying arm $\x_i$. This means that in order to lower bound $\mathbb{E}_i[\tau_i]$ we can restrict ourselves to algorithms $\pi$ that do not repeat an arm pull before $\tau_i$. In fact we can assume algorithm $\pi$ behaves the same for all $i \in [m]$. Observe that for such algorithms whenever facing problem $\thetastar_i$ and $t \leq \tau_i$, the law of the rewards is independent of $\x_i$ for all $i \in [m]$. Let $f_0 : \cX \rightarrow \mathbb{R}$ denote the zero function such that $f(\x) =0$ for all $\x \in \cX$ and let $\mathbb{E}_0$ be the expectation under the law of the bandit problem induced by $f_0$ and algorithm $\pi$. Let $T_i$ be the first time that algorithm $A$ encountered arm $i$ when interacting with $f_0$. Observe that since the interactions of $\pi$ with $f_i$ before $\tau_i +1$ and the interactions of $\mathcal{A}$ with $f_0$ before $\mathcal{A}$ pulls $\x_i$ (or the time runs up) are indistinguishable $\mathbb{E}_{i}[\tau_i] = \mathbb{E}_{0}[\min(T, T_i-1)]$. A simple averaging argument shows that
\begin{align*}
   \frac{1}{m} \sum_{i}  R_{\pi, (\cX, \thetastar, f)}(T, 1)     \geq \frac{L^2\delta}{m} \sum_{i=1}^m \mathbb{E}_i[ \tau_i ] = \frac{L^2\delta}{m} \sum_{i=1}^m  \mathbb{E}_0\left[ \min(T,T_i-1)\right] = \frac{L^2\delta}{m} \mathbb{E}_0\left[\sum_{i=1}^m  \min(T,T_i-1)\right]
\end{align*}
Since $\pi$ is assumed to interact with $f_0$ by never pulling the same arm twice, $\{ T_i-1 \}_{i=1}^m= \{ i-1 \}_{i=1}^m$. Using this fact, we can write
\begin{equation*}
  \frac{1}{m} \sum_{i=1}^m  \min(T,T_i-1) = \frac{1}{m}\sum_{i=1}^{\min(m, T)} i-1 + \mathbf{1}(T \leq m-1) T.
\end{equation*}
In order to bound the expression above we analyze two cases, first when $T\leq m$, and second when $T > m$. In the first case 
\begin{equation*}
   \frac{1}{m} \sum_{i=1}^{\min(m, T)} i-1 + \mathbf{1}(T \leq m-1) T= \frac{1}{m}\left(\frac{T(T-1)}{2} + T(m-T). \right)
\end{equation*}

Let's consider two sub-cases. If $T-1 \geq \frac{m}{2}$, then $\frac{1}{m}\left(\frac{T(T-1)}{2} + T(m-T) \right)  \geq \frac{T}{4} $. If $T-1 < \frac{m}{2}$ then $m-T > m-\left(\frac{m}{2} +1 \right) = \frac{m}{2} -1$. And therefore $\frac{1}{m}\left(\frac{T(T-1)}{2} + T(m-T) \right)  \geq \frac{T}{2} - \frac{T}{m} >\frac{T}{2}  - 1 $. It follows that whenever $T \leq m$ (and $T > 1$), then $\frac{1}{m}\left(\frac{T(T-1)}{2} + T(m-T) \right)  \geq \frac{T}{4} $. 

Now let's consider the case when $T > m$. If this holds, 

\begin{equation*}
    \frac{1}{m} \sum_{i=1}^{\min(m, T)} i-1 + \mathbf{1}(T \leq m-1) T = \frac{m-1}{2}.
\end{equation*}
Assembling these facts together we conclude that in all cases,
\begin{equation*}
     \frac{1}{m} \sum_{i=1}^m  \min(T,T_i-1) \geq \frac{\min(T, m-1)}{4}.
\end{equation*}
The result follows by noting this implies there must exist one $\thetastar_i$ such that 
\begin{equation*}
    R_{\pi, (\cX, \thetastar, f)}(T, 1) \geq \epsilon \sqrt{ \frac{d-1}{8\log(m)}}\frac{\min(T, m-1)}{4}.
\end{equation*}
The result follows.
\end{proof}

\subsection{Adversarial Context Lower Bound}

The result follows similarly to the lower bound from \citet[Theorem 1]{han2020sequential}.
\begin{lemma}
\label{lem:adv_context_lower_bound}
For any parallel bandit algorithm policy $\pi$ there exists a sequence of (oblivious) adversarial contexts $\cX_{t,p}$ and parameter $\thetastar$ such that,
\begin{equation}
    R_{\pi, (\cX_{t,p}, \thetastar)} (T, P) \geq \Omega \left(LS P \min(\sqrt{d}, \frac{T}{\sqrt{d}}) \right).
\end{equation}
\end{lemma}
\begin{proof}
    First consider the case where $T \geq d/2$ and without loss of generality assume that $d' = \frac{d}{2}$ is an integer (the odd case is handled identically). First note that trivially there must exist $d'$ batch indices $\{i_1, \hdots, i_{d'} \}$ such that\footnote{again we use $a, b \in [T,P]$ to denote the lexigraphic ordering of time indices $t_{a, b}$}:
    \begin{align}
        \sum_{k=1}^{d'} t_{i_k, 1} - t_{i_{k-1}, 1} \geq P d'
    \end{align}
    since there are $P$ contexts/actions in each batch.

    Now we construct a (random) $\thetastar$ as follows. If $\epsilon_1, \hdots \epsilon_{d'} \in \{-1, +1\}$ are Rademacher r.v.s we set $\thetastar = (\btheta_1, \hdots, \btheta_d)$ with $\thetastar_{2k-1} = \frac{S}{\sqrt{d'}} \Ind[\epsilon_k = -1]$ and $\thetastar_{2k} = \frac{S}{\sqrt{d'}} \Ind[\epsilon_k = +1]$, for all $k \in [d']$. By construction $\norm{\thetastar}_{2} = S$. Now we construct contexts such that up to the $k$th batch the contexts reveal only information about the $\thetastar_{2k-1}$ and $\thetastar_{2k}$ coordinate of $\thetastar$. In particular if $t_{a, b}$ is contained in the $k$th batch ($t_{i_{k-1}, 1} \leq t_{a, b} \leq t_{i_k, 1})$, then let the context set at $t_{a,b}$ be $\{L e_{2k-1}, L e_{2k} \}$, otherwise set both actions to the zero vector $0$.
    
    The regret of the policy can now be lower bounded in each batch. Since in the $k$th round, before entering this batch, the learner has obtained no information regarding whether $\thetastar_{2k-1} = \frac{S}{\sqrt{d'}}$ and $\thetastar_{2k} = \frac{S}{\sqrt{d'}}$, the regret of the policy decomposes over batches. Now note within each batch an incorrect action incurs instantaneous regret $\frac{LS}{\sqrt{d'}}$. Since the policy in the $k$th batch must be fixed before entering the $k$th batch, averaged over the randomness in $\thetastar$ we have that, 
    \begin{align*}
        \mE_{\thetastar}[R_{\pi, (\cX_{t,p}, \thetastar)}(T, P)] \geq 
        \sum_{k=1}^{d'} t_{i_k, 1} - t_{i_{k-1}, 1} \cdot \frac{LS}{2 \sqrt{d'}} \geq \Omega(LSP \sqrt{d})
    \end{align*}
    Hence there must exist a parameter $\thetastar$ such that $R_{\pi, (\cX_{t,p}, \thetastar)}(T, P) \geq \Omega(LSP\sqrt{d})$.
    
    The case where $T < d/2$ can be handled similarly. In this case we choose $d' = T$. We immediately obtain that $\sum_{k=1}^{d'} t_{i_k, 1} - t_{i_{k-1}, 1} = TP$. Now using the same Rademacher construction but with scaling $\frac{L}{\sqrt{d}}$ (and setting coordinates beyond $d'$ to zero), we can conclude 
    \begin{align*}
        \mE_{\thetastar}[R_{\pi, (\cX_{t,p}, \thetastar)}(T, P)] \geq 
        \sum_{k=1}^{d'} t_{i_k, 1} - t_{i_{k-1}, 1} \cdot \frac{LS}{2 \sqrt{d}} \geq \Omega(LSTP/\sqrt{d})
    \end{align*}
    Hence there must exist a parameter $\thetastar$ such that $R_{\pi, (\cX_{t,p}, \thetastar)}(T, P) \geq \Omega(LSP\sqrt{d})$.
    Combining both cases gives the result.

Finally, we bound the minimax regret in setting applicable to both the case with a single global context and the oblivious adversarial setting. Let $\{Le_1, Le_2 \}$ be the action set (context set) during the first parallel deployment. Consider two plausible (uniformly random) settings for $\thetastar \in \{Se_1, Se_2 \} $. Any incorrect action choice incurs in a regret of $LS$. Since before entering this batch, the learner has no information regarding whether $\thetastar = Se_1$ or $\thetastar  = Se_2$, averaging over the randomness in the selection of $\thetastar$ we have that,
\begin{align*}
        \mE_{\thetastar}[R_{\pi, (\cX_{t,p}, \thetastar)}(T, P)] \geq \frac{LSP}{2} \geq \Omega(LSP)
    \end{align*}
which shows there exists a $\thetastar$ for which the lower bound holds as well.
Combining these three lower bounds the result follows.
    
\end{proof}

\section{Experiment Details}
In this section we provide all of the experimental details for training the bandit algorithms.
\label{sec:app_exp}
\subsection{Hyperparameters}
The synthetic hyperparameters were fixed to the theoretical values. For the randomly initialized neural network experiments, the grid for the regularizer $\lambda$, norm bound $S$, and noise subgaussianity $R$ were: $\{0.01, 0.1, 1.0, 10.0, 100.0\}$ which was selected post-hoc for each experiment. For the superconductor experiments, the grid for all 3 parameters were: $\{0.1, 1.0, 10.0\}$. For the transcription factor binding dataset, the parameters grid was $\lambda = \{1.0, 10.0\}$, $R = \{0.01, 0.1, 1.0, 10.0, 100.0, 1000.0\}$, and $S = \{0.01, 0.1, 1.0, 10.0, 100.0, 1000.0\}$. For $\epsilon$-greedy, the parameter grid was set over $\epsilon = \{0.01, 0.02, \ldots , 0.99\}$ across all relevant experiments.

\begin{figure}[!hbt]
\centering
\begin{minipage}[c]{.31\linewidth}
\includegraphics[width=\linewidth]{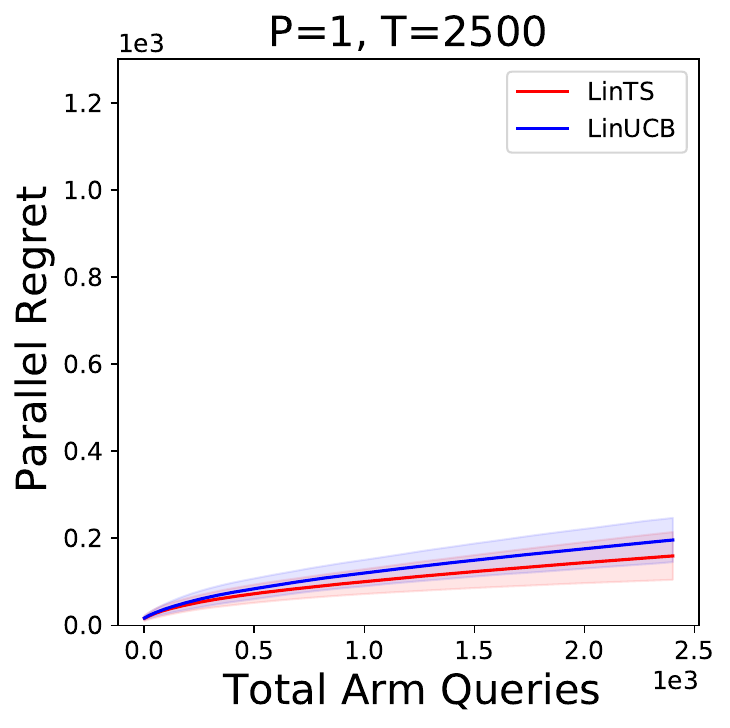}
\end{minipage}
\begin{minipage}[c]{.31\linewidth}
\includegraphics[width=\linewidth]{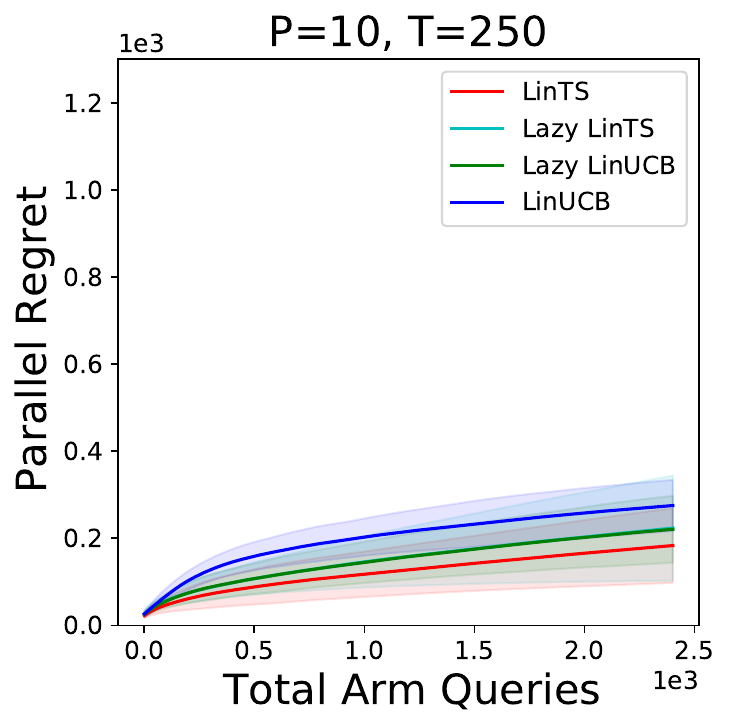}
\end{minipage}
\begin{minipage}[c]{.31\linewidth}
\includegraphics[width=\linewidth]{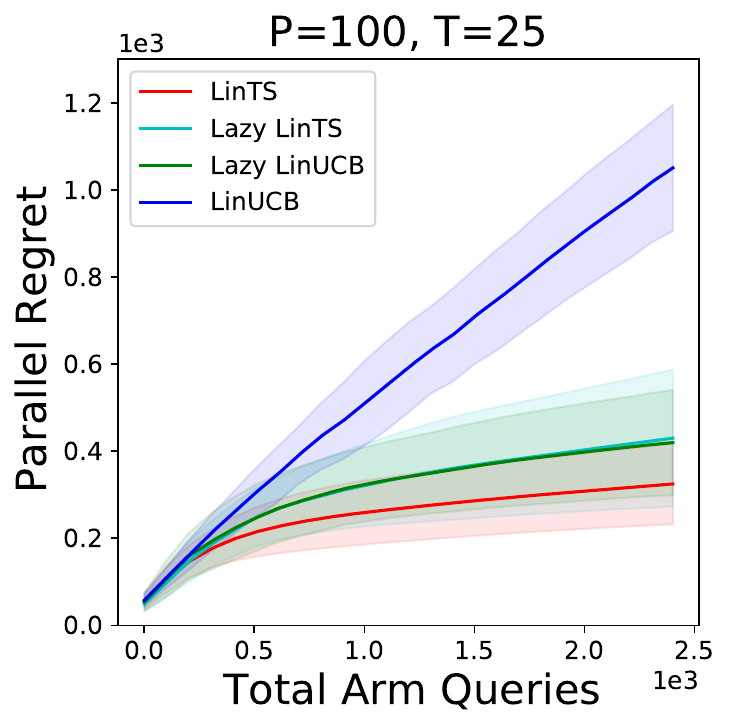}
\end{minipage}
\caption{TFBinding parallel regret with linear features. From left to right: $P=1$, $P=10$, and $P=100$.
}
\label{fig:tfbindinglinear}
\end{figure}

\subsection{Feature Engineering for Random Neural Network}
Two feature sets were considered:
\begin{itemize}
    \item Linear features $x_i$ was encoded as a 14-length feature vector
    \item Linear + Quadratic features where interaction terms of $x_i$ were included in $105$ quadratic features alongside the original $14$ linear features into a $119$ features.
\end{itemize}
The resulting parallel regret plots are shown in \cref{fig:1drandomnn}.

\begin{figure}[!hbt]
\centering
\begin{minipage}[c]{.31\linewidth}
\includegraphics[width=\linewidth]{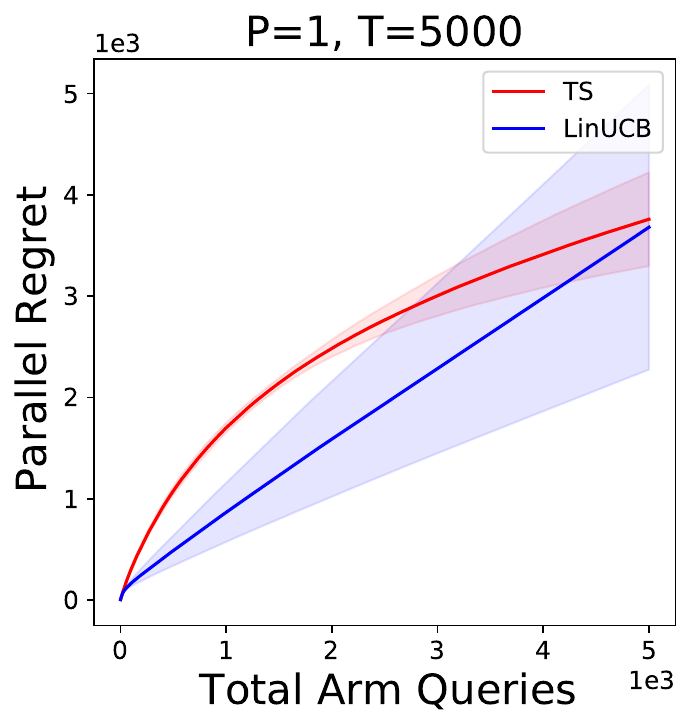}
\end{minipage}
\begin{minipage}[c]{.31\linewidth}
\includegraphics[width=\linewidth]{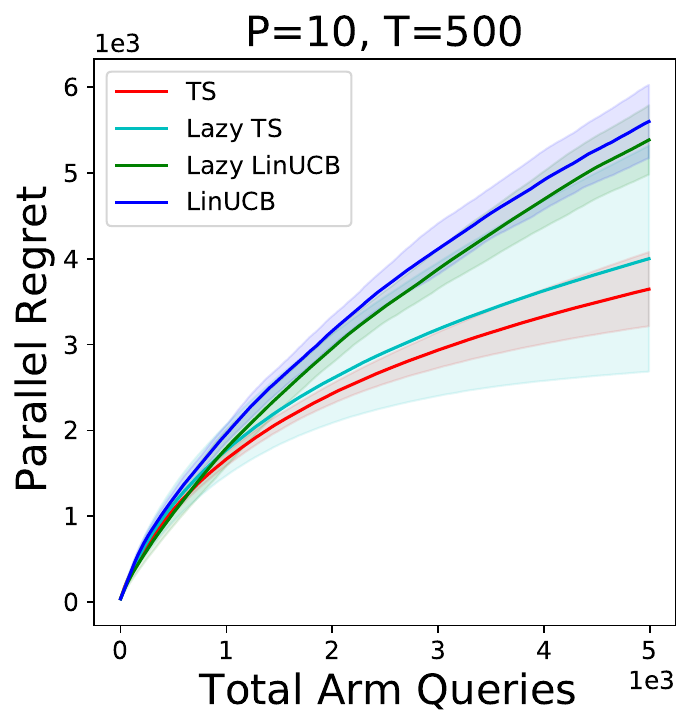}
\end{minipage}
\begin{minipage}[c]{.31\linewidth}
\includegraphics[width=\linewidth]{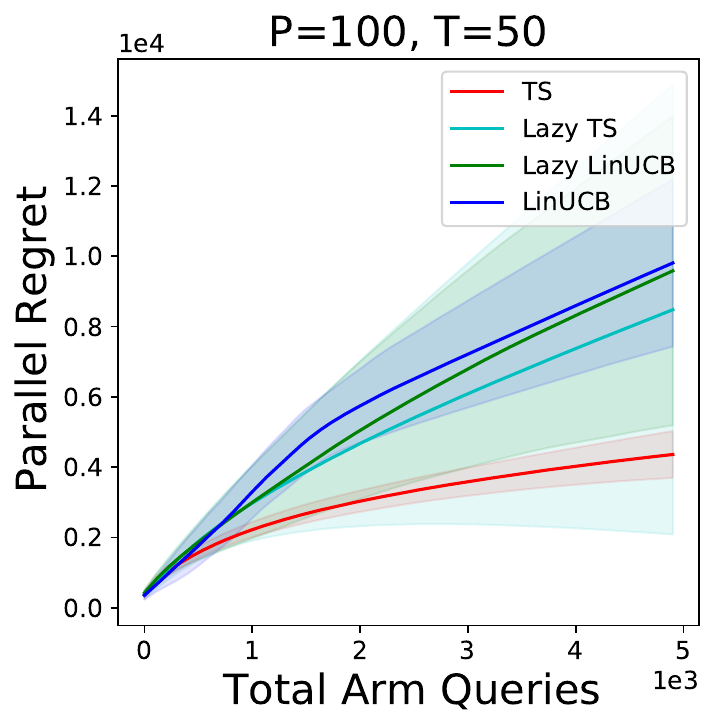}
\end{minipage}
\caption{RandomNN with Linear features. From left to right: $P=1$, $P=10$, and $P=100$.
}
\label{fig:1drandomnn}
\end{figure}

\subsection{Feature Engineering for Transcription Factor Binding}
Three feature sets were considered:
\begin{itemize}
    \item Linear features $\x_i$ was one-hot encoded into a 32-length feature vector.
    \item 250 random ReLU features where a $250 \times 32$ random matrix $\W$ is sampled such that $\W_{ij} \sim \mathrm{Normal}(0,1)$. Then, the feature map was evaluated as:
    $$\phi(\x_i) = \frac{1}{\sqrt{250}} \text{ReLU}\Bigg(\frac{\W \x_i}{\sqrt{32}}\Bigg)$$
    \item Linear + Quadratic features where $\x_i$ was one-hot encode and all 32 linear features and 528 quadratic features were combined into a 560-length vector.
\end{itemize}

The off-line test $R^2$ (without added noise in the $y_i$) on a train set of $TP = 2500$ matching the number of total arm queries yields values $0.15, 0.26, 0.29$ for linear, ReLU, and quadratic respectively. This matches with the number of features and level of expressivity of the model class. However, as shown in \cref{fig:relu250,fig:quadratic} the linear features perform the best followed by the ReLU features, and then the quadratic features. One can gain further insight by examining the off-line test $R^2$ for a smaller training size and see that the larger feature expansions accrue variance making the linear features perform the best. This matches our understanding that model fitting is less statistically efficient in an online setting.

\begin{figure}[!hbt]
\centering
\begin{minipage}[c]{.31\linewidth}
\includegraphics[width=\linewidth]{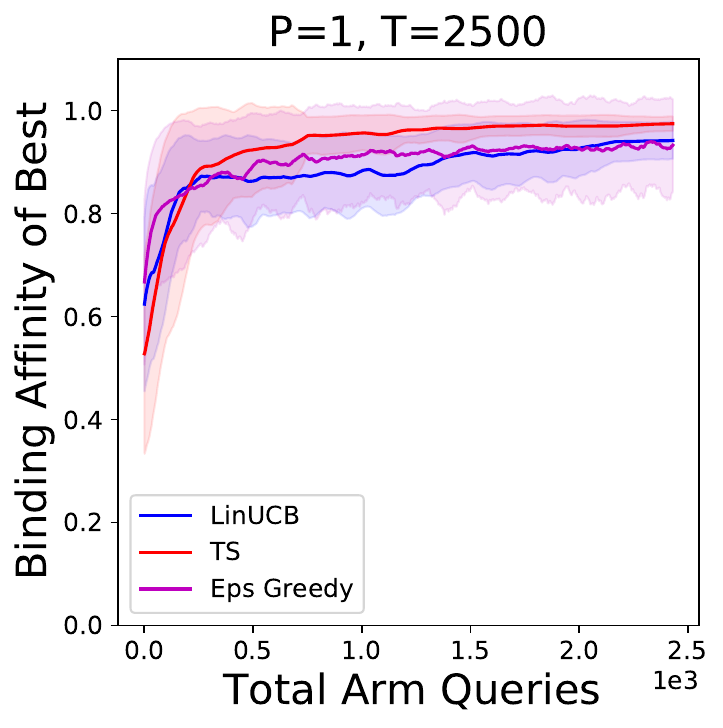}
\end{minipage}
\begin{minipage}[c]{.31\linewidth}
\includegraphics[width=\linewidth]{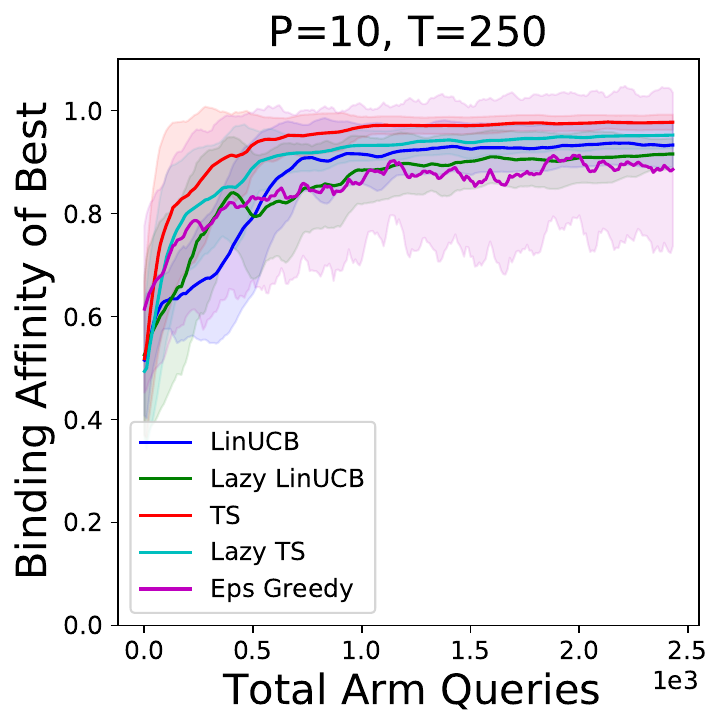}
\end{minipage}
\begin{minipage}[c]{.31\linewidth}
\includegraphics[width=\linewidth]{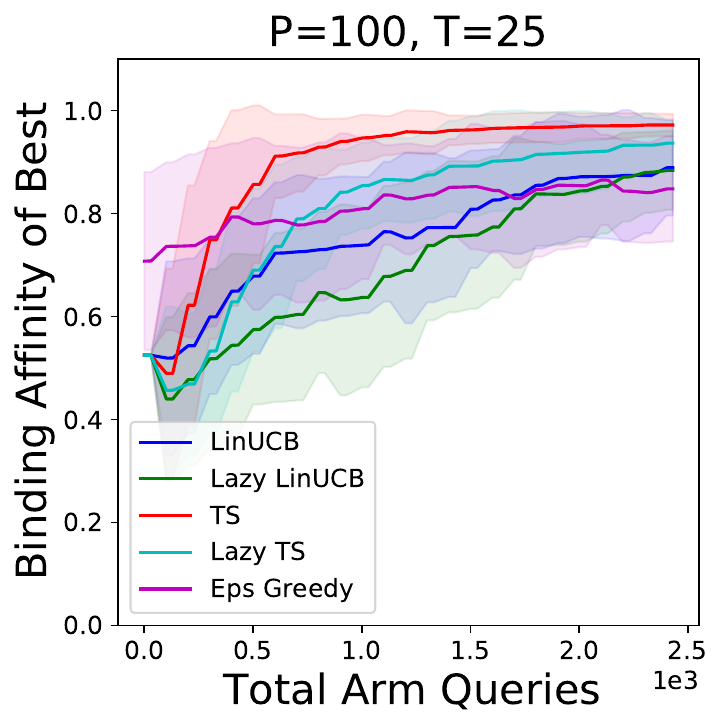}
\end{minipage}
\caption{TFBinding best arm with ReLU features. From left to right: $P=1$, $P=10$, and $P=100$.
}
\label{fig:relu250}
\end{figure}

\begin{figure}[!hbt]
\centering
\begin{minipage}[c]{.31\linewidth}
\includegraphics[width=\linewidth]{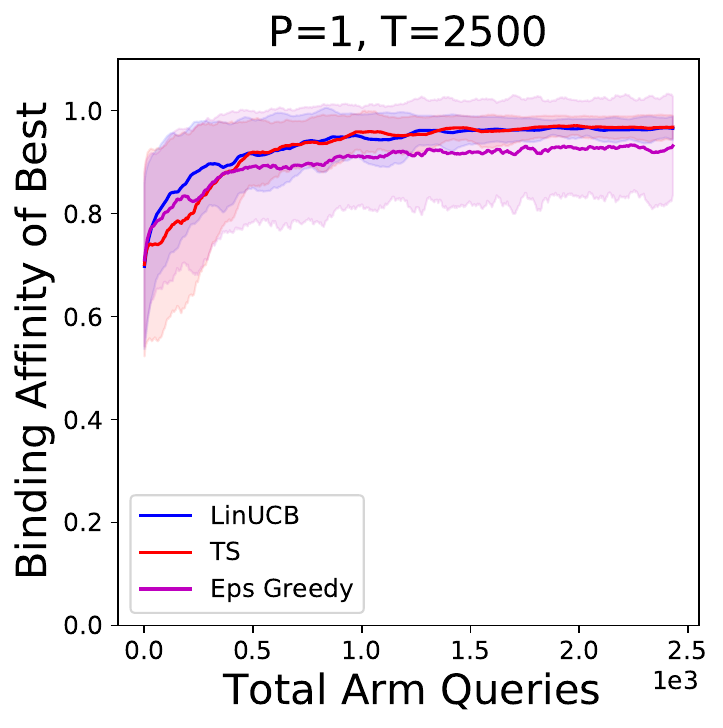}
\end{minipage}
\begin{minipage}[c]{.31\linewidth}
\includegraphics[width=\linewidth]{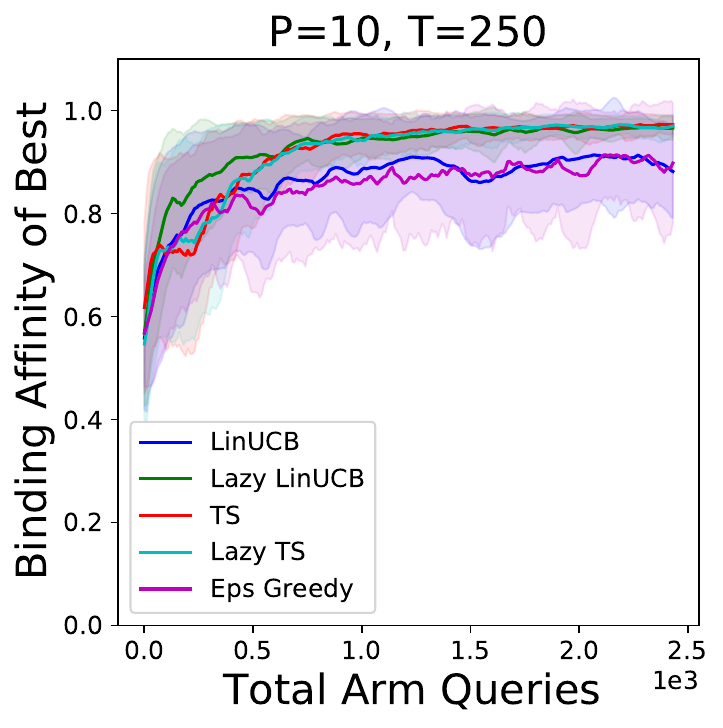}
\end{minipage}
\begin{minipage}[c]{.31\linewidth}
\includegraphics[width=\linewidth]{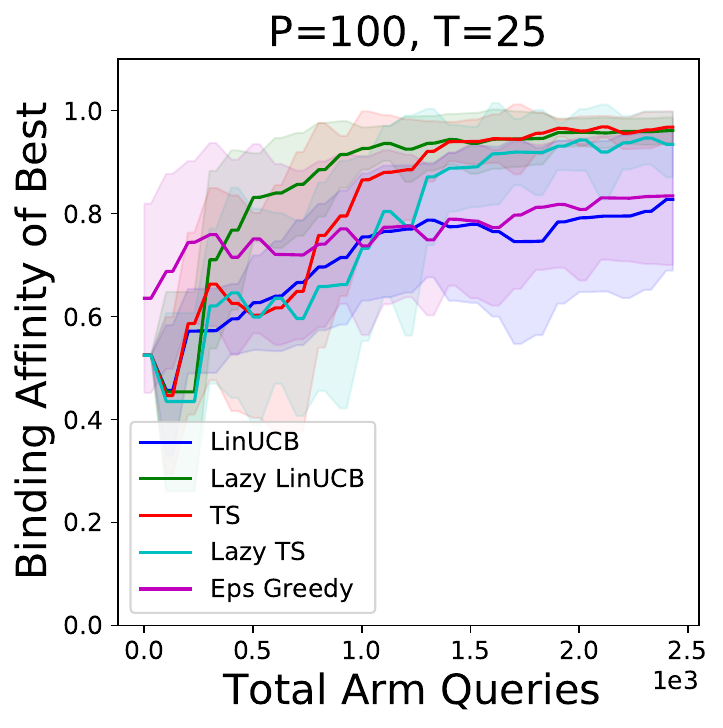}
\end{minipage}
\caption{TFBinding best arm with quadratic features. From left to right: $P=1$, $P=10$, and $P=100$.
}
\label{fig:quadratic}
\end{figure}
% \section{Computing LinUCB solution $\x$}

% The implementation of the LinUCB algorithm requires the computation of the following double maximization over a quadratic form,
% \begin{align}
%     \arg \max_{\x \in \cX} \max_{\btheta \in \cC} \ \x^\top \btheta \text{ for } \cC = \{ \btheta :  \norm{\btheta-\bmu}_{\mSigma} \leq \sqrt{\beta}. \}
% \end{align}
% The max over $\btheta$ in the ellipsoid can be reparameterized as $\cC = \{ \bmu + \sqrt{\beta} \mSigma^{-1/2} \z  : \norm{\z}_2 \leq 1$ \}. Hence we have that,
% \begin{align}
%     \max_{\norm{\z}_2 \leq 1} \x^\top \btheta = \x^\top \bmu  + \sqrt{\delta} \x^\top \mSigma^{-1/2} \z = \x^\top \bmu + \sqrt{\delta} \norm{\mSigma^{-1/2} \x}_2
% \end{align}

\bibliographystyle{plainnat}
\bibliography{references}
\end{document}